\newcommand{\tr}{\text{tr}}
\def\1{\bm{1}}
\def\vzero{{\bm{0}}}
\def\vtheta{{\bm{\theta}}}
\def\vphi{{\bm{\phi}}}
\def\vPhi{{\bm{\Phi}}}
\def\vxi{{\bm{\xi}}}
\def\vzeta{{\bm{\zeta}}}
\def\vW{{\bm{W}}}
\def\vb{{\bm{b}}}
\newcommand{\ve}{\@ifnextchar\bgroup{\velong}{{\bm{e}}}}
\newcommand{\velong}[1]{{\bm{#1}}}
\def\vg{{\bm{g}}}
\def\vh{{\bm{h}}}
\def\vm{{\bm{m}}}
\def\vn{{\bm{n}}}
\def\vq{{\bm{q}}}
\def\vu{{\bm{u}}}
\def\vv{{\bm{v}}}
\def\vw{{\bm{w}}}
\def\vx{{\bm{x}}}
\def\vy{{\bm{y}}}
\def\vz{{\bm{z}}}
\def\mA{{\bm{A}}}
\def\mB{{\bm{B}}}
\def\mG{{\bm{G}}}
\def\mH{{\bm{H}}}
\def\mI{{\bm{I}}}
\def\mM{{\bm{M}}}
\def\mP{{\bm{P}}}
\def\mS{{\bm{S}}}
\def\mU{{\bm{U}}}
\def\mV{{\bm{V}}}
\def\mW{{\bm{W}}}
\def\mX{{\bm{X}}}
\def\mZ{{\bm{Z}}}
\def\mSigma{{\bm{\Sigma}}}
\def\mTheta{{\bm{\Theta}}}
\DeclareMathAlphabet{\mathsfit}{\encodingdefault}{\sfdefault}{m}{sl}
\SetMathAlphabet{\mathsfit}{bold}{\encodingdefault}{\sfdefault}{bx}{n}
\def\gZ{{\mathcal{Z}}}
\newcommand{\E}{\mathbb{E}}
\newcommand{\R}{\mathbb{R}}
\newcommand{\normltwo}{\ell_2}
\newcommand{\inp}[2]{\left\langle #1,#2\right\rangle}
\newlist{assumpenum}{enumerate}{1} 
\setlist[assumpenum]{label=\Alph*., ref=\theassumption\Alph*, leftmargin=*}
\newcommand{\cB}{\mathcal{B}}
\newcommand{\cC}{\mathcal{C}}
\newcommand{\cE}{\mathcal{E}}
\newcommand{\cF}{\mathcal{F}}
\newcommand{\cL}{\mathcal{L}}
\newcommand{\cM}{\mathcal{M}}
\newcommand{\cN}{\mathcal{N}}
\newcommand{\cO}{\mathcal{O}}
\newcommand{\cP}{\mathcal{P}}
\newcommand{\cR}{\mathcal{R}}
\newcommand{\cS}{\mathcal{S}}
\newcommand{\cT}{\mathcal{T}}
\newcommand{\cV}{\mathcal{V}}
\newcommand{\cX}{\mathcal{X}}
\newcommand{\cZ}{\mathcal{Z}}
\renewcommand{\O}{\cO}
\newcommand{\Diag}{\mathrm{Diag}}
\newcommand{\vsigma}{\mathbf{\sigma}}
\newcommand{\Rgez}{\mathbb{R}_{\ge 0}}
\newcommand{\grp}{\mathrm{grp}}
\newcommand{\norm}[1]{\|#1\|}
\newcommand{\normone}[1]{\norm{#1}_1}
\newcommand{\normtwo}[1]{\norm{#1}_2}
\newcommand{\lnormtwo}[1]{\left\|#1\right\|_2}
\newcommand{\inner}[2]{\left\langle{#1}, {#2}\right\rangle}
\newcommand{\diag}{\mathrm{diag}}
\newcommand{\dist}{\mathrm{dist}}
\newcommand{\dd}{\mathrm{d}}
\newtheorem{theorem}{Theorem}[section]
\newtheorem{lemma}{Lemma}[section]
\newtheorem{corollary}{Corollary}[section]
\newtheorem{remark}{Remark}[section]
\newtheorem{definition}{Definition}[section]
\newtheorem{assumption}{Assumption}[section]
\newcommand{\Aref}[1]{\hyperref[#1]{Appendix~\ref*{#1}}}
\newcommand{\Strain}{\cS_{\mathrm{train}}}
\renewcommand{\paragraph}{%
  \@startsection{paragraph}{4}%
  {\z@}{0ex}{-1em}%
  {\normalfont\normalsize\bfseries}%
}
\setlist[enumerate]{leftmargin=2em}
\title{Adam Reduces a Unique Form of Sharpness:
Theoretical Insights Near the Minimizer Manifold}
\author{%
  Xinghan Li\thanks{Equal contribution; alphabet ordering} \quad Haodong Wen\footnotemark[1]~~\thanks{Most work was done while Haodong was at Xi’an Jiaotong University.}\quad Kaifeng Lyu\thanks{Corresponding author.}\\
  Institute for Interdisciplinary Information Sciences\\
        Tsinghua University\\
  \texttt{\{xh-li22,whd25\}@mails.tsinghua.edu.cn\quad klyu@mail.tsinghua.edu.cn} \\
}
\begin{document}

\maketitle

\begin{abstract}
    Despite the popularity of the Adam optimizer in practice, most theoretical analyses study Stochastic Gradient Descent (SGD) as a proxy for Adam, and little is known about how the solutions found by Adam differ. In this paper, we show that Adam implicitly reduces a unique form of sharpness measure shaped by its adaptive updates, leading to qualitatively different solutions from SGD.
    More specifically, when the training loss is small, Adam wanders around the manifold of minimizers and takes semi-gradients to minimize this sharpness measure in an adaptive manner, a behavior we rigorously characterize through a continuous-time approximation using stochastic differential equations.
    We further demonstrate how this behavior differs from that of SGD in a well-studied setting: when training overparameterized models with label noise, SGD has been shown to minimize the trace of the Hessian matrix, $\tr(\mH)$, whereas we prove that Adam minimizes $\tr(\Diag(\mH)^{1/2})$ instead.
    In solving sparse linear regression with diagonal linear networks, this distinction enables Adam to achieve better sparsity and generalization than SGD.
    Finally, our analysis framework extends beyond Adam to a broad class of adaptive gradient methods, including RMSProp, Adam-mini, Adalayer and Shampoo, and provides a unified perspective on how these adaptive optimizers reduce sharpness, which we hope will offer insights for future optimizer design.
\end{abstract}






\section{Introduction}

Due to the non-convexity of the loss landscape, neural networks trained in different ways can perform very differently on the test set, even if they achieve the same training loss or accuracy~\citep{zhang2017understanding,keskar2017on,liu2022pretraininglossbetterdownstream,saunshi2024inductive}.
To mathematically understand the generalization of neural networks, especially for over-parameterized models that admit many global minimizers, a key step is to understand the {\it implicit bias} of optimization methods~\citep{neyshabur2014search, soudry2018implicit}.
That is, beyond just minimizing the training loss, {\it what kinds of solutions are different optimizers implicitly biased towards?}

Many theoretical works on implicit bias focus on (full-batch) gradient descent or its continuous variant, gradient flow. This includes the works on the implicit bias towards max-margin classifiers~\citep{soudry2018implicit,nacson2019lexicographic,lyu2020gradient,ji2020directional}, implicit bias towards min-norm solutions~\citep{lyu2024dichotomy}, and equivalence to kernel methods~\citep{jacot2018neural,chizat2019lazy}.
However, these characterizations do not highlight the specific role of stochasticity in SGD, although it is more widely used in practice than gradient flow or full-batch gradient descent.

Another line of works~\citep{blanc2020implicit,damian2021label,li2021happens} demonstrate that the gradient noise in SGD induces an additional form of implicit bias that reduces the {\it sharpness} of the solutions, a generalization measure that has been long observed to correlate with generalization~\citep{hochreiter1997flat,keskar2017on,jiang2020fantastic,foret2021sharpnessaware}.
More specifically, these works focus on the dynamics of SGD when the training loss is already small and the iterates are close to a manifold of minimizers. \citet{li2021happens} introduced a general framework to analyze the dynamics of SGD near the minimizer manifold, showing that SGD will not stop at arbitrary global minimizers, but drift and diffuse around the manifold, driving the iterates towards flatter regions of the loss landscape.

This behavior is mathematically characterized by a Stochastic Differential Equation (SDE), termed as {\it slow SDE}~\citep{gu2023and}. This SDE accurately captures the projected dynamics of SGD near the minimizer manifold over a timescale of $\mathcal{O}(\eta^{-2})$, and reveals that SGD behaves like a (semi-)gradient method on the manifold, taking semi-gradients to minimize a specific sharpness measure determined by the Hessian and gradient noise. See \Cref{sec:prelim} for more details.



However, SGD is rarely used directly in modern deep learning.
Instead, Adaptive Gradient Methods (AGMs) have become the de facto standard for training neural networks. Among them, Adam~\citep{kingma2014adam} innovatively combines the moving average of the first and second moments of gradients to determine an adaptive learning rate for each parameter. Across a wide range of domains, Adam provides faster convergence and better stability than SGD~\citep{ashish2017attention,dosovitskiy2020image,schulman2017proximal,zhang2024transformersneedadamhessian}.

Despite the popularity of Adam, little is known about its implicit bias, especially how it is different from SGD in terms of reducing sharpness.
In the literature, \citet{ma2023a} made attempts to generalize the slow SDE framework from SGD to Adam, but their analysis is specific to a two-dimensional loss function and 
involves a quasistatic approximation that lacks full mathematical rigor.
Other works, such as~\citet{liu2022pretraininglossbetterdownstream} and \citet{gu2024a}, leverage insights from the slow SDE developed for SGD to interpret empirical observations with Adam, but do not provide a theoretical analysis of Adam's own dynamics. A rigorous analysis of Adam's implicit bias in terms of sharpness remains an open problem.

\paragraph{Our Contributions.} In this paper, we show that Adam biases the iteration towards flatter regions in a way that differs from SGD and implicitly reduces a unique form of sharpness. We formally prove separations between SGD and Adam under concrete theoretical settings:
\begin{enumerate}
    \item In~\Cref{sec:theory}, we generalize the slow SDE for SGD to Adam. Our slow SDE approximates the dynamics of Adam near the minimizer manifold, and reveals that Adam behaves like an adaptive gradient method that minimizes a unique form of sharpness by taking semi-gradients on the manifold.
    \item In \Cref{sec:diag}, we formally prove the generalization benefit of Adam when training overparameterized models with label noise. In this setting, we show that the implicit regularizer of Adam becomes $\tr(\Diag(\mH)^{1/2})$, where $\mH$ denotes the Hessian matrix. Compared to the implicit regularizer $\tr(\mH)$ of SGD in the same setting, this unique form of sharpness reduction can yield sparser solutions when the model parameterization aligns with the underlying problem structure. 
    We empirically verify this theoretical prediction in the setting of solving sparse linear regression with diagonal linear networks~\citep{woodworth2020kernel}, 
    where Adam recovers the sparse ground truth with less data. 
    In \Cref{sec:matrix}, we further present a failure case of Adam's generalization: Adam does not outperform SGD in deep matrix factorization, likely because minimizing $\tr(\Diag(\mH)^{1/2})$ does not favor low-rank solutions.
    \item On the technical side, our analysis framework extends beyond Adam to a broad class of adaptive gradient methods (AGMs), including Adam, RMSProp, Adam-mini, Adalayer, and Shampoo.
    We develop several techniques that may be of independent interest, including a manifold projection operator tailored for AGMs, and a high-probability convergence analysis of AGMs under Polyak-Łojasiewicz conditions that directly bounds $\cL(\vtheta_k)-\cL^*$.
\end{enumerate}

\section{Related Work}\label{sec:related}

\paragraph{Implicit Bias of SGD.}

A line of works studies the implicit bias of SGD when training overparameterized models with label noise.
\citet{blanc2020implicit} proved that with $\ell_2$ loss and label noise, with high probability, SGD will move away from points on the minimizer manifold $\Gamma$ if and only if $\tr(\mH)$ is not locally minimal within $\Gamma$. \citet{haochen2021shape} further extended this local characterization to a global convergence result under the diagonal net setting \citep{woodworth2020kernel}.
However, \citet{haochen2021shape} relied on a manually designed, non-constant learning-rate schedule. \citet{damian2021label} overcame this issue by proving that the same implicit bias holds under constant learning rates and for any smooth loss satisfying the Kurdyka–Łojasiewicz condition. They also proved the same implicit bias for SGD with momentum (SGDM).

Up to this point, all analyses were not able to track the optimization over $\O(\eta^{-2})$ iterations, but this is necessary to capture the entire sharpness reduction process after converging to the minimizer manifold.
\citet{li2021happens} were the first to tackle this problem, which provided an accurate SDE-based characterization of the long-term behavior of SGD over $\O(\eta^{-2})$ time. 
Their SDE characterization also goes beyond the label noise case and is able to capture the implicit bias induced by general forms of gradient noise.
However, the SDE derivation in \citet{li2021happens} is specific to plain SGD and cannot be directly extended to other optimizers. \citet{gu2023andwhendoeslocal} later termed this form of SDE \emph{the slow SDE} and derived it for local SGD~\citep{lin2018don} in a way that is potentially generalizable to a broader class of optimizers. 
\citet{wang2023marginal} reinforced that momentum does not alter the implicit bias by showing that SGD and SGDM are characterized by the same slow SDE.



Another line of works characterizes the implicit bias of SGD through the lens of \textit{Gradient Regularization (GR)}. \citet{li2018stochasticmodifiedequationsdynamics} first derived a \emph{stochastic modified equation} of SGD, showing that introducing an additional gradient norm regularizer to the drift term of SDE helps approximate SGD more accurately. Later \citet{barrett2020implicit} also discovered the same regularizer that can be added to gradient flow to approximate GD in a higher order, and termed this behavior the \textit{Implicit Gradient Regularization (IGR)}.
\citet{smith2021originimplicitregularizationstochastic} specifically studied and highlighted the IGR for SGD. 
Empirical studies \citep{geiping2022stochastictrainingnecessarygeneralization, novack2022disentanglingmechanismsimplicitregularization} showed that GR is not necessarily implicit by demonstrating that explicitly regularizing gradient norm for SGD with larger or even full batch size can recover the generalization performance of small-batch SGD. GR was also found by \citet{karakida2023understandinggradientregularizationdeep} to be connected with Sharpness-Aware Minimization (SAM) \citep{foret2021sharpnessaware} and inspired improvements upon SAM in terms of generalization performance \citep{zhao2022penalizinggradientnormefficiently}. 
The IGR method was later generalized to the analysis of as Adam and AdamW by \citet{cattaneo2024implicitbiasadam} and \citet{ cattaneo2025memoryoptimizationalgorithmsimplicitly}. We further discuss our relationship to \citet{cattaneo2024implicitbiasadam} in the next paragraph.


\paragraph{Implicit Bias of Adam.}
On the theoretical side, the current literature still lacks a rigorous understanding of the implicit bias of Adam, although it has been used more widely than SGD in practical deep learning, especially for large language models. However, many efforts have been made on this 
problem. \citet{qian2019implicit} and \citet{xie2024implicit} characterized the implicit biases of AdaGrad and AdamW, respectively. However, their methods do not extend to Adam. \citet{wang2021implicit} showed that Adam's implicit regularizer is identical to that of SGD, while their result requires that the gradient coordinates have magnitude at most $\epsilon$, which is typically not feasible in practice. \citet{zhang2024implicitbiasadamseparable}'s analysis is also limited in scope, as it focuses on Adam's implicit bias on linearly separable data, a condition generally not met by real-world applications. A work using IGR to analyze the implicit bias of Adam, \citet{cattaneo2024implicitbiasadam}, seems similar at first glance to our study, but we are actually offering a different angle on the implicit bias of Adam. In particular, \citet{cattaneo2024implicitbiasadam} argued that full-batch Adam with constant learning rate approximately follows an ODE that anti‐regularizes sharpness when $\beta_1<\beta_2$. Our work analyzes the dynamics of Adam for $\O(\eta^{-2})$ steps, a longer horizon than \citet{cattaneo2024implicitbiasadam}. Our analysis shows that with gradient noise, Adam can be characterized by a slow SDE that regularizes sharpness in the long term, offering a complementary perspective.

\paragraph{Approximation of Stochastic First-Order Optimizers with SDE.} 
Optimizers such as SGD and Adam take $\tilde{\O}(\eta^{-1})$ steps to converge onto the manifold, and then move along the manifold for $\O(\eta^{-2})$ steps, during which the optimizer will be dominated by a slower implicit regularization dynamics, different from the convergence dynamics earlier. Before slow SDE was introduced by \citet{li2021happens}, the conventional SDE approximations \citep{li2018stochasticmodifiedequationsdynamics,li2021validitymodelingsgdstochastic,cattaneo2024implicitbiasadam,malladi2024sdesscalingrulesadaptive} track the iteration during the convergence phase, but they struggle to bound the approximation error if extended to the manifold phase. Slow SDE tackles this problem by peeling the convergence dynamics off and approximating the iteration's projection on the manifold only. In this way, slow SDE manages to track iteration throughout the manifold phase for $\O(\eta^{-2})$ time. This idea will be made explicit in \Cref{sec:prelim}. 
In this work, to fill in the gaps and provide a theoretical analysis that tracks iterations of Adam for a sufficiently long time, we adopt the tool of slow SDE as in the aforementioned \citet{li2021happens} and \citet{gu2023and}.


\paragraph{Adaptive Gradient Methods.}
As a test-of-time optimizer that has revolutionized the field of deep learning~\citep{kingma2014adam}, Adam innovatively combined the moving average of the first and second moments of gradients to determine an adaptive learning rate. Adam has also spawned a family of derivative optimizers such as AdamW \citep{loshchilov2017decoupled}, AdaFactor \citep{shazeer2018adafactor}, Adam-mini \citep{zhang2025adamminiusefewerlearning}, Adalayer \citep{zhao2025deconstructingmakesgoodoptimizer} and AdaSGD \citep{wang2020adasgd}, maintaining significant advantages over SGD in terms of empirical use. Under a more general framework of adaptive gradient methods, many optimizers also get huge success as adaptive gradient methods, such as RMSprop~\citep{hinton2012neural} and Adafactor~\citep{shazeer2018adafactor}.

\paragraph{Convergence of Adam.} There have been many previous works discussing the convergence bound of Adam. For example, \citet{reddi2018convergence} and \citet{dereich2024convergence} give convergence bounds under the convexity condition, \citet{zou2019sufficient}, \citet{shi2021rmsprop} and \citet{zhang2022adam} focus on the cases where learning rates follow a $1/\sqrt{t}$ decay, and the bounds given by \citet{NEURIPS2018_90365351}, \citet{zhang2022adam} and \citet{10.1145/3637528.3671718} do not decrease to $0$ as $\eta \to 0$. Also, most works \citep{defossez2020simple, guo2025unifiedconvergenceanalysisadaptive, iiduka2022theoreticalanalysisadamusing, wang2024convergenceadamnonuniformsmoothness, zhang2024convergence, hong2023highprobabilityconvergenceadam} only establish an upper bound on the average of gradient norms over the time of iteration. In this work, we derive a high-probability convergence analysis that directly bound the loss term of the last step, $\cL(\vtheta_k)-\cL^*$, to $o(1)$. Going beyond convex loss functions, we establish the bound on $\mu$-PL functions, and we focus on the constant learning rate schedule.

\vspace{-0.04in}
\section{Preliminaries}\label{sec:prelim}
\paragraph{Notations.}
Unless otherwise stated, for a square matrix $\mM$, we use $\diag(\mM)$ to denote the vector consisting of its diagonal entries. The notation $\mathrm{Diag}$ (with capital ``D'') has two related usages: (1) for a vector $\vv$, $\mathrm{Diag}(\vv)$ denotes the diagonal matrix with $\vv$ on its diagonal; and (2) for a square matrix $\mM$, $\mathrm{Diag}(\mM)$ denotes the diagonal matrix that keeps only the diagonal entries of $\mM$ and zeros out the off-diagonal entries, i.e., $\mathrm{Diag}(\mM) := \mathrm{Diag}(\diag(\mM))$. For two vectors $\vu$, $\vv$ with the same dimension $d$, $\vu \odot \vv$ denotes their element-wise product $(u_1v_1, \dots, u_d v_d)$. For any exponent $p > 0$, $\vv^{\odot p}$ denotes element-wise exponentiation, i.e., $ \vv^{\odot p} := (v_1^p, \dots, v_d^p)$, and $\sqrt{\vv}$ stands for $\vv^{\odot 1/2}$. We use $\Rgez^{d}$ to denote the set of $d$-dimensional vectors with non-negative entries, and $\mathbb{S}^d_{++}$ to denote the set of $d \times d$ symmetric positive definite matrices. Given a pint $\vtheta$ and the manimizer manifold $\Gamma$, we let $\mathrm{dist}(\vtheta,\Gamma) = \min_{\vv \in \Gamma} \|\vtheta -\vv\|_2$ be the $\ell_2$ distance between $\vtheta$ and the manifold $\Gamma$.

\paragraph{Derivatives.}
For any scalar-valued function $f: \R^d \to \R$, we write $\nabla f(\vtheta)$ for its gradient at $\vtheta$, and $\nabla_{\Gamma} f(\vtheta)$ for the projection of this gradient onto the tangent space of a manifold $\Gamma$.
For a vector-valued function $F: \R^d \to \R^d$, we denote its Jacobian at $\vtheta \in \R^d$ as $\partial F(\vtheta) \in \R^{d \times d}$, and its second-order derivative as $\partial^2 F(\vtheta)$, which is a third-order tensor. 
Given a matrix $\mM \in \R^{d \times d}$, we define $\partial^2 F(\vtheta)[\mM]$ as the second-order directional derivative of $F$ at $\vtheta$ in the direction of $\mM$,
$\partial^2 F(\vtheta)[\mM] := \sum_{i \in [d]}\inp{\frac{\partial^2 F_i}{\partial\vtheta^2}}{\mM}\ve_i$,
where $F_i$ denotes the $i$-th coordinate of $F$, and $\ve_i$ the $i$-th vector of the standard basis. When the context is clear, 
for any scalar-valued function $\cL: \R^d \to \R$, we abbreviate $\partial^2 (\nabla \cL)(\vtheta)[\mM]$ as $\nabla^3 \cL(\vtheta)[\mM]$.

\paragraph{Loss Functions.} 
Define $\ell(\vtheta; \xi)$ as the loss function for a data sample $\xi$ for a model with parameters~$\vtheta$.
Define $\cL(\vtheta) := \E_{\xi \sim \cS}[\ell(\vtheta; \xi)]$ as the training loss function, where $\cS$ is the training dataset and $\xi \sim \cS$ means the data sample $\xi$ is drawn from $\cS$ uniformly at random. Let $\cL^* := \min_{\vtheta \in \mathbb{R}^d}\cL(\vtheta)$ be the minimum of training loss.
Let $\gZ(\vtheta)$ be the distribution of gradient noise $\nabla \ell(\vtheta; \xi) - \nabla \cL(\vtheta)$, which is a random variable that depends on $\vtheta$.
We define $\mSigma(\vtheta) := \E_{\vz \sim \gZ(\vtheta)}[\vz\vz^\top]$ as the noise covariance matrix of gradients at $\vtheta$.

\paragraph{Smoothness Assumptions.} 
We make the following smoothness assumptions on the loss function and the gradient noise distribution. 

\begin{assumption}\label{amp:basic}
    The loss function $\cL$ and the matrix square root of the noise covariance $\mSigma^{1/2}$ are $\cC^5$-smooth on $\R^d$, i.e. all their partial derivatives  up to order $5$ exist and are continuous.
\end{assumption}

Assuming smoothness on the loss function is a common practice in optimization analysis. Here, we specifically assume the $\cC^5$-smoothness, which we found to be a minimal smoothness requirement for our proof to hold for all $\cC^3$-smooth test functions in \Cref{thm:SDE-approximation}.

Moreover, we assume that the smoothness constant of $\cL$ and the gradient noise are globally bounded:
\begin{assumption}\label{amp:smooth}
    $\cL$ is $\rho$-smooth on $\R^d$, i.e. $\forall \vtheta_1,\vtheta_2 \in \mathbb{R}^d $, $\lnormtwo{\nabla \cL(\vtheta_1) - \nabla \cL(\vtheta_2)} \leq \rho \lnormtwo{ \vtheta_1 - \vtheta_2}$ and $\cL$ is bounded from below, i.e. $\cL^* = \text{inf}_{\vtheta}\cL(\vtheta) > -\infty$. 
\end{assumption}


\begin{assumption}\label{amp:gradient-bound}
     The noisy gradients are $\normltwo$-bounded, i.e., there exists some constant $R$ s.t. $\forall \vtheta \in \R^d$, $ \lnormtwo{\nabla \ell(\vtheta; \xi)} \leq R$ almost surely for training data sample $\xi \sim \Strain$.
     
\end{assumption}


\paragraph{SGD and Adam.} 
SGD is an iterative method that starts from an initial point $\vtheta_0$ and updates the parameters as
$\vtheta_{k+1} := \vtheta_{k} - \eta \nabla \ell_{k}(\vtheta_{k})$ for all $k \ge 0$,
where $\eta$ is the learning rate, $\ell_k(\vtheta)$ is the loss function for the data sample $\xi_k$ sampled at step $k$.
Adam~\citep{kingma2014adam} is a popular optimizer that updates the parameters as:
\begin{align*}
    \vm_{k+1} &:= \beta_1 \vm_k + (1-\beta_1) \nabla \ell_{k}(\vtheta_{k}) \\
    \vv_{k+1} &:= \beta_2 \vv_k + (1-\beta_2) \nabla\ell_{k}(\vtheta_{k})^{\odot 2} \\
    \theta_{k+1,i} &:= \theta_{k,i} - \eta \frac{m_{k+1,i}}{\sqrt{v_{k+1,i}}+\epsilon} \quad \text{for all } i \in [d].
\end{align*}
Note that in practice, it is common to normalize $\vm_{k+1}$ and $\vv_{k+1}$ by $1-\beta_1^{k+1}$ and $1-\beta_2^{k+1}$ respectively before the division. However, this normalization quickly becomes neglectable when $k$ is large, so we ignore it for simplicity.

\paragraph{SDE First-Order Approximation For SGD.} A \textit{Stochastic Differential Equation} (SDE) is an extension of an ordinary differential equation that incorporates random perturbations, and is widely used to model systems under the influence of noise. An SDE on $\R^d$ takes the form $\mathrm{d}\vtheta_t=b(\vtheta_t)\mathrm{d}t +\sigma(\vtheta_t)\mathrm{d}\mW_t$
where $b:\R^d\to\R^d$ is the drift vector field, $\sigma:\R^d\to\R^{d\times m}$ is the diffusion matrix, and $\{\mW_t\}_{t \ge 0}$ is an $m$-dimensional Wiener process. 
A line of works~\citep{li2015dynamics, jastrzkebski2017three,li2017stochastic, smith2020generalization,li2019stochastic,li2021validitymodelingsgdstochastic} used the following SDE to serve as a first-order approximation of SGD, which we refer to as the \textit{conventional SDE}:
\begin{align*}
    \mathrm{d}\vtheta_t = -\nabla \cL(\vtheta_t)\mathrm{d}t + \sqrt{\eta}\mSigma^{1/2}(\vtheta_t)\mathrm{d}\mW_t,
\end{align*}
where the stochastic integral is taken in the Itô sense.
For an introduction to Itô calculus, see \citet{oksendal2013stochastic}. 
Later, \citet{malladi2024sdesscalingrulesadaptive} extended this type of SDE to Adam.
Besides these conventional SDEs, below we introduce another type of SDE, slow SDE, that can more explicitly capture the implicit bias of SGD near a manifold of minimizers.




\paragraph{Manifold Assumption.}
Before going into the slow SDE, we introduce the \textit{manifold assumption}. 
Previous studies~\citep{garipov2018losssurfacesmodeconnectivity,kuditipudi2019explaining} have found that low-loss solutions are in fact connected to each other, a phenomenon known as mode connectivity.
\citet{wen2024understandingwarmupstabledecaylearningrates} provided empirical evidence that the training dynamics of language model training usually happen in a structure similar to a river valley, where many low-loss solutions lie in the bottom of the valley.
Motivated by these observations, many previous works~\citep{li2021happens,fehrman2020convergence,lyu2020gradient,gu2023and} assumed that the minimizers of the training loss function are not isolated points but connected and form a manifold $\Gamma$:
\begin{assumption}
\label{amp:low-rank-manifold}
    $\Gamma$ is $\cC^{\infty}$-smooth, $(d-m)$-dimensional compact submanifold of $\R^d$, where any $\zeta \in \Gamma$ is a local minimizer of $\cL$. For all $\vzeta \in \Gamma$, rank$(\nabla^2 \cL(\vzeta)) = m$. Additionally, there exists an open neighborhood of $\Gamma$, denoted as $U$, such that $\Gamma = \arg \min_{\vtheta \in U}\cL(\vtheta)$. 
\end{assumption}
With this assumption, if an optimization process converges and the learning rate $\eta$ is sufficiently small, then the process will be trapped near some minimizer manifold which we denote by $\Gamma$. 
\paragraph{Slow SDE.} A line of works~\citep{blanc2020implicit,damian2021label,li2021happens} studied the dynamics of SGD near the manifold $\Gamma$ and showed that 
SGD has an implicit bias towards flatter minimizers on $\Gamma$.
This effect cannot be directly seen from conventional SDEs, so \citet{li2021happens} derived a new type of SDE approximation, called slow SDE, that can explicitly capture this effect.
See \Aref{app:illustration} for an illustration of the difference between conventional SDEs and slow SDEs.
Here we introduce the slow SDE for SGD following the formulation in \citet{gu2024a}.
For ease of presentation, we define the following projection operators $\Phi, P_{\vzeta}$
for points and differential forms respectively. Consider the gradient flow $\frac{\dd \vx(t)}{\dd t} = -\nabla
\cL(\vx(t))$ with $\vx(0)=\vx$, and fix some point $\vtheta_{\mathrm{null}} \notin \Gamma$, we define the gradient flow projection of any $\vx$, $\Phi(\vx)$, as $\lim_{t\to +\infty} \vx(t)$ if the limit exists and belongs to $\Gamma$, and $\vtheta_{\mathrm{null}}$ otherwise. It can be shown by simple calculus \citep{li2021happens} that $\partial \Phi(\vzeta)$ equals the
projection matrix onto the tangent space of $\Gamma$ at $\vzeta$.
We decompose the noise covariance $\mSigma(\vzeta)$ for $\vzeta
\in \Gamma$ into two parts: the noise in the tangent space
$\mSigma_{\parallel}(\vzeta) := \partial \Phi(\vzeta) \mSigma(\vzeta) \partial
\Phi(\vzeta)$ and the noise in the normal space $\mSigma_{\Diamond}(\vzeta) :=
\mSigma(\vzeta) - \mSigma_{\parallel}(\vzeta)$.

For any $\vzeta \in \Gamma$, matrix $\mA$ and vector $\vb$, we use $P_{\vzeta}(\mA \dd \vW_t + \vb \dd t)$ to denote $\Phi(\vzeta + \mA \dd \vW_t + \vb \dd t) - \Phi(\vzeta)$, which equals $\partial \Phi(\vzeta) \mA \dd \vW_t + \left(\partial
    \Phi(\vzeta) \vb + \frac{1}{2}\partial^2 \Phi(\vzeta)[\mA \mA^\top]\right)
    \dd t$ by Itô calculus.
 $P_{\vzeta}$ can be interpreted as projecting an infinitesimal step from $\vzeta$, so that $\vzeta$
after taking the projected step does not leave the manifold $\Gamma$.
Now we are ready to state the slow SDE for SGD.
\begin{definition}[Slow SDE for SGD]
Given $\eta>0$ and $\vzeta_0\in\Gamma$, define $\vzeta(t)$ as the solution of the following SDE with initial condition $\vzeta(0)=\vzeta_0$:
\begin{align}
\mathrm{d}\vzeta(t)
&= P_{\vzeta}\Bigl(
  \underbrace{\mSigma_{\parallel
  }^{1/2}(\vzeta)\mathrm{d}\mW_t}_{\text{(a) diffusion}}
  -\underbrace{\frac{1}{2}\nabla^3 \cL(\vzeta)\bigl[\widehat\mSigma_{\Diamond}(\vzeta)\bigr]\mathrm{d}t}_{\text{(b) drift}}
\Bigr).
\label{eq:slow-sde-sgd}
\end{align}
Here $\widehat\mSigma_{\Diamond}(\vzeta)$ is defined as
$
\sum_{i,j:\,\lambda_i\neq0\lor\lambda_j\neq0}
    \frac{1}{\lambda_i+\lambda_j}
    \bigl\langle \mSigma_\Diamond(\vzeta),\,\vv_i \vv_j^{\!\top}\bigr\rangle\,
    \vv_i \vv_j^{\!\top},
$
where $\{\vv_i\}_{i=1}^d$ is an orthonormal eigenbasis of $\nabla^2\cL(\zeta)$ with corresponding eigenvalues $\lambda_1,\dots,\lambda_d$. 
\end{definition}

\paragraph{Interpretation of the Slow SDE for SGD: Semi‐gradient Descent} This SDE on the minimizer manifold $\Gamma$ splits naturally into a \textit{diffusion} term $ P_\vzeta\bigl(\mSigma_{\parallel}^{1/2}(\vzeta)\,\dd\mW_t\bigr)$ injecting noise in the tangent space, and a \textit{drift} term $-\tfrac12\,P_\vzeta\bigl(\nabla^3\cL(\vzeta)\bigl[\widehat\mSigma_\Diamond(\vzeta)\bigr]\dd t\bigr)$
that can be seen as the negative \emph{semi‐gradient} of the following sharpness measure:
\begin{align*}
    \mu(\vzeta) := \inner{\nabla^2 \cL(\vzeta)}{\widehat\mSigma_{\Diamond}(\vzeta)}.
\end{align*}
Here we use the word ``semi-gradient''~\citep{mnih2015human,brandfonbrener2019geometric} because it is not exactly the gradient of $\mu(\vzeta)$ but only the gradient with respect to the first argument of the inner product.
More specifically, define $\mu(\vzeta_1, \vzeta_2) := \inner{\nabla^2 \cL(\vzeta_1)}{\widehat\mSigma_{\Diamond}(\vzeta_2)}$, then the drift term is essentially $-\frac{1}{2}\left.\nabla_{\vzeta_1} \mu(\vzeta_1, \vzeta_2)\right|_{\vzeta_1 = \vzeta,\vzeta_2 = \vzeta}$ after projecting onto the tangent space of $\Gamma$ at $\vzeta$.
In other words, SGD near manifold takes semi-gradients to minimize the implicit regularizer $\langle \nabla^2 \cL(\vzeta),\widehat\mSigma_{\Diamond}(\vzeta)\rangle$ but pretend $\widehat\mSigma_{\Diamond}(\vzeta)$ to be fixed, i.e. ignore the dependency of $\widehat\mSigma_{\Diamond}(\vzeta)$ on $\vzeta$.


\paragraph{Example: Noisy Ellipse.} We provide a toy example to illustrate the phenomenon described by the slow SDE for SGD: 
there are two parameters $x, y$ and an elliptical loss with label noise
$
\cL(x,y)=\frac{1}{2}\left(\frac{(x+y)^2}{2a^2} +\frac{(y-x)^2}{2b^2}-1-\delta\right)^2.
$
The label noise $\delta$ is sampled uniformly from $\left\{-0.5, 0.5\right\}$ at every step. 
As depicted in \cref{fig:ellipse-three}, SGD moves towards flatter minimizers after reaching the manifold.
The same phenomenon can be observed for Adam, but Adam converges to a different minimizer that is closer to the axis (or, ``sparser'' in the parameter space).
Understanding the difference between SGD and Adam is the main focus of this paper.


\begin{figure}[t]
  \centering
  \begin{subfigure}[b]{0.32\textwidth}
    \includegraphics[width=\textwidth]{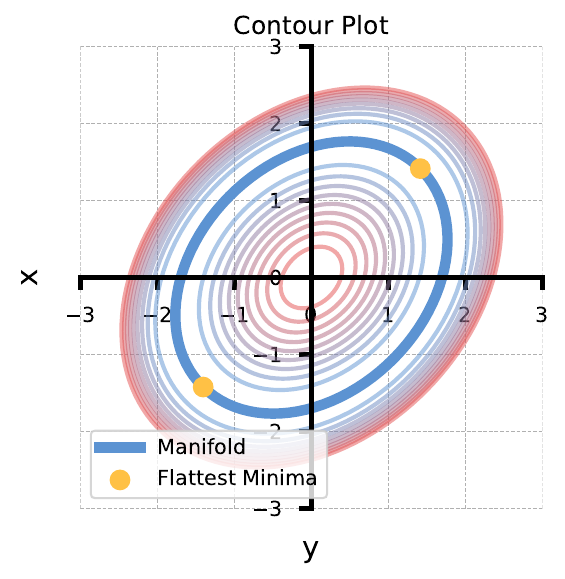}
    \caption{}\label{fig:ellipse-three-a}
  \end{subfigure}
  \begin{subfigure}[b]{0.32\textwidth}
    \includegraphics[width=\textwidth]{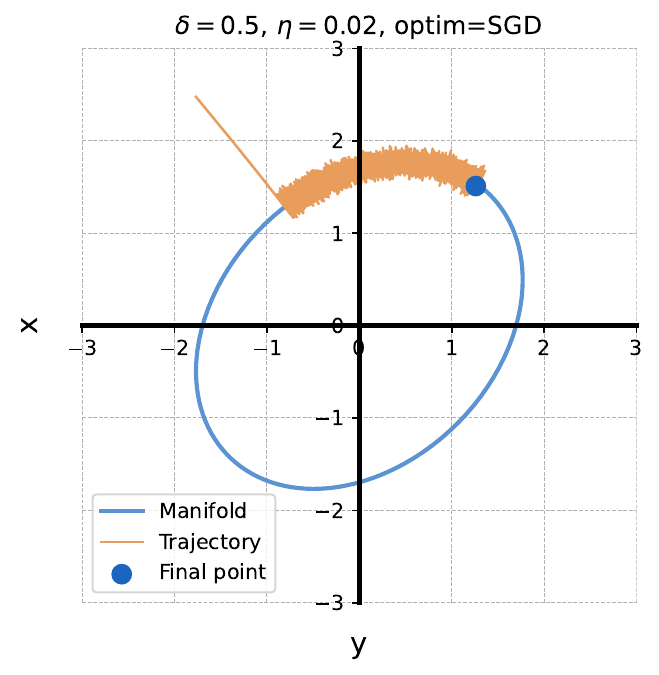}
    \caption{}\label{fig:ellipse-three-b}
  \end{subfigure}
  \begin{subfigure}[b]{0.32\textwidth}
    \includegraphics[width=\textwidth]{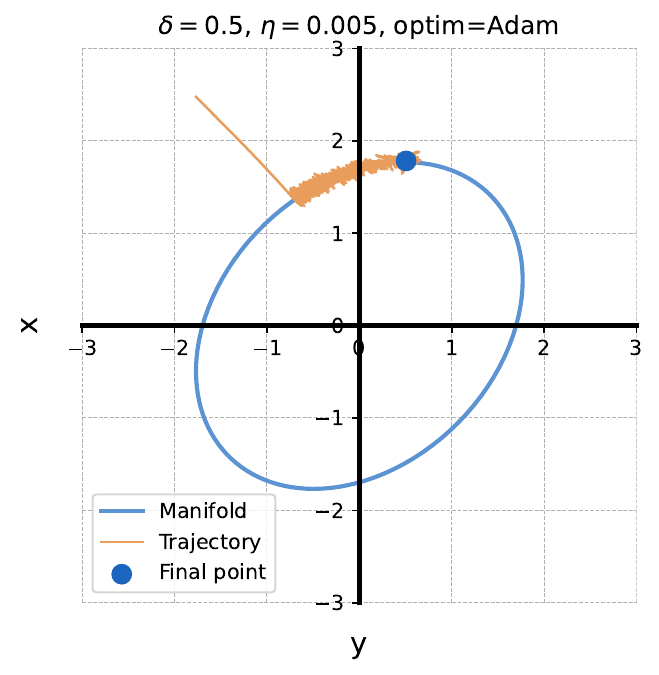}
    \caption{}\label{fig:ellipse-three-c}
  \end{subfigure}
  \caption{\textbf{(a)}: Coutour of the elliptical loss, from which we can see the two tips as the flattest minima. \textbf{(b)}: SGD implicitly minimizes $\tr(\mH)$ and converges to the flattest minima. \textbf{(c)}: Adam reduces sharpness too but converges to a different and sparser minimizer.}
  \label{fig:ellipse-three}
\end{figure}

\section{Theoretical Analysis of Adam}\label{sec:theory}
In this section, we generalize the slow SDE for SGD to a general class of adaptive gradient methods (AGMs), including Adam. We first present our novel slow SDE for a general class of AGMs, including Adam, and give an intuitive explanation for our results. Then, we discuss the difficulty of directly applying the slow SDE framework to Adam and other AGMs and how we resolve the problems. 
\paragraph{A General Class of Adaptive Gradient Methods.}
We define a general class of AGMs as follows:
\begin{align*}
    \vm_{k+1} &:= \beta_1 \vm_{k} + (1-\beta_1) \nabla \ell_{k}(\vtheta_{k}) \\
    \vv_{k+1} &:= \beta_2 \vv_{k} + (1-\beta_2) V\!\left(\nabla\ell_{k}(\vtheta_{k}) \nabla\ell_{k}(\vtheta_{k})^\top\right) \\
    \vtheta_{k+1} &:= \vtheta_{k} - \eta S(\vv_{k+1}) \vm_{k+1},
\end{align*}
where $d$ is the dimension of the parameter $\vtheta$ and the first order momentum $\vm$, and $D$ is the dimension of the vector $\vv$ that encodes information related to second order momentum. For all optimizers but Shampoo in \Cref{tab:agms}, we set $D = d$, while for Shampoo we have $D = d_1^2 + d_2^2$ if the matrix-like parameters have shape $(d_1, d_2)$. See \Aref{app:shampoo} for a detailed explanation.

The function $S: \Rgez^{D} \longrightarrow \mathbb{S}^d_{++}
$ is a $\rho_s$-smooth function for some $\rho_s > 0$, which maps a non-negative vector $\vv \in \Rgez^{D}$ to a symmetric positive definite matrix $S(\vv) \in \mathbb{S}^d_{++}$.
In addition, we require $S$ to satisfy $S(\vv) \succeq \frac{1}{R_0} \mI$  for some $R_0 > 0$ and any $\vv \in \Rgez^{D}$. We also require $V: \R^{d\times d} \longrightarrow \R^{D}$ to be a linear function that satisfies $V(\vg\vg^\top) \in \Rgez^{D}$ for all $\vg \in \R^d$, i.e., it always maps vector outer products to non-negative vectors.


A number of currently used optimization algorithms, such as RMSProp, Adam, Adam-mini, Adafactor\footnotemark[1], Adalayer, AdaSGD, and  Shampoo\footnotemark[2], all fit this framework. Note that we do not consider weight decays or bias corrections in these optimizers. Some examples of $V$ and $S$ functions are listed in \Cref{tab:agms}, including the AdamE-$\lambda$ optimizer that will be introduced in \cref{sec:diag} as a tool to tune the implicit bias of Adam.


\footnotetext[1]{We ignore update clipping, i.e. we adopt the Algorithm 2 in \citet{shazeer2018adafactor}.} 

\footnotetext[2]{In practice, the Shampoo optimizer is often equipped with the exponential moving average (EMA) on the calculation of pre-conditioner~\citep{morwani2024new}. Here we adopt this practical version of Shampoo instead of the original one~\citep{gupta2018shampoo}.}

Prior to the results, we introduce two technical assumptions: $S$ satisfies a mild smoothness condition, and $1-\beta_1$ is of constant order.



\begin{assumption}\label{amp:S-smooth}
    The function $S$ is $\cC^4$-smooth on $\Rgez^{D}$.
\end{assumption}


\begin{assumption}\label{amp:beta1-bound}
    $\beta_1 \leq 0.9$.
\end{assumption}


\begin{remark}\label{remark-0.9-mild}
The threshold $0.9$ in \cref{amp:beta1-bound} can also be replaced by any constant below $1$, and the approximation rate in our result will remain unaffected. So we actually consider the regime where $b_1 := 1 - \beta_1$ is of constant order. For real-world Natural Language Processing (NLP) models, BERT \citep{devlin-etal-2019-bert}, Transformer \citep{vaswani2017attention} and GPT \citep{radford2018improving} all use $\beta_1=0.9$. In computer vision (CV), pix2pix \citep{isola2017image} uses $\beta_1=0.5$, while U-Net \citep{ronneberger2015u} and ViT \citep{dosovitskiy2020image} use $\beta_1=0.9$. Thus, assuming $\beta_1 \leq 0.9$ is consistent with standard practice across multiple aspects.
\end{remark}




\subsection{Slow SDE Analysis for AGMs}



Our SDE for AGMs characterizes the training dynamics near the manifold $\Gamma$. First we rigorously define the preconditioned projection mapping $\Phi_\mS$ and the SDE projection formula as an extension to the $\Phi$ and $\mP_{\vzeta}$ mentioned in \cref{sec:prelim}, after which we present the SDE for AGMs we derived.
\begin{definition}[Preconditioner Flow Projection]\label{def:preconditioner-flow}
    Fix a point \( \theta_{\text{null}} \notin \Gamma \). Given a \textit{Positive Semi-Definite} matrix $\mS$, for \( x \in \mathbb{R}^d \), consider the preconditioner flow $\frac{\dd x(t)}{\dd t} = -\mS\nabla \mathcal{L}(x(t))$ with $x(0) = x$. We denote the preconditioner flow projection of \( x \) as \( \Phi_{\mS}(x) \), i.e. $\Phi_{\mS}(x) := \lim_{t \to +\infty} x(t)$ if the limit exists and belongs to $\Gamma$, and $\Phi_{\mS}(x) = \theta_{\text{null}}$ otherwise.
\end{definition}
\begin{definition}
    For any $\vzeta \in \Gamma$ and any differential form $\mA \dd \mW_t + \vb \dd t$ in Itô calculus, where $\mA \in \R^{d \times d}$, and $b \in \R^{d}$, we use $\mP_{\vzeta,\mS}(\mA\dd \mW_t + \vb \dd t)$ as a shorthand for the differential form $\partial \Phi_{\mS}(\vzeta)\mA\dd \mW_t + \mS \left(\partial \Phi_{\mS}(\vzeta)\vb + \frac{1}{2}\partial^2 \Phi_{\mS}(\vzeta)[\mA\mA^\top]\right)\dd t$.
\end{definition}
\begin{definition}[Slow SDE for AGMs]
    Given learning rate $\eta$ and the initial state $\vzeta_0\in \Gamma$, $\vv_0 \in \Rgez^D$, let $c := \frac{1-\beta_2}{\eta^2}$ be a constant and denote $\mS_t:=S(\vv(t))$, we define $\vzeta(t)$ as the solution of the following SDE with initial condition $\left(\vzeta(0),\vv(0)\right)=\left(\vzeta_0, \vv_0\right)$:
    \begin{align}
        \left\{\begin{array}{l}
             \dd \vzeta(t) = P_{\vzeta(t),\mS(t)}\left(\underbrace{\mSigma_{\parallel}^{1/2}(\vzeta(t); \mS(t))\dd \mW_t}_{\textit{diffusion}} \underbrace{- \frac{1}{2}\mS(t)\nabla^3\cL(\vzeta)\left[\mSigma_{\diamond}(\vzeta(t); \mS(t))\right]\dd t}_{\textit{drift}}\right), \\
             \dd \vv(t) = \underbrace{c \left(V(\mSigma(\vzeta)) - \vv\right) \dd t }_{\textit{preconditioner drift}}.
        \end{array}\right. \label{equ:AGMs-SDE}
    \end{align}
    $\mSigma_{\diamond}(\vzeta; \mS) = 
    \mS \mSigma(\vzeta) \mS -
    \mSigma_{\parallel}(\vzeta; \mS)$,
    $\mSigma_{\parallel}(\vzeta; \mS) = \partial \Phi_{\mS}(\vzeta)
     \mS \mSigma(\vzeta) \mS \partial \Phi_{\mS}(\vzeta)$.
\end{definition}


Note that the drift term in $\dd \vzeta(t)$ can be interpreted as an \textit{adaptive semi-gradient descent} process, in that this term drives the dynamics towards optimizing an adaptive loss function \begin{align*}
    \mu(\vzeta, \vv) = \langle \nabla^2 \cL(\vzeta), \mSigma_{\diamond}(\vzeta(t); \mS(t))\rangle
\end{align*}
as if $\mSigma_{\diamond}(\vzeta(t); \mS(t))$ has no dependence on $\vzeta$; also this gradient flow is preconditioned by a positive definite matrix $\mS(t)$. Recall that the drift term in the slow SDE for SGD can be seen as a semi-gradient descent. In the AGM framework, it takes $\Theta(\eta^{-2})$ time for the preconditioner $\mS(t)$ to make a significant (i.e. $\Theta(1)$) change, which coincides with the moving speed of the slow SDE of $\vzeta$. Therefore, compared to that of SGD, our SDE includes a new formula that tracks the motion of the preconditioner and injects adaptiveness accordingly in the semi-gradient descent process.

We prove that $\vzeta(t)$ always stays on the manifold $\Gamma$. And next, we present our main theorem showing that the above SDE in \Cref{equ:AGMs-SDE} tracks the trajectory of Adam in a weak approximation sense.

\begin{theorem}\label{thm:SDE-approximation}
Let $T>0$ and suppose \Cref{amp:basic}--\ref{amp:beta1-bound} hold. 
There exist constant $\epsilon_0,C>0$ such that the following statement holds for all sufficiently small learning rates $\eta$.
Define step $K_0 := \lfloor C\log(1/\eta)\rfloor$ and $K := \lfloor T\,\eta^{-2}\rfloor$. 
Run an AGM for $K_0+K$ iterations and obtain $\{(\vtheta_k, \vv_k)\}_{k=0}^{K_0+K}$, where the initial parameter $\vtheta_0$ 
is an arbitrary point that satisfies $\mathrm{dist}(\vtheta_0,\Gamma)\le \epsilon_0$.
Let $\mX(t)=(\vzeta(t),\vv(t))$ be the solution to \eqref{equ:AGMs-SDE} on $t\in[0,T]$ with initial condition 
$$\mX(0) =
\mX_0
\in \Gamma \times \mathbb{R}_{\ge 0}^D, \quad \mX_0 := (\Phi_{\mS(\vv_{K_0})}(\vtheta_{K_0}), \vv_{K_0}).
$$
For any $k \in \{0, 1, \dots, K\}$,
let $\bar\mX_{k} := \left(\Phi_{\mS(\vv_{K_0 + k})}(\vtheta_{K_0+k}),\vv_{K_0+k}\right)$ be the AGM state at step $K_0 + k$ projected onto $\Gamma$.
Then for any $C^3$ function $g(\vtheta)$,
\begin{align*}
\max_{0\le k\le K}
\left|
\E\left[g\left(\bar\mX_{k}\right)\middle|\mX_0\right]
-\E\left[g\left(\mX(k\eta^2)\right)\middle|\mX(0)=\mX_0\right]\right| 
= \widetilde \O\left(\eta^{0.25}\right),
\end{align*}
where $\widetilde \O(\cdot)$ hides logarithmic factors and constants independent of $\eta$ (but possibly depending on $g$).
\end{theorem}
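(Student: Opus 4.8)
The plan is to follow the weak-approximation paradigm for slow SDEs of \citet{li2021happens,gu2023and}, extended to cope with adaptivity and momentum. Split the $K_0+K$ iterations into a short burn-in phase (the first $K_0$ steps) and the manifold phase (the remaining $K=\lfloor T\eta^{-2}\rfloor$ steps). First I would use the high-probability convergence analysis of AGMs under the Polyak--Łojasiewicz condition (from \Cref{sec:theory}), together with the Łojasiewicz error bound valid near $\Gamma$ --- itself a consequence of the non-degeneracy in the normal directions guaranteed by the rank condition in \Cref{amp:low-rank-manifold} --- to show that with probability $1-o(1)$ the iterate $\vtheta_{K_0}$ lies inside the neighborhood $U$ on which $\Phi_{\mS}$ is well-defined and $\cC^4$-smooth; one natural role of the $K_0=\Theta(\log(1/\eta))$ burn-in steps is to wash out the transient from the initialization of the momentum buffer to within $\poly(\eta)$ error, using $\beta_1 \le 0.9$ (\Cref{amp:beta1-bound}), since $\beta_1^{K_0}=\poly(\eta)$. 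Because $1-\beta_2 = c\eta^2$, the preconditioner input $\vv$ has drifted by only $\widetilde{\O}(\eta^2)$ during burn-in, so initializing the SDE at $\mX_0 = (\Phi_{\mS(\vv_{K_0})}(\vtheta_{K_0}),\vv_{K_0})$ loses nothing. Throughout the manifold phase I would then maintain, via a Lyapunov argument built on the $\Theta(1)$ restoring force in the normal directions, the high-probability invariant $\dist(\vtheta_k,\Gamma) = \widetilde{\O}(\sqrt\eta)$, which keeps all iterates in $U$ and places the normal fluctuations in their quasi-stationary regime. On the complementary bad event (probability $\eta^{\omega(1)}$) the projected states $\bar\mX_k$ still lie in the fixed compact set $\Gamma\times[0,\norm{V}_{\mathrm{op}}R^2]^D$ on which $g$ is bounded, so that event contributes negligibly to the expectations.

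The heart of the argument is a one-window comparison between the discrete chain and the generator of \eqref{equ:AGMs-SDE}. Partition the manifold phase into windows of $\Delta = \Theta(\eta^{-1+a})$ steps for a small constant $a>0$. For a window starting from a state with projection $(\vzeta,\vv)$, I would show that the conditional mean and covariance of the increment of $(\Phi_{\mS}(\vtheta),\vv)$ over the window agree, up to a controllable error, with one step of the Euler--Maruyama scheme for \eqref{equ:AGMs-SDE} with time step $\Delta\eta^2$. This relies on the fast coordinates reaching a quasi-stationary regime within a window: the normal component of $\vtheta_k$ behaves like a discretized Ornstein--Uhlenbeck process driven by the preconditioned Hessian $\mS\nabla^2\cL(\vzeta)$, with quasi-stationary covariance of order $\eta$; the heavy-ball momentum $\vm_k$ relaxes on the $\O(1)$ timescale because $1-\beta_1 = \Theta(1)$; and the fluctuation of $\vv_k$ about its slow mean is $\widetilde{\O}(\eta)$. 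Feeding the quasi-stationary normal covariance into the second-order Itô expansion of $\Phi_{\mS}$ produces exactly the drift $-\tfrac12\mS\nabla^3\cL(\vzeta)[\mSigma_{\diamond}(\vzeta;\mS)]$ (with the $\mSigma_{\diamond}$/$\mSigma_{\parallel}$ split and the $\partial\Phi_{\mS}$ factor arising from how the normal fluctuations are mapped back onto $\Gamma$); the tangential gradient noise accumulates to a Gaussian increment matching the diffusion term $P_{\vzeta,\mS}\big(\mSigma_{\parallel}^{1/2}(\vzeta;\mS)\,\dd\mW_t\big)$; and the slow $\vv$-update averages to $c\,(V(\mSigma(\vzeta))-\vv)\,\dd t$ because $\E[V(\nabla\ell_k\nabla\ell_k^\top)] = V(\nabla\cL\nabla\cL^\top) + V(\mSigma(\vtheta_k)) = V(\mSigma(\vzeta)) + \widetilde{\O}(\sqrt\eta)$ near $\Gamma$, using linearity of $V$. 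Higher moments of the one-step updates are bounded using the almost-sure gradient bound (\Cref{amp:gradient-bound}), $S\succeq R_0^{-1}\mI$, and the smoothness of $S$ (\Cref{amp:S-smooth}).

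To pass from per-window to global error I would let $u(t,\cdot) := \E[g(\mX(T))\mid \mX(t)=\cdot]$ solve the backward Kolmogorov equation for \eqref{equ:AGMs-SDE}, telescope $\E[g(\bar\mX_K)\mid\mX_0] - \E[g(\mX(T))\mid\mX_0]$ into $\widetilde{\O}(\eta^{-2}/\Delta)$ increments of $u$ along the discrete trajectory, and bound each by the per-window moment-matching error against the first three spatial derivatives of $u$. Regularity of $u$ through third order follows from the smoothness of the SDE coefficients together with $g\in\cC^3$ --- this is precisely where $\cL\in\cC^5$ (\Cref{amp:basic}) and $S\in\cC^4$ (\Cref{amp:S-smooth}) enter, since the drift of \eqref{equ:AGMs-SDE} already contains $\nabla^3\cL$ and $\Phi_{\mS}$, each costing two further derivatives. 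Summing over windows and optimizing $\Delta$ --- long enough for the fast coordinates to mix to $\poly(\eta)$ accuracy, short enough that the SDE coefficients and the Taylor remainders over one window remain controlled --- yields the stated rate $\widetilde{\O}(\eta^{0.25})$, uniformly in $0\le k\le K$ (intermediate times $k\eta^2$ not aligned with a window boundary being absorbed by a short-time continuity estimate for the chain and the SDE).

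I expect the principal difficulty to be the quasi-stationary analysis of the fast coordinates and the extraction of the correct drift from it. Compared to plain SGD there are now three coupled fast variables --- the normal displacement, the momentum $\vm_k$, and the fast fluctuation of $\vv_k$ --- and one must prove that the momentum genuinely averages out rather than deflecting the effective drift (the adaptive analogue of the SGD/SGDM equivalence of \citet{wang2023marginal}), which calls for a careful two-timescale coupling. This is entangled with establishing uniform smoothness and non-degeneracy of the preconditioned flow projection $\Phi_{\mS}$ and its derivatives over the entire range of $\mS = S(\vv)$ visited along the trajectory --- the ``manifold projection operator tailored for AGMs'' --- and it is in closing the $\eta$-exponent bookkeeping in this step (the window-length tradeoff) that the bound degrades to $\widetilde{\O}(\eta^{0.25})$ rather than a larger power of $\eta$.
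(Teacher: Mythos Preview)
Your overall plan matches the paper's: burn-in via PL convergence, a high-probability near-manifold invariant, window-based moment matching against the generator of \eqref{equ:AGMs-SDE}, telescoping through the backward Kolmogorov solution $u(t,\cdot)=\E[g(\mX(T))\mid\mX(t)=\cdot]$, and optimizing the window length to land at $\widetilde\O(\eta^{0.25})$.

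There is one concrete error. Your window length has the wrong sign: you wrote $\Delta=\Theta(\eta^{-1+a})$ with $a>0$, but this is \emph{shorter} than the $\Theta(\eta^{-1})$ mixing time of the normal OU-like process $x_{k+1}\approx(I-\eta\,\mS\nabla^2\cL)\,x_k+\eta\,\mS\vz_k$, so within one such window the fast coordinates do not reach quasi-stationarity and the drift cannot be extracted the way you describe. The paper takes ``giant steps'' of $\eta^{-1-\beta}$ iterations with $\beta\in(0,\tfrac12)$, treats the first $R_0=\O(\log(1/\eta))$ ``rounds'' of $\eta^{-1}$ steps inside each giant step as a mixing transient (bounded only by $\widetilde\O(\eta)$), and optimizes at $\beta=0.25$. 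A related point: the paper's actual burn-in is $K_0=\Theta(\eta^{-1}\log(1/\eta))$ (see the formal restatement \Cref{thm:formal-main-result}), which is what PL convergence needs to bring $\dist(\vtheta_{K_0},\Gamma)$ from $\epsilon_0$ down to $\widetilde\O(\sqrt\eta)$; $\Theta(\log(1/\eta))$ steps only flush the momentum buffer.

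On the quasi-stationary step you flag as the hard part, the paper avoids a direct analysis of three coupled fast variables. It freezes the preconditioner $\mS_0=S(\vv)$ at the start of each giant step (the drift of $\vv$ over $\eta^{-1-\beta}$ steps is $\O(\eta^{1-\beta})$, absorbed in the error), reparameterizes $\vtheta'=\mP^{-1}\vtheta$ with $\mP=\mS_0^{1/2}$, and observes that in the primed coordinates the update is plain SGD on $\cL'(\vtheta')=\cL(\mP\vtheta')$ --- so the per-round moment lemmas of \citet{gu2023and} for SGD apply verbatim to the primed process and are then pulled back through $\mP$. Momentum is handled not by fast-variable averaging but by proving $\|\E_{\bar k}[\vm_{k+1}-\vg_k]\|=\widetilde\O(\eta^{1.5})$ near the manifold (since $\|\E[\nabla\cL(\vtheta_{k+1})-\nabla\cL(\vtheta_k)]\|=\O(\eta^{1.5})$ there), which lets them substitute the current stochastic gradient for $\vm$ throughout the moment calculation. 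Your direct coupled-averaging route should also work in principle, but this reparameterize-and-reuse shortcut is what makes the paper's derivation tractable.
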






\Cref{thm:SDE-approximation} shows that with a small $\eta$, once AGM approaches the minimizer manifold, its long-horizon behavior within $\widetilde\O(\eta^{-2})$ steps is captured by the SDE in \Cref{equ:AGMs-SDE}. A more explicit version of \Cref{thm:SDE-approximation} can be found in \Aref{sec:formal-statement}.

\setlength{\tabcolsep}{4pt}
\setlength{\abovetopsep}{3pt}
\setlength{\belowbottomsep}{3pt}
\setlength{\aboverulesep}{3pt}
\setlength{\belowrulesep}{3pt}

\begin{table}[t]
  \centering
  \begingroup
  \scriptsize
  \caption{\small Examples of optimizers in the AGM Framework. See \Aref{app:regularizer} for derivations of their implicit regularizers under label noise.}
  \label{tab:agms}
  \begin{tabular}{m{1.15cm}|m{5.25cm}|m{3.1cm}|m{3.2cm}}
    \toprule
    \multirow{2}{*}{Optimizer} & \multirow{2}{*}{Functions $V/S$} & Implicit Regularizer & \multirow{2}{*}{Remarks} \\
    & & (Label Noise, $\epsilon = 0$) \\
    \midrule

    SGD
    & \textbf{V: } $V(\mM)=\mathbf{1_d}$ \newline
      \textbf{S: } $S(\vv)=\mI_d$
    & $\tr\!\left(\mH\right)$~\citep{blanc2020implicit}
    & \\
    \cmidrule{1-4}

    Adam
    & \textbf{V: } $V(\mM)=\diag(\mM)$ \newline
      \textbf{S: } $S(\vv)=\Diag\!\bigl(1/(\sqrt{\vv}+\epsilon)\bigr)$
    & $\tr\!\left(\Diag(\mH)^{1/2}\right)$
    & \\
    \cmidrule{1-4}

    RMSProp
    & \textbf{V: } $V(\mM)=\diag(\mM)$ \newline
      \textbf{S: } $S(\vv)=\Diag\!\bigl(1/(\sqrt{\vv}+\epsilon)\bigr)$
    & $\tr\!\left(\Diag(\mH)^{1/2}\right)$
    & Adam with $\beta_1 = 0$. \\
    \cmidrule{1-4}

    Adam-mini
    & \textbf{V: } $V(\mM)_i =\frac{1}{|B_k|}\sum_{j\in B_k}M_{jj}$ for all $i \in B_k$ \newline
      \textbf{S: } $S(\vv)=\Diag\bigl(1/(\sqrt{\vv}+\epsilon)\bigr)$
    & $\sum_{i \in [N]} \sqrt{|B_i| \cdot \tr\left(\mH_{B_i}\right)}$
    & Parameters are partitioned into blocks $\left\{B_1, B_2, \cdots, B_N\right\}$. \\
    \cmidrule{1-4}

    Adalayer
    & \textbf{V: } $V(\mM)_i =\frac{1}{|L_k|}\sum_{j\in L_k}M_{jj}$ for all $i \in L_k$ \newline
      \textbf{S: } $S(\vv)=\Diag\bigl(1/(\sqrt{\vv}+\epsilon)\bigr)$
    & $\sum_{i \in [N]} \sqrt{|L_i| \cdot \tr \left(\mH_{L_i}\right)}$
    & Parameters are grouped by layers $\left\{L_1, L_2, \cdots, L_N\right\}$. \\
    \cmidrule{1-4}    
    
    AdamE-$\lambda$
    & \textbf{V: } $V(\mM)=\diag(\mM)$ \newline
      \textbf{S: } $S(\vv)=\Diag\!\bigl(1/(\vv^{\odot\lambda}+\epsilon)\bigr)$
    & $\tr\!\left(\Diag(\mH)^{1-\lambda}\right)$
    & See \Cref{subsec:adam-label-noise} for discussion.\\
    \cmidrule{1-4}
    
    Shampoo
      & \textbf{V: } $V(\mM)=(V_L(\mM),\,V_R(\mM))$ \newline
        $\left[V_L(\mM)\right]_{i,j} = \sum_k M_{i,k,k.j},$ \newline
        $\left[V_R(\mM)\right]_{i,j} = \sum_k M_{k,i,j,k}$ \newline
        \textbf{S:} $S(\mV_L,\!\mV_R)\!=\!(\left(\mV_R\!+\!\epsilon\mI\right)^\top\!\otimes \left(\mV_L\!+\!\epsilon \mI\right))^{-1/2}$
      & No explicit form
      & For a single matrix parameter. \newline
      See \Aref{app:shampoo} for discussion.\\ 
    \bottomrule
  \end{tabular}
  \endgroup
\end{table}
    


\subsection{Interpretation of The Slow SDEs for AGMs}
\paragraph{Adaptive Projection Operator.}  
\Cref{eq:slow-sde-sgd} employs a fixed projection operator $P_{\vzeta}$ to constrain the SDE to the manifold. As a comparison, the slow SDE for AGM uses an adaptive projection $P_{\vzeta,\mS(t)}$ that depends on the current preconditioner $S(\vv(t))$. In other words, SGD’s projection is state‐independent, but AGM’s projection is state‐dependent. This adaptive projection alters the way the stochastic trajectory evolves on the manifold, giving rise to a different implicit bias in AGMs versus SGD.

\paragraph{Effect of the Preconditioner on the Gradient Noise Covariance.}  
Near the manifold, as the gradient of loss vanishes ($\nabla \cL(\vtheta) \to 0$), SGD’s wandering around becomes noise‐driven. For AGMs, the situation is more subtle.

First, one can show that the momentum term does not affect the implicit bias, consistent with prior theory~\citep{wang2023marginal}. The reason why  $\beta_1$ does not affect the implicit bias is that, after the iteration approaches the manifold, the difference between the current gradient  $g_t$  and momentum $M_t$ becomes negligible in expectation.  

Second, the AGM trajectory is influenced by its preconditioner. Concretely, the gradient‐noise covariance matrix $\mSigma$ is filtered through the preconditioner $\mS(t)$ into $\mS(t)\mSigma \mS(t)$ and then contributes to the SDE. Over a long time horizon, this modified noise term alters the deterministic drift direction, further distinguishing AGM’s dynamics from those of vanilla SGD.

\subsection{Technical Difficulties and Proof Insights}\label{subsec:difficulty}


\subsubsection{Convergence Guarantee of AGMs}

\label{subsubsec:convergence}
The core of our study is to consider the behavior of Adam's implicit bias around the minimizer manifold. However, to make our study self-contained, we first need to show that Adam can actually converge to the neighborhood of the minimizer manifold under our setting. This is non-trivial since Adam cannot provably converge to the minimizer manifold without any constraints. In fact, the convergence issue of Adam has been debated from its birth. \citet{reddi2018convergence} show that Adam does not converge to the optimal solution even in some simple convex settings. Recent work \citep{dereich2024convergence} gives Adam's ODE and shows that this ODE does not necessarily converge to the absorbing point of the gradient flow. 

The magnitude of $1-\beta_2$ has been found to be an important factor influencing whether Adam converges. When $1-\beta_2$ is too large such that $\beta_2 < \beta_1^2$, Adam may not converge at all \citep{reddi2018convergence,zhang2024convergence}. In our analysis, we assume that $1-\beta_2 = \Theta(\eta^2)$, i.e., $1-\beta_2$ goes to zero as $\eta\to 0$ with a rate of $\eta^2$, a configuration we call the \emph{$2$-scheme}. 

As stated in \Cref{sec:related}, there have been works proving the convergence of Adam using various kinds of assumptions and giving different forms of bounds. However, previous results cannot be simply applied in our analysis. Specifically, our analysis near manifold requires a high-probability bound on the optimality gap directly, and we have the 2-scheme assumption. Directly applying previous results in our setting yields bounds that are either loose or hold only in expectation, failing to meet the requirements of our analysis. Moreover, we expect a bound that holds for all AGMs, instead of only Adam.

Therefore, we present the following statement of AGMs' convergence as a preparation for our subsequent study into their behavior near the manifold.

\begin{theorem}[Convergence Bound of AGMs, Stated Informally]\label{thm: convergence}
Let Assumptions \ref{amp:smooth}, \ref{amp:gradient-bound} and \ref{amp:beta1-bound} hold, $1-\beta_2 = \Theta(\eta^2)$, and $\cL$ satisfy the Polyak-Łojasiewicz condition. With a small learning rate $\eta$, it holds with high probability for some $K = \O(\frac{1}{\eta}\log\frac{1}{\eta})$ that $\cL(\vtheta_{K})-\cL^* = \tilde{\O}(\eta)$. See \cref{thm:formal-convergence-1} for a formal statement.
\end{theorem}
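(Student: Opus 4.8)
The plan is to establish the convergence bound $\cL(\vtheta_K) - \cL^* = \tilde{\O}(\eta)$ for the general AGM update by adapting a classical descent-lemma argument, but with careful bookkeeping of the preconditioner and momentum to obtain a \emph{high-probability} bound on the last iterate under the $2$-scheme $1-\beta_2 = \Theta(\eta^2)$. First I would set up a potential/Lyapunov function that combines the suboptimality gap $\cL(\vtheta_k) - \cL^*$ with correction terms for the momentum $\vm_k$ and for the drift of $\vv_k$ away from its ``stationary'' value. The key structural facts I would use are: (i) $S(\vv) \succeq \frac{1}{R_0}\mI$ and $S$ is $\rho_s$-smooth, so the preconditioned update $-\eta S(\vv_{k+1})\vm_{k+1}$ is a genuine (preconditioned) descent direction once $\vm_{k+1}$ is close to $\nabla\cL(\vtheta_k)$; (ii) Assumption~\ref{amp:gradient-bound} gives $\|\nabla\ell_k(\vtheta_k)\|_2 \le R$, hence $\|\vm_k\|_2 \le R$, $\|\vv_k\|$ bounded, and therefore $\|S(\vv_{k+1})\|$ bounded above by some $R_1$; (iii) $1-\beta_1 = \Theta(1)$ (Assumption~\ref{amp:beta1-bound}) means the momentum ``forgets'' on an $\O(1)$ timescale, so $\vm_{k+1} - \nabla\cL(\vtheta_k)$ can be controlled by a geometric sum of gradient increments plus a martingale noise term.

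The core of the argument proceeds in three steps. Step 1: a one-step descent inequality. Using $\rho$-smoothness of $\cL$ and the update rule, write $\cL(\vtheta_{k+1}) \le \cL(\vtheta_k) - \eta\,\nabla\cL(\vtheta_k)^\top S(\vv_{k+1})\vm_{k+1} + \frac{\rho\eta^2}{2}\|S(\vv_{k+1})\vm_{k+1}\|_2^2$, then decompose $\vm_{k+1} = \nabla\cL(\vtheta_k) + (\vm_{k+1} - \nabla\cL(\vtheta_k))$ so that the leading term becomes $-\eta\,\nabla\cL(\vtheta_k)^\top S(\vv_{k+1})\nabla\cL(\vtheta_k) \le -\frac{\eta}{R_0}\|\nabla\cL(\vtheta_k)\|_2^2$, plus an error term involving $\|\vm_{k+1} - \nabla\cL(\vtheta_k)\|_2$. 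Step 2: controlling the momentum error. Unrolling the recursion for $\vm$, split $\vm_{k+1} - \nabla\cL(\vtheta_k)$ into a bias part (from the drift of $\vtheta$ over the last $\O(\log(1/\eta))$ steps, which is $\O(\eta\log(1/\eta))$ since each step moves by $\O(\eta)$ and $\cL$ is $\rho$-smooth) and a zero-mean noise part $\sum_j \beta_1^{k-j}(1-\beta_1)(\nabla\ell_j(\vtheta_j) - \nabla\cL(\vtheta_j))$; this is a bounded martingale-difference sum to which Azuma--Hoeffding applies, giving $\|\text{noise part}\|_2 = \tilde{\O}(1)$ (constant order, not vanishing) with high probability — but crucially it multiplies $\eta$ in the descent bound, and its contribution to $\cL(\vtheta_{k+1})$ telescopes against the $-\frac{\eta}{R_0}\|\nabla\cL(\vtheta_k)\|_2^2$ term only after we absorb the noise via Young's inequality and track the accumulated variance. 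Step 3: invoke the PL condition $\frac{1}{2}\|\nabla\cL(\vtheta)\|_2^2 \ge \mu(\cL(\vtheta) - \cL^*)$ to convert the gradient-squared term into a contraction on the suboptimality gap, yielding a recursion of the shape $a_{k+1} \le (1 - c\eta)a_k + \tilde{\O}(\eta^2)$ for $a_k := \cL(\vtheta_k) - \cL^*$ (after the momentum has ``warmed up''), and iterate it $K = \O(\frac{1}{\eta}\log\frac{1}{\eta})$ times to drive $a_K$ down to $\tilde{\O}(\eta)$.

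The main obstacle I expect is Step 2 — specifically, converting the in-expectation control of the momentum noise into a \emph{high-probability, last-iterate} bound while keeping all constants uniform across the AGM family (not just Adam). The subtlety is that the noise term in $\vm_{k+1}$ does not vanish as $\eta \to 0$ (it is $\Theta(1)$ in magnitude), so naively it injects an $\tilde{\O}(\eta)$ error into each descent step, and summing $K = \tilde{\O}(1/\eta)$ such steps would give only an $\tilde{\O}(1)$ bound — useless. The resolution is that the noise is a martingale increment \emph{in direction}, so its inner product with $\nabla\cL(\vtheta_k)S(\vv_{k+1})$ has mean zero conditionally, and one must set up a \emph{supermartingale} on the potential (or use a stopping-time / union-bound argument over the $K$ steps together with a maximal inequality) so that the accumulated noise contribution is $\tilde{\O}(\sqrt{K}\cdot\eta) \cdot (\text{typical gradient size})$, which closes because near-convergence the gradient is itself small. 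Handling the interaction between this stopping-time argument and the slowly-drifting preconditioner $\vv_k$ (whose own deviation from stationarity is $\O(\eta^2 \cdot k) = \O(\eta\log(1/\eta))$ over the relevant horizon, hence negligible but needs to be tracked through $S$'s smoothness) is where the bookkeeping is heaviest. I would defer the fully explicit constants and the precise supermartingale construction to the appendix (\Cref{thm:formal-convergence-1}), presenting here only the skeleton above.
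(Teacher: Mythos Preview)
Your skeleton matches the paper's approach: one-step descent inequality, momentum unrolling, PL contraction, and martingale concentration on the noise-inner-product term. You correctly diagnose the obstacle (the noise in $\vm_{k+1}$ is $\Theta(1)$, not $o(1)$) and the correct resolution (it enters only through $\langle \nabla\cL(\vtheta_k), S(\vv_{k+1})\cdot\text{noise}\rangle$, a martingale increment whose size scales with $\|\nabla\cL(\vtheta_k)\|$).

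Where you are vague is precisely where the paper does its real work. The paper does \emph{not} build a Lyapunov potential with momentum/preconditioner correction terms, nor a supermartingale/stopping-time construction; it runs an explicit \emph{induction on the loss bound itself}. First it unrolls the descent recursion so that momentum is replaced by a $\beta_1$-geometric sum of $\langle\nabla\cL(\vtheta_{i-1}),S(\vv_i)\vg_{i-1}\rangle$ involving the \emph{current} stochastic gradient (rather than your per-step decomposition $\vm_{k+1}=\nabla\cL(\vtheta_k)+\text{error}$, which would force you to control $\|\vm_{k+1}-\nabla\cL(\vtheta_k)\|$). After applying PL this yields
\[
\cL(\vtheta_k)-\cL^*\;\le\; \gamma^k D_0 \;+\; \eta(1-\beta_1)\sum_{i=1}^k X_i\sum_{j=i}^k\gamma^{k-j}\beta_1^{j-i} \;+\; C_3\eta,\qquad \gamma=1-\Theta(\eta),
\]
with martingale differences $|X_i|\le C\|\nabla\cL(\vtheta_{i-1})\|\le C\sqrt{2\rho(\cL(\vtheta_{i-1})-\cL^*)}$. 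The inductive hypothesis $\cL(\vtheta_{i-1})-\cL^*\le\psi(i,\delta):=C_{5a}\gamma^iD_0+C_{5b}\eta\log(K/\delta)$ is then fed \emph{back} into Azuma--Hoeffding on the weighted sum (after truncating $X_i$ on the high-probability event $\{\cL(\vtheta_{i-1})-\cL^*\le\psi(i,\delta/2k)\}$ so that the increments are bounded almost surely). The resulting martingale bound grows like $\sqrt{C_{5a}}$ and $\sqrt{C_{5b}}$, so the induction closes by choosing $C_{5a},C_{5b}$ large enough. Your phrase ``closes because near-convergence the gradient is itself small'' is exactly the right intuition, but the mechanism is this self-referential bootstrap --- the increment bounds depend on the very quantity being bounded --- and that is the step you would need to make explicit.
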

\subsubsection{Key Insights in the Derivation of Slow SDEs for AGMs}
After the AGMs reach the neighborhood of the minimizer manifold, we can derive an analysis similar to the one in the local SGD paper~\citep{gu2023and}. Specifically, we use SDEs to approximate the AGMs after they reach the manifold neighborhood. However, unlike the usual SDE approximation, the SDEs we use here can track the AGMs for a much longer period of time, up to $\O(\eta^{-2})$ rather than the $\tilde{\O}(\eta^{-1})$, which is more common in the previous papers. This type of SDE is termed ``\textit{slow SDE}'' by~\citet{gu2023and}.

 There are two obstacles preventing us from directly applying the analysis of slow SDEs from SGDs to AGMs. First, the obtaining of slow SDEs requires an accurate calculation of the variation of the first-order and second-order moments of the parameters over a relatively large number of steps (a ``\textit{giant step}'' in the notation of \citet{gu2023and}), and in the case of SGD, due to the nature of its rotational equivariance, we can always consider its Hessian matrix as a diagonal array, as well as its corresponding minimizer manifold as a space extended by some full-space standard bases, which greatly simplifies the computation. However, it is not the case for AGMs. Due to the effect of preconditioners $S(\vv_k)$, the rotation equivariance is not satisfied here. 
 
 To resolve this, we generalize the gradient flow projection in \citet{gu2023and,li2021happens} into a varying preconditioner flow projection. Based on this definition, reparameterizing to the original space lets us reuse the simple formulas employed previously \citep{gu2023and,li2021happens}.
 

The second reason is that when $\beta_2$ is far from $1$, the preconditioner changes too quickly, making the evolution of the moments hard to characterize. Conversely, when $\beta_2$ is extremely close to $1$, the preconditioner changes so little as to be impractical. Accordingly, we focus on the $2$-scheme: $1-\beta_2=\O(\eta^2)$. The key point is that this regime does not make the preconditioner’s evolution negligible; rather, its slow but nontrivial drift shapes the SDE and can be tracked analytically.

\section{Adam's Provable Generalization Benefit with Label Noise}\label{sec:diag}
In this section, we prove that with the label noise condition, the implicit regularizer of Adam reduces to a simpler form that aligns better with sparsity regularizations, and then verify experimentally.
\subsection{Implicit Regularizer of Adam under Label Noise} \label{subsec:adam-label-noise}

\paragraph{Label Noise.} By \emph{label noise} we refer to the condition that for all  $\vtheta \in \Gamma$, 
the covariance matrix $\mSigma$ is a constant multiple of the Hessian:
$\mSigma(\vtheta)=\alpha\,\nabla^2\cL(\vtheta)$ for some constant $\alpha$ \citep{blanc2020implicit}. 
This condition typically arises when training a model that is overparameterized enough to fit the training data perfectly,
while fresh noise is added to the target label at each step of training.


To see how this label noise condition holds, imagine a simple regression problem with training data $\{(\vx^{(i)}, y^{(i)})\}_{i=1}^{n}$ and model $h(\vtheta; \vx)$. The loss function for a single data sample is defined as $\ell(\vtheta; \vx, y) = \frac{1}{2}(h(\vtheta; \vx) - y)^2$. With label noise, at each step $t$ of training, we randomly take a random data sample $(\vx_t, y_t)$ as well as an independent noise perturbation $\zeta_t$ from a distribution with zero mean and constant variance $\delta >0$. Then we take one gradient step on $\ell(\vtheta; \vx_t, y_t + \zeta_t)$ with the noisy label $y_t + \zeta_t$.

The expected training loss becomes $\cL(\vtheta) = \E[\ell(\vtheta; \vx_t, y_t + \zeta_t)]$. When the model is sufficiently overparameterized so that $h(\vtheta; \vx) = \vy$ can be simultaneously attained for all training data points in its parameter space, the minimizer manifold is just $\Gamma = \left\{\vtheta: \cL(\vtheta) = \frac{1}{2}\delta^2 \right\}$, where $\frac{1}{2}\delta^2$ is entirely due to the presence of label noise. On this manifold, we can see how the label noise condition $\mSigma(\vtheta)=\alpha\,\nabla^2\cL(\vtheta)$ holds for $\alpha = \delta^2$:
\begin{align*}
    \mSigma(\vtheta) := \E\left[\nabla\ell(\vtheta; \vx_t, y_t + \zeta_t) \nabla\ell(\vtheta; \vx_t, y_t + \zeta_t)^\top\right] = \E\left[ \zeta_t^2 \nabla h(\vtheta; \vx) \nabla h(\vtheta; \vx)^\top \right] = \delta^2 \nabla \cL(\vtheta).
\end{align*}
Beyond this simple setting, a similar analysis can be applied to establish the label noise condition
$\mSigma(\vtheta)=\alpha\,\nabla^2\cL(\vtheta)$
for mini-batch training and other common losses such as cross-entropy loss. This proportional relationship between Hessian and noise covariance greatly simplifies the analysis of training dynamics and
has been widely used to study 
the implicit bias of SGD and related optimizers \citep{blanc2020implicit,damian2021label,li2021happens,gu2023and}.

Under the label noise condition, \citet{li2021happens} proved that slow SDE for SGD reduces to an ODE. In the following theorem, we show that the slow SDE for AGM also reduces to an ODE, but the resulting ODE takes a different form:

\begin{theorem}[Slow ODE for AGMs with Label Noise]
Under the label noise condition, the Slow SDE for AGMs in \Cref{equ:AGMs-SDE} becomes the following ODE: 
    \begin{align}
        \left\{\begin{array}{l}
             \dd \vzeta(t) =  - \frac{\alpha}{2} \mS_t  \partial\Phi_{\mS_t}(\vzeta) \mS_t \nabla^3 \cL(\vzeta) [\mS_t]\dd t, \\
             \dd \vv(t) = c \left(V(\mSigma(\vzeta)) - \vv\right) \dd t,
        \end{array}\right.
        \label{equ:AGMs-ODE}
    \end{align}
    where $\mS_t:=\mS(\vv(t))$.
    \label{thm:agm-slow-ode}
\end{theorem}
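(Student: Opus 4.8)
The plan is to specialize the general slow SDE \eqref{equ:AGMs-SDE} under the label noise assumption $\mSigma(\vzeta) = \alpha\,\nabla^2\cL(\vzeta)$ for $\vzeta \in \Gamma$, and show that both the diffusion term and part of the drift term collapse. The key structural fact is that on $\Gamma$ the Hessian $\nabla^2\cL(\vzeta)$ has rank exactly $m$ (by Assumption \ref{amp:low-rank-manifold}) and its kernel is precisely the tangent space $T_\vzeta\Gamma$, so $\nabla^2\cL(\vzeta)$ acts as an isomorphism on the normal space and annihilates tangent directions. I want to examine how the preconditioned projection $\partial\Phi_{\mS}(\vzeta)$ interacts with $\mS\,\nabla^2\cL(\vzeta)\,\mS$.

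First I would analyze the diffusion term $P_{\vzeta,\mS}(\mSigma_{\parallel}^{1/2}(\vzeta;\mS)\,\dd\mW_t)$. By definition $\mSigma_{\parallel}(\vzeta;\mS) = \partial\Phi_{\mS}(\vzeta)\,\mS\mSigma(\vzeta)\mS\,\partial\Phi_{\mS}(\vzeta)$. The preconditioner flow $\dot x = -\mS\nabla\cL(x)$ has the same fixed-point set $\Gamma$ as the plain gradient flow, and a standard computation (the analogue of $\partial\Phi(\vzeta)$ being the tangent-space projector for SGD, generalized in the paper's Section \ref{subsec:difficulty} via the preconditioner flow projection) shows that $\partial\Phi_{\mS}(\vzeta)$ is a (possibly oblique) projector whose kernel is spanned by the columns of $\mS\nabla^2\cL(\vzeta)$, i.e. the $\mS$-image of the normal space. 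Under label noise, $\mS\mSigma(\vzeta)\mS = \alpha\,\mS\nabla^2\cL(\vzeta)\mS$, and since $\partial\Phi_{\mS}(\vzeta)$ kills the range of $\mS\nabla^2\cL(\vzeta)$, I expect $\partial\Phi_{\mS}(\vzeta)\,\mS\nabla^2\cL(\vzeta) = 0$, hence $\mSigma_{\parallel}(\vzeta;\mS) = 0$ and the diffusion term vanishes identically. This also forces $\mSigma_{\diamond}(\vzeta;\mS) = \mS\mSigma(\vzeta)\mS = \alpha\,\mS\nabla^2\cL(\vzeta)\mS$.

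Next I would simplify the drift. Plugging $\mSigma_{\diamond}(\vzeta;\mS) = \alpha\,\mS\nabla^2\cL(\vzeta)\mS$ into $-\tfrac12\mS\,\nabla^3\cL(\vzeta)[\mSigma_{\diamond}(\vzeta;\mS)]\,\dd t$ gives $-\tfrac{\alpha}{2}\,\mS\,\nabla^3\cL(\vzeta)[\mS\nabla^2\cL(\vzeta)\mS]\,\dd t$. Since the diffusion part is zero, the projection operator $P_{\vzeta,\mS}$ reduces on this term to $\mS\,\partial\Phi_{\mS}(\vzeta)$ applied to the drift vector field (the $\partial^2\Phi_{\mS}$ correction involving $\mA\mA^\top$ disappears because $\mA = \mSigma_{\parallel}^{1/2} = 0$). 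Here I need the identification $\nabla^3\cL(\vzeta)[\mS\nabla^2\cL(\vzeta)\mS] = \nabla^3\cL(\vzeta)[\mS]$ in the notation used in the statement — this is really a question of how the third-derivative-applied-to-a-matrix notation $\nabla^3\cL[\cdot]$ is being abbreviated once one uses that $\nabla^2\cL(\vzeta)$ acts as the identity-up-to-$\alpha$ on the relevant subspace; I would check this against the paper's Appendix on implicit regularizers (referenced as \Aref{app:regularizer}) to get the constants and the exact meaning of $[\mS_t]$ right. Finally, the $\vv$-equation is untouched since $\dd\vv(t) = c(V(\mSigma(\vzeta)) - \vv)\,\dd t$ does not involve the diffusion/drift decomposition. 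Assembling these yields \eqref{equ:AGMs-ODE}.

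The main obstacle I anticipate is the first step: rigorously establishing $\partial\Phi_{\mS}(\vzeta)\,\mS\nabla^2\cL(\vzeta) = 0$, i.e. that the preconditioned gradient-flow projection annihilates normal directions after preconditioning. For the unpreconditioned case this is the classical fact that $\partial\Phi(\vzeta)$ is the orthogonal projector onto $T_\vzeta\Gamma = \ker\nabla^2\cL(\vzeta)$; for the preconditioned flow one must argue that $\Gamma$ remains the normally-hyperbolic attracting set of $\dot x = -\mS\nabla\cL(x)$ (the linearization at $\vzeta$ is $-\mS\nabla^2\cL(\vzeta)$, which is similar to $-\mS^{1/2}\nabla^2\cL(\vzeta)\mS^{1/2}\succeq 0$ with $m$-dimensional positive part), and then compute $\partial\Phi_{\mS}(\vzeta)$ as the spectral projector onto the kernel along the range of $\mS\nabla^2\cL(\vzeta)$. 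This should already be available from the machinery the paper builds for \Cref{thm:SDE-approximation}, so in the write-up I would cite that rather than rederive it, and the remaining work is the bookkeeping of substituting and matching notation.
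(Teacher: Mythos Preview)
Your diffusion argument is correct and matches the paper exactly: the identity $\partial\Phi_{\mS}(\vzeta)\,\mS\,\nabla^2\cL(\vzeta)=0$ is precisely \Cref{lmm:phi-H-0}, and under label noise this kills $\mSigma_\parallel$ (and hence the Brownian term). The gap is in the drift. The ``identification'' $\nabla^3\cL(\vzeta)[\mS\nabla^2\cL(\vzeta)\mS]=\nabla^3\cL(\vzeta)[\mS]$ you are hoping for is \emph{not} a notational abbreviation; these are genuinely different matrices and the equality fails, even after left-multiplying by $\partial\Phi_\mS\mS$. Working literally from the main-text drift $-\tfrac12\mS\nabla^3\cL[\mSigma_\diamond]$ with $\mSigma_\diamond=\alpha\,\mS\nabla^2\cL\,\mS$ will not produce the bracket $[\mS_t]$.

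The paper's proof does not start from \eqref{equ:AGMs-SDE} but from the more detailed first-moment formula of \Cref{lma:1st-moment}, whose drift carries the Lyapunov operator $\cV_{\nabla^2\cL'}$ inside the bracket (the preconditioned analogue of the $\widehat\mSigma_\Diamond$ in the SGD slow SDE). Under label noise, with $\mH'=\mP\mH\mP$ and $\mP=\mS^{1/2}$, one has $\cV_{\mH'}(\alpha\mH')=\tfrac{\alpha}{2}\Pi$ where $\Pi$ is the orthogonal projector onto $\mathrm{range}(\mH')$; then $\mP\Pi\mP$ differs from $\mS=\mP^2$ by a symmetric matrix $\mP(\mI-\Pi)\mP$ whose range lies in $T_\vzeta\Gamma$. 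The last step is to observe that for any tangent $\vw$ one has $\nabla^3\cL(\vzeta)[\vw\vw^\top]\in T_\vzeta^\perp\Gamma$ (differentiate $\nabla^2\cL(\vzeta)\vw=0$ along $\Gamma$ and pair with $\vw$), so $\partial\Phi_\mS\mS$ annihilates it by \Cref{lmm:tangent-space-product}. This is what collapses the bracket to $[\mS_t]$. So your plan to ``cite the machinery'' is right in spirit, but you need to cite \Cref{lma:1st-moment} (or equivalently the $\partial^2\Phi_\mS$ form in \eqref{equ:AGMs-SDE-no-projection}) rather than the main-text presentation, and the reduction goes through the Lyapunov structure, not a relabeling of $\nabla^3\cL[\cdot]$.
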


See \Aref{proof:diag} for the proof. We then derive the implicit bias of Adam with label noise.

\begin{lemma}[Adam's Implicit Bias under Label Noise]
        Under the label noise condition, every fixed point of \Cref{equ:AGMs-ODE} for Adam satisfies $\nabla_{\Gamma} \tr\left(\mathrm{Diag}(\mH)^{1/2}\right) = 0$.
    \label{thm:adam-implicit-bias-label-noise}
\end{lemma}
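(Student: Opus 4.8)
The plan is to read off the stationarity conditions of the slow ODE \eqref{equ:AGMs-ODE} specialized to Adam and recognize them as the vanishing of the tangential gradient of $\tr(\Diag(\mH)^{1/2})$. I work in the $\epsilon=0$ regime of \Cref{tab:agms} (for $\epsilon>0$ the same computation yields the smoothed regularizer $\sum_i f(H_{ii})$ with $f'(s)=1/(\sqrt{\alpha s}+\epsilon)$) and assume $\Diag(\mH)\succ 0$ along $\Gamma$ so that Adam's preconditioner is well defined. \textbf{Step 1 (stationarity conditions).} At a fixed point $(\vzeta^*,\vv^*)$ of \eqref{equ:AGMs-ODE}, the $\vv$-equation forces $\vv^*=V(\mSigma(\vzeta^*))$; for Adam $V(\mM)=\diag(\mM)$, so the label-noise condition $\mSigma=\alpha\nabla^2\cL$ gives $\vv^*=\alpha\,\diag(\mH)$ with $\mH:=\nabla^2\cL(\vzeta^*)$, whence the preconditioner equals $\mS:=S(\vv^*)=\Diag\!\big(1/\sqrt{\alpha\,\diag(\mH)}\big)=\tfrac1{\sqrt\alpha}\Diag(\mH)^{-1/2}\succ 0$. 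Since $\mS$ is invertible, the $\vzeta$-equation at the fixed point collapses to $\partial\Phi_{\mS}(\vzeta^*)\,\mS\,\nabla^3\cL(\vzeta^*)[\mS]=0$.

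\textbf{Step 2 (the key identity).} Next I would compute $\nabla^3\cL(\vzeta^*)[\mS]$ explicitly. Using $\nabla^3\cL(\vtheta)[\mM]=\nabla_{\vtheta}\langle\nabla^2\cL(\vtheta),\mM\rangle$ for a constant matrix $\mM$ (valid under \Cref{amp:basic}) together with the fact that $\mS$ is diagonal with entries $S_{ii}=1/\sqrt{\alpha H_{ii}}$,
\[
\nabla^3\cL(\vzeta^*)[\mS]=\sum_i S_{ii}\,\nabla_{\vtheta}\big(\nabla^2\cL(\vtheta)\big)_{ii}\Big|_{\vtheta=\vzeta^*}=\frac1{\sqrt\alpha}\sum_i\frac{\nabla_{\vtheta}\big(\nabla^2\cL(\vtheta)\big)_{ii}}{\sqrt{H_{ii}}}\Big|_{\vtheta=\vzeta^*}=\frac2{\sqrt\alpha}\,\nabla_{\vtheta}\tr\!\Big(\Diag\big(\nabla^2\cL(\vtheta)\big)^{1/2}\Big)\Big|_{\vtheta=\vzeta^*},
\]
so the stationarity condition of Step 1 becomes $\partial\Phi_{\mS}(\vzeta^*)\,\mS\,\nabla\tr(\Diag(\mH)^{1/2})=0$.

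\textbf{Step 3 (geometry of $\partial\Phi_{\mS}$ and conclusion).} It then remains to show that $\partial\Phi_{\mS}(\vzeta^*)\,\mS\,\vg=0$ implies $\vg\perp T_{\vzeta^*}\Gamma$, where $\vg:=\nabla\tr(\Diag(\mH)^{1/2})$. Linearizing the preconditioner flow $\dot x=-\mS\nabla\cL(x)$ at $\vzeta^*$ gives $\dot\delta=-\mS\mH\,\delta$; since $\mS\mH$ is similar to the symmetric PSD matrix $\mS^{1/2}\mH\mS^{1/2}$, it is diagonalizable with nonnegative eigenvalues and null space $\ker(\mS\mH)=\ker\mH=T_{\vzeta^*}\Gamma$ (by \Cref{amp:low-rank-manifold}), so $\partial\Phi_{\mS}(\vzeta^*)=\lim_{t\to\infty}e^{-t\mS\mH}$ is the (oblique) projection onto $T_{\vzeta^*}\Gamma$ along $\mathrm{range}(\mS\mH)=\mS\,\mathrm{range}(\mH)$; in particular $\ker\partial\Phi_{\mS}(\vzeta^*)=\mathrm{range}(\mS\mH)$. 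Hence $\mS\vg\in\mathrm{range}(\mS\mH)$, and cancelling $\mS$ yields $\vg\in\mathrm{range}(\mH)=(\ker\mH)^{\perp}=(T_{\vzeta^*}\Gamma)^{\perp}$ (the middle equality by symmetry of $\mH$). Since $\nabla_\Gamma$ is by definition the orthogonal projection onto $T_{\vzeta^*}\Gamma$, it annihilates $(T_{\vzeta^*}\Gamma)^{\perp}$, so $\nabla_\Gamma\tr(\Diag(\mH)^{1/2})=0$, as claimed.

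\textbf{Main obstacle.} I expect the crux to be Step 3: pinning down $\ker\partial\Phi_{\mS}(\vzeta^*)$ exactly, which is the $\mS$-dependent analogue of the gradient-flow identity (used by \citet{li2021happens}) that $\partial\Phi(\vzeta)$ is the orthogonal projector onto $T_{\vzeta}\Gamma$. Because $\mS\mH$ is not symmetric, the spectral theorem cannot be invoked directly, and one must route through the similarity to $\mS^{1/2}\mH\mS^{1/2}$ and verify that the zero eigenvalue is semisimple; fortunately the machinery behind \Cref{thm:SDE-approximation} already introduces the preconditioner-flow projection $\Phi_{\mS}$ and its derivatives, so the needed facts should be on hand. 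A secondary technical nuisance is that some diagonal entries of $\mH$ may vanish on $\Gamma$ (so $S_{ii}\to\infty$ as $\epsilon\to 0$ and $s\mapsto\sqrt s$ fails to be differentiable at $0$), which is why one either imposes $\Diag(\mH)\succ 0$ on $\Gamma$ or carries $\epsilon>0$ through and works with the smoothed regularizer $\sum_i f(H_{ii})$.
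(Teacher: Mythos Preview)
Your proposal is correct and mirrors the paper's argument in Steps~1 and~2 essentially verbatim, including the chain-rule identity $\nabla^3\cL(\vzeta^*)[\mS]=\tfrac{2}{\sqrt\alpha}\nabla\tr(\Diag(\mH)^{1/2})$. The only divergence is in Step~3: the paper invokes \Cref{lmm:tangent-space-product}, which shows that $\partial\Phi_{\mS}(\vzeta)$ restricts to the identity on $T_{\vzeta}\Gamma$ (via a curve argument) and that $\partial\Phi_{\mS}(\vzeta)\mS$ annihilates $T_{\vzeta}^\perp\Gamma$ (via \Cref{lmm:partial-phi-S-nabla-L} and a Taylor expansion of $\nabla\cL$), and then reads off $\ker(\partial\Phi_{\mS}\mS)=T_{\vzeta}^\perp\Gamma$. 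You instead compute $\partial\Phi_{\mS}(\vzeta^*)=\lim_{t\to\infty}e^{-t\mS\mH}$ directly from the variational equation along the constant trajectory and identify the limit projector via the similarity $\mS\mH=\mS^{1/2}(\mS^{1/2}\mH\mS^{1/2})\mS^{-1/2}$. Both routes establish the same oblique-projection structure (projection onto $T_{\vzeta^*}\Gamma$ along $\mS\,T_{\vzeta^*}^\perp\Gamma$); yours is a bit more self-contained for this particular lemma, while the paper's is packaged as a reusable lemma used elsewhere in the SDE derivation. Your caveat about requiring $\Diag(\mH)\succ0$ (or retaining $\epsilon>0$ and obtaining the smoothed regularizer) is well taken and matches the paper, which treats the $\epsilon>0$ case separately in \Cref{lemma:adam-implicit-bias-label-noise-eps-not-0}.
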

    


\paragraph{Proof Sketch.}
At the fixed point of \Cref{equ:AGMs-ODE}, we have
$$\vv = V(\mSigma(\vzeta)), \quad S(\vv)  \partial\Phi_{S(\vv)}(\vzeta) S(\vv) \nabla^3 \cL(\vzeta) [S(\vv)]=0.$$
With label noise, $\mH\overset{\mathrm{def}}{=}\nabla^2\cL(\vzeta)=\mSigma(\vzeta)/\alpha$. In the Adam case, $\vv = \diag(\mSigma) = \alpha\diag(\mH)$, and $S(\vv) =\mathrm{Diag}(1/\sqrt{\vv})$ with $\epsilon = 0$. 
Then $S(\vv) = \Diag(\frac{1}{(\alpha\diag(\mH))^{1/2}})$, and we can simplify the regularizer term into 
\begin{align*}
    \nabla^3 \cL (\vzeta)[S(\vv)] = \sum_{j=1}^{d} \frac{1}{(\alpha H_{jj})^{1/2}} \nabla(H_{jj}) = \frac{2}{\sqrt{\alpha}} \sum_{j=1}^{d} \nabla ((H_{jj})^{1/2}) = \frac{2}{\sqrt{\alpha}} \nabla \tr\left(\Diag(\mH)^{1/2}\right).
\end{align*}
which implies that the stationary point of Adam satisfies 
\begin{align*}
    S(\vv)  \partial\Phi_{S(\vv)}(\vzeta) S(\vv) \nabla \tr\left(\mathrm{Diag}(\mH)^{1/2}\right)=0.
\end{align*}
The preceding matrices have some clean properties. From \cref{lmm:tangent-space-product}, $\partial\Phi_{S(\vv)}(\vzeta) S(\vv)$ acts as an invertible linear map on the tangent space of the manifold $\Gamma$, and vanishes on its normal space. Together with the invertibility of $S(\vv)$, this gives
$$\nabla_{\Gamma} \tr\left(\mathrm{Diag}(\mH)^{1/2}\right) = 0 $$
for any fixed point of Adam's slow ODE. See \Aref{proof:diag} for more details. 
\qed

The above proof can be generalized to the case of $\epsilon >0$.
When $\epsilon >0$, the implicit bias of Adam changes to
\begin{align*}
    \nabla_{\Gamma} \tr\left( \Diag(\mH^*)^{1/2}
        - \frac{\epsilon}{\sqrt{\alpha}}\ln\left( \frac{\sqrt{\alpha}}{\epsilon}\Diag(\mH^*)^{1/2}+\mI\right) \right)
        =\vzero.
\end{align*}
See \Cref{lemma:adam-implicit-bias-label-noise-eps-not-0} for more details.
Note that this trace term is always non-negative, since $x - \frac{\epsilon}{\sqrt{\alpha}}\ln(\frac{\sqrt{\alpha}}{\epsilon} x+1) \ge x - \frac{\epsilon}{\sqrt{\alpha}} \cdot \frac{\sqrt{\alpha}}{\epsilon} x = 0$ for any $x \ge 0$.

\paragraph{A Simple Way to Tune Adam's Implicit Bias: AdamE.} The proof of \cref{thm:adam-implicit-bias-label-noise} inspires the following simple variant of Adam: We define \emph{AdamE} as an optimizer class that, is identical to Adam except that $S(\vv) = \mathrm{Diag}(1/(\vv^{\odot \lambda}+\epsilon))$ for a tunable parameter $\lambda\in[0,1)$. For any $\lambda_0 \in [0, 1)$ we also use the term \emph{AdamE with $\lambda=\lambda_0$}, or simply \emph{AdamE-$\lambda_0$}. Note that AdamE with $\lambda=\frac{1}{2}$ coincides with Adam, and that all AdamE optimizers lie within the AGM framework. Applying the same method as in \cref{thm:adam-implicit-bias-label-noise} yields the implicit bias of AdamE under label noise.
\begin{lemma}[AdamE's Implicit Bias with Label Noise]
    Under the label noise condition, the fixed point of \Cref{equ:AGMs-ODE} for AdamE-$\lambda$ with $\lambda \in [0, 1)$ and $\epsilon = 0$ satisfies $\nabla_{\Gamma} \tr\left(\mathrm{Diag}(\mH)^{1-\lambda}\right) = 0$.
    \label{thm:adame-implicit-bias-label-noise}
\end{lemma}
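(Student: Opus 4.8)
The plan is to run the same computation as in the proof sketch of \Cref{thm:adam-implicit-bias-label-noise}, now keeping the exponent $\lambda$ symbolic rather than fixing $\lambda=\tfrac12$. First I would read off the fixed-point conditions of the slow ODE \eqref{equ:AGMs-ODE}. The $\vv$-equation forces $\vv = V(\mSigma(\vzeta))$, and since AdamE inherits Adam's $V(\mM)=\diag(\mM)$, the label-noise identity $\mSigma(\vzeta)=\alpha\mH$ with $\mH := \nabla^2\cL(\vzeta)$ gives $\vv = \alpha\diag(\mH)$. Because $S$ is defined with $\epsilon = 0$, this vector is entrywise positive, so $S(\vv) = \mathrm{Diag}\!\big(\vv^{\odot(-\lambda)}\big) = \alpha^{-\lambda}\,\mathrm{Diag}\!\big(\diag(\mH)^{\odot(-\lambda)}\big)\in\mathbb{S}^d_{++}$ is well-defined.

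Next I would evaluate the directional third derivative appearing in the $\vzeta$-equation. Since $S(\vv)$ is diagonal, $\nabla^3\cL(\vzeta)[S(\vv)] = \sum_j [S(\vv)]_{jj}\,\nabla H_{jj} = \alpha^{-\lambda}\sum_j H_{jj}^{-\lambda}\,\nabla H_{jj}$, where $\nabla H_{jj}$ abbreviates $\nabla(\partial_j^2\cL)(\vzeta)$. For $\lambda\in[0,1)$ the chain rule gives $H_{jj}^{-\lambda}\nabla H_{jj} = \tfrac{1}{1-\lambda}\,\nabla\!\big(H_{jj}^{1-\lambda}\big)$ — this is precisely where positivity of $\diag(\mH)$ and the restriction $\lambda<1$ enter — so the sum collapses into $\tfrac{\alpha^{-\lambda}}{1-\lambda}\,\nabla\tr\!\big(\mathrm{Diag}(\mH)^{1-\lambda}\big)$, using $\tr\!\big(\mathrm{Diag}(\mH)^{1-\lambda}\big)=\sum_j H_{jj}^{1-\lambda}$. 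Plugging this into the stationarity condition $S(\vv)\,\partial\Phi_{S(\vv)}(\vzeta)\,S(\vv)\,\nabla^3\cL(\vzeta)[S(\vv)] = \vzero$ and cancelling the nonzero scalar $\tfrac{\alpha^{-\lambda}}{1-\lambda}$ leaves $S(\vv)\,\partial\Phi_{S(\vv)}(\vzeta)\,S(\vv)\,\nabla\tr\!\big(\mathrm{Diag}(\mH)^{1-\lambda}\big) = \vzero$.

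Finally I would close the argument exactly as for $\lambda=\tfrac12$: invoking \Cref{lmm:tangent-space-product}, the operator $\partial\Phi_{S(\vv)}(\vzeta)\,S(\vv)$ annihilates the normal space of $\Gamma$ at $\vzeta$ and acts invertibly on the tangent space $T_\vzeta\Gamma$, while $S(\vv)$ is itself invertible; decomposing $\nabla\tr\!\big(\mathrm{Diag}(\mH)^{1-\lambda}\big)$ into its tangential and normal components, the last identity kills the normal component and maps the tangential component isomorphically to zero, forcing $\nabla_\Gamma\tr\!\big(\mathrm{Diag}(\mH)^{1-\lambda}\big) = 0$. I do not anticipate a genuine obstacle here, since every ingredient is already available from \Cref{thm:adam-implicit-bias-label-noise}; the only thing requiring care is bookkeeping the exponent — namely that $\tfrac{1}{1-\lambda}$ stays finite precisely on $[0,1)$, and that the chain-rule step is legitimate because $\diag(\mH)>0$ (a consequence of $S$ being well-defined at $\epsilon=0$). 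As a sanity check, $\lambda=0$ recovers SGDM's $\tr(\mH)$ and $\lambda=\tfrac12$ recovers \Cref{thm:adam-implicit-bias-label-noise}.
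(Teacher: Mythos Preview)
Your proposal is correct and follows essentially the same route as the paper's own proof: both compute $\mS^*=\Diag\big((\alpha\diag(\mH))^{\odot(-\lambda)}\big)$ from the $\vv$-stationarity, rewrite $\nabla^3\cL(\vzeta)[\mS^*]$ via the chain rule $H_{jj}^{-\lambda}\nabla H_{jj}=\tfrac{1}{1-\lambda}\nabla(H_{jj}^{1-\lambda})$ to obtain a scalar multiple of $\nabla\tr\big(\Diag(\mH)^{1-\lambda}\big)$, and then invoke \Cref{lmm:tangent-space-product} to pass from the preconditioned stationarity condition to the tangential-gradient condition. The paper records this in one line as a direct modification of the $\lambda=\tfrac12$ case, which is exactly what you do.
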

\cref{thm:adame-implicit-bias-label-noise} indicates that tuning the exponent of the second-order moment in Adam exactly results in tuning the exponent of $\mathrm{diag}(\nabla^2 \cL(\vzeta))$ in the implicit bias. When $\lambda=0$, the implicit bias reduces to that of SGD, and AdamE also gets rid of the effect of second-order moments and reduces to SGD with momentum, which coincides perfectly. Next, we relate the implicit bias to sparsity and compare the performance of Adam, AdamE, and SGD in a simple experimental setup.
\subsection{Example: Sparse Linear Regression with Diagonal Net}
In this section, we adopt the \textit{diagonal linear network} (diagonal net) setting proposed by \citet{woodworth2020kernel} as an experimental setting, which is also used by \citet{li2021happens} to study the implicit bias of SGD. 

\textbf{Setting (Diagonal Net with Label Noise):} Let $\vw^* \in \R^d$ be an unknown $\kappa$-sparse ground truth vector. Let $\left\{(\vz_i, y_i)\right\}_{i \in [n]}$ be the training dataset where each $\vz_i \overset{\text{i.i.d.}}{\sim} \text{Unif}\left\{\pm 1\right\}^d$, and each $y_i$ is generated by $\left\langle \vz_i, \vw^* \right\rangle$. Our parameter is defined as $\vtheta = \tbinom{\vu}{\vv} \in \R^{2d}$. For any function $g$ defined on $\R^{2d}$, we write $g(\vtheta)$ and $g(\vu, \vv)$ interchangeably. The loss function is defined as:
$$
    \cL(\vtheta) = \frac{1}{n}\sum_{i=1}^{n}\cL_i(\vtheta), \quad \text{where }\cL_i(\vtheta) = \frac{1}{2}\left(\left\langle \vz_i, \vu^{\odot 2} - \vv^{\odot 2} \right\rangle - y_i\right)^2
$$
where a label noise is added to the true label $y$ during training. This setting can be viewed as using estimation $\widehat{\vw} = \vu^{\odot 2} - \vv^{\odot 2}$ to approximate the ground truth vector $\vw^*$ of a linear regression task. Note that $d \gg n$ here so the model is highly overparameterized: Theoretically, \citet{li2021happens} proved that $n = \O(\kappa \ln d)$ is enough for SGD to recover ground truth, and we will later show experimentally that less than $1000$ training pairs is required for both Adam and SGD to achieve a low test loss when $d = 10000$. The manifold is defined as wherever zero train loss is achieved, i.e. $\Gamma = \left\{\vtheta | \langle \vz_i, \vu^{\odot 2} - \vv^{\odot 2}\rangle = y_i, \forall i \in [n]\right\}$.

This setting allows us to relate the implicit bias directly to the sparsity of the estimated ground truth.


\begin{lemma}\label{lmm:diagonal-net}
Let $\vtheta^*$ be an optimal parameter minimizing the loss function $\cL$, i.e. $\vtheta^* \in \Gamma$. For each $\vtheta = \tbinom{\vu}{\vv} \in \Gamma$, denote $\widehat{\vw} := \vu^{\odot 2} - \vv^{\odot 2}$ and $\mH := \nabla^2 \cL(\vtheta)$. We have the following: 
\begin{itemize}
    \item If $\vtheta^* \in \arg\min_{\vtheta \in \Gamma} \tr(\Diag(\mH)^{0.5})$, then we also have $\vtheta^* \in \arg\min_{\vtheta \in \Gamma} \left\| \widehat{\vw} \right\|_{0.5}$.
    \item Furthermore, for any $e_0 \in (0, 1]$, if $\vtheta^* \in \arg\min_{\vtheta \in \Gamma} \tr(\Diag(\mH)^{e_0})$, then we also have $\vtheta^* \in \arg\min_{\vtheta \in \Gamma} \left\|\widehat{\vw} \right\|_{e_0}$.
\end{itemize} 
\end{lemma}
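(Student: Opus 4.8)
The plan is to compute $\Diag(\nabla^2\cL(\vtheta))$ explicitly for $\vtheta\in\Gamma$ — it will turn out to be diagonal and, crucially, independent of the random design — so that $\tr(\Diag(\mH)^{e_0})$ becomes a coordinatewise-separable function of $(\vu,\vv)$, and then to use subadditivity of $t\mapsto t^{e_0}$ to match it with $\sum_j|\widehat w_j|^{e_0}$ up to a positive constant and up to the optimal choice of representative within each fiber of $\widehat{\vw}$. \emph{Step 1 (Hessian on $\Gamma$).} Set $f_i(\vtheta):=\langle\vz_i,\vu^{\odot 2}-\vv^{\odot 2}\rangle-y_i$, so $\cL_i=\tfrac12 f_i^2$ and $\nabla^2\cL_i=\nabla f_i\,\nabla f_i^\top + f_i\,\nabla^2 f_i$. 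On $\Gamma$ every $f_i$ vanishes, hence $\mH=\tfrac1n\sum_{i=1}^n\nabla f_i\,\nabla f_i^\top$ there. Since $\partial f_i/\partial u_j=2z_{ij}u_j$, $\partial f_i/\partial v_j=-2z_{ij}v_j$, and $z_{ij}^2=1$, the $u_j$- and $v_j$-diagonal entries of each $\nabla f_i\nabla f_i^\top$ equal $4u_j^2$ and $4v_j^2$ independently of $i$; averaging gives $\Diag(\mH)=4\,\Diag(u_1^2,\dots,u_d^2,v_1^2,\dots,v_d^2)$, so $\tr(\Diag(\mH)^{p})=4^{p}\sum_{j=1}^d(|u_j|^{2p}+|v_j|^{2p})$ for every $p>0$.

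\emph{Step 2 (coordinatewise bound).} For $e_0\in(0,1]$ the map $t\mapsto t^{e_0}$ is concave on $[0,\infty)$ and vanishes at $0$, hence subadditive; combined with $|a^2-b^2|\le\max(a^2,b^2)$ and monotonicity this yields $|a^2-b^2|^{e_0}\le\max(a^2,b^2)^{e_0}\le|a|^{2e_0}+|b|^{2e_0}$ for all $a,b\in\R$, and equality is achieved when, for prescribed $w=a^2-b^2$, one takes $a^2=\max(w,0)$, $b^2=\max(-w,0)$. Summing over coordinates and using Step 1, $\tr(\Diag(\nabla^2\cL(\vtheta))^{e_0})\ge 4^{e_0}\sum_{j=1}^d|\widehat w_j|^{e_0}$ for every $\vtheta\in\Gamma$, with $\widehat{\vw}=\vu^{\odot 2}-\vv^{\odot 2}$.

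\emph{Step 3 (transferring the minimizer).} Let $W:=\{\vw\in\R^d:\langle\vz_i,\vw\rangle=y_i\ \text{for all } i\}$. The map $\vtheta\mapsto\widehat{\vw}(\vtheta)$ sends $\Gamma$ onto $W$: it lands in $W$ by definition of $\Gamma$, and every $\vw\in W$ is the image of the lift $u_j^2=\max(w_j,0),\,v_j^2=\max(-w_j,0)$, which in addition turns Step 2 into an equality. Hence, writing $m:=4^{e_0}\inf_{\vw\in W}\sum_j|w_j|^{e_0}$, we get $\inf_{\vtheta\in\Gamma}\tr(\Diag(\mH)^{e_0})=m$. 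If $\vtheta^*\in\argmin_{\vtheta\in\Gamma}\tr(\Diag(\mH)^{e_0})$, then with $\widehat{\vw}^*:=\vu^{*\odot 2}-\vv^{*\odot 2}$ the chain $m=\tr(\Diag(\nabla^2\cL(\vtheta^*))^{e_0})\ge 4^{e_0}\sum_j|\widehat w^*_j|^{e_0}\ge m$ collapses to equalities, so $\widehat{\vw}^*$ minimizes $\sum_j|w_j|^{e_0}$ over $W$; since $t\mapsto t^{1/e_0}$ is increasing this coincides with minimizing $\|\cdot\|_{e_0}$, and since $W=\widehat{\vw}(\Gamma)$ we conclude $\vtheta^*\in\argmin_{\vtheta\in\Gamma}\|\widehat{\vw}\|_{e_0}$. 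The first bullet is the special case $e_0=\tfrac12$.

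\emph{Main obstacle.} The computation is largely mechanical, and the two points demanding care are: (i) confirming that the random design $\vz_i$ genuinely drops out of $\diag(\mH)$ — this relies entirely on $z_{ij}^2=1$ for the Rademacher features and would fail for generic inputs; and (ii) that $\Gamma$ is noncompact, because the fibers $\{u_j^2-v_j^2=w_j\}$ are unbounded, so Step 3 is arranged to \emph{derive} attainment of the minimum on $W$ from the assumed attainment on $\Gamma$ rather than presuppose it.
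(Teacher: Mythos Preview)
Your proof is correct and follows essentially the same approach as the paper: compute $\Diag(\mH)$ on $\Gamma$ using $z_{ij}^2=1$, then link $\sum_j(|u_j|^{2e_0}+|v_j|^{2e_0})$ to $\sum_j|u_j^2-v_j^2|^{e_0}$ via the canonical lift $u_j^2=\max(w_j,0),\,v_j^2=\max(-w_j,0)$. Your packaging as a direct sandwich argument (Step~2 inequality plus Step~3 inf collapse) is slightly cleaner than the paper's two contradiction arguments, but the substance is identical.
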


The main idea of the proof is that the training loss depends only on the combined quantity $\widehat{\vw} = \vu^{\odot 2} - \vv^{\odot 2}$. Hence, if for some index $i$ both $u_i$ and $v_i$ are nonzero, we can reduce the magnitudes of $u_i$ and $v_i$ while keeping $u_i^2-v_i^2$ fixed, obtaining another minimizer with strictly smaller $\tr(\Diag(\mH)^{e_0})$. Therefore, at any optimum we must have \(u_i=0\) or \(v_i=0\) for every \(i\). Under this condition, $\tr(\Diag(\mH)^{e_0})$ can be  identified with $\left\|\widehat{\vw} \right\|_{e_0}$. We provide the detailed derivation in \Aref{proof:diag}. 

\cref{lmm:diagonal-net} gives the following insights: Implicitly regularizing $\tr(\Diag(\mH)^{e_0})$ is equivalent to regularizing the \(\ell_{e_0}\)-norm of the estimated ground truth $\widehat{\vw} = \vu^{\odot 2} - \vv^{\odot 2}$: Adam corresponds to \(\ell_{0.5}\), SGD to \(\ell_1\), and AdamE-\(\lambda\) to \(\ell_{1-\lambda}\). Just as lasso (\(\ell_1\)) is preferable to ridge (\(\ell_2\)) for sparse ground-truth recovery, we therefore expect Adam and AdamE (with $\lambda > 0$) to recover sparse ground truth more efficiently than SGD. We verify this prediction below.
\subsubsection{Result: Adam's Implicit Regularizer Facilitates Sparse Ground-truth Recovery}
\Cref{fig:diagnet} shows the results of the experiment.  We gradually increase the number of training points and train Adam, SGD, and AdamE under several configurations until convergence. We consider a configuration to have \emph{recovered the ground truth} if the test loss falls below \(1\).  As illustrated in \Cref{fig:diagnet-a}, Adam's test loss plunges towards zero at approximately \(n_{\mathrm{train}}=420\), whereas SGD's test loss decreases more gradually as the training set grows.  To interpolate between different implicit biases, we evaluate AdamE for several values of \(\lambda\).  \Cref{fig:diagnet-b} indicates that AdamE, even with a small positive value of \(\lambda\), exhibits the same sudden recovery behavior as Adam. This suggests that Adam’s influence on the implicit bias upon SGD arises from its preconditioning mechanism.

\begin{figure}[h]
  \centering
  \begin{subfigure}[b]{0.49\textwidth}
    \includegraphics[width=\textwidth]{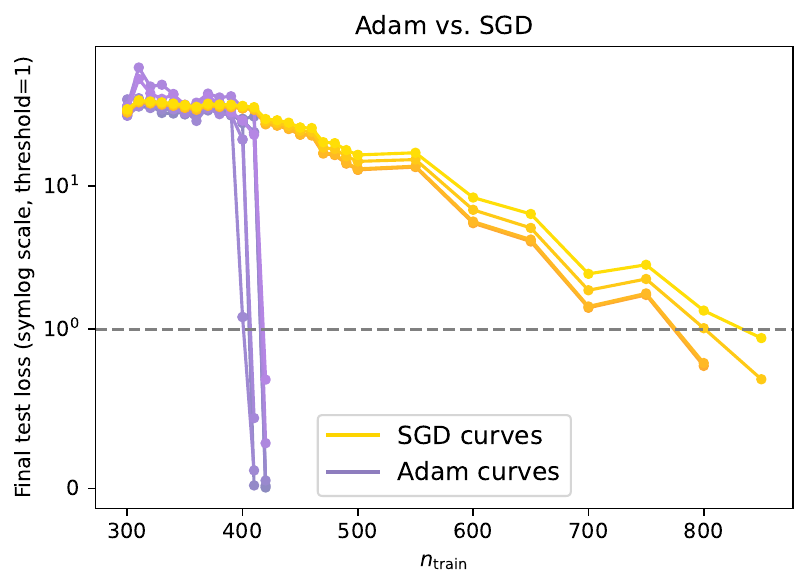}
    \caption{}\label{fig:diagnet-a}
  \end{subfigure}
  \begin{subfigure}[b]{0.473\textwidth}
    \includegraphics[width=\textwidth]{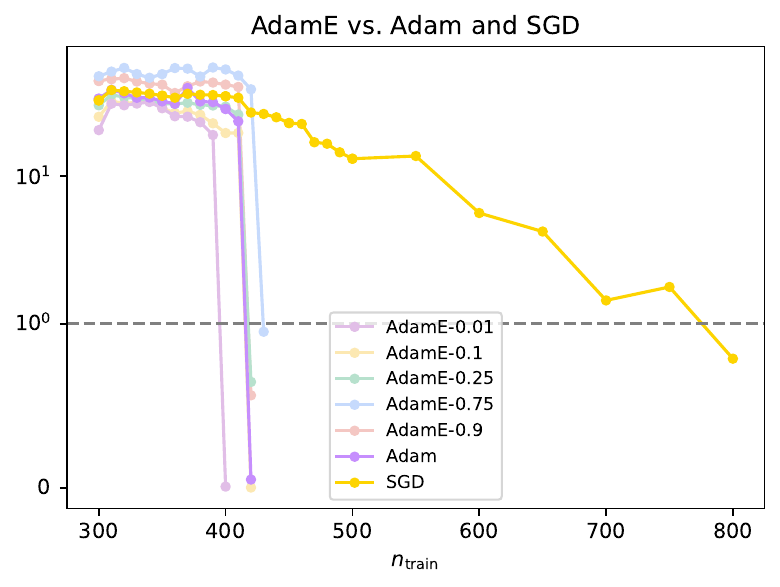}
    \caption{}\label{fig:diagnet-b}
  \end{subfigure}
  \caption{ 
  Final test loss as a function of the training data size with \(d=10000\), \(\kappa=50\). Each plotted point is the final test loss after both the training and test losses have converged; its x-coordinate is the training data size and the curve denotes the optimizer and configuration. \textbf{(a)} Loss comparison between SGD with different learning rates, and Adam with different learning rates and \(\beta_2\) values. \textbf{(b)} Loss comparison between AdamE with \(\lambda=0.01, 0.1,0.25,0.75,0.9\), Adam, and SGD.}
  \label{fig:diagnet}
\end{figure}

\paragraph{Takeaway.} Adam's unique implicit bias may help to recover a sparse ground truth.


However, we should also keep in mind that a clear interpretation of Adam's unique implicit bias, $\tr(\Diag(\mH)^{1/2})$ relies heavily on the condition that $\mH$ is diagonal. Only with this condition can we claim Adam as minimizing $\|\vh\|_{0.5}$ instead of SGD's $\normone{\vh}$ where $\vh$ is the vector consisting of all eigenvalues of $\mH$. In other words, Adam's optimization on the implicit bias upon SGD only makes sense when $\mH$ is diagonal. In the diagonal net setting this is indeed the case in expectation, but we will see in the next chapter that Adam's unique implicit bias may even lead to worse generalization when $\mH$ is no longer diagonal.

\section{Matrix Factorization: Adam Implicitly Regularizes Sharpness Differently}\label{sec:matrix}

The diagonal net experiments in \cref{sec:diag} showed that Adam’s implicit bias towards \emph{sparsity} improves generalization relative to SGD.  
We now turn to supply the potentially negative impact of Adam's implicit bias in another controlled setting: \textbf{deep matrix factorization with label noise}, where the relevant implicit regularizers are analytically tractable.  

In this task, Adam is expected to minimize
\(\tr(\Diag(\mH)^{1/2})\) rather than \(\tr(\mH)\).  Leveraging existing theory, we therefore predict that (i) Adam will converge to a solution with larger \(\tr(\mH)\), whose \(\tr(\Diag(\mH)^{1/2})\), however, smaller than SGD’s solution, and (ii) once Adam reaches a solution with larger $\tr(\mH)$, supported by\citet{gatmiry2023inductive}, it will \emph{generalize worse} than vanilla SGD in the presence of label noise. Our experiments confirm both predictions (\Cref{fig:matrix-fac-trH-loss} and \Cref{fig:matrix-fac-trH-loss-L5}).

\begin{figure}[h]
    \centering
    \includegraphics[width=0.8\textwidth]{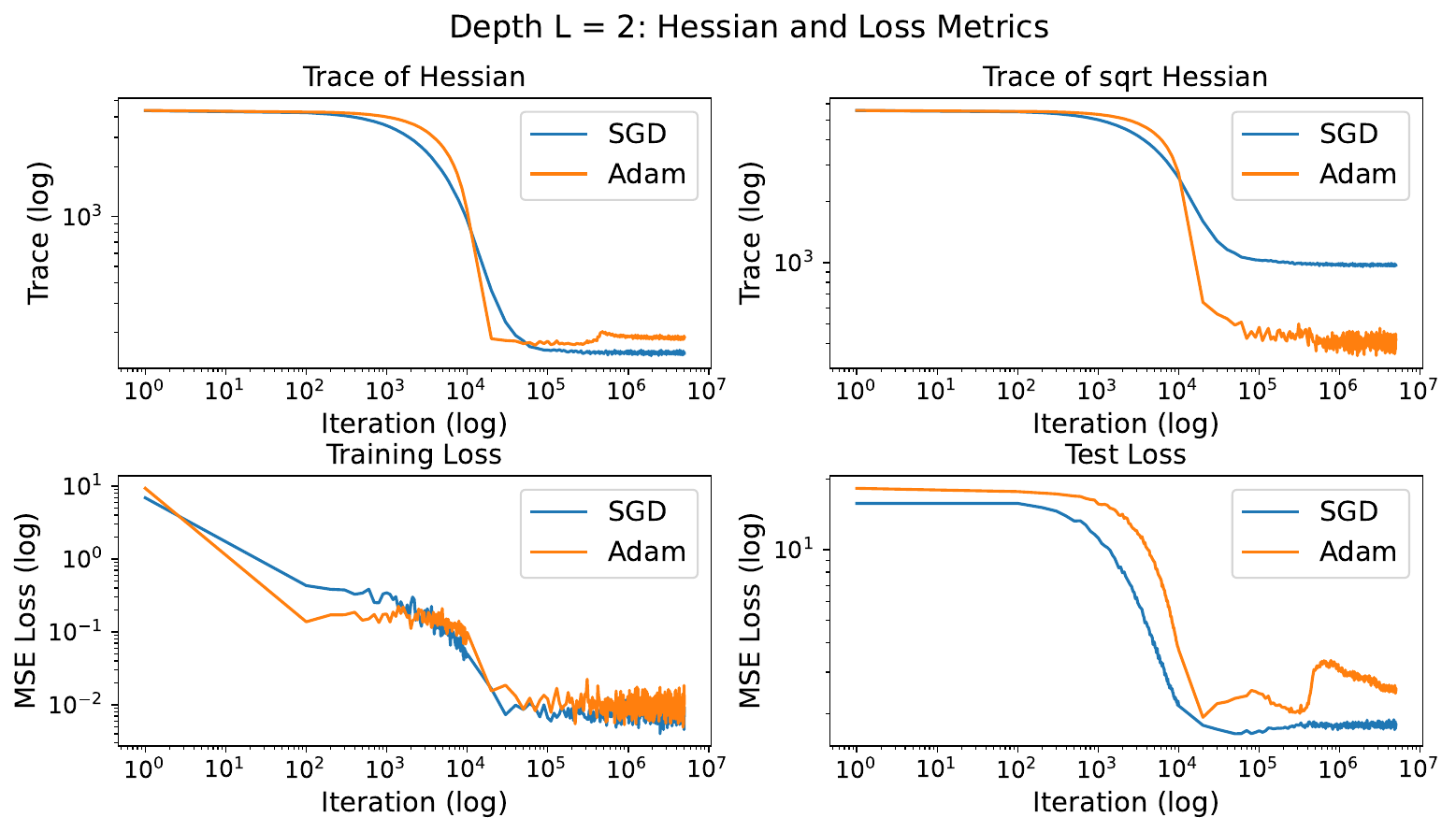}
    \caption{\textbf{Deep matrix factorization with label noise with deepth $L =2$.}  
    Adam and SGD are trained on identical data and noise realizations.  
    \emph{Top:} evolution of \(\tr(\mH)\) and \(\tr(\Diag(\mH)^{1/2})\).  
    \emph{Bottom:} training and test MSE.  
    Adam converges to a point with larger $\tr(\mH)$ but smaller $\tr(\Diag(\mH)^{1/2})$, and exhibits higher test error.}
    \label{fig:matrix-fac-trH-loss}
\end{figure}

\begin{figure}[h]
    \centering
    \includegraphics[width=0.85\textwidth]{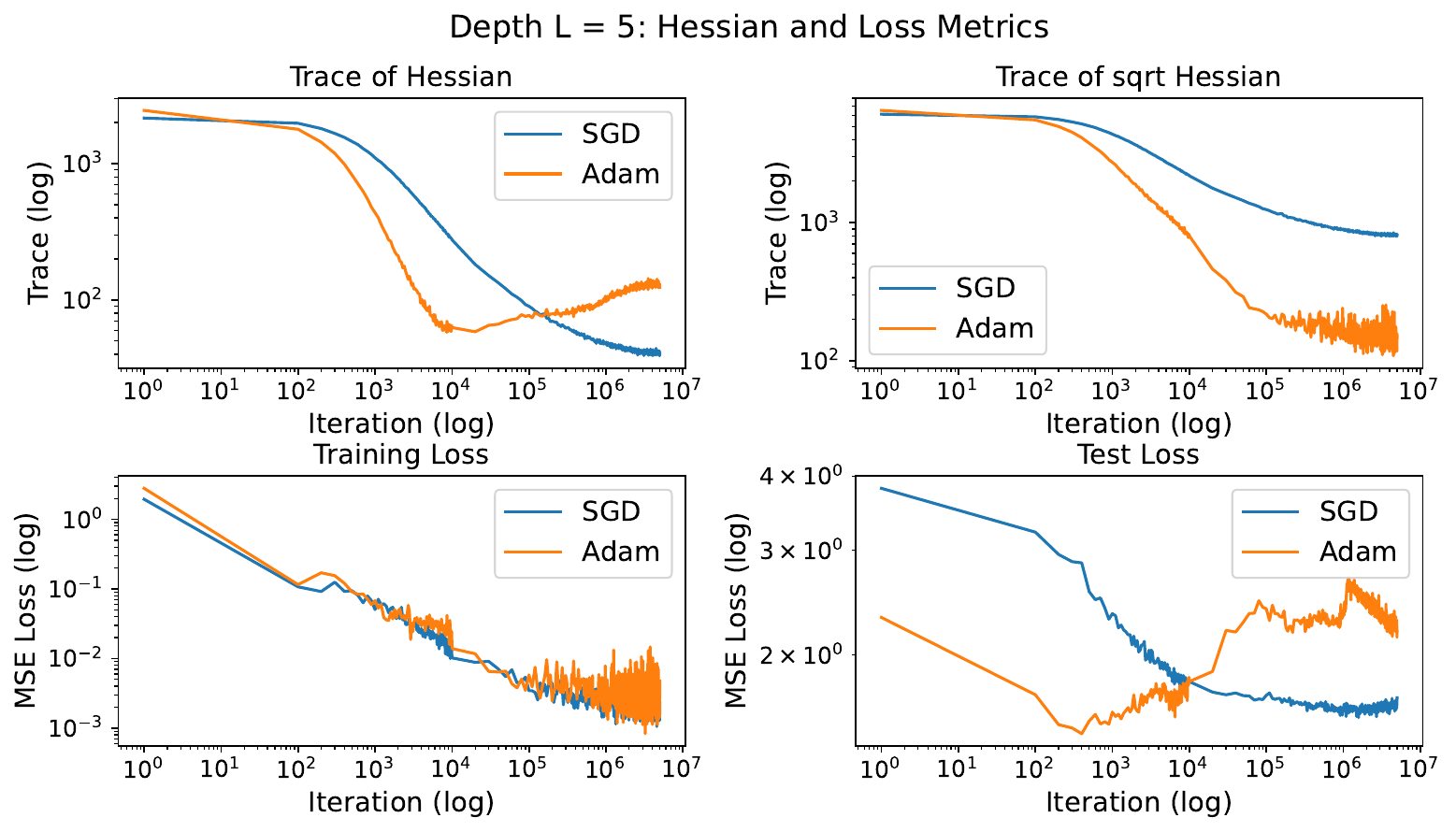}
    \caption{\textbf{Deep matrix factorization with label noise with deepth $L =5$.}}
    \label{fig:matrix-fac-trH-loss-L5}
\end{figure}

\subsection{Problem setup}
We consider an \(L\)-layer linear network with parameters  
\(\mW = (\mW_{1},\dots,\mW_{L})\), where \(\mW_{i}\!\in\!\R^{d_{i}\times d_{i-1}}\) and  
\(d_{i}\!\ge\!\min\{d_{0},d_{L}\}\) for all \(i\).  
We let \(\mM^{\!*}\!\in\!\R^{d_{L}\times d_{0}}\) be a rank-\(r\) ground truth matrix and observe the \(n\) i.i.d. linear measurements \(\{(\mA_{i},b_{i})\}_{i=1}^{n}\) generated by
\(b_{i}= \langle \mA_{i},\mM^{\!*}\rangle\).  
With label noise and mini-batch size \(B\), the empirical loss at step \(t\) is: 
\[
\cL_{t}(\mW)=\frac{1}{B}\sum_{i\in\cB_{t}}
  \!\bigl(\, \langle \mA_{i},\mW_{L}\!\cdots\!\mW_{1}\rangle-b_{i}
  +\xi_{t,i}\bigr)^{2},
\]
where \(\cB_{t}\) stands for a fresh batch of size $B$, and the label noise 
\(\xi_{t,i}{\sim}\cN(0,\sigma^{2})\) are independent to each other for both different $t$ and $i$.


\subsection{Results}
Our setups for SGD optimizer follows Section 7 of \citet{gatmiry2023inductive}.  
For Adam, we use the standard hyperparameters  
\(\beta_{1}=0.9\), \(\beta_{2}=0.999\), and learning rate \(10^{-3}\); all other settings are identical to SGD.

\Cref{fig:matrix-fac-trH-loss} and \Cref{fig:matrix-fac-trH-loss-L5} (top rows in the figures) show the evolution of sharpness metrics and the training/test losses for layer depth $L=2,5$.  
Adam drives \(\tr(\Diag(\mH)^{1/2})\) sharply downward while \(\tr(\mH)\) remains high and even non-monotone, confirming that Adam does \emph{not} target Hessian trace.  
Correspondingly, the bottom rows in these two figures show that Adam attains a higher test loss despite similar training error, which is an evidence that Adam's implicit bias is detrimental in this setting.

Recall that SGD with label noise implicitly regularizes $\mathrm{tr}(\mH)$ on the minimizer manifold. On this matrix factorization task, \citet{gatmiry2023inductive} proved that minimizing $\mathrm{tr}(\mH)$ is roughly equivalent to minimizing the nuclear norm of $\mW^*$, which is connected to generalization improvement when $\mM^*$ possesses the low-rank property. In contrast, our theoretical analysis suggests that Adam implicitly regularizes $\mathrm{tr}(\mathrm{diag}(\mH)^{1/2})$ rather than $\mathrm{tr}(\mH)$, inducing a different bias that does not necessarily favor low-rank solutions. Consequently, Adam is expected to converge to solutions with smaller $\mathrm{tr}(\mathrm{diag}(\mH)^{1/2})$, larger $\mathrm{tr}(\mH)$ and worse generalization compared to SGD. 

\paragraph{Takeaway.}
In deep matrix factorization with label noise, Adam’s tendency to minimize $\tr(\Diag(\mH)^{1/2})$ drives it towards solutions different from those found by SGD and exhibits worse recovery of the low-rank ground truth. This highlights that Adam's implicit regularization can be detrimental for  certain tasks.





\section{Conclusions}

In this work, we have shown that Adam implicitly minimizes a distinctive sharpness measure $\tr(\mathrm{Diag}(\mH)^{1/2})$, and that this bias leads to different solutions and generalization behavior compared to SGD. Our slow SDE framework not only rigorously characterizes Adam’s adaptive semi-gradient drift near the minimizer manifold, but also recovers concrete separations in sparse linear regression and deep matrix factorization settings.

Despite these advances, several important avenues remain open. First, we have focused on the “2-scheme” regime (where $1-\beta_2=O(\eta^2)$) in order to track Adam’s preconditioner over a long timescale; extending our analysis to the intermediate 1.5-scheme or other scalings of $1-\beta_2$ is left for future work. Second, our derivations assume that the iterates remain close to a smooth minimizer manifold; understanding Adam’s implicit bias once the trajectory ventures beyond this local neighborhood may require restarting the analysis from the SGD dynamics. Finally, our approach cannot cover weight‐decay or decoupled decay terms such as the $W$-term in AdamW; characterizing how weight decay alters the effective sharpness regularizer is an important direction for follow-on studies.


\section{Acknowledgement}
We would like to thank Xinran Gu, Kaiyue Wen, Yushun Zhang, Jiaye Teng, Yuhang Cai, and the anonymous NeurIPS reviewers for their insightful comments and feedback.

\newpage


{\small
\bibliographystyle{plainnat}    
\bibliography{refs_1}

\begin{thebibliography}{95}
\providecommand{\natexlab}[1]{#1}
\providecommand{\url}[1]{\texttt{#1}}
\expandafter\ifx\csname urlstyle\endcsname\relax
  \providecommand{\doi}[1]{doi: #1}\else
  \providecommand{\doi}{doi: \begingroup \urlstyle{rm}\Url}\fi

\bibitem[Anil et~al.(2020)Anil, Gupta, Koren, Regan, and Singer]{anil2020scalable}
Rohan Anil, Vineet Gupta, Tomer Koren, Kevin Regan, and Yoram Singer.
\newblock Scalable second order optimization for deep learning.
\newblock \emph{arXiv preprint arXiv:2002.09018}, 2020.

\bibitem[Ashish(2017)]{ashish2017attention}
Vaswani Ashish.
\newblock Attention is all you need.
\newblock \emph{Advances in neural information processing systems}, 30:\penalty0 I, 2017.

\bibitem[Barrett and Dherin(2020)]{barrett2020implicit}
David~GT Barrett and Benoit Dherin.
\newblock Implicit gradient regularization.
\newblock \emph{arXiv preprint arXiv:2009.11162}, 2020.

\bibitem[Blanc et~al.(2020)Blanc, Gupta, Valiant, and Valiant]{blanc2020implicit}
Guy Blanc, Neha Gupta, Gregory Valiant, and Paul Valiant.
\newblock Implicit regularization for deep neural networks driven by an ornstein-uhlenbeck like process.
\newblock In \emph{Conference on learning theory}, pages 483--513. PMLR, 2020.

\bibitem[Brandfonbrener and Bruna(2019)]{brandfonbrener2019geometric}
David Brandfonbrener and Joan Bruna.
\newblock Geometric insights into the convergence of nonlinear td learning.
\newblock \emph{arXiv preprint arXiv:1905.12185}, 2019.

\bibitem[Cattaneo and Shigida(2025)]{cattaneo2025memoryoptimizationalgorithmsimplicitly}
Matias~D. Cattaneo and Boris Shigida.
\newblock How memory in optimization algorithms implicitly modifies the loss, 2025.
\newblock URL \url{https://arxiv.org/abs/2502.02132}.

\bibitem[Cattaneo et~al.(2024)Cattaneo, Klusowski, and Shigida]{cattaneo2024implicitbiasadam}
Matias~D. Cattaneo, Jason~M. Klusowski, and Boris Shigida.
\newblock On the implicit bias of adam, 2024.
\newblock URL \url{https://arxiv.org/abs/2309.00079}.

\bibitem[Chizat et~al.(2019)Chizat, Oyallon, and Bach]{chizat2019lazy}
Lenaic Chizat, Edouard Oyallon, and Francis Bach.
\newblock On lazy training in differentiable programming.
\newblock \emph{Advances in neural information processing systems}, 32, 2019.

\bibitem[Damian et~al.(2021)Damian, Ma, and Lee]{damian2021label}
Alex Damian, Tengyu Ma, and Jason~D Lee.
\newblock Label noise sgd provably prefers flat global minimizers.
\newblock \emph{Advances in Neural Information Processing Systems}, 34:\penalty0 27449--27461, 2021.

\bibitem[D{\'e}fossez et~al.(2020)D{\'e}fossez, Bottou, Bach, and Usunier]{defossez2020simple}
Alexandre D{\'e}fossez, L{\'e}on Bottou, Francis Bach, and Nicolas Usunier.
\newblock A simple convergence proof of adam and adagrad.
\newblock \emph{arXiv preprint arXiv:2003.02395}, 2020.

\bibitem[Dereich and Jentzen(2024)]{dereich2024convergence}
Steffen Dereich and Arnulf Jentzen.
\newblock Convergence rates for the adam optimizer.
\newblock \emph{arXiv preprint arXiv:2407.21078}, 2024.

\bibitem[Devlin et~al.(2019)Devlin, Chang, Lee, and Toutanova]{devlin-etal-2019-bert}
Jacob Devlin, Ming-Wei Chang, Kenton Lee, and Kristina Toutanova.
\newblock {BERT}: Pre-training of deep bidirectional transformers for language understanding.
\newblock In Jill Burstein, Christy Doran, and Thamar Solorio, editors, \emph{Proceedings of the 2019 Conference of the North {A}merican Chapter of the Association for Computational Linguistics: Human Language Technologies, Volume 1 (Long and Short Papers)}, pages 4171--4186, Minneapolis, Minnesota, June 2019. Association for Computational Linguistics.
\newblock \doi{10.18653/v1/N19-1423}.
\newblock URL \url{https://aclanthology.org/N19-1423/}.

\bibitem[Dosovitskiy et~al.(2020)Dosovitskiy, Beyer, Kolesnikov, Weissenborn, Zhai, Unterthiner, Dehghani, Minderer, Heigold, Gelly, et~al.]{dosovitskiy2020image}
Alexey Dosovitskiy, Lucas Beyer, Alexander Kolesnikov, Dirk Weissenborn, Xiaohua Zhai, Thomas Unterthiner, Mostafa Dehghani, Matthias Minderer, Georg Heigold, Sylvain Gelly, et~al.
\newblock An image is worth 16x16 words: Transformers for image recognition at scale.
\newblock \emph{arXiv preprint arXiv:2010.11929}, 2020.

\bibitem[Du and Duan(2006)]{du2006invariant}
Aijun Du and Jinqiao Duan.
\newblock Invariant manifold reduction for stochastic dynamical systems.
\newblock \emph{arXiv preprint math/0607366}, 2006.

\bibitem[Duistermaat and Kolk(2012)]{duistermaat2012lie}
Johannes~Jisse Duistermaat and Johan~AC Kolk.
\newblock \emph{Lie groups}.
\newblock Springer Science \& Business Media, 2012.

\bibitem[Falconer(1983)]{falconer1983differentiation}
KJ~Falconer.
\newblock Differentiation of the limit mapping in a dynamical system.
\newblock \emph{Journal of the London Mathematical Society}, 2\penalty0 (2):\penalty0 356--372, 1983.

\bibitem[Fehrman et~al.(2020)Fehrman, Gess, and Jentzen]{fehrman2020convergence}
Benjamin Fehrman, Benjamin Gess, and Arnulf Jentzen.
\newblock Convergence rates for the stochastic gradient descent method for non-convex objective functions.
\newblock \emph{Journal of Machine Learning Research}, 21\penalty0 (136):\penalty0 1--48, 2020.

\bibitem[Filipovi{\'c}(2000)]{filipovic2000invariant}
Damir Filipovi{\'c}.
\newblock Invariant manifolds for weak solutions to stochastic equations.
\newblock \emph{Probability theory and related fields}, 118\penalty0 (3):\penalty0 323--341, 2000.

\bibitem[Foret et~al.(2021)Foret, Kleiner, Mobahi, and Neyshabur]{foret2021sharpnessaware}
Pierre Foret, Ariel Kleiner, Hossein Mobahi, and Behnam Neyshabur.
\newblock Sharpness-aware minimization for efficiently improving generalization.
\newblock In \emph{International Conference on Learning Representations}, 2021.
\newblock URL \url{https://openreview.net/forum?id=6Tm1mposlrM}.

\bibitem[Garipov et~al.(2018)Garipov, Izmailov, Podoprikhin, Vetrov, and Wilson]{garipov2018losssurfacesmodeconnectivity}
Timur Garipov, Pavel Izmailov, Dmitrii Podoprikhin, Dmitry Vetrov, and Andrew~Gordon Wilson.
\newblock Loss surfaces, mode connectivity, and fast ensembling of dnns, 2018.
\newblock URL \url{https://arxiv.org/abs/1802.10026}.

\bibitem[Gatmiry et~al.(2023)Gatmiry, Li, Ma, Reddi, Jegelka, and Chuang]{gatmiry2023inductive}
Khashayar Gatmiry, Zhiyuan Li, Tengyu Ma, Sashank Reddi, Stefanie Jegelka, and Ching-Yao Chuang.
\newblock What is the inductive bias of flatness regularization? a study of deep matrix factorization models.
\newblock \emph{Advances in Neural Information Processing Systems}, 36:\penalty0 28040--28052, 2023.

\bibitem[Geiping et~al.(2022)Geiping, Goldblum, Pope, Moeller, and Goldstein]{geiping2022stochastictrainingnecessarygeneralization}
Jonas Geiping, Micah Goldblum, Phillip~E. Pope, Michael Moeller, and Tom Goldstein.
\newblock Stochastic training is not necessary for generalization, 2022.
\newblock URL \url{https://arxiv.org/abs/2109.14119}.

\bibitem[Gu et~al.(2023{\natexlab{a}})Gu, Lyu, Huang, and Arora]{gu2023and}
Xinran Gu, Kaifeng Lyu, Longbo Huang, and Sanjeev Arora.
\newblock Why (and when) does local sgd generalize better than sgd?
\newblock \emph{arXiv preprint arXiv:2303.01215}, 2023{\natexlab{a}}.

\bibitem[Gu et~al.(2023{\natexlab{b}})Gu, Lyu, Huang, and Arora]{gu2023andwhendoeslocal}
Xinran Gu, Kaifeng Lyu, Longbo Huang, and Sanjeev Arora.
\newblock Why (and when) does local sgd generalize better than sgd?, 2023{\natexlab{b}}.
\newblock URL \url{https://arxiv.org/abs/2303.01215}.

\bibitem[Gu et~al.(2024)Gu, Lyu, Arora, Zhang, and Huang]{gu2024a}
Xinran Gu, Kaifeng Lyu, Sanjeev Arora, Jingzhao Zhang, and Longbo Huang.
\newblock A quadratic synchronization rule for distributed deep learning.
\newblock In \emph{The Twelfth International Conference on Learning Representations}, 2024.
\newblock URL \url{https://openreview.net/forum?id=yroyhkhWS6}.

\bibitem[Guo et~al.(2025)Guo, Xu, Yin, Jin, and Yang]{guo2025unifiedconvergenceanalysisadaptive}
Zhishuai Guo, Yi~Xu, Wotao Yin, Rong Jin, and Tianbao Yang.
\newblock Unified convergence analysis for adaptive optimization with moving average estimator, 2025.
\newblock URL \url{https://arxiv.org/abs/2104.14840}.

\bibitem[Gupta et~al.(2018)Gupta, Koren, and Singer]{gupta2018shampoo}
Vineet Gupta, Tomer Koren, and Yoram Singer.
\newblock Shampoo: Preconditioned stochastic tensor optimization.
\newblock In \emph{International Conference on Machine Learning}, pages 1842--1850. PMLR, 2018.

\bibitem[HaoChen et~al.(2021)HaoChen, Wei, Lee, and Ma]{haochen2021shape}
Jeff~Z HaoChen, Colin Wei, Jason Lee, and Tengyu Ma.
\newblock Shape matters: Understanding the implicit bias of the noise covariance.
\newblock In \emph{Conference on Learning Theory}, pages 2315--2357. PMLR, 2021.

\bibitem[Hinton et~al.(2012)Hinton, Srivastava, and Swersky]{hinton2012neural}
Geoffrey Hinton, Nitish Srivastava, and Kevin Swersky.
\newblock Neural networks for machine learning lecture 6a overview of mini-batch gradient descent.
\newblock \emph{Cited on}, 14\penalty0 (8):\penalty0 2, 2012.

\bibitem[Hochreiter and Schmidhuber(1997)]{hochreiter1997flat}
Sepp Hochreiter and Jürgen Schmidhuber.
\newblock Flat minima.
\newblock \emph{Neural Computation}, 9\penalty0 (1):\penalty0 1--42, 01 1997.
\newblock ISSN 0899-7667.
\newblock \doi{10.1162/neco.1997.9.1.1}.
\newblock URL \url{https://doi.org/10.1162/neco.1997.9.1.1}.

\bibitem[Hong and Lin(2023)]{hong2023highprobabilityconvergenceadam}
Yusu Hong and Junhong Lin.
\newblock High probability convergence of adam under unbounded gradients and affine variance noise, 2023.
\newblock URL \url{https://arxiv.org/abs/2311.02000}.

\bibitem[Iiduka(2022)]{iiduka2022theoreticalanalysisadamusing}
Hideaki Iiduka.
\newblock Theoretical analysis of adam using hyperparameters close to one without lipschitz smoothness, 2022.
\newblock URL \url{https://arxiv.org/abs/2206.13290}.

\bibitem[Isola et~al.(2017)Isola, Zhu, Zhou, and Efros]{isola2017image}
Phillip Isola, Jun{-}Yan Zhu, Tinghui Zhou, and Alexei~A. Efros.
\newblock Image-to-image translation with conditional adversarial networks.
\newblock In \emph{Proceedings of the IEEE Conference on Computer Vision and Pattern Recognition (CVPR)}, pages 1125--1134. IEEE, 2017.

\bibitem[Jacot et~al.(2018)Jacot, Gabriel, and Hongler]{jacot2018neural}
Arthur Jacot, Franck Gabriel, and Cl{\'e}ment Hongler.
\newblock Neural tangent kernel: Convergence and generalization in neural networks.
\newblock \emph{Advances in neural information processing systems}, 31, 2018.

\bibitem[Jastrz{\k{e}}bski et~al.(2017)Jastrz{\k{e}}bski, Kenton, Arpit, Ballas, Fischer, Bengio, and Storkey]{jastrzkebski2017three}
Stanis{\l}aw Jastrz{\k{e}}bski, Zachary Kenton, Devansh Arpit, Nicolas Ballas, Asja Fischer, Yoshua Bengio, and Amos Storkey.
\newblock Three factors influencing minima in sgd.
\newblock \emph{arXiv preprint arXiv:1711.04623}, 2017.

\bibitem[Ji and Telgarsky(2020)]{ji2020directional}
Ziwei Ji and Matus Telgarsky.
\newblock Directional convergence and alignment in deep learning.
\newblock In H.~Larochelle, M.~Ranzato, R.~Hadsell, M.F. Balcan, and H.~Lin, editors, \emph{Advances in Neural Information Processing Systems}, volume~33, pages 17176--17186. Curran Associates, Inc., 2020.
\newblock URL \url{https://proceedings.neurips.cc/paper_files/paper/2020/file/c76e4b2fa54f8506719a5c0dc14c2eb9-Paper.pdf}.

\bibitem[Jiang et~al.(2020)Jiang, Neyshabur, Mobahi, Krishnan, and Bengio]{jiang2020fantastic}
Yiding Jiang, Behnam Neyshabur, Hossein Mobahi, Dilip Krishnan, and Samy Bengio.
\newblock Fantastic generalization measures and where to find them.
\newblock In \emph{International Conference on Learning Representations}, 2020.
\newblock URL \url{https://openreview.net/forum?id=SJgIPJBFvH}.

\bibitem[Karakida et~al.(2023)Karakida, Takase, Hayase, and Osawa]{karakida2023understandinggradientregularizationdeep}
Ryo Karakida, Tomoumi Takase, Tomohiro Hayase, and Kazuki Osawa.
\newblock Understanding gradient regularization in deep learning: Efficient finite-difference computation and implicit bias, 2023.
\newblock URL \url{https://arxiv.org/abs/2210.02720}.

\bibitem[Keskar et~al.(2017)Keskar, Mudigere, Nocedal, Smelyanskiy, and Tang]{keskar2017on}
Nitish~Shirish Keskar, Dheevatsa Mudigere, Jorge Nocedal, Mikhail Smelyanskiy, and Ping Tak~Peter Tang.
\newblock On large-batch training for deep learning: Generalization gap and sharp minima.
\newblock In \emph{International Conference on Learning Representations}, 2017.
\newblock URL \url{https://openreview.net/forum?id=H1oyRlYgg}.

\bibitem[Kingma and Ba(2014)]{kingma2014adam}
Diederik~P Kingma and Jimmy Ba.
\newblock Adam: A method for stochastic optimization.
\newblock \emph{arXiv preprint arXiv:1412.6980}, 2014.

\bibitem[Kuditipudi et~al.(2019)Kuditipudi, Wang, Lee, Zhang, Li, Hu, Ge, and Arora]{kuditipudi2019explaining}
Rohith Kuditipudi, Xiang Wang, Holden Lee, Yi~Zhang, Zhiyuan Li, Wei Hu, Rong Ge, and Sanjeev Arora.
\newblock Explaining landscape connectivity of low-cost solutions for multilayer nets.
\newblock In H.~Wallach, H.~Larochelle, A.~Beygelzimer, F.~d\textquotesingle Alch\'{e}-Buc, E.~Fox, and R.~Garnett, editors, \emph{Advances in Neural Information Processing Systems}, volume~32. Curran Associates, Inc., 2019.
\newblock URL \url{https://proceedings.neurips.cc/paper_files/paper/2019/file/46a4378f835dc8040c8057beb6a2da52-Paper.pdf}.

\bibitem[Lee(2012)]{lee2012introduction}
J.~Lee.
\newblock \emph{Introduction to Smooth Manifolds}.
\newblock Graduate Texts in Mathematics. Springer New York, 2012.
\newblock ISBN 9781441999825.
\newblock URL \url{https://books.google.co.jp/books?id=xygVcKGPsNwC}.

\bibitem[Li et~al.(2015)Li, Tai, et~al.]{li2015dynamics}
Qianxiao Li, Cheng Tai, et~al.
\newblock Dynamics of stochastic gradient algorithms.
\newblock \emph{CoRR}, 2015.

\bibitem[Li et~al.(2017)Li, Tai, et~al.]{li2017stochastic}
Qianxiao Li, Cheng Tai, et~al.
\newblock Stochastic modified equations and adaptive stochastic gradient algorithms.
\newblock In \emph{International Conference on Machine Learning}, pages 2101--2110. PMLR, 2017.

\bibitem[Li et~al.(2018)Li, Tai, and E]{li2018stochasticmodifiedequationsdynamics}
Qianxiao Li, Cheng Tai, and Weinan E.
\newblock Stochastic modified equations and dynamics of stochastic gradient algorithms i: Mathematical foundations, 2018.
\newblock URL \url{https://arxiv.org/abs/1811.01558}.

\bibitem[Li et~al.(2019)Li, Tai, et~al.]{li2019stochastic}
Qianxiao Li, Cheng Tai, et~al.
\newblock Stochastic modified equations and dynamics of stochastic gradient algorithms i: Mathematical foundations.
\newblock \emph{Journal of Machine Learning Research}, 20\penalty0 (40):\penalty0 1--47, 2019.

\bibitem[Li et~al.(2021{\natexlab{a}})Li, Malladi, and Arora]{li2021validitymodelingsgdstochastic}
Zhiyuan Li, Sadhika Malladi, and Sanjeev Arora.
\newblock On the validity of modeling sgd with stochastic differential equations (sdes).
\newblock In M.~Ranzato, A.~Beygelzimer, Y.~Dauphin, P.S. Liang, and J.~Wortman Vaughan, editors, \emph{Advances in Neural Information Processing Systems}, volume~34, pages 12712--12725. Curran Associates, Inc., 2021{\natexlab{a}}.
\newblock URL \url{https://proceedings.neurips.cc/paper_files/paper/2021/file/69f62956429865909921fa916d61c1f8-Paper.pdf}.

\bibitem[Li et~al.(2021{\natexlab{b}})Li, Wang, and Arora]{li2021happens}
Zhiyuan Li, Tianhao Wang, and Sanjeev Arora.
\newblock What happens after sgd reaches zero loss?--a mathematical framework.
\newblock \emph{arXiv preprint arXiv:2110.06914}, 2021{\natexlab{b}}.

\bibitem[Lin et~al.(2018)Lin, Stich, Patel, and Jaggi]{lin2018don}
Tao Lin, Sebastian~U Stich, Kumar~Kshitij Patel, and Martin Jaggi.
\newblock Don't use large mini-batches, use local sgd.
\newblock \emph{arXiv preprint arXiv:1808.07217}, 2018.

\bibitem[Lin et~al.(2025)Lin, Lowe, Dangel, Eschenhagen, Xu, and Grosse]{lin2025understanding}
Wu~Lin, Scott~C Lowe, Felix Dangel, Runa Eschenhagen, Zikun Xu, and Roger~B Grosse.
\newblock Understanding and improving the shampoo optimizer via kullback-leibler minimization.
\newblock \emph{arXiv preprint arXiv:2509.03378}, 2025.

\bibitem[Liu et~al.(2023)Liu, Xie, Li, and Ma]{liu2022pretraininglossbetterdownstream}
Hong Liu, Sang~Michael Xie, Zhiyuan Li, and Tengyu Ma.
\newblock Same pre-training loss, better downstream: Implicit bias matters for language models.
\newblock In Andreas Krause, Emma Brunskill, Kyunghyun Cho, Barbara Engelhardt, Sivan Sabato, and Jonathan Scarlett, editors, \emph{Proceedings of the 40th International Conference on Machine Learning}, volume 202 of \emph{Proceedings of Machine Learning Research}, pages 22188--22214. PMLR, 23--29 Jul 2023.
\newblock URL \url{https://proceedings.mlr.press/v202/liu23ao.html}.

\bibitem[Loshchilov and Hutter(2017)]{loshchilov2017decoupled}
Ilya Loshchilov and Frank Hutter.
\newblock Decoupled weight decay regularization.
\newblock \emph{arXiv preprint arXiv:1711.05101}, 2017.

\bibitem[Lyu and Li(2020)]{lyu2020gradient}
Kaifeng Lyu and Jian Li.
\newblock Gradient descent maximizes the margin of homogeneous neural networks.
\newblock In \emph{International Conference on Learning Representations}, 2020.
\newblock URL \url{https://openreview.net/forum?id=SJeLIgBKPS}.

\bibitem[Lyu et~al.(2022)Lyu, Li, and Arora]{lyu2022understanding}
Kaifeng Lyu, Zhiyuan Li, and Sanjeev Arora.
\newblock Understanding the generalization benefit of normalization layers: Sharpness reduction.
\newblock \emph{Advances in Neural Information Processing Systems}, 35:\penalty0 34689--34708, 2022.

\bibitem[Lyu et~al.(2024)Lyu, Jin, Li, Du, Lee, and Hu]{lyu2024dichotomy}
Kaifeng Lyu, Jikai Jin, Zhiyuan Li, Simon~S Du, Jason~D Lee, and Wei Hu.
\newblock Dichotomy of early and late phase implicit biases can provably induce grokking.
\newblock In \emph{12th International Conference on Learning Representations, ICLR 2024}, 2024.

\bibitem[Ma et~al.(2023)Ma, Kunin, and Ying]{ma2023a}
Chao Ma, Daniel Kunin, and Lexing Ying.
\newblock A quasistatic derivation of optimization algorithms' exploration on minima manifolds, 2023.
\newblock URL \url{https://openreview.net/forum?id=UDbNL0_W-3x}.

\bibitem[Malladi et~al.(2022)Malladi, Lyu, Panigrahi, and Arora]{malladi2022sdes}
Sadhika Malladi, Kaifeng Lyu, Abhishek Panigrahi, and Sanjeev Arora.
\newblock On the sdes and scaling rules for adaptive gradient algorithms.
\newblock \emph{Advances in Neural Information Processing Systems}, 35:\penalty0 7697--7711, 2022.

\bibitem[Malladi et~al.(2024)Malladi, Lyu, Panigrahi, and Arora]{malladi2024sdesscalingrulesadaptive}
Sadhika Malladi, Kaifeng Lyu, Abhishek Panigrahi, and Sanjeev Arora.
\newblock On the sdes and scaling rules for adaptive gradient algorithms, 2024.
\newblock URL \url{https://arxiv.org/abs/2205.10287}.

\bibitem[Mnih et~al.(2015)Mnih, Kavukcuoglu, Silver, Rusu, Veness, Bellemare, Graves, Riedmiller, Fidjeland, Ostrovski, et~al.]{mnih2015human}
Volodymyr Mnih, Koray Kavukcuoglu, David Silver, Andrei~A Rusu, Joel Veness, Marc~G Bellemare, Alex Graves, Martin Riedmiller, Andreas~K Fidjeland, Georg Ostrovski, et~al.
\newblock Human-level control through deep reinforcement learning.
\newblock \emph{nature}, 518\penalty0 (7540):\penalty0 529--533, 2015.

\bibitem[Morwani et~al.(2024)Morwani, Shapira, Vyas, Malach, Kakade, and Janson]{morwani2024new}
Depen Morwani, Itai Shapira, Nikhil Vyas, Eran Malach, Sham Kakade, and Lucas Janson.
\newblock A new perspective on shampoo's preconditioner.
\newblock \emph{arXiv preprint arXiv:2406.17748}, 2024.

\bibitem[Nacson et~al.(2019)Nacson, Gunasekar, Lee, Srebro, and Soudry]{nacson2019lexicographic}
Mor~Shpigel Nacson, Suriya Gunasekar, Jason Lee, Nathan Srebro, and Daniel Soudry.
\newblock Lexicographic and depth-sensitive margins in homogeneous and non-homogeneous deep models.
\newblock In Kamalika Chaudhuri and Ruslan Salakhutdinov, editors, \emph{Proceedings of the 36th International Conference on Machine Learning}, volume~97 of \emph{Proceedings of Machine Learning Research}, pages 4683--4692. PMLR, 09--15 Jun 2019.

\bibitem[Neyshabur et~al.(2014)Neyshabur, Tomioka, and Srebro]{neyshabur2014search}
Behnam Neyshabur, Ryota Tomioka, and Nathan Srebro.
\newblock In search of the real inductive bias: On the role of implicit regularization in deep learning.
\newblock \emph{arXiv preprint arXiv:1412.6614}, 2014.

\bibitem[Novack et~al.(2022)Novack, Kaur, Marwah, Garg, and Lipton]{novack2022disentanglingmechanismsimplicitregularization}
Zachary Novack, Simran Kaur, Tanya Marwah, Saurabh Garg, and Zachary~C. Lipton.
\newblock Disentangling the mechanisms behind implicit regularization in sgd, 2022.
\newblock URL \url{https://arxiv.org/abs/2211.15853}.

\bibitem[Oksendal(2013)]{oksendal2013stochastic}
Bernt Oksendal.
\newblock \emph{Stochastic differential equations: an introduction with applications}.
\newblock Springer Science \& Business Media, 2013.

\bibitem[Qian and Qian(2019)]{qian2019implicit}
Qian Qian and Xiaoyuan Qian.
\newblock The implicit bias of adagrad on separable data.
\newblock \emph{Advances in Neural Information Processing Systems}, 32, 2019.

\bibitem[Radford et~al.(2018)Radford, Narasimhan, Salimans, Sutskever, et~al.]{radford2018improving}
Alec Radford, Karthik Narasimhan, Tim Salimans, Ilya Sutskever, et~al.
\newblock Improving language understanding by generative pre-training.
\newblock \emph{Unpublished manuscript}, 2018.

\bibitem[Reddi et~al.(2018)Reddi, Kale, and Kumar]{reddi2018convergence}
Sashank~J Reddi, Satyen Kale, and Sanjiv Kumar.
\newblock On the convergence of adam and beyond.
\newblock In \emph{International Conference on Learning Representations}, 2018.

\bibitem[Ronneberger et~al.(2015)Ronneberger, Fischer, and Brox]{ronneberger2015u}
Olaf Ronneberger, Philipp Fischer, and Thomas Brox.
\newblock U-net: Convolutional networks for biomedical image segmentation.
\newblock In \emph{Medical Image Computing and Computer-Assisted Intervention (MICCAI)}, volume 9351 of \emph{Lecture Notes in Computer Science}, pages 234--241. Springer, 2015.

\bibitem[Saunshi et~al.(2024)Saunshi, Karp, Krishnan, Miryoosefi, Reddi, and Kumar]{saunshi2024inductive}
Nikunj Saunshi, Stefani Karp, Shankar Krishnan, Sobhan Miryoosefi, Sashank~J. Reddi, and Sanjiv Kumar.
\newblock On the inductive bias of stacking towards improving reasoning.
\newblock In A.~Globerson, L.~Mackey, D.~Belgrave, A.~Fan, U.~Paquet, J.~Tomczak, and C.~Zhang, editors, \emph{Advances in Neural Information Processing Systems}, volume~37, pages 71437--71464. Curran Associates, Inc., 2024.
\newblock URL \url{https://proceedings.neurips.cc/paper_files/paper/2024/file/837bc5db12f3d394d220815a7687340c-Paper-Conference.pdf}.

\bibitem[Schulman et~al.(2017)Schulman, Wolski, Dhariwal, Radford, and Klimov]{schulman2017proximal}
John Schulman, Filip Wolski, Prafulla Dhariwal, Alec Radford, and Oleg Klimov.
\newblock Proximal policy optimization algorithms.
\newblock \emph{arXiv preprint arXiv:1707.06347}, 2017.

\bibitem[Shazeer and Stern(2018)]{shazeer2018adafactor}
Noam Shazeer and Mitchell Stern.
\newblock Adafactor: Adaptive learning rates with sublinear memory cost.
\newblock In \emph{International Conference on Machine Learning}, pages 4596--4604. PMLR, 2018.

\bibitem[Shi et~al.(2023)Shi, Lee, Iwasaki, Gallego-Posada, Li, Rangadurai, Mudigere, and Rabbat]{shi2023distributed}
Hao-Jun~Michael Shi, Tsung-Hsien Lee, Shintaro Iwasaki, Jose Gallego-Posada, Zhijing Li, Kaushik Rangadurai, Dheevatsa Mudigere, and Michael Rabbat.
\newblock A distributed data-parallel pytorch implementation of the distributed shampoo optimizer for training neural networks at-scale.
\newblock \emph{arXiv preprint arXiv:2309.06497}, 2023.

\bibitem[Shi and Li(2021)]{shi2021rmsprop}
Naichen Shi and Dawei Li.
\newblock Rmsprop converges with proper hyperparameter.
\newblock In \emph{International conference on learning representation}, 2021.

\bibitem[Smith et~al.(2020)Smith, Elsen, and De]{smith2020generalization}
Samuel Smith, Erich Elsen, and Soham De.
\newblock On the generalization benefit of noise in stochastic gradient descent.
\newblock In Hal~Daumé III and Aarti Singh, editors, \emph{Proceedings of the 37th International Conference on Machine Learning}, volume 119 of \emph{Proceedings of Machine Learning Research}, pages 9058--9067. PMLR, 13--18 Jul 2020.
\newblock URL \url{https://proceedings.mlr.press/v119/smith20a.html}.

\bibitem[Smith et~al.(2021)Smith, Dherin, Barrett, and De]{smith2021originimplicitregularizationstochastic}
Samuel~L. Smith, Benoit Dherin, David G.~T. Barrett, and Soham De.
\newblock On the origin of implicit regularization in stochastic gradient descent, 2021.
\newblock URL \url{https://arxiv.org/abs/2101.12176}.

\bibitem[Soudry et~al.(2018)Soudry, Hoffer, Nacson, Gunasekar, and Srebro]{soudry2018implicit}
Daniel Soudry, Elad Hoffer, Mor~Shpigel Nacson, Suriya Gunasekar, and Nathan Srebro.
\newblock The implicit bias of gradient descent on separable data.
\newblock \emph{Journal of Machine Learning Research}, 19\penalty0 (70):\penalty0 1--57, 2018.

\bibitem[Vaswani et~al.(2017)Vaswani, Shazeer, Parmar, Uszkoreit, Jones, Gomez, Kaiser, and Polosukhin]{vaswani2017attention}
Ashish Vaswani, Noam Shazeer, Niki Parmar, Jakob Uszkoreit, Llion Jones, Aidan~N Gomez, {\L}ukasz Kaiser, and Illia Polosukhin.
\newblock Attention is all you need.
\newblock \emph{Advances in neural information processing systems}, 30, 2017.

\bibitem[Wang et~al.(2021)Wang, Meng, Chen, and Liu]{wang2021implicit}
Bohan Wang, Qi~Meng, Wei Chen, and Tie-Yan Liu.
\newblock The implicit bias for adaptive optimization algorithms on homogeneous neural networks.
\newblock In \emph{International Conference on Machine Learning}, pages 10849--10858. PMLR, 2021.

\bibitem[Wang et~al.(2024{\natexlab{a}})Wang, Zhang, Meng, Sun, Ma, and Chen]{wang2024convergenceadamnonuniformsmoothness}
Bohan Wang, Huishuai Zhang, Qi~Meng, Ruoyu Sun, Zhi-Ming Ma, and Wei Chen.
\newblock On the convergence of adam under non-uniform smoothness: Separability from sgdm and beyond, 2024{\natexlab{a}}.
\newblock URL \url{https://arxiv.org/abs/2403.15146}.

\bibitem[Wang et~al.(2024{\natexlab{b}})Wang, Zhang, Zhang, Meng, Sun, Ma, Liu, Luo, and Chen]{10.1145/3637528.3671718}
Bohan Wang, Yushun Zhang, Huishuai Zhang, Qi~Meng, Ruoyu Sun, Zhi-Ming Ma, Tie-Yan Liu, Zhi-Quan Luo, and Wei Chen.
\newblock Provable adaptivity of adam under non-uniform smoothness.
\newblock In \emph{Proceedings of the 30th ACM SIGKDD Conference on Knowledge Discovery and Data Mining}, KDD '24, page 2960–2969, New York, NY, USA, 2024{\natexlab{b}}. Association for Computing Machinery.
\newblock ISBN 9798400704901.
\newblock \doi{10.1145/3637528.3671718}.
\newblock URL \url{https://doi.org/10.1145/3637528.3671718}.

\bibitem[Wang and Wiens(2020)]{wang2020adasgd}
Jiaxuan Wang and Jenna Wiens.
\newblock Adasgd: Bridging the gap between sgd and adam.
\newblock \emph{arXiv preprint arXiv:2006.16541}, 2020.

\bibitem[Wang et~al.(2023)Wang, Malladi, Wang, Lyu, and Li]{wang2023marginal}
Runzhe Wang, Sadhika Malladi, Tianhao Wang, Kaifeng Lyu, and Zhiyuan Li.
\newblock The marginal value of momentum for small learning rate sgd.
\newblock \emph{arXiv preprint arXiv:2307.15196}, 2023.

\bibitem[Wen et~al.(2024)Wen, Li, Wang, Hall, Liang, and Ma]{wen2024understandingwarmupstabledecaylearningrates}
Kaiyue Wen, Zhiyuan Li, Jason Wang, David Hall, Percy Liang, and Tengyu Ma.
\newblock Understanding warmup-stable-decay learning rates: A river valley loss landscape perspective, 2024.
\newblock URL \url{https://arxiv.org/abs/2410.05192}.

\bibitem[Woodworth et~al.(2020)Woodworth, Gunasekar, Lee, Moroshko, Savarese, Golan, Soudry, and Srebro]{woodworth2020kernel}
Blake Woodworth, Suriya Gunasekar, Jason~D Lee, Edward Moroshko, Pedro Savarese, Itay Golan, Daniel Soudry, and Nathan Srebro.
\newblock Kernel and rich regimes in overparametrized models.
\newblock In \emph{Conference on Learning Theory}, pages 3635--3673. PMLR, 2020.

\bibitem[Xie and Li(2024)]{xie2024implicit}
Shuo Xie and Zhiyuan Li.
\newblock Implicit bias of {A}dam{W}: $\ell_{\infty}$-norm constrained optimization.
\newblock In Ruslan Salakhutdinov, Zico Kolter, Katherine Heller, Adrian Weller, Nuria Oliver, Jonathan Scarlett, and Felix Berkenkamp, editors, \emph{Proceedings of the 41st International Conference on Machine Learning}, volume 235 of \emph{Proceedings of Machine Learning Research}, pages 54488--54510. PMLR, 21--27 Jul 2024.

\bibitem[Zaheer et~al.(2018)Zaheer, Reddi, Sachan, Kale, and Kumar]{NEURIPS2018_90365351}
Manzil Zaheer, Sashank Reddi, Devendra Sachan, Satyen Kale, and Sanjiv Kumar.
\newblock Adaptive methods for nonconvex optimization.
\newblock In S.~Bengio, H.~Wallach, H.~Larochelle, K.~Grauman, N.~Cesa-Bianchi, and R.~Garnett, editors, \emph{Advances in Neural Information Processing Systems}, volume~31. Curran Associates, Inc., 2018.
\newblock URL \url{https://proceedings.neurips.cc/paper_files/paper/2018/file/90365351ccc7437a1309dc64e4db32a3-Paper.pdf}.

\bibitem[Zhang et~al.(2024{\natexlab{a}})Zhang, Zou, and Cao]{zhang2024implicitbiasadamseparable}
Chenyang Zhang, Difan Zou, and Yuan Cao.
\newblock The implicit bias of adam on separable data, 2024{\natexlab{a}}.
\newblock URL \url{https://arxiv.org/abs/2406.10650}.

\bibitem[Zhang et~al.(2017)Zhang, Bengio, Hardt, Recht, and Vinyals]{zhang2017understanding}
Chiyuan Zhang, Samy Bengio, Moritz Hardt, Benjamin Recht, and Oriol Vinyals.
\newblock Understanding deep learning requires rethinking generalization.
\newblock In \emph{International Conference on Learning Representations}, 2017.
\newblock URL \url{https://openreview.net/forum?id=Sy8gdB9xx}.

\bibitem[Zhang et~al.(2024{\natexlab{b}})Zhang, Zhou, and Zou]{zhang2024convergence}
Qi~Zhang, Yi~Zhou, and Shaofeng Zou.
\newblock Convergence guarantees for rmsprop and adam in generalized-smooth non-convex optimization with affine noise variance.
\newblock \emph{arXiv preprint arXiv:2404.01436}, 2024{\natexlab{b}}.

\bibitem[Zhang et~al.(2022)Zhang, Chen, Shi, Sun, and Luo]{zhang2022adam}
Yushun Zhang, Congliang Chen, Naichen Shi, Ruoyu Sun, and Zhi-Quan Luo.
\newblock Adam can converge without any modification on update rules.
\newblock \emph{Advances in neural information processing systems}, 35:\penalty0 28386--28399, 2022.

\bibitem[Zhang et~al.(2024{\natexlab{c}})Zhang, Chen, Ding, Li, Sun, and Luo]{zhang2024transformersneedadamhessian}
Yushun Zhang, Congliang Chen, Tian Ding, Ziniu Li, Ruoyu Sun, and Zhi-Quan Luo.
\newblock Why transformers need adam: A hessian perspective, 2024{\natexlab{c}}.
\newblock URL \url{https://arxiv.org/abs/2402.16788}.

\bibitem[Zhang et~al.(2025)Zhang, Chen, Li, Ding, Wu, Kingma, Ye, Luo, and Sun]{zhang2025adamminiusefewerlearning}
Yushun Zhang, Congliang Chen, Ziniu Li, Tian Ding, Chenwei Wu, Diederik~P. Kingma, Yinyu Ye, Zhi-Quan Luo, and Ruoyu Sun.
\newblock Adam-mini: Use fewer learning rates to gain more, 2025.
\newblock URL \url{https://arxiv.org/abs/2406.16793}.

\bibitem[Zhao et~al.(2025)Zhao, Morwani, Brandfonbrener, Vyas, and Kakade]{zhao2025deconstructingmakesgoodoptimizer}
Rosie Zhao, Depen Morwani, David Brandfonbrener, Nikhil Vyas, and Sham Kakade.
\newblock Deconstructing what makes a good optimizer for language models, 2025.
\newblock URL \url{https://arxiv.org/abs/2407.07972}.

\bibitem[Zhao et~al.(2022)Zhao, Zhang, and Hu]{zhao2022penalizinggradientnormefficiently}
Yang Zhao, Hao Zhang, and Xiuyuan Hu.
\newblock Penalizing gradient norm for efficiently improving generalization in deep learning, 2022.
\newblock URL \url{https://arxiv.org/abs/2202.03599}.

\bibitem[Zou et~al.(2019)Zou, Shen, Jie, Zhang, and Liu]{zou2019sufficient}
Fangyu Zou, Li~Shen, Zequn Jie, Weizhong Zhang, and Wei Liu.
\newblock A sufficient condition for convergences of adam and rmsprop.
\newblock In \emph{Proceedings of the IEEE/CVF Conference on computer vision and pattern recognition}, pages 11127--11135, 2019.

\end{thebibliography}
}

\newpage

\tableofcontents

\newpage

\appendix



\section{Illustration of the Difference between Conventional SDE and Slow SDE}\label{app:illustration}

In this section, we illustrate the difference between conventional SDE and slow SDE. In \Cref{fig:slow-sde-intro}, let $\Gamma$ denotes a 1D manifold, then the discrete iteration of the optimization process can be seen as successive steps (orange, \Cref{fig:intro-a}) that starts from $A$, first converge to some point $B$ in $\Gamma$ and then move along $\Gamma$ to $C$.

\begin{figure}[htbp]
  \centering
  \begin{subfigure}[b]{0.32\textwidth}
    \centering
    \includegraphics[width=\linewidth]{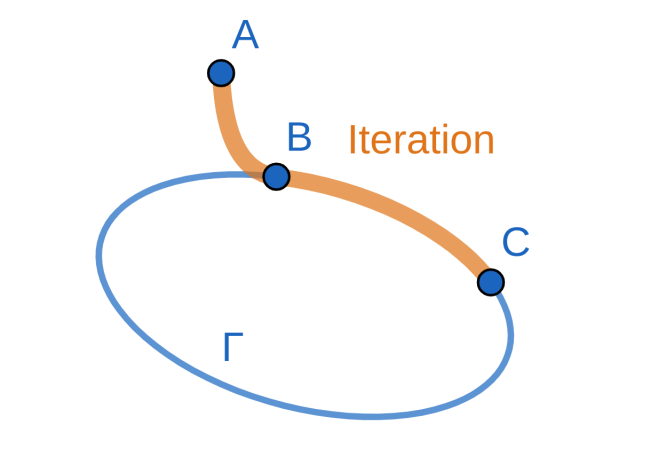}\caption{}
    \label{fig:intro-a}
  \end{subfigure}%
  \hfill
  \begin{subfigure}[b]{0.33\textwidth}
    \centering
    \includegraphics[width=\linewidth]{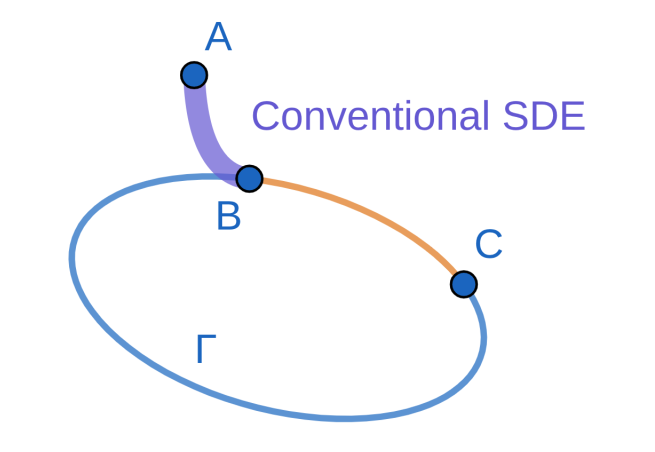}\caption{}
    \label{fig:intro-b}
  \end{subfigure}%
  \hfill
  \begin{subfigure}[b]{0.32\textwidth}
    \centering
    \includegraphics[width=\linewidth]{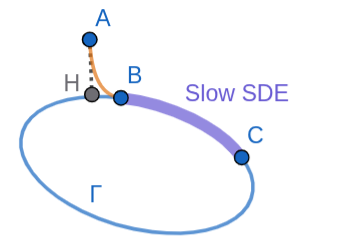}
    \caption{}\label{fig:intro-c}
  \end{subfigure}

  \caption{Comparison of conventional SDE and slow SDE.}
  \label{fig:slow-sde-intro}
\end{figure}

The main intuition behind slow SDE is that the whole process $A \to B \to C$ can actually be decomposed into two motions: a convergence motion $A \to H$ (dashed, \Cref{fig:intro-c}) and an implicit regularization motion $H \to B \to C$. The convergence motion is fast and dominates the dymanics during the convergence phase, but it fades out as soon as convergence phase ends; meanwhile the slow, implicit regularization motion starts to dominate.

The conventional SDE approximates the convergence phase only, whose unit time corresponds to $\tilde{O}(\eta^{-1})$ steps (\Cref{fig:intro-b}). In contrast, slow SDE manages to separate the slow implicit regularization motion from the fast convergence, and approximate the implicit regularization near manifold only (\Cref{fig:intro-c}). 

\begin{remark}
    The projection method (which projects $A\to B\to C$ to $H \to B \to C$) varies in the analysis of different optimizers. Intuitively, the projection should reflect the converging direction driven by a clean (without noise) and continuous version of the optimizer. In SGD the projection is gradient flow; but in Adam we need to consider the preconditioning effect caused by $1/\left(\sqrt{\vv}+\epsilon\right)$, so we add an SDE to track the preconditioner, and define a preconditioned gradient flow for projection.
\end{remark}




\section{Restatements of the Main Results}\label{sec:formal-statement}
In this section, to make our results more organized and easy to understand, we give the restatements of the main results in \Cref{sec:theory}, where we present the two main theorems.
\begin{enumerate}
  \item The AGM iterates converge to a neighborhood of the manifold (\Cref{thm: convergence});
  \item Moreover, once the iterates enter this neighborhood, their dynamics over $\O(\eta^{-2})$ discrete steps can be accurately tracked by a slow SDE (\Cref{thm:SDE-approximation}).
\end{enumerate}

Recall that in the AGM framework, the transition from $\vtheta_{k}$ to $\vtheta_{k+1}$ is defined as:
\begin{align*}
    \vm_{k+1} &:= \beta_1 \vm_{k} + (1-\beta_1) \nabla \ell_{k}(\vtheta_{k}) \\
    \vv_{k+1} &:= \beta_2 \vv_{k} + (1-\beta_2) V\!\left(\nabla\ell_{k}(\vtheta_{k}) \nabla\ell_{k}(\vtheta_{k})^\top\right) \\
    \vtheta_{k+1} &:= \vtheta_{k} - \eta S(\vv_{k+1}) \vm_{k+1},
\end{align*}
under the following conditions:
\begin{enumerate}
    \item $S: \Rgez^{D} \longrightarrow \mathbb{S}^d_{++}$ is $\rho_s$-smooth, where $\Rgez^{D}$ denotes the subset of $\R^D$ that has non-negative entries, and $\mathbb{S}^d_{++}$ denotes the space of $\R^{d \times d}$ positive definite matrices. 
    \item $S(\vv) \succeq \frac{1}{R_0} \mI$  for some $R_0 > 0$ and any $\vv \in \Rgez^{D}$.
    \item $V: \R^{d\times d} \longrightarrow \R^{D}$ is linear, and $V(\vg\vg^\top) \in \Rgez^{D}$ for all $\vg \in \R^d$.
\end{enumerate}


\subsection{Slow SDE for AGMs}

\begin{theorem}\label{thm:formal-main-result}
Let Assumptions \ref{amp:smooth}, \ref{amp:gradient-bound} and \ref{amp:beta1-bound} be satisfied. Let $\Gamma$ denote a local minimizer manifold, $\eta$ be a sufficiently small learning rate of an AGM, and $1-\beta_2 = \Theta(\eta^2)$. 
Then we have the following conclusions:
\begin{enumerate}
    \item (Convergence to a near–manifold neighborhood) There exists a constant $\epsilon_0 > 0$ such that for any initial point $\vtheta_0$ whose $\ell_2$ distance from $\Gamma$ does not exceed $\epsilon_0$, and any $\delta \in (\eta^{200}, 1)$,\footnotemark[3] with probability at least $1 - \delta$, the following holds for some $K_0 = \mathcal{O}(\frac{1}{\eta} \log \frac{1}{\eta})$:
    \begin{gather*}
        \cL(\vtheta_{K_0}) - \cL^* = \O\left(\eta\log\frac{1}{\eta\delta}\right),\\
        \|\vtheta_{K_0} -\Phi_{\mS_{K_0}}(\vtheta_{K_0})\|_2 = \O\left(\sqrt{\eta\log\frac{1}{\eta\delta}}\right).
    \end{gather*} 
    \footnotetext[3]{The exponent here, along with the exponents related to the $\delta$-goodness in \Cref{def:delta-good}, can be arbitrary large constant, which does not affect the order of following derivations.}
    
    \item (Slow SDE tracks AGM's trajectory in a weak approximation sense) Moreover, let $T > 0$, $K := \lfloor T\,\eta^{-2}\rfloor$, and suppose Assumptions \ref{amp:low-rank-manifold}, \ref{amp:basic} and \ref{amp:S-smooth} hold. Continue running an AGM for $K$ iterations after reaching the final state $(\vtheta_{K_0},\vv_{K_0})$ in conclusion 1 and totally obtain $\{(\vtheta_k, \vv_k)\}_{k=0}^{K_0+K}$. Let $\mX(t) =\left(\vzeta(t),\vv(t)\right)$ be the solution to \eqref{equ:AGMs-SDE} with initial condition
$$\mX(0) =
\mX_0
\in \Gamma \times \mathbb{R}_{\ge 0}^D, \quad \mX_0 := (\Phi_{\mS(\vv_{K_0})}(\vtheta_{K_0}), \vv_{K_0}).
$$

For any $k \in \{0, 1, \dots, K\}$,
let $\bar\mX_{k} := \left(\Phi_{\mS(\vv_{K_0 + k})}(\vtheta_{K_0+k}),\vv_{K_0+k}\right)$ be the AGM state at step $K_0 + k$ projected onto $\Gamma$.
Then for any $C^3$ function $g(\vtheta)$,
\begin{align*}
\max_{0\le k\le K}
\left|
\E\left[g\left(\bar\mX_{k}\right)\middle|\mX_0\right]
-\E\left[g\left(\mX(k\eta^2)\right)\middle|\mX(0)=\mX_0\right]\right| 
= \widetilde \O\left(\eta^{0.25}\right),
\end{align*}
where $\widetilde \O(\cdot)$ hides logarithmic factors and constants independent of $\eta$ (but possibly depending on $g$).
\end{enumerate}

\end{theorem}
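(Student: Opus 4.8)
## Proof Proposal for Theorem~\ref{thm:formal-main-result} (Slow SDE for AGMs)

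\textbf{Overall strategy.} The plan is to prove the two conclusions in sequence, since the second one takes the end state of the first as its starting point. For Conclusion~1, I would establish a high-probability convergence bound for the AGM iterates under the Polyak–Łojasiewicz condition, essentially the content of \Cref{thm: convergence}, and then combine this loss bound with the $\rho$-smoothness of $\cL$ and \Cref{amp:low-rank-manifold} to control the distance to the manifold. Concretely, $\cL(\vtheta)-\cL^* = \O(\eta\log\tfrac{1}{\eta\delta})$ together with the quadratic growth of $\cL$ near the nondegenerate manifold (rank of the Hessian is $m$ everywhere on $\Gamma$, with a uniform spectral gap by compactness) gives $\dist(\vtheta_{K_0},\Gamma)^2 = \O(\eta\log\tfrac{1}{\eta\delta})$, and since $\vtheta_{K_0}$ is within the attracting neighborhood, the preconditioner flow projection $\Phi_{\mS_{K_0}}$ is well-defined and $\|\vtheta_{K_0}-\Phi_{\mS_{K_0}}(\vtheta_{K_0})\|_2 = \O(\dist(\vtheta_{K_0},\Gamma))$. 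The convergence proof itself proceeds by the usual descent-lemma bookkeeping, but with care: the preconditioner $S(\vv_{k+1})\succeq \tfrac{1}{R_0}\mI$ is lower-bounded and, since $\|\nabla\ell_k\|\le R$ a.s., also upper-bounded, so the update is a genuine descent direction up to a martingale term coming from the gradient noise; a Freedman-type concentration inequality for the martingale increments, combined with the momentum recursion (here $1-\beta_1$ is of constant order by \Cref{amp:beta1-bound}, so the momentum contracts the noise at a geometric rate and does not change the order of the bound), yields the high-probability statement directly on $\cL(\vtheta_{K_0})-\cL^*$ rather than on a running average.

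\textbf{Conclusion~2, setup.} Starting from $\mX_0 = (\Phi_{\mS(\vv_{K_0})}(\vtheta_{K_0}),\vv_{K_0})$, which lies $\O(\sqrt{\eta\log(1/\eta\delta)})$-close to $\Gamma$, I would adopt the giant-step / weak-approximation scheme of \citet{li2021happens} and \citet{gu2023and}. Partition the $K = \lfloor T\eta^{-2}\rfloor$ iterations into consecutive blocks of length $\approx \eta^{-1.5}$ (a ``giant step''). Within each block one shows: (i) the parameter stays within $\O(\eta^{1/2-o(1)})$ of the manifold with high probability (a local-stability / one-step-drift argument using the quadratic geometry plus the PL convergence from Conclusion~1 applied block-wise); (ii) the increment of the manifold-projected state $\bar\mX$ over the block matches, up to $\O(\eta^{?})$ error in first and second conditional moments, the increment produced by the slow SDE \eqref{equ:AGMs-SDE} over the corresponding time interval of length $\eta^{-1.5}\cdot\eta^2 = \eta^{1/2}$. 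The moment matching is where the structure of the SDE is read off: the tangential gradient-noise pushes diffusion with covariance $\mSigma_\parallel^{1/2}(\vzeta;\mS)$ after filtering by the preconditioner $\mS\mSigma\mS$; the normal-space fluctuations, when projected back onto $\Gamma$ via $\partial\Phi_{\mS}$ and its second-order correction $\partial^2\Phi_{\mS}[\cdot]$ (Itô term), produce exactly the drift $-\tfrac12 \mS\nabla^3\cL(\vzeta)[\mSigma_\diamond(\vzeta;\mS)]$; and the slow evolution of $\vv$ comes from the EMA recursion for $\vv_k$ with $1-\beta_2 = c\eta^2$, which over a giant step moves $\vv$ by $\approx c\eta^2\cdot\eta^{-1.5}(V(\mSigma)-\vv) = c\eta^{1/2}(V(\mSigma)-\vv)$, matching $\dd\vv = c(V(\mSigma(\vzeta))-\vv)\dd t$. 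A standard telescoping over the $\O(\eta^{-1/2})$ giant steps, using the $\cC^3$-regularity of the test function $g$ and the Lipschitz dependence of the SDE solution on time (plus a Grönwall argument to propagate per-block errors), then accumulates to the stated $\widetilde\O(\eta^{0.25})$ weak error. The key reparameterization trick, as the authors note, is that composing everything with $\Phi_{\mS}$ and working in the pulled-back coordinates lets one reuse the SGD-style moment computations of \citet{gu2023and,li2021happens} even though $S(\vv_k)$ destroys rotational equivariance: in the pulled-back frame the Hessian can again be taken block-diagonal and the manifold spanned by standard basis vectors.

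\textbf{Main obstacle.} The hard part will be the coupled control in step (ii) of the co-evolution of $\vv_k$ and $\vtheta_k$ over a giant step. Unlike the SGD analysis, where the only slow variable is the manifold position, here the preconditioner $\mS_k = S(\vv_{k+1})$ drifts on the \emph{same} $\O(\eta^{-2})$ timescale as $\vzeta$, so one cannot freeze it; yet within a single giant step it must be treated as essentially constant (up to $\O(\eta^{1/2})$ variation) in order to reuse the per-step Taylor expansions for the projected increment. Making this ``quasi-static but not frozen'' approximation rigorous requires: (a) showing $\vv_k$ concentrates around a slowly-varying deterministic path with fluctuations of order $o(\eta^{1/2})$ over a giant step — this uses that $V$ is linear and $\|\nabla\ell_k\|\le R$, so $\vv_k$ is a bounded EMA with step size $c\eta^2$ and its variance over $\eta^{-1.5}$ steps is $\O(\eta^4\cdot\eta^{-1.5}) = o(\eta^{1/2})$ after accounting for the geometric mixing; (b) verifying that the preconditioner flow projection $\Phi_{\mS}$ depends smoothly enough on $\mS$ (via \Cref{amp:S-smooth} and the $\rho_s$-smoothness of $S$, plus the uniform lower bound $S(\vv)\succeq\tfrac1{R_0}\mI$ keeping the flow non-degenerate) that the $\O(\eta^{1/2})$ drift in $\mS$ contributes only lower-order terms to the projected increment; and (c) closing the error recursion so that the $\O(\eta^{1/2})$ within-block movement of $\mS$ does not blow up when telescoped over $\O(\eta^{-1/2})$ blocks. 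I expect (a) and the bookkeeping in (c) to be the most delicate, and they are the reason the final rate is $\eta^{0.25}$ rather than something closer to the $\eta^{0.4}$–$\eta^{0.5}$ one sees in the pure-SGD slow-SDE results.
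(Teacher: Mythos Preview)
Your proposal is broadly correct and follows the paper's route: Conclusion~1 via a PL--descent--martingale argument, Conclusion~2 via the giant-step weak-approximation scheme with the $\Phi_{\mS}$-reparameterization to recycle the SGD moment computations from \citet{gu2023and,li2021happens}. Two corrections will save you work and prevent a wrong turn.

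First, for Conclusion~1 you gloss over the trapping problem: PL is only available on a neighborhood $\Gamma^{\epsilon_3}$, and the descent/concentration argument needs PL at \emph{every} iterate, not just at $\vtheta_{K_0}$. The paper does not bootstrap directly; instead it builds a proxy loss $\tilde\cL$ that equals $\cL$ on $\Gamma^{\epsilon_1}$ but adds a quadratic wall $\tfrac{C}{2}(\dist(\vtheta,\Gamma)-\epsilon_1)^2$ outside, shows $\tilde\cL$ is $(\mu,\bar L)$-PL globally (using the tubular-neighborhood identity $\nabla r=\vn$ and a nonobtuse-angle lemma $\langle\nabla\cL,\vn\rangle\ge 0$), runs the convergence proof for $\tilde\cL$, and then argues the iterates never see the wall. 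This is the missing ingredient in your Part~1 sketch.

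Second, your ``main obstacle'' is misidentified. The $\vv$-dynamics are the \emph{easy} part: the paper simply freezes the preconditioner at $\hat\mS_0 = S(\vv_0^{(0)})$ for the entire giant step when computing the $\vtheta$-moments (the freezing error is $\O(\eta^{2.5-\beta})$ per step, cf.\ their update-rule lemma), and computes the $\vv$-moments directly from the EMA formula. The resulting $\vv$-errors are $\tilde\O(\eta^{1.5-1.5\beta})$ and $\tilde\O(\eta^{2-\beta})$, i.e.\ $a_3=1.5$ and $a_4=2$ in their bookkeeping, both \emph{subdominant} to the $\vtheta$-errors $a_1=\tfrac{1.5-2\beta}{1-\beta}$ and $a_2=\tfrac{1}{1-\beta}$. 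The final rate $\eta^{0.25}$ arises from optimizing $\min(a_1,a_2)-1$ over $\beta\in(0,0.5)$, giving $\beta=0.25$ (so giant step $\approx\eta^{-1.25}$, not $\eta^{-1.5}$); it has nothing to do with the $\vv$--$\vtheta$ coupling and is the same balancing one sees in the pure-SGD slow-SDE analysis. So do not invest effort in a delicate quasi-static tracking of $\mS$; just freeze it per giant step and check the freezing error is lower order.
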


Notice that in \Cref{thm:formal-main-result} we give a more explicit statement for our SDE approximation results in \Cref{thm:SDE-approximation}. One can see that \Cref{thm:SDE-approximation} is a direct corollary of \Cref{thm:formal-main-result}.

\begin{proof}[Proof of \Cref{thm:formal-main-result}]
    For the convergence part of \Cref{thm:formal-main-result}, we first construct the working zones near the manifold in \Aref{sec:working-zone} to ensure some properties that are crucial to our analysis such as the $\mu$-PL condition. After that, we give the proof of convergence in \Aref{app:proof-convergence}. We finish our proof in \Aref{app:proof-formal-statement} by combining the previous results in working zone and convergence analysis.
    
    And the proof of the SDE approximation part is finished in \Aref{sec:weak-approximation} through the moment calculation and the subsequent weak approximation analysis in \Aref{sec:proof-sde-approx}.
\end{proof}

\subsection{Convergence Guarantee of AGMs}

In the proof, the first part of \cref{thm:formal-main-result} is done by first proving a convergence result with global $\mu$-PL condition, and then arguing that AGM starting near enough to the manifold will stick to the manifold with high probability. As mentioned in \cref{subsubsec:convergence}, the convergence under $\mu$-PL condition can be seen as a separate technical contribution of our paper, which is stated below.

\begin{definition}[Polyak-Łojasiewicz Condition]
    \label{def:mu-pl}
    For some $\mu, \bar{L} > 0$, we say some function $\cL:\R^d \to \R$ is $(\mu, \bar{L})$-Polyak-Łojasiewicz (abbreviated as $(\mu, \bar{L})$-PL), if and only if for all $\vtheta \in \mathbb{R}^d$ such that $\cL(\vtheta) \leq \bar{L}$:
    $$2\mu(\cL(\vtheta)-\cL^*) \leq \lnormtwo{\nabla \cL(\vtheta)}^2.$$
    When $\bar{L} = +\infty$, we call this condition the $\mu$-Polyak-Łojasiewicz ($\mu$-PL) condition.
\end{definition}

\begin{theorem}[Formal restatement of \Cref{thm: convergence}]\label{thm:formal-convergence-1}
    Let Assumptions \ref{amp:smooth}, \ref{amp:gradient-bound} and \ref{amp:beta1-bound} be satisfied, $\cL$ be a function satisfying the $\mu$-PL condition, and $\eta$ be a sufficiently small learning rate of an AGM. Let $1-\beta_2 = \Theta(\eta^2)$. For any $\delta \in (0,1)$, with probability at least $1 - \delta$, the following holds for some $K = \mathcal{O}(\frac{1}{\eta} \log \frac{1}{\eta})$:
    \begin{gather*}
        \cL(\vtheta_K) - \cL^* = \O\left(\eta\log\frac{1}{\eta\delta}\right),\\
        \|\vtheta_K -\Phi_{\mS_K}(\vtheta_K)\|_2 = \O\left(\sqrt{\eta\log\frac{1}{\eta\delta}}\right).
    \end{gather*} 
\end{theorem}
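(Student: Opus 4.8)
The plan is to track a suitable Lyapunov-type quantity built out of the loss $\cL(\vtheta_k) - \cL^*$ together with the momentum and preconditioner, and to show it contracts by a multiplicative factor of roughly $(1 - \Omega(\eta\mu))$ per step, up to an additive $\widetilde\O(\eta^2)$ error, for as long as $\cL(\vtheta_k) - \cL^*$ stays above the noise floor $\widetilde\O(\eta)$. First I would record the basic consequences of the assumptions: by \Cref{amp:gradient-bound} the noisy gradients are bounded by $R$, hence (using linearity of $V$ and $V(\vg\vg^\top)\succeq 0$) the second-moment buffer $\vv_k$ stays in a bounded region $\Rgez^D$, and since $S(\vv)\succeq \frac1{R_0}\mI$ and $S$ is smooth on this compact region, we get $\frac1{R_0}\mI \preceq S(\vv_k) \preceq R_1 \mI$ for some constant $R_1$. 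Likewise $\vm_k$ and the effective update $\vu_k := S(\vv_{k+1})\vm_{k+1}$ are uniformly bounded by some constant. These bounds let us use $\rho$-smoothness of $\cL$ (\Cref{amp:smooth}) in the usual descent-lemma expansion
\begin{align*}
\cL(\vtheta_{k+1}) \le \cL(\vtheta_k) - \eta\,\langle \nabla\cL(\vtheta_k), S(\vv_{k+1})\vm_{k+1}\rangle + \tfrac{\rho}{2}\eta^2 \|S(\vv_{k+1})\vm_{k+1}\|^2.
\end{align*}

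The crux is to control the inner product $\langle \nabla\cL(\vtheta_k), S(\vv_{k+1})\vm_{k+1}\rangle$, which is \emph{not} simply a positive quantity because $\vm_{k+1}$ is a momentum average of past \emph{stochastic} gradients at past \emph{points}, and because $S(\vv_{k+1})$ rotates the gradient. I would split $\vm_{k+1} = \nabla\cL(\vtheta_k) + (\vm_{k+1} - \nabla\ell_k(\vtheta_k)) + (\nabla\ell_k(\vtheta_k) - \nabla\cL(\vtheta_k))$. The last term is a martingale-difference noise term; the middle term is the "momentum lag", which by unrolling the EMA recursion is a geometric average of gradient differences $\nabla\ell_j(\vtheta_j) - \nabla\ell_k(\vtheta_k)$ over $j \le k$, each of which is $\O(\eta \cdot (\text{step sizes})) = \O(\eta)$ in expectation because consecutive iterates differ by $\O(\eta)$ and gradients are $\rho$-Lipschitz --- here \Cref{amp:beta1-bound} ($\beta_1 \le 0.9$, i.e. $1-\beta_1$ of constant order) is essential, since it guarantees the momentum "forgets" the past on an $\O(1)$ timescale so the lag is genuinely $\O(\eta)$ and not larger. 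The leading term gives $\langle \nabla\cL(\vtheta_k), S(\vv_{k+1})\nabla\cL(\vtheta_k)\rangle \ge \frac1{R_0}\|\nabla\cL(\vtheta_k)\|^2 \ge \frac{2\mu}{R_0}(\cL(\vtheta_k)-\cL^*)$ by positive-definiteness of $S$ and the $\mu$-PL condition. Collecting terms, and writing $a_k := \cL(\vtheta_k) - \cL^*$, one obtains a recursion of the schematic form $\E[a_{k+1}\mid \mathcal F_k] \le (1 - c_1\eta\mu)a_k + c_2\eta^2 + (\text{small cross terms involving }\sqrt{a_k})$, where the $\eta^2$ comes jointly from the descent-lemma quadratic term, the momentum-lag contribution, and the variance of the stochastic gradient.

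To turn this in-expectation drift into a high-probability last-iterate bound, I would use a stopping-time / supermartingale argument rather than just iterating expectations (which is what lets us beat the $\frac1k\sum$-type or in-expectation bounds of prior work). Concretely: define the "bad set" as $a_k > M\eta\log\frac1{\eta\delta}$ for a large constant $M$; show that on this set the drift inequality gives genuine multiplicative contraction $\E[a_{k+1}\mid\mathcal F_k] \le (1-\Omega(\eta\mu))a_k$ because the $\eta^2$ additive term is dominated by $\eta\mu \cdot a_k$; then apply a concentration inequality for the (super)martingale $a_k$ --- the increments are bounded (by the uniform bounds above) with conditional variance $\O(\eta^2 a_k + \eta^4)$, so a Freedman-type / Azuma-type inequality applies --- to conclude that within $K = \O(\frac1\eta\log\frac1\eta)$ steps, with probability $\ge 1-\delta$, $a_k$ drops below $O(\eta\log\frac1{\eta\delta})$ and then, by the same contraction-plus-noise-floor argument, stays $O(\eta\log\frac1{\eta\delta})$ thereafter. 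The second conclusion, $\|\vtheta_K - \Phi_{\mS_K}(\vtheta_K)\|_2 = \O(\sqrt{\eta\log\frac1{\eta\delta}})$, then follows from the first: since $\cL$ is $\rho$-smooth, $\mu$-PL, and the manifold $\Gamma$ consists of minimizers, a small loss gap forces the iterate to be within $\O(\sqrt{a_K/\mu})$ of the zero set $\{\cL = \cL^*\}$; combined with the fact (from \Cref{def:preconditioner-flow} and smoothness of $\Phi_{\mS}$ near $\Gamma$) that the preconditioner-flow projection is Lipschitz in a neighborhood of $\Gamma$, this converts the Euclidean distance to the sublevel set into the stated bound on the distance to $\Phi_{\mS_K}(\vtheta_K)$.

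\textbf{Main obstacle.} The delicate part is the joint handling of three coupled sources of $\eta^2$-scale error --- the momentum lag $\vm_{k+1} - \nabla\ell_k(\vtheta_k)$, the preconditioner "lag" $S(\vv_{k+1})$ vs. its value along a clean flow, and the stochastic-gradient variance --- all while keeping the argument high-probability rather than in-expectation. In particular one must be careful that the cross terms of the form $\eta\,\langle\nabla\cL(\vtheta_k), S(\vv_{k+1})(\text{momentum lag})\rangle$ do not inadvertently introduce an $\eta \cdot \O(\eta) \cdot \|\nabla\cL(\vtheta_k)\| $-type term that could be as large as the main descent term when $a_k$ is itself $\O(\eta)$; this is exactly why the target floor is $\widetilde\O(\eta)$ and not smaller, and why the bound cannot be pushed below it. Getting the martingale concentration to respect the $\O(\eta^2 a_k)$ conditional-variance scaling (so that the noise floor comes out as $\widetilde\O(\eta)$ and not $\widetilde\O(\sqrt\eta)$) is the other technical pressure point, and is where a variance-adapted concentration inequality (Freedman) rather than a crude Azuma bound is needed.
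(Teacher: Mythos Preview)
Your proposal captures the essential insights --- PL plus the lower-bounded preconditioner $S(\vv)\succeq\frac1{R_0}\mI$ give contraction, \Cref{amp:beta1-bound} keeps the momentum lag at $\O(\eta)$, and the martingale increments scale with $\|\nabla\cL(\vtheta_{i-1})\|\le\sqrt{2\rho\,a_{i-1}}$ so the noise floor is $\tilde\O(\eta)$ rather than $\tilde\O(\sqrt\eta)$ --- but the technical route differs from the paper's. You set up a one-step drift recursion and propose a stopping-time/Freedman argument. The paper instead (i) unrolls the descent inequality \emph{backwards through the momentum recursion} (replacing $\langle\nabla\cL(\vtheta_{k-1}),\vu_k\rangle$ by $\beta_1\langle\nabla\cL(\vtheta_{k-2}),\vu_{k-1}\rangle + (1-\beta_1)\langle\nabla\cL(\vtheta_{k-1}),\mU_k\rangle + \O(\eta)$ and iterating) to arrive at an explicit expansion
\[
a_k \;\le\; \gamma^k a_0 \;+\; \eta(1-\beta_1)\sum_{i=1}^{k} X_i\sum_{j=i}^{k}\gamma^{k-j}\beta_1^{j-i} \;+\; C_3\eta,
\]
and then (ii) bounds the martingale sum by Azuma--Hoeffding combined with an \emph{inductive bootstrap}: assume the target bound $a_i\le C_{5a}\gamma^i a_0 + C_{5b}\eta\log(K/\delta)$ for all $i<k$, use it to bound $|X_i|\le C_{1b}\sqrt{2\rho\,a_{i-1}}$ (truncating on the bad event), feed these into Azuma, and close the induction by choosing $C_{5a},C_{5b}$ to satisfy a self-consistent pair of inequalities (the right-hand side grows only like $\sqrt{C_{5a}},\sqrt{C_{5b}}$, so a fixed point exists). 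This induction-plus-Azuma device delivers the same variance adaptation Freedman would, but cleanly sidesteps the circularity you would face in bounding the predictable variation $\sum_k\eta^2 a_k$ for Freedman --- which itself requires control of the $a_k$. Your approach should go through with a careful stopping-time construction, but the paper's induction is more direct and is precisely the technical contribution they flag. Your derivation of the second conclusion via a trajectory-length bound along the preconditioned flow under PL matches the paper's.
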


\paragraph{Remark.} Note that \cref{thm:formal-convergence-1} is different from Part 1 of \cref{thm:formal-main-result} in the sense that \cref{thm:formal-convergence-1} requires the $\mu$-PL condition to hold globally, while Part 1 of \cref{thm:formal-main-result} does not. Actually, the latter requires the iteration to start from some neighborhood of $\Gamma$. Later on, we will find from \cref{lem:working-zone} that $\mu$-PL provably exists in a neighborhood of $\Gamma$, and we prove that an iteration of AGMs starting within that neighborhood stays within that neighborhood with high probability.

There have been many previous works discussing the convergence bound of Adam. For example, \citet{reddi2018convergence} and \citet{dereich2024convergence} give convergence bounds under the convexity condition, \citet{zou2019sufficient}, \citet{shi2021rmsprop} and \citet{zhang2022adam} focus on the cases where learning rates follow a $1/\sqrt{t}$ decay, and the bounds given by \citet{NEURIPS2018_90365351}, \citet{zhang2022adam} and \citet{10.1145/3637528.3671718} do not decrease to $0$ as $\eta \to 0$. Also, most works \citep{defossez2020simple, guo2025unifiedconvergenceanalysisadaptive, iiduka2022theoreticalanalysisadamusing, wang2024convergenceadamnonuniformsmoothness, zhang2024convergence, hong2023highprobabilityconvergenceadam} only establish an upper bound on the average of gradient norms over the time of iteration. In contrast, we directly bound the loss term of the last step to $o(1)$. Going beyond convex loss functions, we establish the bound on $\mu$-PL functions, and we focus on the constant learning rate schedule.

\section{Constructing the Working Zones}\label{sec:working-zone}

Note that it is generally hard to ensure some properties that are crucial to the feasibility of our analysis, such as the $\mu$-PL condition or the well-definedness of preconditioned gradient projections.
However, this becomes possible when we constrain the discussion inside some local neighborhood of a manifold. So in this subsection, we construct  ``working zones'' around any local minimizer manifold $\Gamma$ such that iterations inside the working zones will be captured by the manifold and obtain certain properties that support the analysis of slow SDE.


For any $\Gamma \subset \mathbb{R}^n$ being a nonempty set and $\vtheta \in \mathbb{R}^n$, let $\|\cdot\|_2$ denote the Euclidean norm. We denote the distance from $\vtheta$ to $\Gamma$ as $
  \dist(\vtheta,\Gamma) \;:=\; \inf_{\vzeta \in \Gamma} \, \|\vtheta - \vzeta\|_2$. Note that when $\Gamma$ is closed, the infimum is attained (i.e., there exists $\vzeta^\star \in \Gamma$ with $\|\vtheta-\vzeta^\star\|_2=\dist(\vtheta,\Gamma)$).

\begin{definition}[Neighborhood of a Manifold]\label{def:neighborhood-of-manifold}
    For any manifold $\Gamma$ and positive constant $\epsilon$, the $\epsilon$-neighborhood of $\Gamma$, denoted by $\Gamma^{\epsilon}$ is defined as the set of points $\vtheta$ such that
    \begin{align*}
        \dist(\vtheta, \Gamma) \leq \epsilon.
    \end{align*}
\end{definition}


\begin{definition}[Preconditioned gradient flow, restatement of \cref{def:preconditioner-flow}]\label{def:preconditioned-gf}
For any differentiable function \(\mathcal{L}\) and any matrix \(\mathbf{S}\in\mathbb{R}^{d\times d}\), the \emph{\(\mathbf S\)-preconditioned gradient flow} of \(\mathcal L\) is the ordinary differential equation
\[
\frac{\mathrm d\boldsymbol\theta(t)}{\mathrm d t}
= -\,\mathbf S\,\nabla\mathcal L\big(\boldsymbol\theta(t)\big).
\]
When the objective \(\mathcal L\) (or the time dependence of \(\boldsymbol\theta\)) is clear from context, we may omit it in the notation and simply refer to the system as the \emph{$\mS$-preconditioned gradient flow} or the \emph{gradient flow preconditioned by \(\mathbf S\)}.
\end{definition}

\begin{lemma}\label{lem:traj-bound-by-L}
    Let $C_1,C_2>0$ with $C_1\le C_2$, and let $\cL:\R^d\to\R$ be $\rho$-smooth and satisfy the $\mu$-PL condition. For any symmetric matrix $\mS$ with $C_1\mI \preceq \mS \preceq C_2\mI$, consider the $\mS$-preconditioned gradient flow of $\cL$ starting at $\vtheta(0)=\vtheta_0$.
Then for any $T>0$,
\[
\|\vtheta(T)-\vtheta_0\|_2 
\;\le\; \frac{2C_2}{C_1\sqrt{2\mu}}\,
\sqrt{\cL(\vtheta_0)-\cL^*},
\]
where $\cL^*=\inf_{\vtheta}\cL(\vtheta)$.
\end{lemma}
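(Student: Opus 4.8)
The plan is to combine two standard facts about preconditioned gradient flows under a Polyak–Łojasiewicz condition: (i) the loss decays along the flow, and (ii) a change of variables to $\sqrt{\cL(\vtheta(t))-\cL^*}$ turns the PL inequality into a bound on the \emph{arclength} of the trajectory, which in turn controls $\|\vtheta(T)-\vtheta_0\|_2$. Throughout, write $h(t):=\cL(\vtheta(t))-\cL^*\ge 0$; the $\mu$-PL condition gives $\|\nabla\cL(\vtheta)\|_2\ge\sqrt{2\mu\,(\cL(\vtheta)-\cL^*)}$ for every $\vtheta$, which is the one place where the hypothesis on $\cL$ is used.

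First I would record well-posedness: since $\cL$ is $\rho$-smooth, $\vv\mapsto-\mS\nabla\cL(\vv)$ is $C_2\rho$-Lipschitz, so the flow $\dot\vtheta=-\mS\nabla\cL(\vtheta)$ has a unique solution, and since $h'(t)=-\inner{\nabla\cL(\vtheta(t))}{\mS\nabla\cL(\vtheta(t))}\le -C_1\|\nabla\cL(\vtheta(t))\|_2^2\le 0$ (using $\mS\succeq C_1\mI$), the trajectory stays in the sublevel set $\{\cL\le\cL(\vtheta_0)\}$ and the solution is global in time. Next, using $\mS\preceq C_2\mI$, I would pass to the arclength bound
\[
\|\vtheta(T)-\vtheta_0\|_2\ \le\ \int_0^T\|\dot\vtheta(t)\|_2\,\dd t\ \le\ C_2\int_0^T\|\nabla\cL(\vtheta(t))\|_2\,\dd t,
\]
so it remains to bound $\int_0^T\|\nabla\cL(\vtheta(t))\|_2\,\dd t$. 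For this, on any interval where $h>0$ I would differentiate $\sqrt{h}$: combining $h'\le -C_1\|\nabla\cL\|_2^2$ with the $\mu$-PL inequality $\sqrt{h(t)}\le\|\nabla\cL(\vtheta(t))\|_2/\sqrt{2\mu}$ gives
\[
-\frac{\dd}{\dd t}\sqrt{h(t)}\ =\ \frac{-h'(t)}{2\sqrt{h(t)}}\ \ge\ \frac{C_1\|\nabla\cL(\vtheta(t))\|_2^2}{2\sqrt{h(t)}}\ \ge\ \frac{C_1\sqrt{2\mu}}{2}\,\|\nabla\cL(\vtheta(t))\|_2 .
\]
Integrating from $0$ to $T$ and discarding the nonnegative term $\sqrt{h(T)}$ yields $\int_0^T\|\nabla\cL(\vtheta(t))\|_2\,\dd t\le\frac{2}{C_1\sqrt{2\mu}}\sqrt{h(0)}=\frac{2}{C_1\sqrt{2\mu}}\sqrt{\cL(\vtheta_0)-\cL^*}$, and substituting into the arclength bound gives exactly $\|\vtheta(T)-\vtheta_0\|_2\le\frac{2C_2}{C_1\sqrt{2\mu}}\sqrt{\cL(\vtheta_0)-\cL^*}$.

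The hard part — really the only subtlety — is justifying the manipulation of $\sqrt{h(t)}$ at times where $h$ vanishes. I would handle this with a stopping-time argument: let $\tau:=\inf\{t\ge0:h(t)=0\}$; on $[0,\tau)$ the computation above is valid since $h>0$ there, and if $\tau<\infty$ then $\vtheta(\tau)$ is a global minimizer of $\cL$, so $\nabla\cL(\vtheta(\tau))=0$ and the flow is stationary on $[\tau,\infty)$, whence the integral bound extends to all of $[0,T]$ with no further contribution. (Equivalently, one may replace $h$ by $h+\varepsilon$, run the same argument, and let $\varepsilon\downarrow0$.) All remaining steps are routine Grönwall-type estimates and require no further ideas.
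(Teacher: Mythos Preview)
Your proof is correct and takes essentially the same approach as the paper: both differentiate $\sqrt{\cL(\vtheta(t))-\cL^*}$, use $\mS\succeq C_1\mI$ together with the $\mu$-PL inequality to lower-bound $-\frac{\dd}{\dd t}\sqrt{h(t)}$ by a constant times the speed, and then integrate to bound the arclength (and hence $\|\vtheta(T)-\vtheta_0\|_2$). The only cosmetic difference is that the paper combines the two preconditioner bounds into $\langle\nabla\cL,\mS\nabla\cL\rangle\ge\frac{C_1}{C_2}\|\nabla\cL\|_2\|\mS\nabla\cL\|_2$ and lands directly on $-\frac{\dd}{\dd t}\sqrt{h}\ge\frac{\sqrt{2\mu}\,C_1}{2C_2}\|\dot\vtheta\|_2$, whereas you first pull out $\|\dot\vtheta\|_2\le C_2\|\nabla\cL\|_2$ and then bound $\int\|\nabla\cL\|_2$; your explicit treatment of well-posedness and of the stopping time where $h$ vanishes is a point the paper leaves implicit.
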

\begin{proof}
    Since $C_1\mI \preceq \mS \preceq C_2\mI$, we have $\lnormtwo{\mS \nabla\cL(\vtheta)} \leq C_2 \lnormtwo{\nabla\cL(\vtheta)}$ and $
        \left\langle \nabla \cL(\vtheta), \mS\nabla \cL(\vtheta) \right\rangle \geq C_1 \lnormtwo{\nabla\cL(\vtheta)}^2$
    for any $\vtheta$, which implies \begin{align*}
        \left\langle \nabla \cL(\vtheta), \mS\nabla \cL(\vtheta) \right\rangle \geq \frac{C_1}{C_2} \lnormtwo{\nabla\cL(\vtheta)}  \lnormtwo{\mS \nabla\cL(\vtheta)}.
    \end{align*}
    
    Then plugging in the above equation gives that, for any $t < T$
    \begin{align*}
        \frac{\dd }{\dd t}\sqrt{\cL(\vtheta(t)) - \cL^*} &= \frac{1}{2}\left(\cL(\vtheta(t)) - \cL^*\right)^{-\frac{1}{2}} \cdot \left\langle \nabla \cL(\vtheta(t)), \frac{\dd\vtheta(t)}{\dd t}\right\rangle \\
        & \leq -\frac{C_1}{2C_2}\left(\cL(\vtheta(t)) - \cL^*\right)^{-\frac{1}{2}}\cdot  \lnormtwo{\nabla\cL(\vtheta(t))} \lnormtwo{\frac{\dd \vtheta(t)}{\dd t}} \\
        & \leq -\frac{C_1}{2C_2}\left(\cL(\vtheta(t)) - \cL^*\right)^{-\frac{1}{2}}\cdot \sqrt{2\mu (\cL(\vtheta(t)) - \cL^*)} \lnormtwo{\frac{\dd \vtheta(t)}{\dd t}} \\
        & = -\frac{\sqrt{2\mu}C_1}{2C_2} \lnormtwo{\frac{\dd \vtheta(t)}{\dd t}}.
    \end{align*}
    Integrating both sides gives us \begin{align*}
        \sqrt{\cL(\vtheta_0) - \cL^*} &\geq \frac{\sqrt{2\mu}C_1}{2C_2} \int_{0}^{T}\lnormtwo{\frac{\dd \vtheta(t)}{\dd t}} \\
        & \geq \frac{\sqrt{2\mu}C_1}{2C_2} \lnormtwo{\vtheta_0 - \vtheta(T)}.
    \end{align*}
    The above equations complete the proof.
\end{proof}






To avoid ambiguity, all comparisons between vectors and scalars are interpreted componentwise. Specifically, for $\vv\in\mathbb R^D$ and a scalar $c\in\mathbb R$ we write $\vv\le c$ (resp.\ $\vv<c$) iff $v_i\le c$ (resp.\ $v_i<c$) for every coordinate $i$. Equivalently $\vv\ge c$ (resp.\ $\vv>c$) means $v_i\ge c$ (resp.\ $v_i>c$) for all $i$. In particular the notation $0\le \vv\le c$ means $0\le v_i\le c$ for every $i$, which is the convention used in the sequel. Recall that $\Rgez^{D}$ means $\left\{\vv \mid \vv \in \R^D, \vv \geq 0\right\}$. In the sequel, we slightly abuse this notation such that for any subset $I \subseteq \R$, $\R_{I}^D$ means $\left\{\vv \mid \vv \in R^D, v_i \in I \text{ for all }i \in [D]\right\}$.



\begin{lemma}\label{lem:boundedness}
    There exist constants $R_1, R_2 > 0$ such that for all $k \geq 0$, $0 \leq \vv_k \leq R_1$ and $S(\vv) \preceq R_2 \mI$ almost surely.  Moreover, $S$ is Lipschitz on $\R_{[0,R_1]}^{D}$.
\end{lemma}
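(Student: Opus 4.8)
The plan is to first obtain a uniform, componentwise bound $0 \le \vv_k \le R_1$ that holds for every $k$ almost surely, and then to read off the bound on $S$ and its Lipschitzness from continuity of $S$ on the resulting compact set. For the bound on $\vv_k$, write $\vg_k := \nabla \ell_k(\vtheta_k)$, so that $\|\vg_k\|_2 \le R$ almost surely by \Cref{amp:gradient-bound}; hence the rank-one matrices $\vg_k\vg_k^\top$ range over the compact set $\mathcal{K} := \{\vg\vg^\top : \vg \in \R^d,\ \|\vg\|_2 \le R\}$. Since $V$ is linear (hence continuous) and, by assumption, maps $\mathcal{K}$ into $\Rgez^{D}$, the image $V(\mathcal{K})$ is a compact subset of $\Rgez^{D}$, so $R_1' := \sup_{\|\vg\|_2 \le R}\max_{i \in [D]} V(\vg\vg^\top)_i$ is finite and $0 \le V(\vg_k\vg_k^\top) \le R_1'$ componentwise for every $k$.

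Next I would set $R_1 := \max\{\|\vv_0\|_\infty,\ R_1',\ 1\}$, where $\vv_0 \in \Rgez^{D}$ is the (fixed) initialization, and prove $0 \le \vv_k \le R_1$ componentwise by induction on $k$. The base case holds by construction. For the inductive step, $\vv_{k+1} = \beta_2 \vv_k + (1-\beta_2) V(\vg_k\vg_k^\top)$ is a convex combination (as $\beta_2 \in [0,1)$) of two vectors lying in $[0,R_1]^D$, and therefore lies in $[0,R_1]^D$ as well. I would emphasize that this argument is uniform in $k$ and completely insensitive to the $\Theta(\eta^2)$ scaling of $1-\beta_2$, so $R_1$ depends only on $R$, on $V$, and on the initialization $\vv_0$.

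Given that every $\vv_k$ stays in $\R_{[0,R_1]}^{D} = [0,R_1]^D$, which is compact and convex, the remaining claims follow from \Cref{amp:S-smooth}. The continuous map $\vv \mapsto \|S(\vv)\|_{\mathrm{op}}$ attains a finite maximum $R_2 := \max_{\vv \in [0,R_1]^D}\|S(\vv)\|_{\mathrm{op}}$ on this compact set, which gives $S(\vv_k) \preceq R_2 \mI$ almost surely; moreover $R_2 \ge \frac{1}{R_0} > 0$ because $S(\vv) \succeq \frac{1}{R_0}\mI$ for every $\vv \in \Rgez^{D}$, so $R_2 > 0$ as required. For the Lipschitz claim, the Jacobian $\partial S$ is continuous on $\Rgez^{D}$ by \Cref{amp:S-smooth}, hence bounded on the compact set $[0,R_1]^D$; since that set is convex, the mean value inequality shows $S$ is Lipschitz on it with constant $\sup_{\vv \in [0,R_1]^D}\|\partial S(\vv)\|$.

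I do not expect a genuine obstacle here: the argument is elementary convex-combination bookkeeping together with compactness. The only steps requiring a little care are (i) verifying that $V(\mathcal{K})$ is bounded, which uses jointly the linearity of $V$, the compactness of $\mathcal{K}$, and the sign condition $V(\vg\vg^\top) \ge 0$; and (ii) interpreting the $\cC^4$-smoothness of $S$ up to the boundary of $\Rgez^{D}$ so that $\partial S$ is continuous there — this can be sidestepped by noting that the iterates remain inside the box $[0,R_1]^D$ and invoking any $\cC^1$ extension of $S$ to an open neighborhood of that box.
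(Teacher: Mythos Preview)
Your proposal is correct and follows essentially the same approach as the paper: bound $V(\vg_k\vg_k^\top)$ using the gradient bound and linearity/nonnegativity of $V$, trap $\vv_k$ in $[0,R_1]^D$ by the convex-combination (EMA) argument, and then read off the bound on $S$ and its Lipschitzness from continuity of $S$ and $\partial S$ on this compact convex box. Your version is in fact slightly more careful than the paper's, since you explicitly include $\|\vv_0\|_\infty$ in $R_1$ rather than tacitly assuming $\vv_0=\vzero$.
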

\begin{proof}
    From \cref{amp:gradient-bound}, all noisy gradients $\nabla \ell_k(\vtheta_k)$ are uniformly bounded by a constant $R$. Hence $V\!\left(\nabla\ell_{k}(\vtheta_{k}) \nabla\ell_{k}(\vtheta_{k})^\top\right)$ is also bounded. Combining with the condition that $V(\vg\vg^\top) \geq 0$ for all $\vg$, we have $V\!\left(\nabla\ell_{k}(\vtheta_{k}) \nabla\ell_{k}(\vtheta_{k})^\top\right) \in \R_{[0,R_1]}^{D}$ for some constant $R_1$. Since $\vv_k$ is an exponential moving average of previous $V\!\left(\nabla\ell_{t}(\vtheta_{t}) \nabla\ell_{t}(\vtheta_{t})^\top\right)$ terms and $\R_{[0,R_1]}^{D}$ is convex, we have $\vv_k \in \R_{[0,R_1]}^{D}$ for all $k \geq 0$. 
    
    From \cref{amp:S-smooth}, $S$ is $\rho_s$-smooth, hence both $S$ and $\nabla S$ are continuous, thus are bounded on the compact set $\R_{[0,R_1]}^{D}$. The boundedness of $S$ gives the existence of $R_2$, while the boundedness of $\nabla S$ gives the Lipschitzness of $S$.
\end{proof}

We continue to use the notations $R_1$ and $R_2$ throughout the following part of the paper. Note that for all optimizers listed in \cref{tab:agms}, setting $R_1 = R^2$ is sufficient. Another thing to clarify is the relationship between the $R_2$ here and the stabilizing constant $\epsilon$ used by optimizers in \cref{tab:agms}. We will call it $\epsilon_{\mathrm{optim}}$ here, so as to distinguish from the $\epsilon$ notations that represent a distance (for instance, the $\epsilon$ in \cref{def:neighborhood-of-manifold} or \cref{lem:working-zone}).

\begin{remark}[Relationship between $R_2$ and $\epsilon_{\mathrm{optim}}$]
Setting $R_2 := 1/\epsilon_{\mathrm{optim}}$ here is theoretically enough for the requirement in \cref{lem:boundedness} to hold, but will introduce a large constant to the proof since $\epsilon_{\mathrm{optim}}$ is very small in practice; However, in practice the gradient noise is also very likely to keep $\vv$ away from zero, thus the operational $R_0$ that governs empirical convergence is usually much smaller than the worst-case $1/\epsilon_{\mathrm{optim}}$.    
\end{remark}



Now we are ready to construct working zones in which nice properties are ensured to benefit our analysis. For all $\epsilon > 0$, define $\cX^{\epsilon} := \Gamma^{\epsilon} \times \R_{[0,R_1]}^{D}$ as the set of AGM states $(\vtheta, \vv)$ where $\vtheta$ lies in $\Gamma^{\epsilon}$ and $0 \leq \vv \leq R_1$.

We construct nested working zones $(\Gamma^{\epsilon_1}, \Gamma^{\epsilon_2},\Gamma^{\epsilon_3})$ in the following way:


\begin{lemma}[Working Zone Lemma]\label{lem:working-zone}
We denote the minimal distance of $\Gamma$ and any other local minimizer manifold as $\epsilon_4$. There exist positive constants $\epsilon_1, \epsilon_2, \epsilon_3$ such that $\epsilon_1 < \epsilon_2 < \epsilon_3 < \epsilon_4$ and $\Gamma^{\epsilon_1}, \Gamma^{\epsilon_2},\Gamma^{\epsilon_3}$ satisfy the following properties: 
\begin{enumerate}
        \item $\cL$ is $\mu$-PL in $\Gamma^{\epsilon_3}$ for some constant $\mu > 0$.
        \item For all matrices $\mS \in \R^{d \times d}$ with $\frac{1}{R_0}\mI \preceq \mS \preceq \frac{1}{\epsilon}\mI$, 
        starting from any initial point $\vtheta_0 \in \Gamma^{\epsilon_2}$, 
        the gradient flow preconditioned by $\mS$ converges to a point in $\Gamma$. 
        \item 
        Under Assumption~\ref{amp:basic} and Assumption~\ref{amp:S-smooth}
        the function $F: \cX^{\epsilon_2} \to \R^d, (\vtheta, \vv) \mapsto \Phi_{S(\vv)}(\vtheta)$ is $\cC^4$ on $\cX^{\epsilon_1}$.
        
    \end{enumerate}
\end{lemma}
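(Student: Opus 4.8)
The plan is to fix the three radii in decreasing order $\epsilon_3 > \epsilon_2 > \epsilon_1$ (all below $\epsilon_4$), establishing one property at each stage and only shrinking the current radius when a later property forces it. Throughout I would use that $\Gamma$ is compact, so every positive continuous function on $\Gamma$ attains a positive minimum, and that on a small enough tubular neighborhood of $\Gamma$ the nearest-point projection $\vtheta \mapsto \vzeta(\vtheta) \in \Gamma$ is well defined and smooth. For Property 1, observe that by \Cref{amp:low-rank-manifold} the Hessian $\mH(\vzeta) := \nabla^2\cL(\vzeta)$ has rank $m$ for $\vzeta \in \Gamma$; differentiating $\nabla\cL \equiv \vzero$ along $\Gamma$ gives $T_\vzeta\Gamma \subseteq \ker\mH(\vzeta)$, and a dimension count upgrades this to $\ker\mH(\vzeta) = T_\vzeta\Gamma$, so the normal space at $\vzeta$ is exactly the range of $\mH(\vzeta)$. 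The $m$-th largest eigenvalue of $\mH(\vzeta)$ is a positive continuous function on the compact $\Gamma$, hence bounded below by some $\lambda_0 > 0$. For $\vtheta \in \Gamma^{\epsilon_3}$ write $\vtheta = \vzeta + \vr$ with $\vzeta = \vzeta(\vtheta)$ and $\vr$ normal; a second-order Taylor expansion of $\nabla\cL$ about $\vzeta$ (remainder uniform over a neighborhood of compact $\Gamma$, using the smoothness in \Cref{amp:basic}) together with $\vr \in \mathrm{range}\,\mH(\vzeta)$ gives $\lnormtwo{\nabla\cL(\vtheta)} \ge \lambda_0\lnormtwo{\vr} - \O(\lnormtwo{\vr}^2)$, while expanding $\cL$ gives $\cL(\vtheta) - \cL^* \le \O(\lnormtwo{\vr}^2)$; taking $\epsilon_3$ small enough (and $\epsilon_3 < \epsilon_4$) makes the linear term dominate and yields $\lnormtwo{\nabla\cL(\vtheta)}^2 \ge 2\mu(\cL(\vtheta)-\cL^*)$ on $\Gamma^{\epsilon_3}$ for a suitable $\mu > 0$.

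For Property 2 I would pick $\epsilon_2 < \epsilon_3$ using \Cref{lem:traj-bound-by-L}. Since $\cL \equiv \cL^*$ on $\Gamma$ and $\cL$ is $\rho$-smooth, $\cL(\vtheta_0) - \cL^* \le \tfrac{\rho}{2}\dist(\vtheta_0,\Gamma)^2 = \O(\epsilon_2^2)$ for $\vtheta_0 \in \Gamma^{\epsilon_2}$. For any admissible $\mS$ (with $\tfrac{1}{R_0}\mI \preceq \mS \preceq C_2\mI$ as in the statement), \Cref{lem:traj-bound-by-L} bounds the total arc length of the $\mS$-preconditioned flow of $\cL$ by $\O(\epsilon_2)$ for as long as the trajectory stays in $\Gamma^{\epsilon_3}$, where $\cL$ is $\mu$-PL. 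Choosing $\epsilon_2$ small enough that this bound is below $\epsilon_3 - \epsilon_2$, a first-exit-time argument shows the trajectory never leaves $\Gamma^{\epsilon_3}$; hence the bound of \Cref{lem:traj-bound-by-L} holds along the whole infinite trajectory, which therefore has finite length and converges to some $\vtheta_\infty$, and $\int_0^\infty \lnormtwo{\nabla\cL(\vtheta(t))}\,\dd t < \infty$ forces $\nabla\cL(\vtheta_\infty) = \vzero$. The $\mu$-PL inequality then gives $\cL(\vtheta_\infty) = \cL^*$, and the last clause of \Cref{amp:low-rank-manifold} ($\Gamma = \arg\min_U\cL$) yields $\vtheta_\infty \in \Gamma$; the displacement bound being uniform over all admissible $\mS$, this holds simultaneously for every such preconditioner.

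For Property 3 I would argue fiberwise first: for each fixed $\vv \in \R_{[0,R_1]}^{D}$ the flow $\dot{\vtheta} = -S(\vv)\nabla\cL(\vtheta)$ has $\Gamma$ as a normally attracting invariant manifold, since the normal linearization $-S(\vv)\mH(\vzeta)$ at $\vzeta \in \Gamma$ is similar to the negative semidefinite $-S(\vv)^{1/2}\mH(\vzeta)S(\vv)^{1/2}$ whose kernel is $T_\vzeta\Gamma$ and whose remaining eigenvalues are bounded away from $0$ uniformly over the compact sets $\Gamma$ and $\R_{[0,R_1]}^{D}$, with no neutral or unstable directions; by the smoothness theory for the gradient-flow projection used in \citet{li2021happens}, which applies verbatim to the preconditioned flow, $\vtheta \mapsto \Phi_{S(\vv)}(\vtheta)$ is $\cC^4$ on a neighborhood of $\Gamma$. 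The remaining — and, I expect, hardest — step is to upgrade this to joint $\cC^4$-regularity of $F(\vtheta,\vv) = \Phi_{S(\vv)}(\vtheta)$: the augmented autonomous flow $\dot{\vtheta} = -S(\vv)\nabla\cL(\vtheta),\ \dot{\vv} = \vzero$ on $\cX^{\epsilon_2}$ has a jointly $\cC^4$ vector field (using $S \in \cC^4$, \Cref{amp:S-smooth}, and $\nabla\cL \in \cC^4$, \Cref{amp:basic}) and keeps $\Gamma \times \R_{[0,R_1]}^{D}$ invariant, attracting in the $\vtheta$-normal directions and neutral in $\vv$; one would then either invoke a parametrized (center-type) normally hyperbolic invariant manifold theorem for this flow, or reprove the asymptotic-phase smoothness estimates of \citet{li2021happens} with $S(\vv)$ and the parameter $\vv$ tracked explicitly through the fixed-point construction. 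The subtlety is that the $\vv$-directions are formally center directions — where smoothness would normally degrade — and this is benign only because the $\vv$-dynamics is identically zero; making that precise is the crux. Shrinking to $\epsilon_1 < \epsilon_2$ so that $\cX^{\epsilon_1}$ lies in the resulting neighborhood of regularity completes the proof.
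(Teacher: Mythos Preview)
Your arguments for Properties~1 and~2 match the paper's in spirit, though you give more detail: the paper simply cites a lemma from \citet{lyu2022understanding} for the PL property, whereas you spell out the Taylor-expansion proof; and for Property~2 the paper, like you, combines \Cref{lem:traj-bound-by-L} with a first-exit-time contradiction to trap the trajectory in $\Gamma^{\epsilon_3}$ (you additionally argue explicitly for convergence to a point of $\Gamma$, which the paper leaves implicit).

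The real divergence is Property~3. You propose to obtain joint $\cC^4$-regularity of $(\vtheta,\vv)\mapsto\Phi_{S(\vv)}(\vtheta)$ via parametrized normally-hyperbolic invariant manifold theory for the augmented flow, or by redoing the asymptotic-phase estimates of \citet{li2021happens} while tracking the parameter $\vv$, and you correctly flag the center-direction issue as the crux. The paper sidesteps this machinery entirely: it first notes that the time-$\delta$ flow map $(\vtheta,\vv)\mapsto\gamma_\delta(\vtheta,\vv)$ of the augmented system is $\cC^4$ by a standard smooth-dependence result for ODEs (Lemma~B.4 in \citet{duistermaat2012lie}), and then invokes Theorem~6.4 of \citet{falconer1983differentiation} on differentiation of limit mappings in dynamical systems to pass directly to $\cC^4$-smoothness of the infinite-time limit $\Phi_{S(\vv)}(\vtheta)$. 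This route avoids the center-direction subtlety altogether, since it never appeals to invariant-manifold theory but only to convergence of a family of $\cC^4$ maps with the right contraction. Your plan would work, but the Falconer-theorem approach is considerably shorter and worth knowing about for exactly this purpose.
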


\begin{proof}
    By Lemma H.3 in \citet{lyu2022understanding}, there exists an $\epsilon_3$-neighborhood of $\Gamma$ where $\cL$ is $\mu$-PL for some $\mu > 0$. WLOG let $\epsilon_3 < \epsilon_4$.
    

    We prove the second property by contradiction. Let $C_1 = 1/R_0$ and $C_2 = 1/\epsilon$. Let $\epsilon_2$ be some constant such that $\epsilon_2 + \sqrt{\frac{\rho}{\mu}}\cdot \frac{C_2}{C_1}\epsilon_2 < \epsilon_3$. For any starting point $\vtheta_0 \in \Gamma^{\epsilon_2}$, and any preconditioning matrix $\mS$ satisfying $C_1\mI \preceq \mS \preceq C_2\mI$, assume on the contrary that the preconditioned gradient flow starting from $\vtheta(0) = \vtheta_0$ will leave $\Gamma^{\epsilon_3}$ at some finite time. Then let $T = \inf \left\{t: \vtheta(t) \notin \Gamma^{\epsilon_3}\right\} < \infty$. 
    
    Using \cref{lem:traj-bound-by-L} and combining the $\mu$-PL condition, we conclude that $$\lnormtwo{\vtheta_0 - \vtheta(T)} \leq \frac{2C_2}{\sqrt{2\mu}C_1} \sqrt{\cL(\vtheta_0)-\cL^*} \leq \frac{2C_2}{\sqrt{2\mu}C_1}\cdot \sqrt{\frac{\mu}{2}} \lnormtwo{\vtheta_0 - \vtheta^*} = \sqrt{\frac{\rho}{\mu}}\cdot \frac{C_2}{C_1}\lnormtwo{\vtheta_0 - \vtheta^*}$$
    for any $\vtheta^* \in \Gamma$. Hence $\vtheta(T) \in \Gamma^{\epsilon_3}$, which is a contradition.


    Next we begin the construction of $\Gamma^{\epsilon_1}$ with Assumptions \ref{amp:basic} and \ref{amp:S-smooth}. Define a function $f(\vtheta, \vv): \R^{d+D} \to \R^{d+D}$ as 
    $$f(\vtheta, \vv) := (-S(\vv)\nabla\cL(\vtheta), \vv),$$
    then $f$ is $\cC^4$ on $\R^d \times \R_{[0,R_1]}^D$. Let $\tilde{r}$ be a constant such that $\tilde{r} > \epsilon_2$.
    Substituting $f_0 = f$, $r = \sqrt{\tilde{r}^2+d \cdot R_1^2}, x_0 = (\vtheta_0, \vv_0)$ such that each entry of $\vv_0$ is $R_1/2$ and $\vtheta_0$ be arbitrary point in $\Gamma$, and $B = \cX^{\tilde{r}}$ into Lemma B.4 in \citet{duistermaat2012lie}, we conclude that there exists some constant $\delta$ such that the mapping $\gamma_{\delta}(\vtheta, \vv)$ defined by:
    \begin{align*}
        \vtheta(0) = \vtheta, \quad 
        \frac{\dd \vtheta(t)}{\dd t} = -S(\vv)\nabla \cL(\vtheta(t)), \quad
        \gamma_{\delta}(\vtheta,\vv) = \vtheta(\delta)
    \end{align*}
    is well-defined and $\cC^4$ on $\cX^{\tilde{r}}$. Note that we require a slight modification of the original proof since $B$ is now a factorization of a ball and a hypercube instead of a ball, but the convexity of $B$ is preserved, hence the modification is trivial.

    Note that the constant $\delta$ can be independent with $\vtheta_0$ to fulfill the requirements of Lemma B.4 in \citet{duistermaat2012lie} since $\lnormtwo{\nabla \cL}$ and $\lnormtwo{\nabla^2 \cL}$ can be uniformly bounded. 
    Take $\epsilon_1 = 0.9\epsilon_2$, then for any $\vtheta \in \Gamma^{\epsilon_1}$, a small open neighborbood of $\vtheta$ stays in the $\epsilon_2$-neighborhoods of two different points on $\Gamma$. Taking union of all $\vtheta_0 \in \Gamma$, we conclude that $\gamma_{\delta}$ is $\cC^4$ on $\cX^{\epsilon_1}$. Finally, we use Theorem 6.4 in \citet{falconer1983differentiation} to conclude that $F\!\left(\vtheta,\vv\right) := \Phi_{S(\vv)}(\vtheta)$ is $\cC^4$ on $\cX^{\epsilon_1}$. 
\end{proof}

\section{Proof of the Convergence of AGMs}\label{app:proof-convergence}


In this section, we aim to prove \Cref{thm:formal-convergence-1} and the first part of \Cref{thm:formal-main-result}. Specifically, for some constant $\gamma = 1 - \Theta(\eta)$, we prove that the loss value of AGM converges to $\tilde{\O}(\gamma^K + \eta)$ within $K$ steps with high probability. If we substitute $K = \O\left(\frac{1}{\eta}\log \frac{1}{\eta}\right)$, this will recover the first part of \Cref{thm:formal-main-result}; However, this convergence analysis works for any $K = \O(\mathrm{poly}(1/\eta))$, and substituting $K = \O(\eta^{-2})$ will give us a high probability guarantee that the iteration stays near manifold in the whole scope of our analysis, which helps the proof of the second part too.

First, we introduce some additional notations that will be used in our proof. In the AGM framework, an algorithm starts from initial state $\vtheta_0$, and we set $\vm_0 = \bf{0} \in \R^d$, $\vv_0 = \bf{0} \in \R^D$. For every $k \geq 0$, we use \textbf{step $k+1$} to refer to the process of obtaining the noisy gradient $\nabla \ell_k(\vtheta_{k})$ and then $\vm_{k+1}, \vv_{k+1}$ and $\vtheta_{k+1}$. 

For any $k \geq 0$, to simplify the notation, we denote that
\begin{gather*}
    \vg_{k} := \nabla \ell_k(\vtheta_{k}), \quad
    \vz_{k} := \ell_k(\vtheta_{k}) - \cL(\vtheta_k) \sim \cZ(\vtheta_k), \quad \mS_k :=S(\vv_{k}),\\
    \mU_{k+1} := S(\vv_{k}) \vg_{k}, \quad
    \vu_{k+1} := S(\vv_{k}) \vm_{k+1},\quad \vphi_k:= \Phi_{\mS_k}(\vtheta_k).
\end{gather*} 
We use \textbf{time $k$} to refer to the time right before step $k + 1$ happens, i.e. the time right after we get $\vtheta_k$. We also define $\left\{\mathcal{F}_k\right\}$ as the natural filteration generated by the history of optimization, where each $\mathcal{F}_k = \sigma\left(\vtheta_0, \vz_0, \cdots, \vz_{k-1}\right)$ can be interpreted as ``all the information available up to time $k$''. We use the notation \textbf{$\E_k$} to denote the expectation conditioned on $\mathcal{F}_k$. 

To start with, we prove that the descent direction of each step does not veer off the direction of a preconditioned gradient descent, and the mismatch term can also be constrained by a list of martingales. After that, we can ensure a decay in the loss function every step, with some small perturbations that can be dealt with using Azuma-Hoeffding's inequality.

From \cref{descent-direction} throughout \cref{convergence}, we will assume that the loss function  $\cL$ satisfies $\mu$-PL condition at each iteration step, which is automatically satisfied in the setting of \Cref{thm:formal-convergence-1}. follows directly from the result. After that, we argue that if the loss function satisfies $\mu$-PL only within some local neighborhood, an AGM starting near enough to the manifold will stick to the manifold with high probability, which leads to the first part of \Cref{thm:formal-main-result}. 
\begin{lemma}\label{descent-direction}
    Let $\cL$ satisfy \cref{amp:smooth}. Define $\tilde{\vv}_{k} := \beta_2 {\vv}_{k-1} + (1-\beta_2) \E_{k-1}[V\left(\vg_{k-1}\vg_{k-1}^\top\right)]$. There exist a constant $C_{1a}$ and a constant $C_{1b}$ independent of $\cL$, such that for any $k \geq 1$,
  \begin{align*}
\left\langle\nabla\cL\left(\vtheta_{k-1}\right), \mU_{k}\right\rangle = \nabla\cL\left(\vtheta_{k-1}\right)^\top S(\tilde{\vv}_k) \nabla\cL\left(\vtheta_{k-1}\right) - Y_{k} - X_{k},
\end{align*}
where $Y_k$ and $X_k$ are two $\mathcal{F}_{k}$-measurable random variables such that: \begin{enumerate}
    \item $\left|Y_k\right| \leq C_{1a} \lnormtwo{\nabla \cL(\vtheta_{k-1})} \cdot \eta^2$ a.s.
    \item $\left|X_k\right| \leq C_{1b}\lnormtwo{\nabla \cL(\vtheta_{k-1})}$ a.s., and $ \E_{k-1}[X_k] = 0$.
\end{enumerate}
\end{lemma}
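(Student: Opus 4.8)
The plan is to expand $\mathbf U_k = S(\mathbf v_{k-1})\mathbf g_{k-1}$ around the ``denoised'' preconditioner $S(\tilde{\mathbf v}_k)$ and separate the resulting error into a deterministic $\mathcal O(\eta^2)$ part ($Y_k$) and a mean-zero bounded part ($X_k$). First I would write $\mathbf g_{k-1} = \nabla\mathcal L(\vtheta_{k-1}) + \mathbf z_{k-1}$ with $\E_{k-1}[\mathbf z_{k-1}]=\vzero$, and also decompose $S(\mathbf v_{k-1}) = S(\tilde{\mathbf v}_k) + \bigl(S(\mathbf v_{k-1}) - S(\tilde{\mathbf v}_k)\bigr)$. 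Plugging both into $\langle \nabla\mathcal L(\vtheta_{k-1}),\mathbf U_k\rangle$ gives the target quadratic form $\nabla\mathcal L(\vtheta_{k-1})^\top S(\tilde{\mathbf v}_k)\nabla\mathcal L(\vtheta_{k-1})$ plus three cross terms:
\begin{align*}
&\nabla\mathcal L(\vtheta_{k-1})^\top S(\tilde{\mathbf v}_k)\mathbf z_{k-1}
\;+\;\nabla\mathcal L(\vtheta_{k-1})^\top\bigl(S(\mathbf v_{k-1})-S(\tilde{\mathbf v}_k)\bigr)\nabla\mathcal L(\vtheta_{k-1})\\
&\quad+\;\nabla\mathcal L(\vtheta_{k-1})^\top\bigl(S(\mathbf v_{k-1})-S(\tilde{\mathbf v}_k)\bigr)\mathbf z_{k-1}.
\end{align*}
The first term is $\mathcal F_k$-measurable, bounded (since $S$ is bounded by $R_2\mathbf I$ on $\R^D_{[0,R_1]}$ by \Cref{lem:boundedness} and $\|\mathbf z_{k-1}\|_2 \le 2R$) by a constant multiple of $\|\nabla\mathcal L(\vtheta_{k-1})\|_2$, and has conditional mean zero; it will become (part of) $-X_k$.

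For the remaining two terms the key observation is that $\mathbf v_{k-1} - \tilde{\mathbf v}_k = (1-\beta_2)\bigl(V(\mathbf g_{k-1}\mathbf g_{k-1}^\top) - \E_{k-1}[V(\mathbf g_{k-1}\mathbf g_{k-1}^\top)]\bigr) + \beta_2(\mathbf v_{k-1}-\mathbf v_{k-1})$ — wait, more precisely, from the update rule $\mathbf v_k = \beta_2\mathbf v_{k-1} + (1-\beta_2)V(\mathbf g_{k-1}\mathbf g_{k-1}^\top)$ and the definition of $\tilde{\mathbf v}_k$ we get $\mathbf v_{k-1} - \tilde{\mathbf v}_k = \mathbf v_{k-1} - \beta_2\mathbf v_{k-1} - (1-\beta_2)\E_{k-1}[V(\mathbf g_{k-1}\mathbf g_{k-1}^\top)] = (1-\beta_2)\bigl(\mathbf v_{k-1} - \E_{k-1}[V(\cdot)]\bigr)$, which is $\mathcal O(1-\beta_2) = \mathcal O(\eta^2)$ in norm since both $\mathbf v_{k-1}$ and $V(\mathbf g\mathbf g^\top)$ are bounded (using $1-\beta_2 = \Theta(\eta^2)$, i.e. $c := (1-\beta_2)/\eta^2$ is a constant). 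Hence $\|S(\mathbf v_{k-1}) - S(\tilde{\mathbf v}_k)\|_{\mathrm{op}} \le \mathrm{Lip}(S)\cdot\|\mathbf v_{k-1}-\tilde{\mathbf v}_k\|_2 = \mathcal O(\eta^2)$ by the Lipschitzness of $S$ on $\R^D_{[0,R_1]}$ (\Cref{lem:boundedness}). Therefore the second cross term is bounded in absolute value by $\mathcal O(\eta^2)\|\nabla\mathcal L(\vtheta_{k-1})\|_2^2 \le \mathcal O(\eta^2)\cdot\rho\cdot\mathrm{diam}\cdots$; more simply, since $\|\nabla\mathcal L\|_2 \le R$ everywhere (by \Cref{amp:gradient-bound} applied to the mean gradient, or directly since gradients are bounded), this is $\le C\eta^2\|\nabla\mathcal L(\vtheta_{k-1})\|_2$, so it goes into $-Y_k$. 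The third cross term is similarly $\le \mathrm{Lip}(S)\cdot\mathcal O(\eta^2)\cdot\|\nabla\mathcal L(\vtheta_{k-1})\|_2\cdot 2R = \mathcal O(\eta^2)\|\nabla\mathcal L(\vtheta_{k-1})\|_2$, also absorbed into $-Y_k$; note this third term is \emph{not} mean-zero in general, which is precisely why it belongs in $Y_k$ rather than $X_k$. Collecting, $Y_k$ is the sum of the second and third cross terms (negated) and $X_k = -\nabla\mathcal L(\vtheta_{k-1})^\top S(\tilde{\mathbf v}_k)\mathbf z_{k-1}$; $X_k$ is $\mathcal F_k$-measurable, satisfies $\E_{k-1}[X_k]=0$ because $S(\tilde{\mathbf v}_k)$ is $\mathcal F_{k-1}$-measurable and $\E_{k-1}[\mathbf z_{k-1}]=\vzero$, and $|X_k| \le R_2\cdot 2R\cdot\|\nabla\mathcal L(\vtheta_{k-1})\|_2 =: C_{1b}\|\nabla\mathcal L(\vtheta_{k-1})\|_2$, while $|Y_k| \le C_{1a}\|\nabla\mathcal L(\vtheta_{k-1})\|_2\eta^2$ with $C_{1a}$ depending only on $\mathrm{Lip}(S)$, $c$, $R$, and the bounds $R_1,R_2$ — none of which involve $\mathcal L$ beyond its gradient bound, but actually $Y_k$'s bound carries a $\|\nabla\mathcal L\|_2$ factor so $C_{1a}$ can be taken $\mathcal L$-independent as claimed; for $C_{1b}$ the same holds.

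The main obstacle I anticipate is purely bookkeeping rather than conceptual: making sure the split between $Y_k$ and $X_k$ is done so that $X_k$ has \emph{exactly} zero conditional mean and the $\mathcal O(\eta^2)$ bound on $Y_k$ retains the multiplicative $\|\nabla\mathcal L(\vtheta_{k-1})\|_2$ factor (which is what lets the later convergence argument close the recursion via Young's inequality). One has to be careful that the ``noise $\times$ preconditioner-fluctuation'' term (the third cross term) is genuinely $\mathcal O(\eta^2)\|\nabla\mathcal L(\vtheta_{k-1})\|_2$ and not merely $\mathcal O(\eta^2)$: this needs the factor $\|\nabla\mathcal L(\vtheta_{k-1})\|_2$ pulled out of the inner product before bounding $\|S(\mathbf v_{k-1})-S(\tilde{\mathbf v}_k)\|_{\mathrm{op}}\|\mathbf z_{k-1}\|_2$, which is fine by Cauchy–Schwarz. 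A secondary subtlety is that $\tilde{\mathbf v}_k$ depends on the conditional law of the gradient noise at $\vtheta_{k-1}$, so one must verify $S(\tilde{\mathbf v}_k)$ is well-defined and lies in $\mathbb S^d_{++}$ with $S(\tilde{\mathbf v}_k)\succeq\frac1{R_0}\mathbf I$ — this follows because $\tilde{\mathbf v}_k\in\R^D_{[0,R_1]}$ (a convex combination of points in that convex set) so all the structural assumptions on $S$ apply.
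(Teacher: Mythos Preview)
Your proposal is correct and follows essentially the same decomposition as the paper: isolate the quadratic form $\nabla\cL(\vtheta_{k-1})^\top S(\tilde{\vv}_k)\nabla\cL(\vtheta_{k-1})$, put the mean-zero noise term $-\nabla\cL(\vtheta_{k-1})^\top S(\tilde{\vv}_k)\vz_{k-1}$ into $X_k$, and absorb the preconditioner fluctuation $S(\cdot)-S(\tilde{\vv}_k)$ (which is $\mathcal O(1-\beta_2)=\mathcal O(\eta^2)$ by Lipschitzness of $S$) into $Y_k$. The paper keeps $Y_k$ as the single term $-\langle\nabla\cL(\vtheta_{k-1}),(S(\vv_k)-S(\tilde{\vv}_k))\vg_{k-1}\rangle$ rather than splitting $\vg_{k-1}$ first, but that is only cosmetic. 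One indexing note: the paper's proof actually expands $\mU_k$ as $S(\vv_k)\vg_{k-1}$ (so the relevant difference is $\vv_k-\tilde{\vv}_k=(1-\beta_2)(V(\vg_{k-1}\vg_{k-1}^\top)-\E_{k-1}[V(\cdot)])$), whereas you used $S(\vv_{k-1})$ from the stated definition; either choice yields the same $\mathcal O(\eta^2)$ bound, so your argument goes through unchanged.
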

\begin{proof}
    We first peel the $S(\tilde{\vv}_k)$ part off the $S(\vv_k)$ term: 
    \begin{align*}
        \left\langle\nabla\cL\left(\vtheta_{k-1}\right), \mU_k\right\rangle &= \left\langle\nabla\cL\left(\vtheta_{k-1}\right), S(\vv_k) \vg_{k-1}\right\rangle \\
        &= \left\langle\nabla\cL\left(\vtheta_{k-1}\right), S(\tilde{\vv}_k) \vg_{k-1}\right\rangle + \left\langle\nabla\cL\left(\vtheta_{k-1}\right), \left(S(\vv_k) - S(\tilde{\vv}_k)\right) \vg_{k-1}\right\rangle.
    \end{align*}
    Define $Y_k$ as $Y_k = -\left\langle\nabla\cL\left(\vtheta_{k-1}\right), \left(S(\vv_k) - S(\tilde{\vv}_k)\right) \vg_{k-1}\right\rangle$, then it holds almost surely that 
    $$\left|Y_k \right| \leq \lnormtwo{\nabla \cL(\vtheta_{k-1})}\lnormtwo{\left(S(\vv_k) - S(\tilde{\vv}_k)\right) \vg_{k-1}}.$$ 
    Since $S$ is Lipscitz, $V$ is linear and 
    $$\lnormtwo{\tilde{\vv}_k - \vv_k} = (1-\beta_2)\lnormtwo{\E_{k-1}\left[V\left(\vg_{k-1}\vg_{k-1}^\top\right)\right] - V\left(\vg_{k-1}\vg_{k-1}^\top\right)},$$ we conclude that $\left| Y_k \right| \leq C_{1a} \lnormtwo{\nabla \cL(\vtheta_{k-1})}\cdot \eta^2$ a.s. for some constant $C_{1a}$. The rest term $\left\langle\nabla\cL\left(\vtheta_{k-1}\right), S(\tilde{\vv}_k) \vg_{k-1}\right\rangle$ can also be decomposed into a deterministic part and a random part as: \begin{align*}
        \left\langle\nabla\cL\left(\vtheta_{k-1}\right), S(\tilde{\vv}_k) \vg_{k-1}\right\rangle &= \left\langle\nabla\cL\left(\vtheta_{k-1}\right), S(\tilde{\vv}_k) \left(\nabla \cL(\vtheta_{k-1}) + \vz_{k-1}\right)\right\rangle \\
        &= \nabla\cL\left(\vtheta_{k-1}\right)^\top S(\tilde{\vv}_k) \nabla\cL\left(\vtheta_{k-1}\right) + \left\langle \vz_{k-1}, S(\tilde{\vv}_k)^\top \nabla\cL\left(\vtheta_{k-1}\right)\right\rangle.
    \end{align*}
    Now we only need to let $X_k = \left\langle \vz_{k-1}, S(\tilde{\vv}_k)^\top \nabla\cL\left(\vtheta_{k-1}\right)\right\rangle$. It's easy to see that $\E_{k-1}[X_k] = 0$ and $\left| X_k \right| \leq C_{1b} \lnormtwo{\nabla \cL(\vtheta_{k-1})}$ a.s. for some constant $C_{1b}$. Finally, note that $C_{1b}$ is the multiplication of a constant bounding the magnitude of $\vz$ and $R_2$ which bounds $\normtwo{S}$, which is independent of $\cL$. This completes the proof.
\end{proof}

\begin{lemma}[Descent Lemma of the AGM Framework]\label{descent-lemma}
    Let $\cL$ satisfy \cref{amp:smooth}. For any $k \geq 1$ it holds that \begin{align*}
        \cL(\vtheta_{k}) - \cL(\vtheta_{k-1}) \leq C_2 \eta^2 -\eta (1-\beta_1)\sum_{i=1}^{k} \beta_1^{k-i}\left\langle \nabla\cL(\vtheta_{i-1}), \mU_i \right\rangle
    \end{align*}
    for some constant $C_2$.
\end{lemma}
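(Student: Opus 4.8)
The plan is to combine the $\rho$-smoothness descent inequality with the unrolled momentum recursion, and then, inside the resulting $\beta_1$-weighted sum over $i$, to replace the time-$k$ gradient $\nabla\cL(\vtheta_{k-1})$ and preconditioner $S(\vv_k)$ by the time-$i$ objects $\nabla\cL(\vtheta_{i-1})$ and $S(\vv_i)$, absorbing every mismatch term into $C_2\eta^2$.

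First I would apply \Cref{amp:smooth} to get $\cL(\vtheta_k)\le\cL(\vtheta_{k-1})+\inp{\nabla\cL(\vtheta_{k-1})}{\vtheta_k-\vtheta_{k-1}}+\frac{\rho}{2}\normtwo{\vtheta_k-\vtheta_{k-1}}^2$ and substitute the AGM update $\vtheta_k-\vtheta_{k-1}=-\eta\,S(\vv_k)\vm_k$. A one-line induction from \Cref{amp:gradient-bound} shows $\normtwo{\vm_k}\le R$ (each $\vm_k$ is a convex combination of past noisy gradients), and $\normtwo{S(\vv_k)}\le R_2$ by \Cref{lem:boundedness}; hence the quadratic term is $\le\frac{\rho}{2}R_2^2R^2\,\eta^2$, already of the target order. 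Then, unrolling $\vm_k=\beta_1\vm_{k-1}+(1-\beta_1)\vg_{k-1}$ from $\vm_0=\vzero$ gives $\vm_k=(1-\beta_1)\sum_{i=1}^k\beta_1^{k-i}\vg_{i-1}$, so the linear term equals $-\eta(1-\beta_1)\sum_{i=1}^k\beta_1^{k-i}\inp{\nabla\cL(\vtheta_{k-1})}{S(\vv_k)\vg_{i-1}}$.

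The heart of the argument is then to write $\inp{\nabla\cL(\vtheta_{k-1})}{S(\vv_k)\vg_{i-1}}=\inp{\nabla\cL(\vtheta_{i-1})}{\mU_i}+E^{(1)}_{k,i}+E^{(2)}_{k,i}$, where $\mU_i=S(\vv_i)\vg_{i-1}$, $E^{(1)}_{k,i}$ collects the change of the loss gradient between steps $i-1$ and $k-1$, and $E^{(2)}_{k,i}$ the change of the preconditioner between steps $i$ and $k$. For $E^{(1)}$, each AGM step has length $\le\eta R_2R$, so $\normtwo{\vtheta_{k-1}-\vtheta_{i-1}}\le(k-i)\eta R_2R$ and $\rho$-smoothness gives $|E^{(1)}_{k,i}|\le\rho R_2^2R^2(k-i)\eta$. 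For $E^{(2)}$, I would use $1-\beta_2=\Theta(\eta^2)$ and the elementary estimate $1-\beta_2^{\,m}\le m(1-\beta_2)$ to get $\normtwo{\vv_k-\vv_i}=O((k-i)\eta^2)$, then combine with the Lipschitzness of $S$ on $\R^D_{[0,R_1]}$ (\Cref{lem:boundedness}) and $\normtwo{\nabla\cL}\le R$ (again \Cref{amp:gradient-bound}) to get $|E^{(2)}_{k,i}|=O((k-i)\eta^2)$. Multiplying by $\eta(1-\beta_1)\beta_1^{k-i}$ and summing over $i$, the $E^{(1)}$ error totals $O\!\big(\eta^2\sum_{j\ge0}j\beta_1^{\,j}\big)=O(\eta^2)$ and the $E^{(2)}$ error totals $O(\eta^3)$; here the series $\sum_{j\ge0}j\beta_1^{\,j}=\beta_1/(1-\beta_1)^2<\infty$ precisely because \Cref{amp:beta1-bound} keeps $\beta_1$ bounded away from $1$. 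Adding back the quadratic term from the smoothness bound pins down an explicit $C_2$ depending only on $\rho,R,R_1,R_2$, the Lipschitz constant of $S$, $\beta_1$, and the implied constant in $1-\beta_2=\Theta(\eta^2)$.

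I expect the swap in the previous paragraph to be the only nontrivial step: one must see that evaluating the preconditioner and the gradient at the ``wrong'' time index costs only $O(\eta^2)$ overall. This works because of a separation of time scales — the momentum weights $\beta_1^{k-i}$ decay geometrically (so the sum has effective memory $O(1)$), while $\vtheta$ moves at rate $O(\eta)$ and $\vv$ at rate $O(\eta^2)$ per step — so each per-index error $O((k-i)\eta)$ or $O((k-i)\eta^2)$ is absorbed by the geometric factor together with the explicit $\eta$ prefactor. Everything else (the $\normtwo{\vm_k}\le R$ induction, the per-step displacement bound, the convergence of $\sum_j j\beta_1^{\,j}$) is routine bookkeeping.
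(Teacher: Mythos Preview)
Your proposal is correct and follows essentially the same approach as the paper: apply $\rho$-smoothness, then replace $\nabla\cL(\vtheta_{k-1})$ and $S(\vv_k)$ by their time-$i$ counterparts, absorbing the $O((k-i)\eta)$ and $O((k-i)\eta^2)$ mismatch terms via the geometric weight $\beta_1^{k-i}$. The only presentational difference is that you unroll $\vm_k=(1-\beta_1)\sum_i\beta_1^{k-i}\vg_{i-1}$ in one shot and bound the full swap directly (invoking $\sum_{j}j\beta_1^{\,j}<\infty$), whereas the paper peels off one factor of $\beta_1$ at a time to obtain a one-step recursion $-\langle\nabla\cL(\vtheta_{k-1}),\vu_k\rangle\le -\beta_1\langle\nabla\cL(\vtheta_{k-2}),\vu_{k-1}\rangle-(1-\beta_1)\langle\nabla\cL(\vtheta_{k-1}),\mU_k\rangle+\beta_1\tilde C_2\eta$ and then iterates it (needing only $\sum_j\beta_1^{\,j}<\infty$).
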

\begin{proof}
    From the smoothness of $\cL$ we have \begin{align*}
        \cL(\vtheta_{k}) - \cL(\vtheta_{k-1}) & \leq - \left\langle \nabla \cL(\vtheta_{k-1}), \eta \vu_k \right\rangle + \frac{\rho\eta^2}{2} \lnormtwo{\vu_k}^2.
    \end{align*}   
    If $k = 1$, then $\vm_k$ = $(1-\beta_1)\vg_{k-1}$, so $\vu_k = (1-\beta_1)\mU_{k}$, and the statement trivially holds as long as $C_2 \geq \frac{\rho}{2}\lnormtwo{\vu_k}^2$. If $k > 1$, then the $- \left\langle \nabla \cL(\vtheta_{k-1}), \vu_k \right\rangle$ term can be expanded as \begin{align*}
        - \left\langle \nabla \cL(\vtheta_{k-1}), \vu_k \right\rangle &= - \left\langle \nabla \cL(\vtheta_{k-1}), S(\vv_k)\vm_k \right\rangle \\
        &=- \left\langle \nabla \cL(\vtheta_{k-1}), S(\vv_k)\left(\beta_1 \vm_{k-1} + (1-\beta_1)\vg_{k-1}\right) \right\rangle \\
        &=- \beta_1\left\langle \nabla \cL(\vtheta_{k-1}), S(\vv_k)\vm_{k-1}\right\rangle - (1-\beta_1)\left\langle \nabla \cL(\vtheta_{k-1}),S(\vv_k)\vg_{k-1}\right\rangle \\
        &=- \beta_1\left\langle \nabla \cL(\vtheta_{k-2}), S(\vv_{k-1})\vm_{k-1}\right\rangle - (1-\beta_1)\left\langle \nabla \cL(\vtheta_{k-1}),\mU_k\right\rangle \\
        &\quad - \beta_1 \left\langle \nabla\cL(\vtheta_{k-1})-\nabla\cL(\vtheta_{k-2}),S(\vv_{k-1})\vm_{k-1}\right\rangle \\
        &\quad - \beta_1 \left\langle \nabla\cL(\vtheta_{k-1}),\left(S(\vv_{k})-S(\vv_{k-1})\right)\vm_{k-1}\right\rangle \\
        &\leq - \beta_1\left\langle \nabla \cL(\vtheta_{k-2}), S(\vv_{k-1})\vm_{k-1}\right\rangle - (1-\beta_1)\left\langle \nabla \cL(\vtheta_{k-1}),\mU_k\right\rangle \\
        &\quad + \beta_1 \lnormtwo{\nabla\cL(\vtheta_{k-1})-\nabla\cL(\vtheta_{k-2})}\lnormtwo{S(\vv_{k-1})\vm_{k-1}} \\
        &\quad + \beta_1 \lnormtwo{\nabla\cL(\vtheta_{k-1})}\lnormtwo{\left(S(\vv_{k})-S(\vv_{k-1})\right)\vm_{k-1}}.
    \end{align*}
    Note that a single step of update on $\vtheta$ and $\vv$ is small since \begin{align*}
        \vtheta_{k} - \vtheta_{k-1} &= \eta \vu_k, \\
        \vv_{k} - \vv_{k-1} &= \beta_2 \vv_{k-1} + (1-\beta_2) V\left(\vg_{k-1} \vg_{k-1}^\top\right) - \vv_{k-1} \\
        &= (1-\beta_2) \left(V\left(\vg_{k-1} \vg_{k-1}^\top\right) - \vv_{k-1}\right),
    \end{align*}  
    which implies that $\lnormtwo{\vtheta_{k} - \vtheta_{k-1}} = \O(\eta)$ and $\lnormtwo{\vv_{k} - \vv_{k-1}} = \O(\eta^2)$. We then leverage the smoothness of $\nabla \cL$ and $S$ to conclude that there exists some constant $\tilde{C}_2$ such that
    \begin{align*}
    - \left\langle \nabla \cL(\vtheta_{k-1}), \vu_k \right\rangle 
        & \leq - \beta_1\left\langle \nabla \cL(\vtheta_{k-2}), \vu_{k-1}\right\rangle - (1-\beta_1)\left\langle \nabla \cL(\vtheta_{k-1}),\mU_k\right\rangle + \beta_1 \tilde{C}_2 \eta.
    \end{align*} 
    Giving that $\vu_{0} = \bold{0}$, we can expand this formula iteratively as \begin{align*}
        - \left\langle \nabla \cL(\vtheta_{k-1}), \vu_k \right\rangle 
        & \leq - \beta_1\left\langle \nabla \cL(\vtheta_{k-2}), \vu_{k-1}\right\rangle - (1-\beta_1)\left\langle \nabla \cL(\vtheta_{k-1}),\mU_k\right\rangle + \beta_1 \tilde{C}_2 \eta \\
        & \leq -\beta_1^2 \left\langle \nabla \cL(\vtheta_{k-3}), \vu_{k-2}\right\rangle - \beta_1(1-\beta_1)\left\langle \nabla \cL(\vtheta_{k-2}),\mU_{k-1}\right\rangle \\
        & \quad - (1-\beta_1)\left\langle \nabla \cL(\vtheta_{k-1}),\mU_k\right\rangle + \beta_1\tilde{C}_2 \eta + \beta_1^2\tilde{C}_2 \eta \\
        & \leq \cdots \\
        & \leq -(1-\beta_1)\sum_{i=1}^{k} \beta_1^{k-i}\left\langle \nabla\cL(\vtheta_{i-1}), \mU_i \right\rangle + \beta_1^{k-i+1} \tilde{C}_2 \eta \\
        & \leq \frac{\beta_1}{1-\beta_1} \tilde{C}_2 \eta - (1-\beta_1)\sum_{i=1}^{k} \beta_1^{k-i}\left\langle \nabla\cL(\vtheta_{i-1}), \mU_i \right\rangle.
    \end{align*}
    Plugging in, we get \begin{align*}
        \cL(\vtheta_{k}) - \cL(\vtheta_{k-1}) & \leq \frac{\beta_1}{1-\beta_1} \tilde{C}_2 \eta^2 + \frac{\rho\eta^2}{2} \lnormtwo{\vu_k}^2 -\eta (1-\beta_1)\sum_{i=1}^{k} \beta_1^{k-i}\left\langle \nabla\cL(\vtheta_{i-1}), \mU_i \right\rangle   \\
        &\leq  C_2 \eta^2 -\eta (1-\beta_1)\sum_{i=1}^{k} \beta_1^{k-i}\left\langle \nabla\cL(\vtheta_{i-1}), \mU_i \right\rangle,
    \end{align*}
    for some constant $C_2$.
\end{proof}

\begin{lemma}\label{descent-lemma-exp}
    Let $\cL$ satisfy \cref{amp:smooth}, and assume that $\mu$-PL condition is satisfied at all $\vtheta_k$ where $k \geq 0$. Define $\gamma: = 1 - \frac{2\eta\mu (1-\beta_1)}{R_0}$.
    For any $k \geq 0$, we have
    \begin{align*}
        \cL(\vtheta_{k})-\cL^* & \leq \gamma^k \left(\cL(\vtheta_0) - \cL^*\right) + \eta (1-\beta_1)\sum_{i=1}^{k}X_i  \sum_{j=i}^{k}\gamma^{k-j}\beta_1^{j-i} + C_{3} \eta
    \end{align*}
    for some constant $C_3$.
\end{lemma}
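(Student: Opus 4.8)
The plan is to turn the Descent Lemma (\Cref{descent-lemma}) into a one–step geometric recursion for $a_k:=\cL(\vtheta_k)-\cL^*$ and then unroll it, matching the result against the closed form claimed above. Write $h_i:=\inp{\nabla\cL(\vtheta_{i-1})}{\mU_i}$ and note $a_k\ge 0$. Substituting the decomposition of \Cref{descent-direction}, namely $h_i=\nabla\cL(\vtheta_{i-1})^\top S(\tilde\vv_i)\nabla\cL(\vtheta_{i-1})-Y_i-X_i$, into \Cref{descent-lemma} gives $a_k-a_{k-1}\le C_2\eta^2-\eta(1-\beta_1)\sum_{i=1}^k\beta_1^{k-i}\bigl(\nabla\cL(\vtheta_{i-1})^\top S(\tilde\vv_i)\nabla\cL(\vtheta_{i-1})-Y_i-X_i\bigr)$. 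Using $S(\tilde\vv_i)\succeq\frac{1}{R_0}\mI$ and the $\mu$-PL condition at $\vtheta_{i-1}$, each quadratic term is at least $\frac{2\mu}{R_0}a_{i-1}\ge 0$; the summands with $i<k$ carry a non-positive coefficient, so I discard all of them and keep only $i=k$, obtaining $-\eta(1-\beta_1)\sum_i\beta_1^{k-i}\nabla\cL(\vtheta_{i-1})^\top S(\tilde\vv_i)\nabla\cL(\vtheta_{i-1})\le-\frac{2\eta\mu(1-\beta_1)}{R_0}a_{k-1}$. For the $Y$ terms, $\abs{Y_i}\le C_{1a}\lnormtwo{\nabla\cL(\vtheta_{i-1})}\eta^2$ and $\lnormtwo{\nabla\cL(\vtheta_{i-1})}$ is uniformly bounded by a constant $R$ under \Cref{amp:gradient-bound}, whence $\eta(1-\beta_1)\sum_i\beta_1^{k-i}\abs{Y_i}\le C_{1a}R\eta^3$. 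Collecting terms, for all sufficiently small $\eta$ there is a constant $C$ (e.g. $C:=C_2+C_{1a}R$) such that $a_k\le\gamma\,a_{k-1}+\eta(1-\beta_1)\sum_{i=1}^k\beta_1^{k-i}X_i+C\eta^2$, where $\gamma=1-\frac{2\eta\mu(1-\beta_1)}{R_0}\in(0,1)$.

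Next I would fix any constant $C_3\ge\frac{CR_0}{2\mu(1-\beta_1)}$ — legitimate because $1-\beta_1\ge 0.1$ by \Cref{amp:beta1-bound} — so that $C\eta^2\le C_3\eta(1-\gamma)$, and then prove the lemma by induction on $k$. Write $\Psi_k:=\gamma^k a_0+\eta(1-\beta_1)\sum_{i=1}^k X_i\bigl(\sum_{j=i}^k\gamma^{k-j}\beta_1^{j-i}\bigr)+C_3\eta$ for the claimed upper bound. The geometric identity $\sum_{j=i}^k\gamma^{k-j}\beta_1^{j-i}=\gamma\sum_{j=i}^{k-1}\gamma^{k-1-j}\beta_1^{j-i}+\beta_1^{k-i}$ for $i\le k-1$ (the sum being $1$ when $i=k$) gives $\Psi_k=\gamma\Psi_{k-1}+\eta(1-\beta_1)\sum_{i=1}^k\beta_1^{k-i}X_i+C_3\eta(1-\gamma)$, with $\Psi_0=a_0+C_3\eta\ge a_0$. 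The base case $a_0\le\Psi_0$ is immediate; assuming $a_{k-1}\le\Psi_{k-1}$, the one-step recursion together with $C\eta^2\le C_3\eta(1-\gamma)$ and $\gamma\ge 0$ yields $a_k\le\gamma\Psi_{k-1}+\eta(1-\beta_1)\sum_i\beta_1^{k-i}X_i+C_3\eta(1-\gamma)=\Psi_k$, closing the induction.

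The one genuinely delicate point is the handling of the weighted sum over all past steps in \Cref{descent-lemma}: because that per-step bound entangles the current gradient with every earlier momentum term through the weights $\beta_1^{k-i}$, one cannot directly invoke $\mu$-PL at a single iterate. The resolution is the observation that, after lower-bounding the quadratics via $S\succeq\frac{1}{R_0}\mI$ and $\mu$-PL, the extra terms with $i<k$ are non-negative and may simply be dropped, while the single retained term ($i=k$) already supplies exactly the $\gamma$-contraction needed to match the target. The remainder is routine bookkeeping: confirming $C_2,C_{1a},R_0,R$ are independent of $\eta$ and $k$ so that $C$ and $C_3$ are honest constants, bounding the $Y$ contribution at order $\eta^3$, and checking $\gamma\in(0,1)$ for small $\eta$.
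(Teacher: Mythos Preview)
Your proposal is correct and follows essentially the same route as the paper: plug \Cref{descent-direction} into \Cref{descent-lemma}, use $S(\tilde\vv_i)\succeq\tfrac{1}{R_0}\mI$ and $\mu$-PL to lower-bound each quadratic by $\tfrac{2\mu}{R_0}a_{i-1}$, drop the (non-negative) contributions with $i<k$ to get the one-step contraction $a_k\le\gamma a_{k-1}+\eta(1-\beta_1)\sum_i\beta_1^{k-i}X_i+C\eta^2$, and then turn this into the stated closed form with $C_3=\tfrac{CR_0}{2\mu(1-\beta_1)}$. The only cosmetic difference is that the paper unrolls the recursion directly and swaps the double sum at the end, while you phrase the same unrolling as an induction on $\Psi_k$ via the identity $\sum_{j=i}^k\gamma^{k-j}\beta_1^{j-i}=\gamma\sum_{j=i}^{k-1}\gamma^{k-1-j}\beta_1^{j-i}+\beta_1^{k-i}$; both are the same computation.
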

\begin{proof}
    We start from \cref{descent-lemma} and plug in \cref{descent-direction}:\begin{align*}
    \cL(\vtheta_{k}) - \cL(\vtheta_{k-1}) &\leq C_2 \eta^2 -\eta (1-\beta_1)\sum_{i=1}^{k} \beta_1^{k-i}\left\langle \nabla\cL(\vtheta_{i-1}), \mU_i \right\rangle \\
    & = C_2 \eta^2 - \eta (1-\beta_1) \sum_{i=1}^{k} \beta_1^{k-i} \left(\nabla\cL\left(\vtheta_{i-1}\right)^\top S(\tilde{\vv}_i) \nabla\cL\left(\vtheta_{i-1}\right) - Y_i - X_i\right).
    \end{align*}
    Since $\left|Y_i\right| \leq C_{1a}\lnormtwo{\nabla\cL(\vtheta_{i-1})}\cdot \eta^2$ for every $i$, the effect of $Y$ is negligible: \begin{align*}
        \left|\eta(1-\beta_1)\sum_{i=1}^{k}\beta_1^{k-i}Y_i\right| \leq C_{1a} \eta^3\cdot \max_{i=0}^{k}\left\{\lnormtwo{\nabla\cL(\vtheta_{i-1})}\right\} = o(\eta^2),
    \end{align*}
    and we can absorb it into the $C_2 \eta^2$ term to write out that 
    \begin{align*}
    \cL(\vtheta_{k}) - \cL(\vtheta_{k-1}) \leq \tilde{C}_3 \eta^2 - \eta (1-\beta_1) \sum_{i=1}^{k} \beta_1^{k-i} \left(\nabla\cL\left(\vtheta_{i-1}\right)^\top S(\tilde{\vv}_i) \nabla\cL\left(\vtheta_{i-1}\right) - X_i\right)
    \end{align*}
    for some constant $\tilde{C}_3$.
    Note that $S(\tilde{\vv}_i) \succeq \frac{1}{R_0}$, so $\nabla\cL\left(\vtheta_{i-1}\right)^\top S(\tilde{\vv}_i) \nabla\cL\left(\vtheta_{i-1}\right) \geq \frac{1}{R_0}\lnormtwo{\nabla\cL\left(\vtheta_{i-1}\right)}^2$ for any $i$, hence\begin{align}\label{equ:loss-decrease}
        \cL(\vtheta_{k}) - \cL(\vtheta_{k-1}) \leq \tilde{C}_3 \eta^2 - \eta (1-\beta_1) \sum_{i=1}^{k} \beta_1^{k-i} \left(\frac{1}{R_0}\lnormtwo{\nabla\cL\left(\vtheta_{i-1}\right)}^2 - X_i\right).
    \end{align}
    Combining with the $\mu$-PL property $\lnormtwo{\nabla\cL\left(\vtheta_{i-1}\right)}^2 \geq 2\mu\left(\cL(\vtheta_{i-1}) - \cL^* \right)$, we have
    \begin{align*}
        \cL(\vtheta_{k}) - \cL^* &\leq  \tilde{C}_3 \eta^2 + \cL(\vtheta_{k-1}) - \cL^* - \frac{2\eta\mu (1-\beta_1)}{R_0} \sum_{i=1}^{k} \beta_1^{k-i} \left(\cL(\vtheta_{i-1}) - \cL^* \right) \\
        & \quad + \eta(1-\beta_1)\sum_{i=1}^{k}\beta_1^{k-i}X_i \\
        &\leq \tilde{C}_3 \eta^2 + \left(1 - \frac{2\eta\mu (1-\beta_1)}{R_0}\right) \left(\cL(\vtheta_{k-1})-\cL^*\right) + \eta(1-\beta_1)\sum_{i=1}^{k}\beta_1^{k-i}X_i \\
        & = \tilde{C}_3 \eta^2 + \gamma \left(\cL(\vtheta_{k-1})-\cL^*\right) + \eta(1-\beta_1)\sum_{i=1}^{k}\beta_1^{k-i}X_i. 
    \end{align*}
    Note that we can expand the $\cL(\vtheta_{k-1})-\cL^*$ term iteratively to obtain a generic formula for $\cL(\vtheta_k) - \cL^*$:\begin{align*}
        \cL(\vtheta_k) - \cL^* & \leq \gamma \left(\cL(\vtheta_{k-1}) - \cL^*\right) + \eta(1-\beta_1)\sum_{i=1}^{k}\beta_1^{k-i}X_i + \tilde{C}_3 \eta^2 \\
        & \leq \gamma^k \left(\cL(\vtheta_0) - \cL^*\right) + \eta (1-\beta_1)\sum_{j=1}^{k}\gamma^{k-j}\sum_{i=1}^{j}\beta_1^{j-i}X_i + \sum_{j=1}^{k}\gamma^{k-j}\tilde{C}_3 \eta^2 \\
        &  \leq \gamma^k \left(\cL(\vtheta_0) - \cL^*\right) + \eta (1-\beta_1)\sum_{i=1}^{k}X_i  \sum_{j=i}^{k}\gamma^{k-j}\beta_1^{j-i} + C_{3} \eta,
    \end{align*}
    where $C_{3} = \tilde{C}_3 \cdot \frac{R_0}{2\mu(1-\beta_1)}$.
\end{proof}

\begin{corollary}\label{cor:nablaL-large}
    Let $\cL$ satisfy \cref{amp:smooth}. There exists a constant $\bar{C}$ independent of $\cL$, such that $\forall k > 0$, if \begin{align*}
        \normtwo{\nabla \cL(\vtheta_{k-1})} > \bar{C},
    \end{align*}
    then with sufficiently small $\eta$, we have \begin{align*}
        \cL(\vtheta_k) < \cL(\vtheta_{k-1}).
    \end{align*}
\end{corollary}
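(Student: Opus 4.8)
}
The plan is to \emph{not} attempt a one‑step argument: the smoothness bound $\cL(\vtheta_k)-\cL(\vtheta_{k-1})\le-\eta\langle\nabla\cL(\vtheta_{k-1}),\vu_k\rangle+\tfrac{\rho\eta^2}{2}\lnormtwo{\vu_k}^2$ cannot be controlled by a large current gradient alone, because $\vu_k=S(\vv_k)\vm_k$ carries the history through the momentum $\vm_k$, which may point away from $-\nabla\cL(\vtheta_{k-1})$. Instead I would reuse the accumulated descent inequality \eqref{equ:loss-decrease} established inside the proof of \Cref{descent-lemma-exp}. Its derivation from \Cref{descent-lemma} and \Cref{descent-direction} uses only $S(\tilde\vv_i)\succeq\tfrac1{R_0}\mI$ and the a.s.\ bound $\lvert Y_i\rvert\le C_{1a}R\eta^2$ (which in turn uses $\lnormtwo{\nabla\cL}\le R$ from \Cref{amp:gradient-bound}), and never invokes the $\mu$-PL condition; hence it is available here. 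It reads, for every $k\ge1$,
\[
\cL(\vtheta_k)-\cL(\vtheta_{k-1})\le\tilde C_3\eta^2-\eta(1-\beta_1)\sum_{i=1}^k\beta_1^{k-i}\Bigl(\tfrac1{R_0}\lnormtwo{\nabla\cL(\vtheta_{i-1})}^2-X_i\Bigr),
\]
where $\lvert X_i\rvert\le C_{1b}\lnormtwo{\nabla\cL(\vtheta_{i-1})}$ a.s.\ and $C_{1b}$ is independent of $\cL$ (as noted in \Cref{descent-direction}).

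Next I would apply Young's inequality \emph{deterministically} — we may not rely on $\E_{i-1}[X_i]=0$, since the claim is an almost-sure statement on the event $\{\lnormtwo{\nabla\cL(\vtheta_{k-1})}>\bar C\}$. From $X_i\le C_{1b}\lnormtwo{\nabla\cL(\vtheta_{i-1})}\le\tfrac1{2R_0}\lnormtwo{\nabla\cL(\vtheta_{i-1})}^2+\tfrac{R_0C_{1b}^2}{2}$ we get, for every $i$, $\tfrac1{R_0}\lnormtwo{\nabla\cL(\vtheta_{i-1})}^2-X_i\ge\tfrac1{2R_0}\lnormtwo{\nabla\cL(\vtheta_{i-1})}^2-\tfrac{R_0C_{1b}^2}{2}\ge-\tfrac{R_0C_{1b}^2}{2}$. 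Retaining the quadratic part only for $i=k$, discarding it (being non-negative) for $i<k$, and using $\sum_{i=1}^k\beta_1^{k-i}\le\tfrac1{1-\beta_1}$, the sum is bounded below by $\tfrac1{2R_0}\lnormtwo{\nabla\cL(\vtheta_{k-1})}^2-\tfrac{R_0C_{1b}^2}{2(1-\beta_1)}$, so that
\[
\cL(\vtheta_k)-\cL(\vtheta_{k-1})\le\tilde C_3\eta^2-\frac{\eta(1-\beta_1)}{2R_0}\lnormtwo{\nabla\cL(\vtheta_{k-1})}^2+\frac{\eta R_0C_{1b}^2}{2}.
\]

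Finally I would take $\bar C:=\sqrt{\tfrac{R_0^2C_{1b}^2}{1-\beta_1}+1}$, which is a constant independent of $\cL$ because $R_0$ and $C_{1b}$ are and $1-\beta_1\ge0.1$ by \Cref{amp:beta1-bound}. On the event $\lnormtwo{\nabla\cL(\vtheta_{k-1})}>\bar C$ we have $\lnormtwo{\nabla\cL(\vtheta_{k-1})}^2>\tfrac{R_0^2C_{1b}^2}{1-\beta_1}+1$, so the last display gives $\cL(\vtheta_k)-\cL(\vtheta_{k-1})<\tilde C_3\eta^2-\tfrac{\eta(1-\beta_1)}{2R_0}=\eta\bigl(\tilde C_3\eta-\tfrac{1-\beta_1}{2R_0}\bigr)<0$ once $\eta<\tfrac{1-\beta_1}{2R_0\tilde C_3}$, which proves the corollary. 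The only delicate point is the requirement that $\bar C$ not depend on $\cL$: this is precisely why the linear‑in‑gradient noise term $X_i$ must be absorbed into the quadratic $\lnormtwo{\nabla\cL}^2$ via Young's inequality — leaving only the $\cL$-independent residual $\tfrac{R_0C_{1b}^2}{2(1-\beta_1)}$ — rather than crudely bounding $\lvert X_i\rvert$ or $\lnormtwo{\nabla\cL(\vtheta_{i-1})}$ by constants that would involve the smoothness constant of $\cL$.
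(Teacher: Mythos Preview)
Your proposal is correct and follows essentially the same route as the paper's proof: both start from \eqref{equ:loss-decrease} (which indeed does not use the PL condition), isolate the $i=k$ term, and absorb the linear $X_i$ into the quadratic gradient term via an AM--GM/Young-type inequality so that the residual depends only on $(R_0,C_{1b},\beta_1)$. The only differences are cosmetic: the paper completes the square directly via $\tfrac1{R_0}G^2-C_{1b}G\ge-\tfrac{R_0C_{1b}^2}{4}$ and keeps the $i=k$ term as $\tfrac1{R_0}G_{k-1}^2-C_{1b}G_{k-1}$, whereas you apply Young uniformly and then extract the quadratic part, leading to a slightly different (but equally valid) explicit choice of $\bar C$.
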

\begin{proof}
    Note that \Cref{equ:loss-decrease} can be obtained without PL condition: \begin{align*}
        \cL(\vtheta_{k}) - \cL(\vtheta_{k-1}) \leq \tilde{C}_3 \eta^2 - \eta (1-\beta_1) \sum_{i=1}^{k} \beta_1^{k-i} \left(\frac{1}{R_0}\lnormtwo{\nabla\cL\left(\vtheta_{i-1}\right)}^2 - X_i\right).
    \end{align*}
    From \cref{descent-direction}, $\left|X_i\right| \leq C_{1b}\lnormtwo{\nabla \cL(\vtheta_{i-1})}, \forall i \leq k$ where $C_{1b}$ is a constant independent of $\cL$. So
\begin{align*}
\frac{1}{R_0}\lnormtwo{\nabla\cL(\vtheta_{i-1})}^2 - X_i
\;\ge\; \frac{1}{R_0}\lnormtwo{\nabla\cL(\vtheta_{i-1})}^2 - C_{1b}\lnormtwo{\nabla\cL(\vtheta_{i-1})}
\;\ge\; -\,\frac{R_0 C_{1b}^2}{4},
\end{align*}
where the last inequality uses $a^2 - 2ab \ge -b^2$ with $a=\lnormtwo{\nabla\cL(\vtheta_{i-1})}/\sqrt{R_0}$ and
$b=C_{1b}\sqrt{R_0}/2$. 
Let $G_{i-1}:=\lnormtwo{\nabla\cL(\vtheta_{i-1})}$, then
\begin{align*}
\sum_{i=1}^{k} \beta_1^{k-i}\left(\frac{1}{R_0}G_{i-1}^2 - X_i\right)
&\ge \left(\frac{1}{R_0}G_{k-1}^2 - C_{1b} G_{k-1}\right)
   - \frac{R_0 C_{1b}^2}{4}\sum_{i=1}^{k-1} \beta_1^{k-i} \\
&\ge \frac{1}{R_0}G_{k-1}^2 - C_{1b} G_{k-1}
   - \frac{R_0 C_{1b}^2}{4}\cdot\frac{\beta_1}{1-\beta_1}.
\end{align*}
As long as $\frac{1}{R_0}G_{k-1}^2 - C_{1b} G_{k-1}
   - \frac{R_0 C_{1b}^2}{4}\cdot\frac{\beta_1}{1-\beta_1} > 0$, a small $\eta$ can ensure the loss strictly decreases at this step $k$. Set 
\begin{align*}
    \bar{C} := \frac{R_0 C_{1b}}{2}\left(1 + \sqrt{\frac{1}{1-\beta_1}}\right),
\end{align*}
then any $G_{k-1} > \bar{C}$ meets this requirement. Moreover, $\bar{C}$ depends only on $(R_0,C_{1b},\beta_1)$ and is independent of $\cL$. This proves the corollary.
\end{proof}

\begin{lemma}\label{martingale-bound-by-L}Let $\cL$ satisfy \cref{amp:smooth}, and assume that $\mu$-PL condition is satisfied at all $\vtheta_k$ where $k \geq 0$. Let $k \leq K = \O(\mathrm{poly}(1/\eta))$
     and let \begin{align*}\psi: \left(\{0,1,\cdots,k-1\}\times (0,1)\right) \longrightarrow \R^+\end{align*} be a function. Let $\left\{X_i\right\}_{i=1}^{k}$ be any martingale difference sequence such that: \begin{enumerate}
         \item $X_i$ is $\mathcal{F}_{i}$-measurable and $\E_{i-1}[X_i] = 0$;
         \item $\left|X_i\right| \leq C_{1b}\lnormtwo{\nabla\cL(\vtheta_{i-1})}$ a.s.
     \end{enumerate} for any $i \in [k]$. If for any $i \in [k]$ and $\delta \in (0, 1)$, it holds with probability $1-\delta$ that $$\cL(\vtheta_{i-1})-\cL^* \leq \psi(i, \delta),$$
then $\forall \delta \in (0, 1)$, with probability $1-\delta$, we have $\cL(\vtheta_{i-1})-\cL^* \leq \psi\left(i, \frac{\delta}{2k}\right)$ for all $i \in [k]$, and that

$$\left|\sum_{i=1}^{k}\gamma^{k-i}X_i\right| \leq C_4\sqrt{\sum_{i=1}^{k}\gamma^{2k-2i} \psi\left(i,\frac{\delta}{2k}\right)\log\frac{4}{\delta}}$$
for some constant $C_4$.
\end{lemma}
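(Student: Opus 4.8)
The plan is to replace the weighted martingale sum by one with a \emph{deterministic} per-step bound, apply the Azuma--Hoeffding inequality, and then transfer the conclusion back via a union bound over a predictable ``good event''. Throughout I would write $\delta' := \frac{\delta}{2k}$, which lies in $(0,1)$ since $\delta\in(0,1)$ and $k\ge 1$.

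First I would convert the increment bound into the right currency. By \cref{amp:smooth}, $\cL$ is $\rho$-smooth and bounded below, so the standard smoothness inequality $\lnormtwo{\nabla\cL(\vtheta)}^2 \le 2\rho\,(\cL(\vtheta)-\cL^*)$ holds for every $\vtheta\in\R^d$ (take one gradient step of size $1/\rho$ and use $\cL\ge\cL^*$). Combined with hypothesis~2 of the lemma, this gives $\abs{X_i}\le C_{1b}\sqrt{2\rho\,(\cL(\vtheta_{i-1})-\cL^*)}$ almost surely. Next, applying a union bound to the $k$ events $\Pr[\cL(\vtheta_{i-1})-\cL^*>\psi(i,\delta')]\le\delta'$ (each an instance of the lemma's hypothesis with parameter $\delta'$) shows that $E := \{\,\forall i\in[k]:\ \cL(\vtheta_{i-1})-\cL^*\le\psi(i,\delta')\,\}$ satisfies $\Pr[E]\ge 1-\delta/2$, which is already the first claimed conclusion.

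Then I would handle the randomness of this increment bound by a predictable clipping trick. Define $\chi_i := \mathds{1}\{\,\forall j\le i:\ \cL(\vtheta_{j-1})-\cL^*\le\psi(j,\delta')\,\}$; since $\vtheta_{j-1}$ is $\mathcal{F}_{j-1}$-measurable and $\mathcal{F}_{j-1}\subseteq\mathcal{F}_{i-1}$ for $j\le i$, the indicator $\chi_i$ is $\mathcal{F}_{i-1}$-measurable, hence predictable. Set $Y_i := \gamma^{k-i}\chi_i X_i$; then $\{Y_i\}_{i=1}^k$ is a martingale difference sequence ($\E_{i-1}[Y_i]=\gamma^{k-i}\chi_i\E_{i-1}[X_i]=0$), and on $\{\chi_i=1\}$ the bound above yields $\abs{Y_i}\le\gamma^{k-i}C_{1b}\sqrt{2\rho\,\psi(i,\delta')}=:c_i$, while $Y_i=0$ on $\{\chi_i=0\}$, so $\abs{Y_i}\le c_i$ with $c_i$ \emph{deterministic}. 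The Azuma--Hoeffding inequality then gives, with probability at least $1-\delta/2$,
\[
\abs{\sum_{i=1}^k Y_i}\ \le\ \sqrt{2\Bigl(\sum_{i=1}^k c_i^2\Bigr)\log\tfrac{4}{\delta}}\ =\ 2C_{1b}\sqrt{\rho}\,\sqrt{\sum_{i=1}^k\gamma^{2(k-i)}\,\psi(i,\delta')\,\log\tfrac{4}{\delta}}.
\]

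Finally I would combine the two events by a union bound: with probability at least $1-\delta$, both $E$ and the Azuma bound hold, and on $E$ every $\chi_i=1$, so $\sum_i Y_i=\sum_i\gamma^{k-i}X_i$; substituting $\delta'=\delta/(2k)$ then gives exactly the stated inequality with $C_4:=2C_{1b}\sqrt{\rho}$, which is a constant (recall $C_{1b}$ is independent of $\cL$). I expect the main obstacle to be precisely the mismatch between the \emph{random} increment bound $C_{1b}\lnormtwo{\nabla\cL(\vtheta_{i-1})}$ and the deterministic (predictable) bound that Azuma--Hoeffding requires; the clipping device is what resolves it, and the one delicate point is verifying that $\chi_i$ is genuinely $\mathcal{F}_{i-1}$-measurable, which rests on $\cL(\vtheta_{i-1})$ being $\mathcal{F}_{i-1}$-measurable.
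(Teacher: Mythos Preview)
Your proposal is correct and follows essentially the same approach as the paper: truncate the martingale increments via a predictable indicator to obtain deterministic per-step bounds, apply Azuma--Hoeffding, and combine with a union bound over the $k$ loss events. The only cosmetic difference is that the paper clips with the single-step indicator $\mathds{1}\{\cL(\vtheta_{i-1})-\cL^*\le\psi(i,\delta')\}$ rather than your running-intersection $\chi_i$, but both agree on the good event $E$ and yield the same constant $C_4=2C_{1b}\sqrt{\rho}$.
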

\begin{remark}
    The $\{X_i\}$ here may not necessarily equal the $\{X_i\}$ defined in \cref{descent-direction}; we just make it general to benefit future steps. In fact, when we leverage this lemma later, we will multiply that of \cref{descent-direction} by some scalar $\in (0, 1)$.
\end{remark}  
\begin{proof}
    Note that $\sum_{i=1}^{k}\gamma^{k-i} X_i$ is a sum of martingale differences. Moreover, since $\cL$ is $\rho$-smooth and $\exists C_{1b}$ s.t. every $|X_i|$ is bounded by $C_{1b}\lnormtwo{\nabla \cL(\vtheta_{i-1})}$ (\cref{descent-direction}), we have 
    \begin{align*}
        \left|X_i\right| &\leq C_{1b}\lnormtwo{\nabla \cL(\vtheta_{i-1})} \\
        &\leq C_{1b}\sqrt{2\rho\left(\cL(\vtheta_{i-1})-\cL^*\right)} \\
        &\leq C_{1b} \sqrt{2\rho \psi(i,\delta')} \quad \text{ if }\cL(\vtheta_{i-1})-\cL^* \leq \psi(i, \delta').
    \end{align*}
    Since $\cL(\vtheta_{i-1})-\cL^* \leq \psi(i, \delta')$ holds with probability $1-\delta'$ instead of probability $1$, we create a new martingale difference sequence that masks out all the positions that exceed the bound. Specifically, we define $X'_{i,\delta'}$ as:
    $$X'_{i,\delta'} = \begin{cases}
        X_{i} & \text{ if }\cL(\vtheta_{i-1})-\cL^* \leq \psi(i, \delta'), \\
        0 & \text{ else.}
    \end{cases}$$
    This ensures that $\left|X'_{i,\delta'} \right| \leq C_{1b} \sqrt{2\rho \psi(i,\delta')}$ a.s. Then Azuma-Hoeffding's inequality gives us that for any $\epsilon'$, 
    \begin{align*}
        \mathbb{P}\left[\left|\sum_{i=1}^{k}\gamma^{k-i}X'_{i,\delta'}\right| \geq \epsilon'\right] \leq 2\exp\left(\frac{-\epsilon'^2}{4\sum_{i=1}^{k}C_{1b}^2\gamma^{2k-2i}\rho \psi(i,\delta')}\right),
    \end{align*}
    denoting the right hand side as $\frac{\delta}{2}$ gives that for any $\delta$, with probability $1-\frac{\delta}{2}$, 
    \begin{align*}
        \left|\sum_{i=1}^{k}\gamma^{k-i}X'_{i,\delta'}\right| \leq \sqrt{4\sum_{i=1}^{k}C_{1b}^2\gamma^{2k-2i}\rho \psi(i,\delta')\log\frac{4}{\delta}}.
    \end{align*}
    Let $\delta' = \frac{\delta}{2k}$, by union bound, $\cL(\vtheta_{i-1})-\cL^* \leq \psi\left(i, \frac{\delta}{2k}\right)$ for all $i \in [k]$ with probability $1-\frac{\delta}{2}$, which also implies $X'_{i,\delta'} = X_i$ for all $i \in [k]$ . So with probabilty $1-\delta$, the following two statements hold simultaneously for all $i \in [k]$:
    $$\cL(\vtheta_{i-1})-\cL^* \leq \psi\left(i, \frac{\delta}{2k}\right)$$
    and
    \begin{align*}
        \left|\sum_{i=1}^{k}\gamma^{k-i}X_i\right| & \leq \sqrt{4\sum_{i=1}^{k}C_{1b}^2\gamma^{2k-2i}\rho \psi(i,\delta')\log\frac{4}{\delta}} \\
        &= C_4\sqrt{\sum_{i=1}^{k}\gamma^{2k-2i} \psi\left(i,\frac{\delta}{2k}\right)\log\frac{4}{\delta}},
    \end{align*}
    where $C_4 = 2C_{1b}\sqrt{\rho}$.
\end{proof}

\begin{lemma}[Convergence Bound of the AGM Framework]\label{convergence}
Let $\cL$ satisfy \cref{amp:smooth}, and assume that $\mu$-PL condition is satisfied at all $\vtheta_k$ where $k \geq 0$. Let $\eta$ be a small learning rate satisfying $\frac{\beta_1}{\gamma} = \beta_1 / (1-\frac{2\eta\mu(1-\beta_1)}{R_0})\leq 0.95$. Let $K = \O(\mathrm{poly}(1/\eta))$. Under mild restrictions on $K$, for any $k \leq K$, $\delta \in (0,1)$, it holds with probability at least $1-\delta$ that $$\cL(\vtheta_{k})-\cL^* \leq C_{5a} \cdot \gamma^k \left(\cL(\vtheta_0) - \cL^*\right) + C_{5b} \cdot \eta \log \frac{K}{\delta}$$
    for some constants $C_{5a}$, $C_{5b}$.
\end{lemma}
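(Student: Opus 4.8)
The plan is to bootstrap \Cref{descent-lemma-exp} against \Cref{martingale-bound-by-L} by an induction on $k$. Write $a_k := \cL(\vtheta_k)-\cL^*$; \Cref{descent-lemma-exp} gives $a_k \le \gamma^k a_0 + \eta(1-\beta_1)\sum_{i=1}^k X_i\sum_{j=i}^k \gamma^{k-j}\beta_1^{\,j-i} + C_3\eta$, with $\{X_i\}$ the martingale difference sequence from \Cref{descent-direction} satisfying $|X_i|\le C_{1b}\|\nabla\cL(\vtheta_{i-1})\|$ and $\E_{i-1}[X_i]=0$. First I would collapse the double sum into one geometrically weighted martingale sum: the inner sum equals $\gamma^{k-i}\sum_{l=0}^{k-i}(\beta_1/\gamma)^l$, a bounded geometric series since $\beta_1/\gamma \le 0.95$, so the whole term becomes $\eta\sum_{i=1}^k \gamma^{k-i}Y_i^{(k)}$ with $Y_i^{(k)} := c_{k,i}X_i$ and $c_{k,i} := (1-\beta_1)\sum_{l=0}^{k-i}(\beta_1/\gamma)^l$ a deterministic scalar bounded by an $\eta$-independent constant (which tends to a value $\le 1$ as $\eta\to0$). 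Because $c_{k,i}$ is deterministic, $\{Y_i^{(k)}\}_{i=1}^k$ is still a martingale difference sequence bounded by $\mathcal{O}(1)\cdot C_{1b}\|\nabla\cL(\vtheta_{i-1})\|$, so \Cref{martingale-bound-by-L} applies to it (only the constant $C_4$ changes by a factor).

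Next I would guess the fixed-point profile $\psi^*(k,\delta) := C_{5a}\gamma^k a_0 + C_{5b}\,\eta\log\tfrac{K}{\delta}$ and verify it reproduces itself. Feeding \Cref{martingale-bound-by-L} into the reindexed form of \Cref{descent-lemma-exp}, on a $(1-\delta)$-probability event one has
\[
a_k \;\le\; \gamma^k a_0 \;+\; \mathcal{O}(1)\cdot\eta\sqrt{\textstyle\sum_{i=1}^k \gamma^{2k-2i}\,\psi\big(i,\tfrac{\delta}{2k}\big)\,\log\tfrac{4}{\delta}} \;+\; C_3\eta ,
\]
valid whenever $\psi$ is an a priori bound, i.e.\ $\mathbb{P}[a_{i-1}>\psi(i,\delta)]\le\delta$ for all $i\le k$. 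Substituting $\psi(i,\delta') = \psi^*(i-1,\delta')$ and using $\sum_{i=1}^k\gamma^{2k-i}\le\gamma^k/(1-\gamma)$, $\sum_{i=1}^k\gamma^{2k-2i}\le 1/(1-\gamma)$ together with $1-\gamma=\Theta(\eta)$, the radicand becomes $\mathcal{O}\!\big(\gamma^k a_0/\eta + \log(K/\delta)\big)$, so after multiplying by $\eta$ the term splits into $\mathcal{O}\!\big(\sqrt{\eta\,\gamma^k a_0\log(1/\delta)}\big)$ and $\mathcal{O}(\eta\log(K/\delta))$. The one delicate piece is the first; I would handle it by AM--GM, $\sqrt{(\gamma^k a_0)(\eta\log\tfrac1\delta)}\le \tfrac{\lambda}{2}\gamma^k a_0 + \tfrac{1}{2\lambda}\eta\log\tfrac1\delta$, pushing one half into the $\gamma^k a_0$ term and the other into the $\eta\log$ term. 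Using the "mild restrictions on $K$" (e.g.\ $K\ge4$ and $K=\mathrm{poly}(1/\eta)$, so that $\log\tfrac{2kK}{\delta}\le 3\log\tfrac K\delta$ and $\log\tfrac4\delta\le\log\tfrac K\delta$) and collecting constants, the bound takes the form $a_k\le(1+\mathcal{O}(\lambda))\gamma^k a_0 + \mathcal{O}\!\big(\tfrac1\lambda+\sqrt{C_{5b}}+C_3\big)\eta\log\tfrac K\delta$; taking $\lambda$ a small constant and then $C_{5a},C_{5b}$ large enough makes the right-hand side $\le\psi^*(k,\delta)$.

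The induction itself is then clean and needs no separate a priori bound: the base case $a_0\le C_{5a}a_0\le\psi^*(0,\delta)$ holds as $C_{5a}\ge1$, and in the step at $k$ the inductive hypotheses $\mathbb{P}[a_j>\psi^*(j,\delta')]\le\delta'$ for $j\le k-1$ are precisely the a priori bounds required to invoke \Cref{martingale-bound-by-L} at index $k$ (the shift $\psi(i,\cdot)=\psi^*(i-1,\cdot)$ realigns the indices, costing only the bounded factor $1/\gamma$), after which the self-reproduction computation gives $\mathbb{P}[a_k>\psi^*(k,\delta)]\le\delta$. This yields exactly $\cL(\vtheta_k)-\cL^*\le C_{5a}\gamma^k(\cL(\vtheta_0)-\cL^*)+C_{5b}\eta\log\tfrac K\delta$ with probability $\ge1-\delta$. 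I expect the main obstacle to be the fixed-point step: the geometric sums inject a $1/(1-\gamma)\sim1/\eta$ inside the square root, so only the AM--GM balancing of the cross term $\sqrt{\eta\,\gamma^k a_0\log(1/\delta)}$ keeps the recursion closing at the target $\mathcal{O}(\eta\log(K/\delta))$ rate, and checking that $C_{5a},C_{5b}$ can be chosen to satisfy both closure inequalities simultaneously (while $c_{k,i}$ and $1/\gamma$ stay bounded for small $\eta$) is where the care lies.
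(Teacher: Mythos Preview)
Your proposal is correct and follows essentially the same route as the paper: induction on $k$ with the ansatz $\psi^*(k,\delta)=C_{5a}\gamma^k D_0+C_{5b}\eta\log(K/\delta)$, collapsing the double sum via the geometric bound $\sum_{l}(\beta_1/\gamma)^l\le 20$, applying \Cref{martingale-bound-by-L} to the rescaled martingale differences, evaluating the two geometric sums under the radical using $1/(1-\gamma)=\Theta(1/\eta)$, splitting the cross term $\sqrt{\eta\gamma^kD_0\log(K/\delta)}$ by AM--GM, and closing the recursion by observing that the constraints on $C_{5a},C_{5b}$ involve only $\sqrt{C_{5a}},\sqrt{C_{5b}}$ on the right. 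Your explicit handling of the index shift $\psi(i,\cdot)=\psi^*(i-1,\cdot)$ (at the cost of a harmless $1/\gamma$ factor) is in fact slightly more careful than the paper, which silently identifies $\psi(i,\delta)$ with the bound on $a_{i-1}$.
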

\begin{proof}
    Denote $D_0 := \cL(\vtheta_0) - \cL^*$, and denote the bound with $1-\delta$ probability as $\psi(k, \delta) := \gamma
    ^k C_{5a}D_0 + C_{5b} \eta \log \frac{K}{\delta}$, where the constants $C_{5a}, C_{5b}$ will be specified by us later.
    We prove by induction. When $k=0$, the inequality  
$$C_{5a}D_0 + C_{5b} \eta \log \frac{K}{\delta} \geq D_0$$
holds trivially as long as $C_{5a} \geq 1$. Now assume that the statement holds for $0, 1, \cdots, k - 1$. From \cref{descent-lemma-exp}, we have
\begin{align*}
        \cL(\vtheta_{k})-\cL^* & \leq \gamma^k \left(\cL(\vtheta_0) - \cL^*\right) + \eta (1-\beta_1)\sum_{i=1}^{k}X_i  \sum_{j=i}^{k}\gamma^{k-j}\beta_1^{j-i} + C_{3} \eta.
\end{align*}
We can bound the coefficients by \begin{align*}
    \sum_{j=i}^{k}\gamma^{k-j}\beta_1^{j-i} &= \gamma^{k-i}\sum_{j=0}^{k-i}\left(\frac{\beta_1}{\gamma}\right)^j \\
    &\leq \gamma^{k-i} \cdot \frac{1}{1 - \frac{\beta_1}{\gamma}} \\
    &\leq 20 \gamma^{k-i},
\end{align*}
where the last inequality is due to the assumption $\frac{\beta_1}{\gamma} \leq 0.95$ in the statement.
Let $\tilde{X}_i := \frac{\sum_{j=i}^{k}\gamma^{k-j}\beta_1^{j-i}}{20\gamma^{k-i}}X_i$, then $\left\{\tilde{X}_i\right\}_{i=1}^{k}$ is also a martingale difference sequence and $$\left|\tilde{X}_i\right| \leq \left|X_i\right| \leq C_{1b}\lnormtwo{\nabla\cL(\vtheta_i)}\;\text{a.s.}$$
From \cref{martingale-bound-by-L}, with probability $1-\delta$, $\cL(\vtheta_{i-1})-\cL^* \leq \psi\left(i, \frac{\delta}{2k}\right)$ holds for all $i \in [k]$ and it also holds that
$$ \left|\sum_{i=1}^{k}\gamma^{k-i}\tilde{X}_i\right| \leq C_4\sqrt{\sum_{i=1}^{k}\gamma^{2k-2i} \psi\left(i,\frac{\delta}{2k}\right)\log\frac{4}{\delta}}.$$ 
The above arguments give 
\begin{align*}
    &\quad \eta (1-\beta_1)\sum_{i=1}^{k}X_i  \sum_{j=i}^{k}\gamma^{k-j}\beta_1^{j-i} \\ &\leq 20C_4\eta(1-\beta_1)\sqrt{\sum_{i=1}^{k}\gamma^{2k-2i} \psi\left(i,\frac{\delta}{2k}\right)\log\frac{4}{\delta}} \\
    &\leq 20C_4\eta(1-\beta_1)\sqrt{\sum_{i=1}^{k}\gamma^{2k-2i} \left(\gamma
    ^i C_{5a}D_0 + C_{5b} \eta \log \frac{2kK}{\delta}\right)\log\frac{4}{\delta}} \\
    &\leq 20C_4\eta(1-\beta_1)\sqrt{\sum_{i=1}^{k}\gamma^{2k-i} C_{5a}D_0 + \sum_{i=1}^{k}\gamma^{2k-2i} C_{5b}\eta\log\frac{2K^2}{\delta}}\cdot \sqrt{\log\frac{4}{\delta}} \\
    &\leq 20C_4\eta(1-\beta_1)\sqrt{\frac{\gamma^{k}C_{5a}D_0}{1-\gamma} + \frac{C_{5b}\eta}{1-\gamma^2}\log\frac{2K^2}{\delta}}\cdot \sqrt{\log\frac{4}{\delta}} \\
    &\leq 20C_4\eta(1-\beta_1)\left(\sqrt{\frac{\gamma^{k}C_{5a}D_0}{1-\gamma}} + \sqrt{\frac{C_{5b}\eta}{1-\gamma^2}\log\frac{2K^2}{\delta}} \right) \cdot \sqrt{\log\frac{4}{\delta}}.
\end{align*}
As long as $K \geq \max\{2\delta^2, 4\}$ (which is a mild restriction on $K$), we have $\log\frac{2K^2}{\delta}\log\frac{4}{\delta} \leq 3\log^2\frac{K}{\delta}$ and $\log\frac{4}{\delta} \leq \log{\frac{K}{\delta}}$. Plugging in $\frac{1}{1-\gamma} = \frac{R_0}{2\mu(1-\beta_1)}\cdot \frac{1}{\eta}$, we have
\begin{align*}
    &\quad \eta (1-\beta_1)\sum_{i=1}^{k}X_i  \sum_{j=i}^{k}\gamma^{k-j}\beta_1^{j-i} \\
    &\leq 20C_4(1-\beta_1)\left(\sqrt{\frac{C_{5a}R_0}{2\mu(1-\beta_1)}}\cdot \sqrt{\eta\gamma^{k}D_0\log\frac{K}{\delta}}+\eta\sqrt{\frac{3C_{5b}R_0}{2\mu(1-\beta_1)}\log^2\frac{K}{\delta}}\right) \\
    & \leq 10C_4 \sqrt{\frac{2C_{5a}R_0(1-\beta_1)}{\mu}}\cdot \sqrt{\eta\gamma^{k}D_0\log\frac{K}{\delta}} + 10C_4 \sqrt{\frac{6C_{5b}R_0(1-\beta_1)}{\mu}}\cdot \eta \log \frac{K}{\delta} \\
    &\leq C_{5c}\gamma^k D_0 + C_{5d}\eta\log \frac{K}{\delta},
\end{align*}
where \begin{align*}
    C_{5c} &=  5C_4\sqrt{2C_{5a}R_0(1-\beta_1)/\mu}, \\ C_{5d} &=  5C_4\sqrt{2C_{5a}R_0(1-\beta_1)/\mu} + 10C_4\sqrt{6C_{5b}R_0(1-\beta_1)/\mu}.\end{align*}Now as long as $K \geq e\delta$ (so that $\log\frac{K}{\delta}\geq 1$), we have 
\begin{align*}
     \cL(\vtheta_{k})-\cL^* & \leq \gamma^k \left(\cL(\vtheta_0) - \cL^*\right) + \eta (1-\beta_1)\sum_{i=1}^{k}X_i  \sum_{j=i}^{k}\gamma^{k-j}\beta_1^{j-i} + C_{3} \eta \\
     &\leq  \gamma^k D_0 + C_{5c}\gamma^k D_0 + C_{5d}\eta\log \frac{K}{\delta} + C_{3} \eta \\
     & \leq \left(C_{5c}+1\right)\gamma^k D_0 + \left(C_{5d}+C_3\right)\eta\log\frac{K}{\delta}.
\end{align*}
To complete the induction, we need $C_{5a}, C_{5b}$ satisfy
$$\left\{\begin{array}{lll}
    C_{5a} &\geq C_{5c}+1 &= 5C_4\sqrt{\frac{2C_{5a}D_0R_0(1-\beta_1)}{\mu}}+1,  \\
    C_{5b} &\geq C_{5d}+C_{3} &= 5C_4\sqrt{\frac{2C_{5a}D_0R_0(1-\beta_1)}{\mu}} + 10C_4\sqrt{\frac{6C_{5b}R_0(1-\beta_1)}{\mu}} + C_3. 
\end{array}\right.$$
Notice that the right-hand side grows at the rate of the square root of $C_{5a}$ and $C_{5b}$, so there must exist some feasible constants $C_{5a}$ and $C_{5b}$. Summarizing, under mild restrictions $K \geq \max\left\{2\delta^2, e\delta, 4\right\}$, the statement $\cL(\vtheta_{k})-\cL^* \leq \gamma
    ^k C_{5a}D_0 + C_{5b} \eta \log \frac{K}{\delta}$ holds with probability $1-\delta$, completing the induction.
\end{proof}

\begin{remark}\label{remark-0.95-mild}
    The assumption in the statement, $\frac{\beta_1}{\gamma} \leq 0.95$, is very mild since $\beta_1 \leq 0.9$ (\cref{amp:beta1-bound}) and $1-\gamma$ equals a constant multiple of $\eta$, so with small $\eta$ this condition is very easy to satisfy. Moreover, similar to \cref{remark-0.9-mild}, the assumed threshold $0.95$ can be replaced by any constant below $1$, and the order of the convergence rate will remain unaffected.
\end{remark}


\subsection{Proof of Convergence-Related Conclusions in \Aref{sec:formal-statement}}\label{app:proof-formal-statement}
\begin{proof}[\textbf{Proof of \cref{thm:formal-convergence-1}}]\label{proof-formal-main-result}
This is a direct corollary following from
 \Cref{convergence}. The loss function $\cL$ is global $\mu$-PL in this case. Setting $k=K$, letting $\gamma^{K} = \O(\eta)$ gives $K = \O\left(\frac{1}{\eta}\log\frac{1}{\eta}\right)$, completing the proof.
\end{proof}

Now we move on to prove the first part of \cref{thm:formal-main-result}. The main difficulty comes from the fact that $\cL$ is only guaranteed to satisfy $\mu$-PL condition within some neighborhood $\Gamma^{\epsilon_3}$; The iteration, once getting out of that neighborhood, cannot be characterized. Hence we need to bound the probability of that event.

The trick here is to construct a proxy loss function $\tilde{\cL}$ that, agrees with $\cL$ near $\Gamma$ but has a ``wall of quadratic functions'' upon $\cL$ further away. $\tilde{\cL}$ thus satisfies $(\mu, \bar{L})$-PL which allows us to use \cref{convergence}. If the losses at all steps are small, this ensures that the iteration never leaves $\Gamma^{\epsilon_1}$, where $\cL$ and $\tilde{\cL}$ are identical. We formalize this idea in the sequel.

\begin{lemma}[Tubular Neighborhood Theorem; Theorem~6.24 in \citet{lee2012introduction}]\label{lem:tube}
Let $\Gamma\subset\R^d$ satisfy \cref{amp:low-rank-manifold} (in particular, $\Gamma$ is a $\cC^\infty$ compact embedded submanifold).
Then there exists $\tau_{\Gamma}>0$ such that, writing $N\Gamma$ for the normal bundle,
\begin{align*}
V \;:=\; \{(p,\vu)\in N\Gamma : \normtwo{\vu}<\tau_{\Gamma}\},\qquad
E:V\to\R^d,\quad E(p,\vu)=p+\vu,
\end{align*}
the map $E$ is a diffeomorphism onto the open tube $U:=\Gamma^{\tau_{\Gamma}}$.
Consequently, the nearest–point projection $P:U\to\Gamma$ is well-defined and $C^\infty$, and every $\vtheta\in U$ can be written uniquely as
\begin{align*}
\vtheta \;=\; P(\vtheta)+\nu(\vtheta), \quad \nu(\vtheta)\in N_{P(\vtheta)}\Gamma.
\end{align*}
\end{lemma}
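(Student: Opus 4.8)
The plan is to obtain the statement from the classical Tubular Neighborhood Theorem together with a compactness argument that promotes it to a \emph{uniform}-radius, metric version. First I would recall that, by \cref{amp:low-rank-manifold}, $\Gamma$ is a $\cC^{\infty}$, compact, embedded $(d-m)$-dimensional submanifold of $\R^d$; hence Theorem~6.24 of \citet{lee2012introduction} applies and produces a tubular neighborhood, i.e., an open set $V_0\subseteq N\Gamma$ containing the zero section on which $E(p,\vu):=p+\vu$ restricts to a diffeomorphism onto an open neighborhood of $\Gamma$ in $\R^d$. Using compactness of $\Gamma$ one upgrades this to a uniform tube radius: cover $\Gamma$ by finitely many trivializing charts on which $V_0$ contains a disk bundle of some positive radius, and take the minimum of these radii to get $\tau_0>0$ with $E$ an injective immersion, hence a diffeomorphism onto its image, on $V:=\{(p,\vu)\in N\Gamma:\normtwo{\vu}<\tau_0\}$.

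Next I would identify the image $E(V)$ with a metric tube and pin down the nearest-point projection. The inclusion $E(V)\subseteq\Gamma^{\tau_0}$ is immediate because $\dist(p+\vu,\Gamma)\le\normtwo{\vu}$. For the reverse inclusion and the nearest-point property I would invoke the standard fact that the \emph{reach} of a compact $\cC^2$ submanifold is strictly positive, and that for any radius $\tau<\mathrm{reach}(\Gamma)$ every point $\vtheta$ with $\dist(\vtheta,\Gamma)<\tau$ has a \emph{unique} closest point $P(\vtheta)\in\Gamma$, with $\vtheta-P(\vtheta)\perp T_{P(\vtheta)}\Gamma$; this realizes such a $\vtheta$ as $E(P(\vtheta),\vtheta-P(\vtheta))$ and so shows $\Gamma^{\tau}\subseteq E(V)$ whenever $\tau\le\tau_0$. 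Setting $\tau_{\Gamma}:=\min\{\tau_0,\tfrac12\mathrm{reach}(\Gamma)\}>0$ then makes $E\colon V\to\Gamma^{\tau_{\Gamma}}$ a diffeomorphism.

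Smoothness and uniqueness are then formal: $P=\pi\circ E^{-1}$, where $\pi(p,\vu):=p$ is the smooth bundle projection, so $P$ is $\cC^{\infty}$ on $\Gamma^{\tau_{\Gamma}}$; putting $\nu(\vtheta):=\vtheta-P(\vtheta)$ gives the decomposition $\vtheta=P(\vtheta)+\nu(\vtheta)$ with $\nu(\vtheta)\in N_{P(\vtheta)}\Gamma$, and its uniqueness is exactly the injectivity of $E$ on $V$. The one place where genuine work is required, beyond quoting Lee, is the identification $E(V)=\Gamma^{\tau_{\Gamma}}$ and the claim that the normal-bundle projection agrees with the metric nearest-point projection: the tubular neighborhood theorem by itself supplies only \emph{some} open neighborhood of $\Gamma$, not a metric $\tau$-ball, and says nothing about minimal distance. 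Handling this cleanly relies on the positive-reach estimate, which ultimately comes from compactness of $\Gamma$ together with a uniform bound on its second fundamental form; I expect this to be the main obstacle, though it is entirely standard.
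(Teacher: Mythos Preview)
The paper does not give its own proof of this lemma: it simply cites Theorem~6.24 of \citet{lee2012introduction} and uses the statement as a black box. Your proposal is correct and in fact supplies more detail than the paper does, since you spell out the two refinements needed beyond Lee's bare tubular neighborhood theorem (uniform radius via compactness, and identification of $E(V)$ with the metric tube $\Gamma^{\tau_\Gamma}$ plus agreement of the bundle projection with the nearest-point projection via positive reach). There is nothing to compare against on the paper's side.
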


\begin{corollary}[Smooth distance and unit normal on the tube]\label{cor:dist-normal}
In the setting of \cref{lem:tube}, let $U:=\Gamma^{\tau_\Gamma}$ and write
$E^{-1}(\vtheta)=(P(\vtheta),\nu(\vtheta))$ on $U$. Define
\begin{align*}
  r(\vtheta):=\dist(\vtheta,\Gamma)=\normtwo{\nu(\vtheta)},\qquad
  \vn(\vtheta):=\frac{\nu(\vtheta)}{\normtwo{\nu(\vtheta)}} \quad (\vtheta\in U\setminus\Gamma).
\end{align*}
Then $r$ and $\vn$ are $\cC^\infty$  on $U\setminus\Gamma$, and
\begin{align}\label{eq:grad-r-equals-n}
  \nabla r(\vtheta)=\vn(\vtheta), \qquad\forall\vtheta\in U\setminus\Gamma.
\end{align}
\end{corollary}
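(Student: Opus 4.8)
The plan is to derive everything from the tubular–neighborhood diffeomorphism $E$ supplied by \Cref{lem:tube}. First I would note that $E^{-1}:U\to V$ is $\cC^\infty$, and that under the standard embedding of the normal bundle, $N\Gamma\hookrightarrow\Gamma\times\R^d\subseteq\R^d\times\R^d$, the two component maps $\vtheta\mapsto P(\vtheta)$ and $\vtheta\mapsto\nu(\vtheta)$ are $\cC^\infty$ on $U$; in particular $\nu:U\to\R^d$ is smooth. Since $\vtheta=P(\vtheta)+\nu(\vtheta)$ with $P(\vtheta)\in\Gamma$, uniqueness of this decomposition gives $\nu(\vtheta)=\vzero\iff\vtheta\in\Gamma$, so $\vtheta\mapsto\normtwo{\nu(\vtheta)}^2=\inner{\nu(\vtheta)}{\nu(\vtheta)}$ is a $\cC^\infty$ function on $U$ that is strictly positive on $U\setminus\Gamma$. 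Composing with the square root (which is smooth on $(0,\infty)$) shows $r=\sqrt{\normtwo{\nu}^2}$ is $\cC^\infty$ on $U\setminus\Gamma$, and then $\vn=\nu/r$ is a quotient of smooth maps with nonvanishing denominator, hence also $\cC^\infty$ on $U\setminus\Gamma$. The identity $r(\vtheta)=\dist(\vtheta,\Gamma)$ is immediate from the fact, asserted in \Cref{lem:tube}, that $P$ is the nearest–point projection: $\dist(\vtheta,\Gamma)=\normtwo{\vtheta-P(\vtheta)}=\normtwo{\nu(\vtheta)}$.

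For the gradient identity I would simply differentiate $r^2$. On $U\setminus\Gamma$, $r(\vtheta)^2=\normtwo{\vtheta-P(\vtheta)}^2$, so by the chain rule
\[
2\,r(\vtheta)\,\nabla r(\vtheta)=\nabla\bigl(\normtwo{\vtheta-P(\vtheta)}^2\bigr)=2\bigl(\mI-\partial P(\vtheta)\bigr)^{\top}\bigl(\vtheta-P(\vtheta)\bigr)=2\,\nu(\vtheta)-2\,\partial P(\vtheta)^{\top}\nu(\vtheta).
\]
The key step is $\partial P(\vtheta)^{\top}\nu(\vtheta)=\vzero$: because $P$ takes values in $\Gamma$, for every $\vw\in\R^d$ the vector $\partial P(\vtheta)\vw$ lies in $T_{P(\vtheta)}\Gamma$, which is orthogonal to $\nu(\vtheta)\in N_{P(\vtheta)}\Gamma$; hence $\inner{\partial P(\vtheta)^{\top}\nu(\vtheta)}{\vw}=\inner{\nu(\vtheta)}{\partial P(\vtheta)\vw}=0$ for all $\vw$. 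Therefore $2\,r(\vtheta)\,\nabla r(\vtheta)=2\,\nu(\vtheta)$, i.e.\ $\nabla r(\vtheta)=\nu(\vtheta)/r(\vtheta)=\vn(\vtheta)$, which is \eqref{eq:grad-r-equals-n}.

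The only delicate point is the smoothness bookkeeping in the first step — checking that $\nu$ is a genuine $\cC^\infty$ map into $\R^d$, which amounts to identifying the total space of $N\Gamma$ with an embedded submanifold of $\R^d\times\R^d$ and observing that the second–factor projection is smooth — together with the use, in the gradient step, of the orthogonal splitting $\R^d=T_{P(\vtheta)}\Gamma\oplus N_{P(\vtheta)}\Gamma$ and the differentiability of $P$ (both already provided by \Cref{lem:tube}). I do not expect a real obstacle here: the corollary is essentially a consequence of the diffeomorphism $E$ plus elementary calculus. If one preferred to avoid differentiating $P$, an alternative is to invoke the eikonal equation $\normtwo{\nabla r}\equiv1$ on $U\setminus\Gamma$ and the fact that $\nabla r$ is constant along each normal segment $t\mapsto P(\vtheta)+t\,\vn(\vtheta)$, forcing $\nabla r(\vtheta)=\vn(\vtheta)$; but the direct computation above is shorter and self-contained.
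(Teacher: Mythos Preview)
Your proposal is correct and follows essentially the same approach as the paper: differentiate $r^2=\normtwo{\vtheta-P(\vtheta)}^2$, use $\vtheta=P(\vtheta)+\nu(\vtheta)$ to get $\nabla\nu=I-\partial P$, and kill the $\partial P(\vtheta)^\top\nu(\vtheta)$ term via the orthogonality of $T_{P(\vtheta)}\Gamma$ and $N_{P(\vtheta)}\Gamma$. Your write-up is slightly more explicit on the smoothness bookkeeping, but the core computation is identical.
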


\begin{proof}By \cref{lem:tube}, $P$ and $\nu$ are $\cC^\infty$ on $U$, hence so are $r$ and $\vn$ on $U\setminus\Gamma$.
Set $g(\vtheta):=r(\vtheta)^2=\normtwo{\nu(\vtheta)}^2$. 
By the chain rule,
\begin{align*}
  \nabla g(\vtheta)
  \;=\; 2\,\big(\nabla \nu(\vtheta)\big)^\top \nu(\vtheta).
\end{align*}
From the identity $\vtheta=P(\vtheta)+\nu(\vtheta)$ we have \begin{align*}
  \nabla \nu(\vtheta)\;=\; I - \nabla P(\vtheta).
\end{align*}
Since $\nabla P(\vtheta)$ maps into $T_{P(\vtheta)}\Gamma$ and 
$\nu(\vtheta)\in N_{P(\vtheta)}\Gamma$, it follows that
\begin{align*}
  \big(\nabla P(\vtheta)\big)^\top \nu(\vtheta) \;=\; \vzero,
\end{align*}
hence
\begin{align*}
  \nabla g(\vtheta)
  \;=\; 2\,(I-\nabla P(\vtheta))^\top \nu(\vtheta)
  \;=\; 2\,\nu(\vtheta).
\end{align*}
Therefore, for $\vtheta\in U\setminus\Gamma$ (so $r(\vtheta)>0$),
\begin{align*}
  \nabla r(\vtheta)
  \;=\; \frac{1}{2\,r(\vtheta)}\,\nabla g(\vtheta)
  \;=\; \frac{\nu(\vtheta)}{\normtwo{\nu(\vtheta)}} 
  \;=\; \vn(\vtheta),
\end{align*}
which is \Cref{eq:grad-r-equals-n}.
\end{proof}

\begin{lemma}[Nonobtuse angle between $\nabla\cL$ and the outward normal]\label{lem:acute-angle-short}
Let $\Gamma$ satisfy \cref{amp:low-rank-manifold} and let $\cL$ satisfy \cref{amp:basic} and \cref{amp:smooth}.
Then there exists a constant $\tau\in(0,\tau_{\Gamma}]$ such that, for all
$\vtheta\in\Gamma^{\tau}$ with nearest-point projection $P(\vtheta)$, distance
$r(\vtheta):=\normtwo{\vtheta-P(\vtheta)}$, and outward unit normal
$\vn(\vtheta):=(\vtheta-P(\vtheta))/r(\vtheta)$, we have
\begin{align}\label{eq:acute-angle-claim}
\big\langle \nabla \cL(\vtheta),\, \vn(\vtheta)\big\rangle \geq 0.
\end{align}
In other words, $\angle\!\big(\nabla\cL(\vtheta),\vn(\vtheta)\big)\leq\pi/2$ on
$\Gamma^{\tau}\setminus\Gamma$.
\end{lemma}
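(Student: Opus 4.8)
The plan is to exploit that $\Gamma$ is a manifold of local minimizers on which $\cL$ is constant: then $\nabla\cL$ vanishes on $\Gamma$ and $\nabla^2\cL(\vzeta)$ is positive definite in the directions normal to $\Gamma$, so moving outward from $\Gamma$ by a small amount produces, to first order, a gradient with a positive normal component. The whole argument is a first-order Taylor estimate made uniform over $\Gamma$.

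Concretely, fix $\tau_\Gamma$ from \cref{lem:tube} and take $\vtheta\in\Gamma^{\tau_\Gamma}\setminus\Gamma$ (on $\Gamma$ the vector $\vn(\vtheta)$ is not defined, so there is nothing to prove). Set $\vzeta:=P(\vtheta)\in\Gamma$, $r:=r(\vtheta)\in(0,\tau_\Gamma)$, and $\vn:=\vn(\vtheta)$, so $\vtheta=\vzeta+r\vn$ and, by \cref{lem:tube}, $\vn$ is a unit vector in $N_\vzeta\Gamma$. Since every point of $\Gamma$ is a local minimizer of $\cL$ we have $\nabla\cL(\vzeta)=0$, so the fundamental theorem of calculus along $s\mapsto\vzeta+s\vn$ gives
\begin{align*}
\inner{\nabla\cL(\vtheta)}{\vn}
=\inner{\nabla\cL(\vzeta+r\vn)-\nabla\cL(\vzeta)}{\vn}
=r\int_0^1\inner{\nabla^2\cL(\vzeta+tr\vn)\,\vn}{\vn}\,\dd t .
\end{align*}
It therefore suffices to keep the integrand positive for small $r$, uniformly in $(\vzeta,\vn)$.

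For the uniform positivity I would first show $\ker\nabla^2\cL(\vzeta)=T_\vzeta\Gamma$ for every $\vzeta\in\Gamma$: differentiating $s\mapsto\cL(\gamma(s))$ twice along any curve $\gamma$ in $\Gamma$ through $\vzeta$ (on which $\cL$ is constant, since $\Gamma=\arg\min_U\cL$ by \cref{amp:low-rank-manifold}) and using $\nabla\cL(\vzeta)=0$ together with $\nabla^2\cL(\vzeta)\succeq 0$ forces $T_\vzeta\Gamma\subseteq\ker\nabla^2\cL(\vzeta)$; equality follows because both spaces have dimension $d-m$ by the rank condition. Hence $\nabla^2\cL(\vzeta)$ is positive definite on $N_\vzeta\Gamma$, and defining
\begin{align*}
\mu_0:=\min_{\vzeta\in\Gamma}\ \min_{\vn\in N_\vzeta\Gamma,\ \normtwo{\vn}=1}\inner{\nabla^2\cL(\vzeta)\,\vn}{\vn},
\end{align*}
the minimization runs over the unit normal bundle of $\Gamma$ — compact because $\Gamma$ is compact and $N_\vzeta\Gamma$ varies continuously with $\vzeta$ — of a function continuous in $(\vzeta,\vn)$ (as $\nabla^2\cL$ is continuous by \cref{amp:basic}); so the minimum is attained and $\mu_0>0$.

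Finally, $\nabla^2\cL$ is uniformly continuous on the compact set $\Gamma^{\tau_\Gamma}$, so let $\omega$ be a nondecreasing modulus of continuity (one may invoke $\cC^3$-smoothness to take $\omega(r)=C_3 r$). Then $\inner{\nabla^2\cL(\vzeta+tr\vn)\,\vn}{\vn}\ge\mu_0-\omega(r)$ for all $t\in[0,1]$, hence $\inner{\nabla\cL(\vtheta)}{\vn}\ge r\bigl(\mu_0-\omega(r)\bigr)$; picking $\tau\in(0,\tau_\Gamma]$ with $\omega(\tau)\le\mu_0$ makes this nonnegative for every $r\in(0,\tau]$, which is \eqref{eq:acute-angle-claim}. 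I expect the only genuinely nontrivial point to be the uniform bound $\mu_0>0$: this is where compactness of $\Gamma$ and the smooth dependence of $N_\vzeta\Gamma$ on the base point are essential, turning the infimum into a positive minimum. Everything else is a single Taylor estimate.
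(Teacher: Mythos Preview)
Your proposal is correct and follows essentially the same approach as the paper: both use $\nabla\cL(\vzeta)=0$ on $\Gamma$, the uniform positive-definiteness of $\nabla^2\cL(\vzeta)$ on $N_\vzeta\Gamma$ via compactness, and a first-order Taylor estimate to bound $\langle\nabla\cL(\vtheta),\vn\rangle\ge r(\mu_0-O(r))$. The only cosmetic difference is that the paper writes the remainder as a $C^{(3)}r^2$ Taylor term while you use the integral form and a modulus of continuity; your justification that $\ker\nabla^2\cL(\vzeta)=T_\vzeta\Gamma$ is more explicit than the paper's.
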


\begin{remark}
    This lemma also implies that $t\mapsto \cL\big(P(\vtheta)+t\vn(\vtheta)\big)$ is non-decreasing on $(0,\tau]$, since $\frac{d}{dt}\cL\big(\vphi+t\vn\big)=\langle \nabla\cL(\vphi+t\vn),\vn\rangle \ge 0$ on $(0,\tau]$. To put it vividly, $\Gamma^{\tau}$ is a valley, with $\Gamma$ being the floor at the center of it.
\end{remark}
\begin{proof}
By \cref{amp:low-rank-manifold}, each $\vzeta\in\Gamma$ is a local minimizer of $\cL$, hence $\nabla\cL(\vzeta)=\vzero$,
and $\nabla^2\cL(\vzeta)$ is positive definite on $N_{\vzeta}\Gamma$, with a uniform lower-bound
$m>0$ of its eigenvalues on $N_{\vzeta}\Gamma$ (from the compactness of $\Gamma$). Let $\tau_{\Gamma}$ be as in \cref{lem:tube}. 
For $\vtheta\in \Gamma^{\tau_{\Gamma}}$ write $\vphi:=P(\vtheta)$, $r:=\normtwo{\vtheta-\vphi}$,
and $\vn:=(\vtheta-\vphi)/r\in N_{\vphi}\Gamma$. Since $\cL$ is $\cC^5$-smooth, we can perform a third order Taylor expansion:
\begin{align*}
\nabla\cL(\vtheta)
= \nabla\cL(\vphi) + \nabla^2\cL(\vphi)(\vtheta-\vphi) + \vtheta^{(3)},
\qquad \normtwo{\vtheta^{(3)}} \leq C^{(3)} r^2,
\end{align*}
where $C^{(3)}$ is a constant independent of $\vtheta$.
Taking the inner product with $\vn$ and using $\nabla\cL(\vphi)=\vzero$ and $\vtheta-\vphi= r\vn$, we have
\begin{align*}
\big\langle \nabla\cL(\vtheta), \vn\big\rangle
= r\vn^\top\nabla^2\cL(\vphi)\vn + \big\langle \vtheta^{(3)},\vn\big\rangle
\geq m r - C^{(3)} r^2 .
\end{align*}
Choose $\tau:=\min\{\tau_{\Gamma},\, m/C^{(3)}\}$, 
then for all $r\le\tau$, we obtain \Cref{eq:acute-angle-claim}.
\end{proof}

\begin{proof}[\textbf{Proof of the first part of \cref{thm:formal-main-result}}] First we construct a tubular neighborhood around $\Gamma$ and introduce some notations.
By \cref{amp:low-rank-manifold} , $\Gamma$ is the unique set of minimizers of $\cL$ in some  neighborhood $U$ of $\Gamma$. 
By \Cref{lem:acute-angle-short}, there is a $\tau\in(0,\tau_\Gamma]$ for which the
nonobtuse condition $\langle \nabla\cL(\vtheta),\vn(\vtheta)\rangle\ge 0$ holds on
$\Gamma^\tau\setminus\Gamma$. By \Cref{lem:working-zone}, there exists
$\epsilon_3>0$ such that $\cL$ is $\mu$-PL on $\Gamma^{\epsilon_3}$. Shrinking $\epsilon_3$ if necessary, assume $\epsilon_3 \leq \tau \text{ and } \Gamma^{\epsilon_3} \subseteq U$.

Throughout this proof we work inside the tube $\Gamma^{\tau_\Gamma}$ given by
\Cref{lem:tube}. In particular, for every $\vtheta\in \Gamma^{\tau_\Gamma}$ we have the nearest–point
projection $P(\vtheta)\in\Gamma$, the normal offset $\nu(\vtheta)\in N_{P(\vtheta)}\Gamma$,
the distance and unit normal
\begin{align*}
  r(\vtheta):=\dist(\vtheta,\Gamma)=\normtwo{\nu(\vtheta)},\qquad
  \vn(\vtheta):=\frac{\nu(\vtheta)}{\normtwo{\nu(\vtheta)}},
\end{align*}
and (on $U\setminus\Gamma$) the identity $\nabla r(\vtheta)=\vn(\vtheta)$ from
\Cref{cor:dist-normal}.

Next, recall the constant $\epsilon_1$ constructed in \cref{lem:working-zone} such that $0<\epsilon_1<\epsilon_3$. 
Define the “gap level”
\begin{align*}
  \cL_m := \min\left\{\inf\big\{ \cL(\vtheta):r(\vtheta)\in[\epsilon_1,\epsilon_3] \big\}, \bar{L} \right\}.
\end{align*}
The set $\{\vtheta:\ r(\vtheta)\in[\epsilon_1,\epsilon_3]\}$ is compact and
disjoint from $\Gamma$, hence $\cL_m>\cL^*$.

Then we define the proxy objective $\tilde{\cL}:\R^d\to\R$ by
\begin{align*}
  \tilde{\cL}(\vtheta)
  :=\begin{cases}
      \cL(\vtheta), & \dist(\vtheta,\Gamma)\le \epsilon_1,\\[2pt]
      \cL(\vtheta)+\dfrac{C}{2}\big(\dist(\vtheta,\Gamma)-\epsilon_1\big)^2, & \dist(\vtheta,\Gamma)>\epsilon_1,
    \end{cases}
\end{align*}
where $C$ is a large constant satisfying $C \geq \mu$. Note that for $\vtheta \in \R^d$,
\[
  \tilde{\cL}(\vtheta) < \cL_m \;\Longrightarrow\; \mathrm{dist}(\vtheta)<\epsilon_1,
\]
so on the sublevel set $\{\tilde{\cL}<\cL_m\}$ we have $\tilde{\cL}=\cL$.

Now define \begin{align*}
    \bar{L} := \frac{C}{2}(\epsilon_3-\epsilon_1)^2 + \cL^*, 
\end{align*}and we prove the core property of the proxy loss function: $\tilde{\cL}$ is $(\mu,\bar{L})$-PL. First we consider the case $\vtheta \in \Gamma^{\epsilon_3}$. On $\Gamma^{\epsilon_3}$, the distance function $r(\vtheta) = \dist(\vtheta,\Gamma)$ is defined. Using $\nabla r(\vtheta)=\vn(\vtheta)$ from \Cref{cor:dist-normal} and the
nonobtuse condition from \Cref{lem:acute-angle-short}, for $r(\vtheta)>\epsilon_1$,
\begin{align*}
  \nabla \tilde{\cL}(\vtheta)
  &= \nabla \cL(\vtheta) + C\big(r(\vtheta)-\epsilon_1\big)\,\vn(\vtheta),\\
  \|\nabla \tilde{\cL}(\vtheta)\|_2^2
  &= \|\nabla\cL(\vtheta)\|_2^2
     + 2C\big(r(\vtheta)-\epsilon_1\big)\,\langle \nabla\cL(\vtheta),\vn(\vtheta)\rangle
     + C^2\big(r(\vtheta)-\epsilon_1\big)^2 \\
  &\ge \|\nabla\cL(\vtheta)\|_2^2 + C^2\big(r(\vtheta)-\epsilon_1\big)^2.
\end{align*}
Since $\cL$ is $\mu$-PL on $\Gamma^{\epsilon_3}$,
\begin{align*}
  \|\nabla \tilde{\cL}(\vtheta)\|_2^2
  \;\ge\; 2\mu\big(\cL(\vtheta)-\cL^*\big) + 2C\cdot \frac{C}{2}\big(r(\vtheta)-\epsilon_1\big)^2
  \;\ge\; 2\min\{\mu,C\}\big(\tilde{\cL}(\vtheta)-\cL^*\big).
\end{align*}
For $r(\vtheta)\le \epsilon_1$, $\tilde{\cL}=\cL$ and the $\mu$-PL inequality holds
trivially. Our choice of $C$ yields
\begin{align*}
  \|\nabla \tilde{\cL}(\vtheta)\|_2^2
  \;\ge\; 2\mu\big(\tilde{\cL}(\vtheta)-\cL^*\big)
  \qquad \text{for all }\ \vtheta\in\Gamma^{\epsilon_3},
\end{align*}
i.e., $\tilde{\cL}$ is $\mu$-PL on $\Gamma^{\epsilon_3}$. 

Combining with the fact that $\tilde{\cL}(\vtheta) > \bar{L}$ if $\vtheta \notin \Gamma^{\epsilon_3}$, we conclude that $\tilde{\cL}$ is $(\mu,\bar{L}$)-PL. 

Finally we come back to prove the main conclusion. Let $\bar{C}$ be the constant constructed in \cref{cor:nablaL-large}. Note that for $\vtheta \in \R^d \setminus \Gamma^{\epsilon_1}$, \begin{align*}
    \normtwo{\nabla\tilde{\cL}(\vtheta)} \leq \bar{C} &\Rightarrow \normtwo{\nabla \cL(\vtheta) + C\big(r(\vtheta)-\epsilon_1\big)\,\vn(\vtheta)} \leq \bar{C} \\
     &\Rightarrow \normtwo{C\big(r(\vtheta)-\epsilon_1\big)\,\vn(\vtheta)} \leq \bar{C} \quad \text{(from \cref{lem:acute-angle-short})}\\
    &\Rightarrow r(\vtheta) \leq \frac{\bar{C}}{C} + \epsilon_1. 
\end{align*}
Increasing $C$ if necessary, we can let $\bar{C} / C < (\epsilon_3 - \epsilon_1) / 2$, which implies \begin{align*}
    \epsilon_{m} :=  \frac{\bar{C}}{C} + \epsilon_1 < \frac{\epsilon_1 + \epsilon_3}{2}.
\end{align*} Take some $\cL_0 > \cL^*$ such that $\cL_0 - \cL^* < (\cL_m - \cL^*) / 2C_{5a}$, where $C_{5a}$ is the constant in \cref{convergence}. Take a constant $\epsilon_0 < \epsilon_1$ such that all loss values inside $\Gamma^\epsilon_0$ are bounded by $\cL_0$. There exists a sufficiently small learning rate $\eta$ such that (let $K=\lfloor (T+1)\eta^{-2}\rfloor$):
\begin{enumerate}
    \item $C_{5a}\gamma^k \left(\cL_0 - \cL^*\right) + C_{5b}\eta\log \frac{K}{\delta} \leq 0.99\cL_m - \cL^*$, $\forall k \leq K, \delta \in (\eta^{200}, 1)$. 
    \item $\eta \cdot R \cdot R_2 < \epsilon_3 - \epsilon_{m}$.
\end{enumerate}
The second property ensures that any single step of update cannot jump from the interior of $\Gamma^{\epsilon_m}$ to the exterior of $\Gamma^{\epsilon_3}$. However when $\vtheta_{k-1} \in \Gamma^{\epsilon_3} \setminus \Gamma^{\epsilon_m}$, it follows that $\normtwo{\nabla\tilde{\cL}(\vtheta)} > \bar{C}$, so $\cL(\vtheta_k) < \cL(\vtheta_{k-1})$ from \cref{cor:nablaL-large}. By induction, we conclude that for any $\vtheta_0 \in \Gamma^{\epsilon_0}$, if we launch an AGM from $\vtheta_0$ and train using $\tilde{\cL}$ and $\eta$, all loss values $\tilde{\cL}(\vtheta_k), k \in [0, K-1]$ do not exceed $\bar{L}$, which means the $\mu$-PL condition of $\tilde{\cL}$ is satisfied at all $\vtheta_k$ where $k \in [0, K-1]$. This meets the requirement of \cref{convergence}.

By \cref{convergence} and the first property of $\eta$, we further conclude that all loss values $\tilde{\cL}(\vtheta_k)$ do not exceed $0.99\cL_m$. Finally, by noting that \begin{align*}
\tilde{\cL}(\vtheta) < \cL_m \Rightarrow \vtheta \in \Gamma^{\epsilon_1} \text{  and } \tilde{\cL}(\vtheta) = \cL(\vtheta),  
\end{align*}
we conclude from \cref{convergence} that: For any $\vtheta_0 \in \Gamma^{\epsilon_0}$, if we launch an AGM from $\vtheta_0$ and train using $\cL$ and $\eta$, for any $k \leq K$, $\delta \in (\eta^{200},1)$, it holds almost surely that $\vtheta_k \in \Gamma^{\epsilon_1}$, and it holds with probability at least $1-\delta$ that $$\cL(\vtheta_{k})-\cL^* \leq C_{5a} \cdot \gamma^k \left(\cL(\vtheta_0) - \cL^*\right) + C_{5b} \cdot \eta \log \frac{K}{\delta},$$
and it takes $K_0 = \mathcal{O}(\frac{1}{\eta} \log \frac{1}{\eta})$ time to reach $\cL(K_0) - \cL^* = \O\left(\eta \log \frac{1}{\eta\delta}\right)$, 
completing the proof.
\end{proof}

\section{Proof of the SDE Approximation of AGMs}\label{sec:proof-sde-approx}
In this section, we present a detailed derivation of our slow SDE approximation of the AGM framework as formally stated in \Cref{thm:formal-main-result}. In \Aref{sec:lem-ada-man-proj}, we give some lemmas about the properties of the adaptive projection $\Phi_{\mS}$, introduced and explained in \Cref{sec:theory}. Then in \Aref{sec:iter-near-manifold}, we show that the iterations after convergence would continue staying near the manifold. Following that, we further calculate the moment change of the projected parameter $\Phi_{\mS_t}(\vtheta_t)$ during a ``giant step'' in \Aref{sec:momen-calculation}, which allows us to conduct an weak approximation in \Aref{sec:weak-approximation}. After going through the above paradigm, we finally prove an equivalent \Cref{thm:formal-SDE-approximation} of the SDE approximation part of \Cref{thm:formal-main-result}.

Our slow SDE starts at a point after the time of convergence, so for simplicity, we will ``shift the timeline'' in this section.
\begin{remark}[Time Shift]\label{rmk:time-shift-theta0-phi0}
    To simplify the notations, we redefine $\vtheta_0$ and $\vv_0$ as follows. Starting from \Aref{sec:lem-ada-man-proj}, $\vtheta_0$ and $\vv_0$ will no longer represent the parameters that are initialized at the actual beginning of training. Instead, they represent the $\vtheta_{K_0}$ and $\vv_{K_0}$ yielded by the first part of \cref{thm:formal-main-result}, $(\vtheta_1,\vv_1)$ denoting $(\vtheta_{K_0+1}, \vv_{K_0+1})$, and so on. Our SDE approximation then describes the dynamics of AGMs after reaching the state $(\vtheta_0,\vv_0)$.
\end{remark}

\begin{remark}
    Recall for any time step $k$ that $\vphi_k := \Phi_{S(\vv_k)}(\vtheta_k)$. With the ``time shift'' described in \cref{rmk:time-shift-theta0-phi0}, the time steps before $K_0$ will become negative. However in some parts of the following calculation, to deal with the first-order momentum we still need up to $\O(\log \frac{1}{\eta})$ past timesteps. Without loss of generality, we assume that at time $K_0$ the iteration has already converged for time $\O(\log \frac{1}{\eta})$, i.e., $\forall K_0 - \O(\log \frac{1}{\eta}) \leq k \leq K_0$,
    \begin{align*}
        \cL(\vtheta_k) - \cL^* &= \O\left(\eta\log\frac{1}{\eta}\right), \\
        \normtwo{\vtheta_k -\vphi_k} &= \O\left(\sqrt{\eta\log\frac{1}{\eta}}\right), \\
        \normtwo{\nabla \cL(\vtheta_k)} &=\O\left(\sqrt{\eta\log\frac{1}{\eta}}\right).
    \end{align*}
    If not, we simply increase $K_0$ by $\O(\log \frac{1}{\eta})$ and the argument in the proof of the first part of \cref{thm:formal-main-result} still holds. After the time shift, the range of $k$ above will become $-\O(\log \frac{1}{\eta}) \leq k \leq 0$.
    Therefore, negative timesteps may appear in the derivation below and they are not typos; We just need their three properties above to control the order of some terms.
\end{remark}




\subsection{Lemmas for Adaptive Manifold Projection}\label{sec:lem-ada-man-proj}
Before we characterize the projections, we introduce some properties of the preconditioned projection function in this part.


\begin{lemma}[Adaption of Lemma C.2 in \citet{li2021happens}]\label{lmm:partial-phi-S-nabla-L}
    For any $\vx\in\R^d$, and any \textit{p.d} matrix $\mS\in\R^{d \times d}$, it holds that $\partial \Phi_{\mS}(\vx)\mS \nabla \cL(\vx)=0$, and
    \begin{align*}
        \partial^2 \Phi_{\mS}(\vx)[\mS\nabla \cL(\vx),\mS\nabla\cL(\vx)] = - \partial \Phi_{\mS}(\vx) \mS \nabla^2 \cL(\vx) \mS \nabla \cL(\vx).
    \end{align*}
\end{lemma}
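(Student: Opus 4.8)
The key observation is that an $\mS$-preconditioned gradient flow becomes an ordinary gradient flow after a linear change of coordinates, which reduces the statement to Lemma C.2 of \citet{li2021happens} (from which this lemma is adapted). Since $\mS$ is symmetric positive definite, write $\mS^{1/2}$ for its symmetric positive definite square root and set $\tilde{\cL}(\vy) := \cL(\mS^{1/2}\vy)$, so that $\nabla\tilde{\cL}(\vy) = \mS^{1/2}\nabla\cL(\mS^{1/2}\vy)$ and $\nabla^2\tilde{\cL}(\vy) = \mS^{1/2}\nabla^2\cL(\mS^{1/2}\vy)\mS^{1/2}$. Under the substitution $\vx = \mS^{1/2}\vy$, the ordinary gradient flow $\dot{\vy} = -\nabla\tilde{\cL}(\vy)$ of $\tilde{\cL}$ is carried to the $\mS$-preconditioned gradient flow $\dot{\vx} = -\mS\nabla\cL(\vx)$ of $\cL$; passing to limits along trajectories, the gradient-flow projection $\tilde\Phi$ of $\tilde{\cL}$ onto $\mS^{-1/2}\Gamma$ and the preconditioner-flow projection $\Phi_{\mS}$ of Definition~\ref{def:preconditioner-flow} satisfy $\tilde\Phi(\vy) = \mS^{-1/2}\Phi_{\mS}(\mS^{1/2}\vy)$ for $\vy$ with $\mS^{1/2}\vy$ in the working zone. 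The $\cC^4$ regularity of $\tilde\Phi$ needed to differentiate it twice is inherited from that of $\Phi_{\mS}$ established in Lemma~\ref{lem:working-zone} (restricting, as in the original lemma, to the neighborhood of $\Gamma$ on which $\Phi_{\mS}$ is well-defined and smooth).

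Both identities then follow by the chain rule. Differentiating $\tilde\Phi(\vy) = \mS^{-1/2}\Phi_{\mS}(\mS^{1/2}\vy)$ once gives $\partial\tilde\Phi(\vy) = \mS^{-1/2}\,\partial\Phi_{\mS}(\vx)\,\mS^{1/2}$, and twice gives $\partial^2\tilde\Phi(\vy)[\vu,\vw] = \mS^{-1/2}\,\partial^2\Phi_{\mS}(\vx)[\mS^{1/2}\vu,\mS^{1/2}\vw]$. Lemma C.2 of \citet{li2021happens} applied to $\tilde{\cL}$ states $\partial\tilde\Phi(\vy)\nabla\tilde{\cL}(\vy) = \vzero$ and $\partial^2\tilde\Phi(\vy)[\nabla\tilde{\cL}(\vy),\nabla\tilde{\cL}(\vy)] = -\,\partial\tilde\Phi(\vy)\,\nabla^2\tilde{\cL}(\vy)\,\nabla\tilde{\cL}(\vy)$. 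Substituting the relations above, each factor $\nabla\tilde{\cL}(\vy)$ produces a factor $\mS^{1/2}\nabla\cL(\vx)$ and every term acquires a left factor $\mS^{-1/2}$; cancelling the invertible $\mS^{-1/2}$ in the first equation yields $\partial\Phi_{\mS}(\vx)\,\mS\nabla\cL(\vx) = \vzero$, and multiplying the second equation on the left by $\mS^{1/2}$ yields $\partial^2\Phi_{\mS}(\vx)[\mS\nabla\cL(\vx),\mS\nabla\cL(\vx)] = -\,\partial\Phi_{\mS}(\vx)\,\mS\nabla^2\cL(\vx)\,\mS\nabla\cL(\vx)$, as claimed.

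Equivalently, one can argue directly without the change of variables: by the semigroup property of the flow, $\Phi_{\mS}$ is constant along any preconditioned-flow trajectory $\vx(t)$ with $\vx(0)=\vx$ (which stays in the working zone by Lemma~\ref{lem:working-zone}), so $t\mapsto\Phi_{\mS}(\vx(t))$ is identically $\Phi_{\mS}(\vx)$. Differentiating once at $t=0$ with $\dot{\vx} = -\mS\nabla\cL(\vx)$ gives the first identity; differentiating once more and using $\ddot{\vx} = \mS\nabla^2\cL(\vx)\mS\nabla\cL(\vx)$ together with the symmetry of the bilinear form $\partial^2\Phi_{\mS}(\vx)[\cdot,\cdot]$ gives the second. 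I do not expect a genuinely hard step here; the only point requiring care is the regularity/domain bookkeeping — restricting $\vx$ to the neighborhood of $\Gamma$ on which $\Phi_{\mS}$ (equivalently $(\vtheta,\vv)\mapsto\Phi_{S(\vv)}(\vtheta)$) is $\cC^2$, which is precisely what Lemma~\ref{lem:working-zone} provides, and checking that $\mS^{-1/2}\Gamma$ inherits from $\Gamma$ the hypotheses needed to invoke Lemma C.2.
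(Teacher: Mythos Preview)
Your proposal is correct; in particular, your ``equivalently'' direct argument—differentiating the constant $t\mapsto\Phi_{\mS}(\vx(t))$ once and twice along the preconditioned flow—is exactly the paper's proof. The change-of-variables route you describe first is also valid and is in fact the technique the paper employs later (e.g., in Lemma~\ref{lmm:phi-nabla-L}) for a related identity.
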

\begin{proof}
    We consider a trajectory starting from $\vx(0)=\vx$, with an ODE $\frac{\dd \vx(t)}{\dd t} = -\mS \nabla \cL(\vx(t))$, thus by the definition of $\Phi_{\mS}$, we have $\Phi_{\mS}(\vx) = \Phi_{\mS}(\vx(t))$, then we have 
    \begin{align*}
        \frac{\dd \Phi_{\mS}(\vx(t))}{\dd t} = - \partial \Phi_{\mS}(\vx)\mS \nabla \cL(\vx)=0.
    \end{align*}
    Further, we take the second derivative of $\Phi_{\mS}(\vx(t))$ with repsect to $t$
    \begin{align*}
        \frac{\dd^2 \Phi_{\mS}(\vx(t))}{\dd t^2} =  \partial^2 \Phi_{\mS}(\vx)[\mS\nabla \cL(\vx),\mS\nabla\cL(\vx)] + \partial \Phi_{\mS}(\vx) \mS \nabla^2 \cL(\vx) \mS \nabla \cL(\vx)=0.
    \end{align*}
    Taking $t=0$ completes the proof.
\end{proof}


\begin{lemma}\label{lmm:tangent-space-product}
    For any $\vx \in \Gamma$, and a p.d matrix $\mS$, the following two identities hold:
    \begin{itemize}
        \item For all $\vv \in T_{\vx}(\Gamma)$, $\partial \Phi_{\mS}(x)  \vv = \vv$.
        \item For all $\vu \in T_{\vx}^{\perp}(\Gamma)$, $\partial \Phi_{\mS}(\vx) \mS \vu =\boldsymbol{0}$.
    \end{itemize}
\end{lemma}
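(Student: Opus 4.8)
\textbf{Proof proposal for \Cref{lmm:tangent-space-product}.}

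The plan is to relate both identities to the linearization of the preconditioner flow $\frac{\dd x(t)}{\dd t} = -\mS\nabla\cL(x(t))$ at a point $\vx\in\Gamma$, where $\nabla\cL(\vx)=\vzero$. The first identity says $\partial\Phi_{\mS}(\vx)$ acts as the identity on $T_{\vx}(\Gamma)$, and the second says it kills $\mS T_{\vx}^{\perp}(\Gamma)$; together these pin down $\partial\Phi_{\mS}(\vx)$ as a (generally oblique) projection onto $T_{\vx}(\Gamma)$ along $\mS N_{\vx}\Gamma$. First I would establish the first identity: for $\vv\in T_{\vx}(\Gamma)$, take a smooth curve $\gamma(s)\subset\Gamma$ with $\gamma(0)=\vx$, $\gamma'(0)=\vv$. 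Since every point of $\Gamma$ is a minimizer, $\nabla\cL$ vanishes on $\Gamma$, so the preconditioner flow started at $\gamma(s)$ is stationary, hence $\Phi_{\mS}(\gamma(s))=\gamma(s)$ for all $s$ near $0$ (using that $\gamma(s)$ stays in the neighborhood where $\Phi_{\mS}$ is the genuine limit, which holds by \Cref{lem:working-zone} for $\vx$ close to $\Gamma$ — here on $\Gamma$ itself). Differentiating at $s=0$ gives $\partial\Phi_{\mS}(\vx)\vv=\vv$.

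Next I would handle the second identity. Fix $\vu\in T_{\vx}^{\perp}(\Gamma)$ and consider the curve $s\mapsto \vx + s\,\mS\vu$. Its image under $\Phi_{\mS}$ is $\Phi_{\mS}(\vx+s\mS\vu)$, and differentiating at $s=0$ yields $\partial\Phi_{\mS}(\vx)\,\mS\vu$. To show this is zero, I would argue that the derivative of $\Phi_{\mS}$ in any normal direction $\mS\vu$ must land in the tangent space $T_{\vx}(\Gamma)$ (since $\Phi_{\mS}$ maps into $\Gamma$, so $\partial\Phi_{\mS}(\vx)$ has range contained in $T_{\vx}(\Gamma)$), and at the same time the linearized flow decomposes $\R^d$ into the stable directions (contracting toward $\Gamma$) and the center directions $T_{\vx}(\Gamma)$. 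The Hessian $\nabla^2\cL(\vx)$ has rank $m$ with kernel exactly $T_{\vx}(\Gamma)$ (by \Cref{amp:low-rank-manifold}), and the linearization of the preconditioner flow near $\vx$ is $\dot{\delta} = -\mS\nabla^2\cL(\vx)\,\delta$. The operator $\mS\nabla^2\cL(\vx)$, though not symmetric, is similar to the symmetric PSD operator $\mS^{1/2}\nabla^2\cL(\vx)\mS^{1/2}$ via conjugation by $\mS^{1/2}$; its kernel is $\mS^{-1}\ker(\nabla^2\cL(\vx)) = \mS^{-1}T_{\vx}(\Gamma)$ and the complementary invariant subspace (sum of positive eigenspaces) is $\mS T_{\vx}^{\perp}(\Gamma)$ — this is the standard fact that $A=\mS B$ with $B$ symmetric PSD has the $\mS$-orthogonal splitting $\mathrm{range}(A)\oplus\ker(A)$ when restricted appropriately; concretely $\mS T^\perp_{\vx}(\Gamma)$ and $\mS^{-1}T_{\vx}(\Gamma)$ are complementary because $\langle \mS u, v\rangle$ pairs $T^\perp$ with $T$ nondegenerately. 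On the contracting subspace $\mS T_{\vx}^{\perp}(\Gamma)$ the flow converges exponentially to $\vx$, so the directional derivative of $\Phi_{\mS}$ along any vector in $\mS T_{\vx}^{\perp}(\Gamma)$ is zero. Since $\mS\vu\in\mS T_{\vx}^{\perp}(\Gamma)$, we get $\partial\Phi_{\mS}(\vx)\mS\vu=\vzero$.

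The main obstacle I anticipate is making the second part fully rigorous: I need to justify that the directional derivative of $\Phi_{\mS}$ along stable directions vanishes, which requires either (i) invoking a stable-manifold / center-manifold argument for the (degenerate, normally hyperbolic) equilibrium set $\Gamma$ of the preconditioner flow, together with the smoothness of $\Phi_{\mS}$ established in \Cref{lem:working-zone}, or (ii) a direct computation using Duhamel's formula for $x(t) - \vx$ along the flow started at $\vx + s\mS\vu$ and taking $t\to\infty$ then $\partial_s|_{s=0}$, carefully controlling the exchange of limits. Approach (ii) is probably cleaner given the uniform exponential contraction in the normal directions (the positive eigenvalues of $\mS^{1/2}\nabla^2\cL\mS^{1/2}$ are bounded below on the compact $\Gamma$), and it parallels the argument in \citet{li2021happens} for the unpreconditioned case, so I would model the write-up on Lemma C.2 there, substituting the preconditioned flow and using the similarity transform by $\mS^{1/2}$ to reduce to the symmetric case. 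As an alternative, once the first identity and the range containment $\mathrm{range}(\partial\Phi_{\mS}(\vx))\subseteq T_{\vx}(\Gamma)$ are known, one can also derive the second from \Cref{lmm:partial-phi-S-nabla-L} by a perturbation argument: differentiating $\partial\Phi_{\mS}(\vx(t))\mS\nabla\cL(\vx(t))=0$ — but this only yields the normal-direction statement after additional bookkeeping, so I would keep the direct flow-linearization argument as the primary route.
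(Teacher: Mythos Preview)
Your argument for the first identity matches the paper's exactly. For the second identity your flow-linearization route would work (modulo a slip: the kernel of $\mS\nabla^2\cL(\vx)$ is $T_{\vx}(\Gamma)$, not $\mS^{-1}T_{\vx}(\Gamma)$, since $\mS$ is invertible), but it is heavier than necessary and, as you yourself flag, requires either a normally-hyperbolic invariant manifold statement or a careful Duhamel computation with limit exchange. The paper instead uses the previous lemma \Cref{lmm:partial-phi-S-nabla-L} in a one-line way that avoids all flow analysis: since $\partial\Phi_{\mS}(\vy)\mS\nabla\cL(\vy)=0$ holds at \emph{every} point $\vy$ near $\Gamma$, evaluate it along the straight curve $\vy_t=\vx+t\,\nabla^2\cL(\vx)^{\dagger}\vu$; a Taylor expansion gives $\nabla\cL(\vy_t)=t\vu+o(t)$ (using $\vu\in T_{\vx}^{\perp}=\mathrm{range}(\nabla^2\cL(\vx))$), so dividing by $t$ and letting $t\to 0$ yields $\partial\Phi_{\mS}(\vx)\mS\vu=0$ by mere continuity of $\partial\Phi_{\mS}$. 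This is essentially the ``alternative'' you mention and then set aside, but with the right choice of curve (the pseudoinverse direction, not the flow trajectory) it needs no bookkeeping at all---just the continuity of $\partial\Phi_{\mS}$ already provided by \Cref{lem:working-zone}. Your approach buys a more structural picture of $\partial\Phi_{\mS}(\vx)$ as the oblique projection along the stable fibration; the paper's buys a two-line proof.
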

\begin{proof}
    We first prove the first identity. For any $\vv\in T_\vx(\Gamma)$, let $\{\vv(t), t\ge 0\}$ be parameterized smooth curve on $\Gamma$ such that $\vv(0) =\vx$ and $\frac{\dd \vv(t)}{\dd t}\big|_{t=0}  = \vv$. Since $\vv(t)\in \Gamma$, thus $\nabla \cL(\vv(t))=0$, which gives $\Phi_{\mS}(\vv(t))=\vv(t)$. Thus we have
    \begin{align*}
        \frac{\dd \vv(t)}{\dd t}\bigg|_{t=0} =\frac{\dd \Phi_{\mS}(\vv(t))}{\dd t}\bigg|_{t=0} = \partial \Phi_{\mS}(\vv(t)) \frac{\dd \vv(t)}{\dd t}\bigg|_{t=0}.
    \end{align*}
    Plugging $\frac{\dd \vv(t)}{\dd t}\big|_{t=0} = \vv$ gives $\Phi_{\mS}(\vx)  \vv = \vv$.

    For $\vu \in  T_{\vx}^{\perp}(\Gamma)$ and $t \ge 0$, we consider the Taylor expansion of $\nabla \cL (\vx + t \nabla^2 \cL(\vx)^\dagger \vu)$ at $t=0$:
    \begin{align*}
        \nabla \cL \left(\vx + t \nabla^2 \cL(\vx)^\dagger \vu\right) = \nabla^2 \cL(\vx) \cdot t \nabla^2 \cL(\vx)^\dagger \vu + o(t)
        = t\vu + o(t),
    \end{align*}
    where the second equation uses the fact $\nabla^2 \cL(x)$ is full-rank on $T_{\vx}^{\perp}(\Gamma)$. Thus, by the continuity of $\partial \Phi_{\mS}$ proved in \Cref{lem:working-zone}, it follows that
    \begin{align*}
        \lim_{t \to 0 }\frac{\partial \Phi_{\mS}\left(\vx + t \nabla^2 \cL(\vx)^\dagger \vu\right) \mS \nabla \cL \left(\vx + t \nabla^2 \cL(\vx)^\dagger \vu\right)}{t} = \partial \Phi_{\mS}(\vx)\mS \vu.
    \end{align*}
    By \Cref{lmm:partial-phi-S-nabla-L}, $\partial \Phi_{\mS}\left(\vx + t \nabla^2 \cL(\vx)^\dagger \vu\right) \mS \nabla \cL \left(\vx + t \nabla^2 \cL(\vx)^\dagger \vu\right) =0$ for all $t \ge 0$, implying that $\partial \Phi_{\mS}(\vx)\mS \vu = 0$ for all $\vu \in T_{\vx}^\perp(\Gamma)$.
\end{proof}

\begin{lemma}\label{lmm:phi-H-0}
    For any $\vx\in\Gamma$, and a \textit{p.d} matrix $\mS$, it holds that $\partial \Phi_{\mS}(\vx) \mS \nabla^2 \cL(\vx)=\boldsymbol{0}$.
\end{lemma}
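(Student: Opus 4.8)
The statement is essentially a one-line consequence of \Cref{lmm:tangent-space-product}, once we identify the image of $\nabla^2\cL(\vx)$. The plan is first to establish that for $\vx\in\Gamma$ we have $\mathrm{range}\!\left(\nabla^2\cL(\vx)\right)=T_{\vx}^{\perp}(\Gamma)$, and then to apply the second identity of \Cref{lmm:tangent-space-product} columnwise.

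For the first step, I would argue as follows. Since every point of $\Gamma$ is a local minimizer of $\cL$ (\Cref{amp:low-rank-manifold}), $\nabla\cL$ vanishes identically on $\Gamma$; differentiating this identity along any smooth curve in $\Gamma$ through $\vx$ shows that $\nabla^2\cL(\vx)\vv=\vzero$ for every $\vv\in T_{\vx}(\Gamma)$, i.e.\ $T_{\vx}(\Gamma)\subseteq\ker\nabla^2\cL(\vx)$. By \Cref{amp:low-rank-manifold}, $\mathrm{rank}\,\nabla^2\cL(\vx)=m$, so $\dim\ker\nabla^2\cL(\vx)=d-m=\dim T_{\vx}(\Gamma)$, forcing $\ker\nabla^2\cL(\vx)=T_{\vx}(\Gamma)$. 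Because $\nabla^2\cL(\vx)$ is symmetric, its range is the orthogonal complement of its kernel, hence $\mathrm{range}\!\left(\nabla^2\cL(\vx)\right)=T_{\vx}(\Gamma)^{\perp}=T_{\vx}^{\perp}(\Gamma)$.

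For the second step, fix any $\vw\in\R^d$ and set $\vu:=\nabla^2\cL(\vx)\vw$. By the previous paragraph $\vu\in T_{\vx}^{\perp}(\Gamma)$, so the second identity of \Cref{lmm:tangent-space-product} gives $\partial\Phi_{\mS}(\vx)\,\mS\,\vu=\vzero$, that is, $\partial\Phi_{\mS}(\vx)\,\mS\,\nabla^2\cL(\vx)\,\vw=\vzero$. Since $\vw$ was arbitrary, $\partial\Phi_{\mS}(\vx)\,\mS\,\nabla^2\cL(\vx)=\vzero$, which is the claim. (Equivalently, one can apply the identity to each standard basis vector $\ve_i$ to kill the $i$-th column of the product.)

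I do not anticipate any genuine obstacle here; the only point requiring a little care is the identification $\mathrm{range}\!\left(\nabla^2\cL(\vx)\right)=T_{\vx}^{\perp}(\Gamma)$, which is exactly where \Cref{amp:low-rank-manifold} (the rank-$m$ condition together with $\Gamma$ consisting of minimizers) is used. Note also that the hypothesis that $\mS$ is positive definite is not actually needed for this particular lemma — only the already-established continuity/definition of $\partial\Phi_{\mS}$ used inside \Cref{lmm:tangent-space-product} is invoked — but I would keep the statement as phrased for consistency with its use downstream.
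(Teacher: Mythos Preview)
Your proposal is correct and follows essentially the same approach as the paper's proof: both reduce the claim to \Cref{lmm:tangent-space-product} by observing that the image of $\nabla^2\cL(\vx)$ lies in $T_{\vx}^{\perp}(\Gamma)$. The paper is terser (it cites Li et al.\ for $T_{\vx}(\Gamma)\subseteq\ker\nabla^2\cL(\vx)$ and then invokes \Cref{lmm:tangent-space-product} without spelling out the range identification), whereas you make the range argument explicit via the rank condition in \Cref{amp:low-rank-manifold} and symmetry; this is a helpful elaboration but not a different method.
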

\begin{proof}
    From Lemma C.1 in \citet{li2021happens}, we have for $\vu\in T_{\vx}(\Gamma)$, $\nabla^2 \cL(\vx)\vu=0$, and for $\vu\in T_{\vx}^{\perp}(\Gamma)$, \Cref{lmm:tangent-space-product} gives that
    \begin{align*}
        \partial \Phi_{\mS}(\vx)\mS \vu = \boldsymbol{0}.
    \end{align*}
    The above identity completes the proof.
\end{proof}
\begin{lemma}\label{lmm:phi-nabla-L}
    For any $\vx\in\Gamma$, $\vu,\vv\in \R^d$, \textit{p.d} matrix $\mS$, and $v\in T_{x}(\Gamma)$, it holds that
    \begin{align*}
        \partial^2 \Phi_{\mS}(\vx)[\vu\vv^\top] = -\partial \Phi_{\mS}(\vx)\mS \partial^2(\nabla \cL)(\vx)[\nabla^2 \cL(\vx)^{\dagger}\mS^{-1}\vu\vv^\top] - \mS^{-1} \nabla^2 \cL(\vx)^{\dagger}\partial^2(\nabla \cL)(\vx)[\mS \partial \Phi(\vx)\vu\vv^\top].
    \end{align*}
\end{lemma}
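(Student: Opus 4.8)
The plan is to prove \Cref{lmm:phi-nabla-L} by repeatedly differentiating the defining relations of $\Phi_{\mS}$ along curves that stay on $\Gamma$, carrying the fixed preconditioner $\mS$ through every step, in the spirit of the non-preconditioned computation of \citet{li2021happens}. Write $\mH:=\nabla^{2}\cL(\vx)$. The ingredients already available are: the \emph{master identity} $\partial\Phi_{\mS}(\vy)\,\mS\,\nabla\cL(\vy)=0$, valid on the whole working zone (\Cref{lmm:partial-phi-S-nabla-L}, with \Cref{lem:working-zone} guaranteeing that $\vy\mapsto\Phi_{\mS}(\vy)$ is $\cC^{4}$ there, so all differentiations below are legitimate); the facts that $\partial\Phi_{\mS}(\vx)$ fixes $T_{\vx}\Gamma$ and $\partial\Phi_{\mS}(\vx)\mS$ annihilates $T_{\vx}^{\perp}\Gamma$ (\Cref{lmm:tangent-space-product}), and $\partial\Phi_{\mS}(\vx)\mS\mH=0$ on $\Gamma$ (\Cref{lmm:phi-H-0}); and, from \Cref{amp:low-rank-manifold}, $T_{\vx}\Gamma=\ker\mH$, $T_{\vx}^{\perp}\Gamma=\mathrm{im}\,\mH$, so $\mH^{\dagger}$ inverts $\mH$ on $T_{\vx}^{\perp}\Gamma$ and vanishes on $T_{\vx}\Gamma$. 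Since $\mS\succ0$, one also has the direct-sum decomposition $\R^{d}=\mS\,T_{\vx}^{\perp}\Gamma\oplus T_{\vx}\Gamma$; I would determine the bilinear map $\vu\mapsto\partial^{2}\Phi_{\mS}(\vx)[\vu\vv^{\top}]$ separately on each summand.

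First I would differentiate the master identity twice — once in a direction $\vw$, once in a direction $\vv$ — and evaluate at $\vx\in\Gamma$, where $\nabla\cL(\vx)=0$ kills every term in which a derivative fails to land on $\nabla\cL$; using that $\partial^{2}(\nabla\cL)(\vx)[\,\cdot\,]$ is symmetric under transposition of its argument, this gives
\[
\partial^{2}\Phi_{\mS}(\vx)\bigl[(\mS\mH\vw)\vv^{\top}\bigr]+\partial^{2}\Phi_{\mS}(\vx)\bigl[(\mS\mH\vv)\vw^{\top}\bigr]+\partial\Phi_{\mS}(\vx)\,\mS\,\partial^{2}(\nabla\cL)(\vx)\bigl[\vw\vv^{\top}\bigr]=0 .
\]
For $\vv\in T_{\vx}\Gamma=\ker\mH$ the middle term vanishes, and as $\vw$ ranges over $\R^{d}$ the vector $\mS\mH\vw$ sweeps all of $\mS\,T_{\vx}^{\perp}\Gamma$; substituting $\vw=\mH^{\dagger}\mS^{-1}\vq$ then pins down $\partial^{2}\Phi_{\mS}(\vx)[\vq\vv^{\top}]$ for every $\vq\in\mS\,T_{\vx}^{\perp}\Gamma$, and this is exactly the first summand of the claimed formula evaluated at $\vq$. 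For the complementary summand, I would use that $\Phi_{\mS}$ restricts to the identity on $\Gamma$: taking a smooth curve $\vx(s)\subset\Gamma$ with $\vx(0)=\vx,\ \dot\vx(0)=\vv$ and a tangent field $\vq(s)\in T_{\vx(s)}\Gamma$ with $\vq(0)=\vq\in T_{\vx}\Gamma$, differentiating $\partial\Phi_{\mS}(\vx(s))\vq(s)=\vq(s)$ at $s=0$ and noting that $I-\partial\Phi_{\mS}(\vx)$ annihilates $T_{\vx}\Gamma$, the only surviving contribution is $\bigl(I-\partial\Phi_{\mS}(\vx)\bigr)$ applied to the normal part of $\dot\vq(0)$, i.e.\ to the second fundamental form $\mathrm{II}(\vv,\vq)=-\mH^{\dagger}\partial^{2}(\nabla\cL)(\vx)[\vv\vq^{\top}]$ — this last identity comes from differentiating $\nabla\cL\equiv0$ twice along $\Gamma$, which also shows it is independent of the chosen extension. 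Finally, decomposing an arbitrary $\vu=\vu_{1}+\vu_{2}$ with $\vu_{1}\in\mS\,T_{\vx}^{\perp}\Gamma$ and $\vu_{2}=\partial\Phi_{\mS}(\vx)\vu\in T_{\vx}\Gamma$, and using linearity of $\partial^{2}\Phi_{\mS}(\vx)[\,\cdot\,]$ in its matrix slot, I would add the two contributions and rearrange into the stated closed form.

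The step I expect to be the genuine obstacle is this last reassembly: matching the complementary-subspace contribution $-\bigl(I-\partial\Phi_{\mS}(\vx)\bigr)\mH^{\dagger}\partial^{2}(\nabla\cL)(\vx)[\vu_{2}\vv^{\top}]$ against the two $\mS$- and $\mS^{-1}$-laden terms on the right-hand side of the lemma requires an auxiliary identity relating the oblique projector $I-\partial\Phi_{\mS}(\vx)$, the preconditioner, and $\mH^{\dagger}$ on $T_{\vx}^{\perp}\Gamma$; this is precisely where the preconditioned case departs from the orthogonal-projection argument of \citet{li2021happens} (there the analogous cross term is killed outright because $\mH^{\dagger}$ annihilates $T_{\vx}\Gamma$, whereas here $\mH^{\dagger}\mS^{-1}\vu_{2}$ need not vanish). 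I expect that identity to follow from one further differentiation — e.g.\ of the second identity in \Cref{lmm:partial-phi-S-nabla-L}, or of the relation above taken with a non-tangent second argument. An alternative route that absorbs part of this bookkeeping is the change of variables $\vy=\mS^{-1/2}\vx$, $\widetilde\cL(\vy):=\cL(\mS^{1/2}\vy)$, under which the $\mS$-preconditioned gradient flow of $\cL$ becomes the plain gradient flow of $\widetilde\cL$ on $\widetilde\Gamma=\mS^{-1/2}\Gamma$, so $\Phi_{\mS}(\vx)=\mS^{1/2}\widetilde\Phi(\mS^{-1/2}\vx)$ and the lemma would follow by pulling back its $\mS=\mI$ version via the chain rule for the linear maps $\mS^{\pm1/2}$; the delicate point there is that the Moore–Penrose inverse is not conjugation-covariant, so one must carefully relate $(\nabla^{2}\widetilde\cL)^{\dagger}$ to $\mS^{-1/2}\mH^{\dagger}\mS^{-1/2}$. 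Either way, the core difficulty is purely linear-algebraic.
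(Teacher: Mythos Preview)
Your plan is viable, but the paper actually takes your ``alternative route'' rather than your primary one: it sets $\mP:=\mS^{1/2}$, reparameterizes via $\vx':=\mP^{-1}\vx$ and $\cL'(\cdot):=\cL(\mP\,\cdot)$ so that the $\mS$-preconditioned flow becomes the ordinary gradient flow of $\cL'$, invokes Lemma~C.4 of \citet{li2021happens} verbatim in the primed coordinates, and then pulls the identity back through the chain rule for the linear maps $\mP^{\pm1}$. This sidesteps your direct-differentiation program entirely---in particular the oblique-projector reassembly you correctly flag as the hard step---because all of that bookkeeping is already packaged inside the non-preconditioned lemma. Your direct approach would also work and has the virtue of making the underlying linear algebra explicit, but it is longer; the paper's route is essentially a one-liner once the transformation rules for $\partial\Phi$, $\partial^{2}\Phi$, $\nabla^{2}\cL$, and $\partial^{2}(\nabla\cL)$ are written down. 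Your caveat about the Moore--Penrose inverse is well taken: $(\mP\mH\mP)^{\dagger}$ and $\mP^{-1}\mH^{\dagger}\mP^{-1}$ genuinely differ when $\mS$ does not preserve $T_{\vx}^{\perp}\Gamma$, and the paper's pull-back does not spell out how this is handled---so the extra line of justification you anticipate would indeed be needed for a fully rigorous version of either argument.
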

\begin{proof}
    We define $\mP := \mS^{1/2}$. And we do a reparameterization as $\vx' := \mP^{-1} \vx$, $\cL'(\vx) := \cL(\mP\vx)$, then we have
    \begin{align*}
        \partial \Phi'(\vx') &= \mP \partial \Phi_{\mS}(\mP \vx) \mP\\
        \nabla^2 L'(\vx') &= \mP \nabla^2 L(\mP \vx) \mP\\
        \partial^2 (\nabla L')(\vx')[\mM] &= \mP \partial^2 (\nabla L)(\mP \vx)[\mP \mM \mP]\\
         \partial^2 \Phi'(\vx') [\mM] &= \mP \partial^2 \Phi(\vx)[\mP \mM \mP].
    \end{align*}
    Notice that in the space of $\vx'$, the adaptive projection mapping $\Phi_{\mS}$ turns into a fixed gradient flow projection. And this allows us to directly apply Lemma C.4 in \citet{li2021happens}, which gives
    \begin{align*}
        \partial^2 \Phi'(x')[\vv,\vu] = -\partial \Phi'(x') \partial^2(\nabla \cL')(x')[v,\nabla^2 \cL'(x')^{\dagger}\vu] -  \nabla^2 \cL'(x')^{\dagger}\partial^2(\nabla \cL')(x')[\vv, \partial \Phi'(x')\vu].
    \end{align*}
    A slight modification using the above transformations gives
    \begin{align*}
        \partial^2 \Phi_{\mS}(\vx)[\mP \vv,\mP \vu] &= -\partial \Phi_{\mS}(\vx)\mS \partial^2(\nabla \cL)(\vx)[\mP \vv, \nabla^2 \cL(\vx)^{\dagger}\mS^{-1}\mP \vu] \\
        &~~~~- \mS^{-1} \nabla^2 \cL(\vx)^{\dagger}\partial^2(\nabla \cL)(\vx)[\mP \vv,\mS \partial \Phi(\vx)\mP \vu].
    \end{align*}
    We now redefine $\vu = \mP \vu$, $\vv =\mP \vv$, and we organize the above equation
    \begin{align*}
        \partial^2 \Phi_{\mS}(\vx)[\vu\vv^\top] &= -\partial \Phi_{\mS}(\vx)\mS \partial^2(\nabla \cL)(\vx)[\nabla^2 \cL(\vx)^{\dagger}\mS^{-1}\vu\vv^\top]\\
        &~~~~- \mS^{-1} \nabla^2 \cL(\vx)^{\dagger}\partial^2(\nabla \cL)(\vx)[\mS \partial \Phi(\vx)
    \vu\vv^\top].
    \end{align*}
    We completes the proof.
\end{proof}





\subsection{Iteration Stays Near Manifold}\label{sec:iter-near-manifold}
Now we begin the final preparations before deriving the slow SDE near the manifold. 
Note that in the end of convergence analysis, the total steps equal $K = \lfloor(T+1) \eta^{-2} \rfloor$ and the converging step $K_0 = \O(\frac{1}{\eta}\log \frac{1}{\eta})$. So after time shifting, the high probability convergence of $\lfloor(T+1) \eta^{-2} \rfloor - \O(\frac{1}{\eta}\log \frac{1}{\eta}) > \lfloor T\eta^{-2}\rfloor$ steps are ensured in \cref{convergence}. Now denote $K := \lfloor T\eta^{-2}\rfloor$ be the total number of steps in our analysis.  Let $\beta$ be some constant in $(0, 0.5)$, whose exact value will be specified later. First, we bound the movement of projected steps by showing that $\vphi$ shifts no more than $\tilde{\O}(\eta^{0.5-0.5\beta})$ within $\Delta K := \lfloor\eta^{-1-\beta}\rfloor$ steps, demonstrating the ``slowness'' of the dynamics of AGMs after the projection. 


\begin{lemma}\label{lem:proj-movement}
    For any $\delta = \O(\mathrm{poly}(\eta))$, with probability $1 - \delta$, for any $k \in [0, K - \Delta K]$, $\Delta k \in [\Delta K]$, $$\lnormtwo{\vphi_{k+\Delta k} - \vphi_k} \leq C_6 \eta^{0.5-0.5\beta} \sqrt{\log \frac{1}{\eta\delta}}$$
    for some constant $C_6$.
\end{lemma}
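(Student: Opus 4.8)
## Proof Proposal for Lemma (Projection Movement Bound)

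\textbf{Overall strategy.} The plan is to control the drift of $\vphi_k = \Phi_{S(\vv_k)}(\vtheta_k)$ over a window of $\Delta K = \lfloor \eta^{-1-\beta}\rfloor$ steps by writing the increment $\vphi_{k+\Delta k}-\vphi_k$ as a telescoping sum of one-step increments $\vphi_{j+1}-\vphi_j$, and then decomposing each one-step increment into (i) a ``deterministic/drift'' part of size $\O(\eta^2 \operatorname{polylog})$ and (ii) a martingale-difference part of size $\O(\eta)$ with bounded conditional variance $\O(\eta^2)$. Summing over $\Delta K$ terms, the drift contributes $\O(\Delta K \cdot \eta^2 \operatorname{polylog}) = \tilde\O(\eta^{1-\beta})$, which is lower order than the claimed $\tilde\O(\eta^{0.5-0.5\beta})$, while the martingale part, by a Azuma--Hoeffding / Freedman-type concentration inequality together with a union bound over the $\O(\operatorname{poly}(1/\eta))$ choices of $k$ and $\Delta k$, contributes $\O(\sqrt{\Delta K\cdot \eta^2 \log(1/\eta\delta)}) = \O(\eta^{0.5-0.5\beta}\sqrt{\log(1/\eta\delta)})$, which is the dominant term. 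This matches the structure used for the analogous SGD statement in \citet{li2021happens,gu2023and}.

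\textbf{Key steps, in order.}
First, I would invoke the first part of \Cref{thm:formal-main-result} (with the time shift of \cref{rmk:time-shift-theta0-phi0}) to condition on the high-probability event that $\vtheta_k$ stays in the working zone $\Gamma^{\epsilon_1}$ for all $k \le K$, so that $F(\vtheta,\vv) = \Phi_{S(\vv)}(\vtheta)$ is $\cC^4$ there by \Cref{lem:working-zone}, and so that $\cL(\vtheta_k)-\cL^* = \tilde\O(\eta)$, hence $\normtwo{\nabla\cL(\vtheta_k)} = \tilde\O(\sqrt\eta)$ and $\normtwo{\vtheta_k-\vphi_k} = \tilde\O(\sqrt\eta)$. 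Second, I would expand one step: $\vphi_{j+1}-\vphi_j = F(\vtheta_{j+1},\vv_{j+1}) - F(\vtheta_j,\vv_j)$, using $\vtheta_{j+1}-\vtheta_j = -\eta \mS_{j+1}\vm_{j+1} = \O(\eta)$ and $\vv_{j+1}-\vv_j = (1-\beta_2)(V(\vg_j\vg_j^\top)-\vv_j) = \O(\eta^2)$. A first-order Taylor expansion of $F$ with $\cC^4$-bounded derivatives gives
\[
\vphi_{j+1}-\vphi_j = -\eta\,\partial_\vtheta F(\vtheta_j,\vv_j)\,\mS_{j+1}\vm_{j+1} + \partial_\vv F(\vtheta_j,\vv_j)(\vv_{j+1}-\vv_j) + \O(\eta^2).
\]
The $\partial_\vv F$ term is $\O(\eta^2)$. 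For the main term, I would use the key cancellation $\partial_\vtheta F(\vzeta,\vv) S(\vv)\nabla\cL(\vzeta) = \partial\Phi_{S(\vv)}(\vzeta) S(\vv)\nabla\cL(\vzeta) = 0$ from \Cref{lmm:partial-phi-S-nabla-L}; since $\vtheta_j$ is only $\tilde\O(\sqrt\eta)$ from its projection, $\partial_\vtheta F(\vtheta_j,\vv_j)\mS_j \nabla\cL(\vtheta_j) = \tilde\O(\eta)$ after Taylor expanding around $\vphi_j$ and using $\partial^2 F$ bounds. Third, replacing the momentum $\vm_{j+1}$ by a telescoped average of past gradients $\nabla\ell_i(\vtheta_i)$ (geometric weights $(1-\beta_1)\beta_1^{j-i}$, tail beyond $\O(\log(1/\eta))$ steps negligible by \cref{amp:beta1-bound}), and splitting each $\nabla\ell_i(\vtheta_i) = \nabla\cL(\vtheta_i) + \vz_i$, gives the decomposition: the $\nabla\cL$ part is $\tilde\O(\eta)\cdot\O(\eta) = \tilde\O(\eta^2)$ wait — more carefully, $-\eta\partial_\vtheta F\,\mS\,\nabla\cL(\vtheta_i)$ is $\eta\cdot\tilde\O(\eta) = \tilde\O(\eta^2)$; the noise part $-\eta\partial_\vtheta F\,\mS\,\vz_i$ is $\O(\eta)$ with zero conditional mean and conditional second moment $\O(\eta^2)$. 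Fourth, I would set $D_{j} := \vphi_{j+1}-\vphi_j - \E_j[\vphi_{j+1}-\vphi_j]$, a martingale difference sequence with $\|D_j\| = \O(\eta)$ a.s., and apply a vector Azuma--Hoeffding (or Freedman) bound to $\sum_{j=k}^{k+\Delta k - 1} D_j$, getting $\O(\sqrt{\Delta K}\,\eta\,\sqrt{\log(1/\delta')})$ with probability $1-\delta'$; choosing $\delta' = \delta/\operatorname{poly}(1/\eta)$ and union-bounding over all $(k,\Delta k)$ pairs absorbs the $\operatorname{poly}(1/\eta)$ factor into the logarithm. The conditional-mean part $\sum_j \E_j[\vphi_{j+1}-\vphi_j]$ is bounded deterministically by $\Delta K\cdot\tilde\O(\eta^2) = \tilde\O(\eta^{1-\beta})$, which is $o(\eta^{0.5-0.5\beta})$. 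Combining gives the claim with $C_6$ depending on the $\cC^4$ bounds on $F$, the noise bound $R$, $R_0$, $R_2$, and $\mu$.

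\textbf{Main obstacle.} The delicate point is controlling the first-order term $-\eta\,\partial_\vtheta F(\vtheta_j,\vv_j)\mS_{j+1}\vm_{j+1}$ with the \emph{right} power of $\eta$: naively it is only $\O(\eta)$, and summing $\Delta K$ such terms would give $\O(\eta^{-\beta})$, which blows up. The cancellation $\partial\Phi_{\mS}(\vzeta)\mS\nabla\cL(\vzeta)=0$ must be combined with the closeness $\normtwo{\vtheta_j - \vphi_j} = \tilde\O(\sqrt\eta)$ and the fact that $\vm_{j+1}$ is (up to $\tilde\O(\eta)$-small corrections from the slow evolution of $\vtheta$ and $\mS$) a weighted average of recent noisy gradients, so that the ``signal'' component of $\vm_{j+1}$ along $\nabla\cL$ is itself $\tilde\O(\sqrt\eta)$ while the ``noise'' component survives only as a martingale. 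Making the bookkeeping of these nested error terms precise — in particular showing the $\vm$-versus-$\nabla\cL$ discrepancy and the $\mS_{j+1}$-versus-$\mS_j$ discrepancy are genuinely $\tilde\O(\eta)$ and do not accumulate — is the technical heart of the argument, and is exactly where the $\cC^4$-smoothness of $F$ from \Cref{lem:working-zone} and \cref{amp:beta1-bound} (constant-order $1-\beta_1$) are used.
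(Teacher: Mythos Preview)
Your overall strategy and the first three steps are essentially the paper's approach. However, there is a genuine gap in your Fourth step, and relatedly in how you handle the ``noise part'' at the end of the Third step.

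You claim that $\E_j[\vphi_{j+1}-\vphi_j]=\tilde\O(\eta^2)$, so that the drift in the $j$-indexed Doob decomposition sums to $\tilde\O(\eta^{1-\beta})$. This is false. Conditional on $\cF_j$, the momentum $\vm_j$ is fully measurable and still contains the \emph{past} noise terms $\vz_i$ ($i<j$), each of order $\O(1)$. Thus $\E_j[\vm_{j+1}]=\beta_1\vm_j+(1-\beta_1)\nabla\cL(\vtheta_j)$ is $\O(1)$; the cancellation $\partial\Phi_{\mS_j}(\vtheta_j)\mS_j\nabla\cL(\vtheta_j)=0$ kills only the second piece, leaving $\E_j[\vphi_{j+1}-\vphi_j]=\O(\eta)$, and summing $\Delta K$ such terms gives $\O(\eta^{-\beta})$, which blows up. Equivalently, the ``noise part'' $-\eta\,\partial\Phi_{\mS_j}(\vtheta_j)\mS_j\,\vz_i$ in your Third step does \emph{not} have zero conditional mean at time $i$: its coefficient depends on $\vtheta_j$ with $j\ge i$ and is not $\cF_i$-measurable, so you cannot pull the coefficient out of $\E_i[\,\cdot\,]$.

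The paper fixes exactly this adaptedness issue. For each step $j$ it recenters both $\partial\Phi$ and $\mS$ at the earlier point $\vtheta_{\bar k}$ with $\bar k=j-2\log_{\beta_1}\eta$, at a cost of $\O(\eta^2\log\tfrac1\eta)$ per step. Since $\bar k\le i$ for every $i$ appearing in the momentum expansion of $\vm_{j+1}$, the coefficient of each $\vz_i$ is now $\cF_i$-measurable. After swapping the order of the $(j,i)$ double sum, one obtains a genuine martingale-difference sequence $\mZ_i$ (an $\cF_i$-measurable matrix times $\vz_i$, bounded by $\O(\eta)$) indexed by the \emph{noise time} $i$, to which Azuma--Hoeffding applies directly. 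This recentering is the missing ingredient in your proposal; without it neither the $j$-indexed nor the naive $i$-indexed martingale structure is valid. (A minor aside: the identity $\partial\Phi_{\mS}(\vx)\mS\nabla\cL(\vx)=0$ from \Cref{lmm:partial-phi-S-nabla-L} holds at \emph{every} $\vx$, not only on $\Gamma$, so the Taylor expansion around $\vphi_j$ in your Step~2 is unnecessary --- you get $0$ exactly at $\vtheta_j$.)
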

\begin{proof}
    Recall from the proof of the first part of \cref{thm:formal-main-result} that $\vtheta_k$ stays inside $\Gamma^{\epsilon_1}$ For any $k \in [0, K]$ almost surely, and from \cref{lem:working-zone} that $\Phi_{S(\vv)}(\vtheta)$ is $\cC^4$ on $\cX^{\epsilon_1} := \Gamma^{\epsilon_1} \times \R_{[0,R_1]}^D$. Since $\cX^{\epsilon_1}$ is compact, $\Phi_{S(\vv)}(\vtheta)$ is then bounded and Lipschitz on $\cX^{\epsilon_1}$. Similarly, $\partial \Phi_{S(\vv)}(\vtheta)$ is bounded and Lipschitz on $\cX^{\epsilon_1}$. For any $k \in [0, K)$, let $\bar{k} = k - 2\log_{\beta_1}{\eta}$, we have: \begin{align*}
        \vphi_{k+1} - \vphi_{k} &= \Phi_{S(\vv_{k+1})}(\vtheta_{k+1}) - \Phi_{S(\vv_k)}(\vtheta_k) \\
        &= \Phi_{S(\vv_{\bar{k}})}(\vtheta_{k+1}) - \Phi_{S(\vv_{\bar{k}})}(\vtheta_k) + \O\left(\eta^2\log \frac{1}{\eta}\right) \\
        &= \partial\Phi_{S(\vv_{\bar{k}})}(\vtheta_k)(\vtheta_{k+1} - \vtheta_k) + \O\left(\eta^2\log \frac{1}{\eta}\right) \\
        &= \partial\Phi_{S(\vv_{\bar{k}})}(\vtheta_k)(\eta S(\vv_{k+1})\vm_{k+1}) + \O\left(\eta^2\log \frac{1}{\eta}\right) \\
        &= \partial\Phi_{S(\vv_{\bar{k}})}(\vtheta_{\bar{k}})(\eta S(\vv_{\bar{k}})\vm_{k+1}) + \O\left(\eta^2\log \frac{1}{\eta}\right),
    \end{align*}
    where the second equality comes from the fact that one step of update on $\vv$ is of $\O(\eta^2)$ and the Lipschitzness of $S$ and $\Phi$, the third equality comes from $\lnormtwo{\vtheta_{k+1}-\vtheta_k} = \O(\eta)$, and the last equality follows from the boundedness and Lipschitzness of $\partial \Phi$. We can decompose $\vm_k$ as: \begin{align*}
        \vm_{k+1} &= (1-\beta_1)\sum_{i=\bar{k}}^{k}\beta_1^{k-i}(\nabla\cL(\vtheta_i) + \vz_i) + \O(\eta^2) \\
        &= (1-\beta_1)\sum_{i=\bar{k}}^{k}\beta_1^{k-i}\left(\nabla\cL(\vtheta_{\bar{k}}) + \O\left(\eta\log\frac{1}{\eta}\right)\right) + (1-\beta_1)\sum_{i=\bar{k}}^{k}\beta_1^{k-i}\vz_i + \O(\eta^2).
    \end{align*}
    A key observation is that $\partial\Phi_{S(\vv_{\bar{k}})}(\vtheta_{\bar{k}})S(\vv_{\bar{k}})\nabla\cL(\vtheta_{\bar{k}}) = 0$ from \cref{lmm:phi-H-0}, which allows us to view $\vphi_{k+1} - \vphi_{k}$ as $\sum_{i=\bar{k}}^{k}\tilde{\vz}_{k,i} + \O(\eta^2\log\frac{1}{\eta})$ where $\tilde{\vz}_{k,i} = \partial\Phi_{S(\vv_{\bar{k}})}(\vtheta_{\bar{k}})(\eta(1-\beta_1)\beta_1^{k-i} S(\vv_{\bar{k}})\vz_i)$. Note that $\tilde{\vz}_{k,i}$ is $\cF_{i+1}$-measurable and its mean is $\bf{0}$, since $\tilde{\vz}_{k,i}$ just applies a linear tensor transformation to $\vz_i$. If we define a constant $C_{6a}:=\sup\left\{\lnormtwo{\partial\Phi_{S(\vv)}(\vtheta)} \mid (\vv, \vtheta) \in \cX^{\epsilon_1}\right\} \cdot (1-\beta_1) \cdot R_2$ that is independent of $k$ and $i$, then $\lnormtwo{\tilde{\vz}_{k,i}}$ is almost surely bounded by $\eta\beta_1^{k-i}C_{6a}\lnormtwo{\vz_i}$.

    For any $k \in [0, K - \Delta K]$ and $\Delta k \in [\Delta K]$, we have \begin{align*}
        \vphi_{k+\Delta k} - \vphi_k &= \sum_{j=k}^{k+\Delta k-1} \left(\vphi_{j+1} - \vphi_{j}\right) \\
        &= \sum_{j=k}^{k+\Delta k-1} \left(\sum_{i=j-2\log_{\beta_1}\eta}^{j}\tilde{\vz}_{j,i}+O\left(\eta^2 \log\frac{1}{\eta}\right)\right) \\
        &= \sum_{i=k-2\log_{\beta_1}\eta}^{k+\Delta k - 1}\sum_{j=i}^{\min\left\{k+\Delta k - 1, j +2\log_{\beta_1}\eta\right\}} \tilde{\vz}_{j,i} + \tilde{\O}(\eta^{1-\beta})
    \end{align*}
    Denote $\mZ_i := \sum_{j=i}^{\min\left\{k+\Delta k - 1, j +2\log_{\beta_1}\eta\right\}} \tilde{\vz}_{j,i}$, then each $\mZ_i$ is a linear transformation of $\vz_i$ so it is with zero mean, and also $\lnormtwo{\mZ_i} \leq \eta \cdot \frac{C_{6a}}{1-\beta_1} \lnormtwo{\vz_i} \leq \eta \cdot \frac{C_{6a}R}{1-\beta_1}$ a.s. Azuma-Hoeffding's inequality then gives that for any $\delta = \O(\mathrm{poly}(\eta))$, with probability $1 - \delta$, \begin{align*}
        \vphi_{k + \Delta k} - \vphi_{k} &\leq \sqrt{2 \eta^2 \left(\frac{C_{6a}R}{1-\beta_1}\right)^2 \cdot \left(R_{\mathrm{grp}}H + 2\log_{\beta_1}\eta\right) \cdot \log\frac{2}{\delta}} \\
        &\leq C_{6b} \sqrt{\eta^{1-\beta}\log\frac{2}{\delta}}
    \end{align*} 
    for some constant $C_{6b}$. Finally, plugging in $\delta' = \frac{\delta}{K \cdot \Delta K}$ and taking union bound over all $k \in [0, K - \Delta K]$ and $\Delta k \in [\Delta K]$ gives the theorem.
\end{proof}


    


With the concentration bounds so far, we can show that the dynamics behaves ``well'' during the whole iteration, and we formalize this idea below.

\begin{definition}[$\delta$-good]\label{def:delta-good}
    For any $\delta = \O(\mathrm{poly}(\eta))$ and any step $\hat{K} \in [K]$, we define step $\hat{K}$ to be $\delta$-good if and only if the simultaneous establishment of the following statements: 
    \begin{enumerate}
        \item For any $k \in [0, \hat{K}]$, $\vphi_k \in \Gamma$ and $\lnormtwo{\vtheta_k - \vphi_k} \leq C_{8a}\sqrt{\eta\log\frac{1}{\eta\delta}}$.
        \item For any $k \in [0, \hat{K} - \Delta K]$, $\Delta k \in [\Delta K]$, $\lnormtwo{\vphi_{k+\Delta k} - \vphi_k} \leq C_{8b} \eta^{0.5-0.5\beta}\sqrt{\log \frac{1}{\eta \delta}}$.
    \end{enumerate}
    Here $C_{8a}$ and $C_{8b} = C_6\sqrt{2}$ are two constants.
\end{definition}

\begin{lemma}
    When $\eta$ is sufficiently small, with probability $1 - \eta^{100}$, the event $\eta^{100}$-good holds for any step $\hat{K}$ in $[K]$.
\end{lemma}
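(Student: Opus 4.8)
The plan is to prove the lemma by a union bound over all prefixes, reducing the claim that ``$\eta^{100}$-good holds for every step $\hat K\in[K]$'' to showing that the two defining conditions of $\delta$-goodness each fail with probability at most $\tfrac12\eta^{100}$ (with $\delta=\eta^{100}$). The second condition is already essentially \Cref{lem:proj-movement}: applying that lemma with $\delta=\eta^{100}$ (which is of order $\poly(\eta)$, so the hypothesis is met) gives, with probability $1-\eta^{100}$, the uniform bound $\lnormtwo{\vphi_{k+\Delta k}-\vphi_k}\le C_6\eta^{0.5-0.5\beta}\sqrt{\log\frac{1}{\eta\delta}}$ for all admissible $k,\Delta k$, which matches the second clause with $C_{8b}=C_6\sqrt2$ after absorbing the $\sqrt2$ coming from $\log\frac{1}{\eta\delta}=\log\frac{1}{\eta^{101}}\le 101\log\frac1\eta$ and slack in constants; more carefully the $\sqrt2$ is chosen so the clause holds for all $\hat K$, since the event in \Cref{lem:proj-movement} is already stated uniformly over $k\in[0,K-\Delta K]$, hence implies it for every prefix $\hat K\le K$.

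The first condition is the substantive one. I would establish it in two parts. For the ``$\vphi_k\in\Gamma$'' part, recall from the proof of the first part of \Cref{thm:formal-main-result} that, on an event of probability at least $1-\delta$, every iterate $\vtheta_k$ stays inside $\Gamma^{\epsilon_1}$ for $k\in[0,K]$; by the Working Zone Lemma (\Cref{lem:working-zone}, property 2, together with \Cref{lem:boundedness} which gives $\frac1{R_0}\mI\preceq\mS_k\preceq R_2\mI$), the preconditioned gradient flow from any point of $\Gamma^{\epsilon_2}\supseteq\Gamma^{\epsilon_1}$ converges to a point of $\Gamma$, so $\vphi_k=\Phi_{\mS_k}(\vtheta_k)\in\Gamma$ for all such $k$. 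For the distance bound $\lnormtwo{\vtheta_k-\vphi_k}\le C_{8a}\sqrt{\eta\log\frac1{\eta\delta}}$, I would invoke \Cref{lem:traj-bound-by-L} applied to the $\mS_k$-preconditioned gradient flow started at $\vtheta_k$: since that flow converges to $\vphi_k$ and $\cL$ is $\mu$-PL on $\Gamma^{\epsilon_3}$, \Cref{lem:traj-bound-by-L} yields $\lnormtwo{\vtheta_k-\vphi_k}\le \frac{2R_2}{(1/R_0)\sqrt{2\mu}}\sqrt{\cL(\vtheta_k)-\cL^*}$, and then the high-probability convergence bound from Part~1 of \Cref{thm:formal-main-result}, namely $\cL(\vtheta_k)-\cL^*=\O(\eta\log\frac1{\eta\delta})$ (valid for all $k\le K$ on the same good event, as noted in \Cref{sec:iter-near-manifold}), gives $\lnormtwo{\vtheta_k-\vphi_k}=\O(\sqrt{\eta\log\frac1{\eta\delta}})$, which fixes $C_{8a}$. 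Again this bound holds simultaneously for all $k\in[0,K]$ on a single event, hence for every prefix $\hat K$.

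Finally I would combine: intersect the probability-$(1-\eta^{100})$ event from \Cref{lem:proj-movement} with the probability-$(1-\eta^{100})$ event from Part~1 of \Cref{thm:formal-main-result} on which both $\vtheta_k\in\Gamma^{\epsilon_1}$ and the loss bound hold for all $k\le K$ — on the intersection, which has probability at least $1-2\eta^{100}$, \emph{every} step $\hat K\in[K]$ is $\eta^{100}$-good. To get the stated probability $1-\eta^{100}$ rather than $1-2\eta^{100}$, I would simply run Part~1 and \Cref{lem:proj-movement} with $\delta=\tfrac12\eta^{100}$ instead; since the exponent $100$ (and likewise the $\eta^{200}$ threshold in Part~1, per the footnote) is an arbitrary large constant and all order estimates are unaffected by the constant factor $\tfrac12$ inside a logarithm, this is harmless, and ``$\eta$ sufficiently small'' absorbs the requirement $\tfrac12\eta^{100}>\eta^{200}$ and $\delta=\O(\poly(\eta))$. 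The main obstacle — really the only non-bookkeeping point — is making sure the distance bound in clause~1 is genuinely uniform over all $k\le K$ with a single logarithmic factor: this is exactly why Part~1 of \Cref{thm:formal-main-result} is stated as a high-probability bound holding for \emph{all} $k\le K$ at once (with the union bound already paid there via the $\log\frac{K}{\delta}=\tilde\O(\log\frac1{\eta\delta})$ term), so no additional union bound over $k$ is needed here beyond what those two cited results already provide.
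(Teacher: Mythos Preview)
Your proposal is correct and follows essentially the same approach as the paper: split the $\delta=\eta^{100}$-good event into its two clauses, use the convergence bound together with \Cref{lem:traj-bound-by-L} for clause~1 and \Cref{lem:proj-movement} for clause~2, then take a union bound with the probability halved. One minor point: the paper actually shows $\vtheta_k\in\Gamma^{\epsilon_1}$ \emph{almost surely} (not just on a high-probability event) in the proof of Part~1 of \Cref{thm:formal-main-result}, so your argument for $\vphi_k\in\Gamma$ via the Working Zone Lemma is slightly more direct than the paper's inductive one (which bounds $\lnormtwo{\vtheta_k-\vphi_k}$ first and then argues one step cannot carry $\vtheta_{k+1}$ out of $\Gamma^{\epsilon_2}$), but both are valid and rest on the same ingredients.
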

\begin{proof}
    Denote $\delta:=\eta^{100}$. From \cref{convergence}, with probability $1 - \delta/2$, all $k \in [0, K]$ satisfy $\cL(\vtheta_k) - \cL^* \leq \cL(\vtheta_0) - \cL^* + C_{5b} \eta \log \frac{2K^2}{\delta}$. Note that $D_0 : = \cL(\vtheta_0) - \cL^*$ is of $\O(\eta \log \frac{1}{\eta\delta})$ since time $0$ now refer to the time after convergence. Combining \cref{lem:traj-bound-by-L}, this implies $\lnormtwo{\vtheta_k - \vphi_k} \leq \frac{2C_2}{\sqrt{2\mu}C_1}\cdot \sqrt{C_{5a}D_0 + C_{5b} \eta \log \frac{2K^2}{\delta}}$ for any $k \in [0, K]$. When $\eta$ is small enough such that $\lnormtwo{\vtheta_k - \vphi_k} \leq \frac{2C_2}{\sqrt{2\mu}C_1}\cdot \sqrt{C_{5a}D_0 + C_{5b} \eta \log \frac{2K^2}{\delta}} + \eta R/\epsilon < \epsilon_2$, any $\vphi_k \in \Gamma$ with $k \geq 0$ will imply $\vphi_{k+1} \in \Gamma$, since $\vtheta_{k+1}$ cannot escape $\Gamma^{\epsilon_2}$. Giving $\vphi_0 \in \Gamma$ and using induction, we conclude that all $\vphi_k \in \Gamma$ for $k \geq 0$.

    When the above holds, the requirement of  \cref{lem:proj-movement} is met. Then with probability $1 - \delta/2$, for any $k \in [0, K - \Delta K]$, $\Delta k \in [\Delta K]$, we have $\lnormtwo{\vphi_{k+\Delta k} - \vphi_k} \leq C_6 \eta^{0.5-0.5\beta} \sqrt{\log \frac{2}{\eta\delta}}$.

    Finally, we just take the union of \cref{convergence} and \cref{lem:proj-movement}. With $\log \frac{2K^2}{\delta} \leq 8\log \frac{1}{\eta\delta}$ and $\log \frac{2}{\eta\delta} \leq 2\log\frac{1}{\eta\delta}$ (which are mild restrictions since $\eta$ is small), we have the theorem.
\end{proof}

We have proved that our iteration will behave well with high probability, but chances still exist that the iteration is driven out of working zones and becomes intractable. We define a well-behaved sequence that manually redirects the iteration when extreme cases happen. 

\begin{definition}[Well-behaved Sequence]
    Denote the event of step $k$ being $\eta^{100}$-good as $\cE_k$. Let $\vphi_{\mathrm{null}}$ be a fixed point on $\Gamma$. Starting from $\hat{\vtheta}_0 = \vtheta_0$ and $\hat{\vv}_0 = \vv_0$, we define a sequence of $(\hat{\vtheta}_k,\hat{\vv}_k, \hat{\vm}_k)$ as follows: \begin{align*}
    \hat{\vm}_{k+1} &:= \beta_1 \hat{\vm}_{k} + (1-\beta_1) (\nabla\cL(\hat{\vtheta}_{k}) + \vz_k ) \\
    \hat{\vv}_{k+1} &:= \beta_2 \hat{\vv}_{k} + (1-\beta_2) V\!((\nabla\cL(\hat{\vtheta}_{k}) + \vz_k )(\nabla\cL(\hat{\vtheta}_{k}) + \vz_k )^\top) \\
    \hat{\vtheta}_{k+1} &:= \1_{\cE_k}\vtheta_{k+1} + \1_{\bar{\cE}_k} \vphi_{\mathrm{null}}, 
    \end{align*}
    where $\1$ is the indicator function: $\1_{\cE} = 1$ if event $\cE$ happens and $\1_{\cE}=0$ otherwise.
\end{definition}

Note that the update of $\hat{\vtheta}_k$ can be written as \begin{align*}
    \hat{\vtheta}_{k+1} &:= \hat{\vtheta}_{k} - \eta S(\hat{\vv}_{k+1}) \hat{\vm}_{k+1} \\
    & \quad \underbrace{- \1_{\bar{\cE}_k}(\hat{\vtheta}_{k} - \eta S(\hat{\vv}_{k+1}) \hat{\vm}_{k+1} ) + \1_{\bar{\cE}_k}\vphi_{\mathrm{null}}}_{:=\ve_{k}}
\end{align*}
where $\ve_k$ denotes the redirection under extreme cases. By definition, $\ve_k$ equals zero in the vast majority of cases, and in other cases it's still bounded by a constant, so all moments of $\ve_k$ are within $\O(\eta^{100})$ which is negligibly small.

\subsection{Moment Calculation of AGMs Near Manifold}\label{sec:momen-calculation}

\paragraph{Additional Notations.}
To utilize the analysis framework in \citet{gu2023andwhendoeslocal}, we first introduce some notations needed. Consistent with \citet{gu2023andwhendoeslocal}, we pretend that AGMs proceed with $H=\frac{1}{\eta}$ local steps, as a single worker (without multiple workers). We denote every $H$ steps as one round. Next, we define a ``giant step'', which encompasses $R_\grp=\frac{1}{\eta^{\beta}}$ rounds, corresponding to $R_\grp\cdot H$ steps. We consider a total timescope of $\frac{T}{\eta^2}$ steps, which corresponds to $\frac{T}{\eta^{1-\beta}}$ giant steps.

For any $0 \leq s < R_{\mathrm{grp}}$ and $0 \leq t \leq H$, we use $\hat{\vtheta_{t}}^{(s)}$ and $\hat{\vtheta}_k$ (where $k = s H + t$) exchangeably to denote the parameter we get on the $t$-th local step of round $s$, which is also the $k$-th global step. Also note that for any $0 \leq s < R_{\text{grp}}$, $\hat{\vtheta}_{H}^{(s)}$ and $\hat{\vtheta}_{0}^{(s+1)}$ refer to the same thing. We define the notation $\hat{\vv}^{(s)}_t$, $\hat{\vm}^{(s)}_t$ and $\cE^{(s)}_t$ in the same way as we did for $\vtheta$. Furthermore, we define
\begin{align*}
    \hat{\vg}_t^{(s)} &:= \nabla \ell_{t}^{(s)}\left(\hat{\vtheta}_{t}^{(s)}\right),\;\hat{\mS}_k = S\left(\hat{\vv}_k\right),\;\hat{\mS}_t^{(s)} := S\left(\hat{\vv}_t^{(s)}\right),\;\hat{\mS}^{(s)}:=\hat{\mS}_0^{(s)},\;
    \hat{\vphi}^{(s)}:=\vPhi_{\hat{\mS}^{(s)}}\left(\hat{\vtheta}_{0}^{(s)}\right),\\
    \hat{\vx}_{t}^{(s)}&:= \hat{\vtheta}_{t}^{(s)} - \hat{\vphi}^{(s)},\;\Delta \hat{\vphi}^{(s)}:= \hat{\vphi}^{(s)} - \hat{\vphi}^{(0)},\;\mSigma_0:=\mSigma(\hat{\vphi}^{(0)}),\;\mP_{\parallel}:=\partial \vPhi_{\hat{\mS}^{(0)}}(\hat{\vphi}^{(0)}),\;\mP_{\perp}:= \mI - \mP_{\parallel},\\
    \hat{\vq}_{t}^{(s)} &:= \E\left[\hat{\vx}_{t}^{(s)}\right],\;\hat{\mA}_{t}^{(s)}:= \E\left[\hat{\vx}_{t}^{(s)} \hat{\vx}_{t}^{(s)\top}\right],\;\hat{\mB}_{t}^{(s)}:= \E\left[\hat{\vx}_{t}^{(s)} \Delta \hat{\vphi}^{(s)\top}\right].
\end{align*}

\begin{corollary}\label{concentration-after-convergence}
    There exist constants $C_{9a}, C_{9b}, C_{9c}$ such that for all $0 \leq s < R_{\mathrm{grp}}$, $0 \leq t \leq H$, \begin{align*}
        \lnormtwo{\hat{\vx}_{t}^{(s)}} & \leq C_{9a} \sqrt{\eta \log \frac{1}{\eta}}, \\
        \lnormtwo{\hat{\vtheta}_{t}^{(s)} - \hat{\vtheta}_0^{(s)}} & \leq C_{9b} \sqrt{\eta \log \frac{1}{\eta}}, \\
        \lnormtwo{\hat{\vphi}^{(s)}-\hat{\vphi}^{(0)}} & \leq C_{9c}\eta^{0.5-0.5\beta}\sqrt{\log \frac{1}{\eta}}.
    \end{align*}
\end{corollary}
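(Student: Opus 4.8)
\textbf{Proof proposal for Corollary~\ref{concentration-after-convergence}.}

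The plan is to transfer the high-probability ``$\eta^{100}$-good'' bounds, which hold for the original iteration $\vtheta_k$ on the event $\cE_k$, to deterministic-in-expectation bounds for the well-behaved surrogate $\hat{\vtheta}_k$, and then translate the global indexing $k$ into the local/round indexing $(s,t)$. Recall that by construction $\hat{\vtheta}_{k+1} = \1_{\cE_k}\vtheta_{k+1} + \1_{\bar\cE_k}\vphi_{\mathrm{null}}$, so whenever the $\eta^{100}$-good event holds the surrogate coincides with $\vtheta$, and when it fails the surrogate is pinned to a fixed point of $\Gamma$. Since the $\eta^{100}$-good event holds with probability at least $1-\eta^{100}$, and on its complement all the relevant quantities $\hat{\vx}_t^{(s)}$, $\hat{\vtheta}_t^{(s)}-\hat{\vtheta}_0^{(s)}$, $\hat{\vphi}^{(s)}-\hat{\vphi}^{(0)}$ are bounded by an absolute constant (because $\vphi_{\mathrm{null}}\in\Gamma$ is fixed, $\Gamma$ is compact, and each AGM step moves $\vtheta$ by at most $\eta R R_2 = O(\eta)$), the failure event contributes at most $O(\eta^{100})$ to any expectation or to any quantity we care about. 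Hence it suffices to establish the three bounds on the good event itself.

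On the $\eta^{100}$-good event, the three bounds follow almost directly from the two clauses of Definition~\ref{def:delta-good} with $\delta=\eta^{100}$, so $\log\frac{1}{\eta\delta} = \log\frac{1}{\eta^{101}} = 101\log\frac{1}{\eta} = \Theta(\log\frac1\eta)$. First I would handle the third bound: writing $k = sH + t$ with $H = \eta^{-1}$ and $R_{\mathrm{grp}} = \eta^{-\beta}$, the index $k$ for any $0\le s< R_{\mathrm{grp}}$, $0\le t\le H$ lies in $[0, R_{\mathrm{grp}} H] = [0, \eta^{-1-\beta}] = [0,\Delta K]$, so the displacement $\hat{\vphi}^{(s)}-\hat{\vphi}^{(0)} = \vphi_{sH+t}-\vphi_0$ is exactly of the form covered by clause 2 of $\delta$-goodness (taking $k=0$, $\Delta k = sH+t\le \Delta K$), giving $\lnormtwo{\hat{\vphi}^{(s)}-\hat{\vphi}^{(0)}}\le C_{8b}\eta^{0.5-0.5\beta}\sqrt{\log\frac{1}{\eta\delta}} = O(\eta^{0.5-0.5\beta}\sqrt{\log\frac1\eta})$, which is the claimed bound with $C_{9c}$ an appropriate constant. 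Second, $\hat{\vx}_t^{(s)} = \hat{\vtheta}_t^{(s)} - \hat{\vphi}^{(s)} = \vtheta_{sH+t} - \Phi_{S(\vv_{sH})}(\vtheta_{sH})$; here I would note that $\vphi^{(s)} := \Phi_{\hat\mS^{(s)}}(\hat\vtheta_0^{(s)})$ uses the preconditioner frozen at the \emph{start} of round $s$ rather than the running $\vphi_{sH+t}=\Phi_{S(\vv_{sH+t})}(\vtheta_{sH+t})$, but the two differ by $O(\eta^2\log\frac1\eta)$ over $t\le H = \eta^{-1}$ steps — exactly the estimate used in the proof of Lemma~\ref{lem:proj-movement} from Lipschitzness of $S$ and $\Phi$ (a single step changes $\vv$ by $O(\eta^2)$, so $H$ steps change it by $O(\eta)$, which is absorbed) — so $\lnormtwo{\hat\vx_t^{(s)}} \le \lnormtwo{\vtheta_{sH+t}-\vphi_{sH+t}} + O(\eta\log\tfrac1\eta) \le C_{8a}\sqrt{\eta\log\frac{1}{\eta\delta}} + O(\eta\log\tfrac1\eta) = O(\sqrt{\eta\log\frac1\eta})$ by clause 1, giving $C_{9a}$. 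Third, $\lnormtwo{\hat\vtheta_t^{(s)}-\hat\vtheta_0^{(s)}} \le \lnormtwo{\hat\vx_t^{(s)}} + \lnormtwo{\vtheta_{sH}-\vphi^{(s)}} + \lnormtwo{\vphi^{(s)}-\vphi^{(s)}}$ — more precisely $\hat\vtheta_t^{(s)}-\hat\vtheta_0^{(s)} = \hat\vx_t^{(s)} - \hat\vx_0^{(s)}$ and $\hat\vx_0^{(s)} = \vtheta_{sH}-\vphi^{(s)}$ is itself $O(\sqrt{\eta\log\frac1\eta})$ by the same clause-1 estimate — so the triangle inequality yields $\lnormtwo{\hat\vtheta_t^{(s)}-\hat\vtheta_0^{(s)}} = O(\sqrt{\eta\log\frac1\eta})$, giving $C_{9b}$.

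The only point requiring genuine care — and what I expect to be the main (though modest) obstacle — is the bookkeeping around the three different ``base points'': the running gradient-flow projection $\vphi_{sH+t}$ (appearing in $\delta$-goodness), the round-frozen projection $\hat\vphi^{(s)} = \Phi_{\hat\mS^{(s)}}(\hat\vtheta_0^{(s)})$ (appearing in the corollary), and the statement's reference point $\hat\vphi^{(0)}$. One must verify that replacing the running preconditioner by its round-start value incurs only $O(\eta^2\log\frac1\eta)$ error over a round and $O(\eta^{1-\beta}\log\frac1\eta)$ over all $R_{\mathrm{grp}}$ rounds — the latter being dominated by the $\eta^{0.5-0.5\beta}\sqrt{\log\frac1\eta}$ term in the third bound since $1-\beta > 0.5-0.5\beta$ — and that the $\eta^{100}$-good event of the \emph{original} sequence is the correct event to condition on, which it is because $\hat\vtheta$ and $\vtheta$ agree on that event by definition of the well-behaved sequence. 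Once these identifications are made explicit, the corollary is immediate; I would simply state it as a consequence of Definition~\ref{def:delta-good}, the preceding lemma on the probability of $\eta^{100}$-goodness, and the round-to-step index translation $k=sH+t\le R_{\mathrm{grp}}H=\Delta K$.
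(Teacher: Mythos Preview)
Your approach coincides with the paper's: split on the $\eta^{100}$-good event, and on that event read the three bounds off Definition~\ref{def:delta-good} with $\delta=\eta^{100}$. Your good-event bookkeeping (frozen versus running preconditioner, the index translation $k=sH+t\le R_{\mathrm{grp}}H=\Delta K$) is in fact more careful than the paper's two-sentence ``follows directly.''

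The one place you diverge is the complement. You argue the quantities are bounded by an absolute constant there and hence ``contribute at most $O(\eta^{100})$ to any expectation.'' But the corollary is an almost-sure statement, not an expectation bound, so an $O(1)$ bound on a probability-$\eta^{100}$ event does not deliver the claimed pathwise $O(\sqrt{\eta\log\tfrac1\eta})$ inequalities. The paper's fix is simpler and sharper: once $\cE_k$ fails it stays failed, so every subsequent $\hat\vtheta$ is pinned to $\vphi_{\mathrm{null}}\in\Gamma$; since $\vphi_{\mathrm{null}}$ is its own projection under any preconditioner, every subsequent $\hat\vphi$ equals $\vphi_{\mathrm{null}}$ as well, and all three differences in the corollary become identically $\vzero$ on the bad event. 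With that observation in place of your expectation remark, your proposal and the paper's proof are the same.
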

\begin{proof}
    Substituting $\delta = \eta^{100}$. When $\cE$ holds, this follows directly from the definition of $\delta$-goodness; Otherwise, all $\hat{\vtheta}$ and $\hat{\vphi}$ are equal, and these quantities are equal to $\bf{0}$.
\end{proof}


\paragraph{Impact of Momentum.} Our conclusion regrading to the impact of Momentum on the implicit bias is similar to the conclusion in \citet{wang2023marginal}: It does not impact the implicit bias. Further, our analysis is based on moment methods and can give exact error bounds. First, we state some technical lemmas in order to show that introducing momentum will not cause the gradient to deviate too much from itself, i.e. $\E[\hat{\vm}_t]$ is close to $\E[\hat{\vg}_t]$.  Once this guarantee is established, we can replace $\hat{\vm}_t$ with $\hat{\vg}_t$ in the moment calculation to simplify it. The general idea of the proof is to show that if $i$ is close to $t$, then $\E[\nabla\cL(\hat{\vtheta}_{i-1})]$ will become close to $\E[\nabla\cL(\hat{\vtheta}_{t-1})]$, and if $i$ is far from $t$, then the contribution of $\E[\nabla\cL(\hat{\vtheta}_{i-1})]$ would be negligible in $\E[\hat{\vm}_t]$. 

\begin{lemma}\label{E-G-closeness}
    For any $k \geq 0$, we have \begin{align*}
        \lnormtwo{\E\left[\nabla \cL\left(\hat{\vtheta}_{k+1}\right) - \nabla \cL\left(\hat{\vtheta}_{k}\right)\right]} \leq C_{10} \eta^{1.5}
    \end{align*}
    for some constant $C_{10}$.
\end{lemma}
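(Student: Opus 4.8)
\textbf{Proof proposal for \Cref{E-G-closeness}.}

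The plan is to expand $\nabla\cL(\hat{\vtheta}_{k+1})-\nabla\cL(\hat{\vtheta}_{k})$ via a first-order Taylor expansion of $\nabla\cL$ along the update step $\hat{\vtheta}_{k+1}-\hat{\vtheta}_{k} = -\eta\hat{\mS}_{k+1}\hat{\vm}_{k+1} + \ve_k$, and then carefully account for which pieces survive in expectation. Writing $\hat{\vtheta}_{k+1}-\hat{\vtheta}_k = -\eta\,\hat{\mS}_{k+1}\hat{\vm}_{k+1}+\ve_k$, the $\cC^5$-smoothness of $\cL$ (\Cref{amp:basic}) and $\rho$-smoothness (\Cref{amp:smooth}) give
\begin{align*}
\nabla\cL(\hat{\vtheta}_{k+1})-\nabla\cL(\hat{\vtheta}_k)
= \nabla^2\cL(\hat{\vtheta}_k)\bigl(\hat{\vtheta}_{k+1}-\hat{\vtheta}_k\bigr) + \vr_k,
\qquad \lnormtwo{\vr_k}\le \tfrac{\rho_3}{2}\lnormtwo{\hat{\vtheta}_{k+1}-\hat{\vtheta}_k}^2,
\end{align*}
where $\rho_3$ bounds $\lnormtwo{\nabla^3\cL}$ on the working zone. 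Since $\lnormtwo{\hat{\vtheta}_{k+1}-\hat{\vtheta}_k}=\O(\eta)$ with the $\ve_k$ contribution of order $\eta^{100}$, the remainder contributes $\O(\eta^2)$ to the norm, hence also to its expectation. The main term is $-\eta\,\E[\nabla^2\cL(\hat{\vtheta}_k)\hat{\mS}_{k+1}\hat{\vm}_{k+1}]$; since $\nabla^2\cL$, $\hat{\mS}_{k+1}$ are bounded on the working zone and $\lnormtwo{\hat{\vm}_{k+1}}$ is bounded by $R$ (it is an EMA of bounded noisy gradients, \Cref{amp:gradient-bound}), this term has norm $\O(\eta)$ deterministically.

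Wait — that only gives $\O(\eta)$, not $\O(\eta^{1.5})$. The point I would exploit is that near the manifold $\nabla\cL(\hat{\vtheta}_k)$ itself is small: from the $\delta$-goodness/convergence results already established, on the event $\cE_k$ we have $\lnormtwo{\nabla\cL(\hat{\vtheta}_k)}=\O(\sqrt{\eta\log(1/\eta)})$, hence $\lnormtwo{\hat{\vm}_{k+1}}$, being an EMA of such gradients plus noise, splits into a deterministic-gradient part of size $\O(\sqrt{\eta\log(1/\eta)})$ and a mean-zero noise part. The mean-zero noise contributions to $\E[\nabla^2\cL(\hat{\vtheta}_k)\hat{\mS}_{k+1}\hat{\vm}_{k+1}]$ do not vanish outright because $\hat{\mS}_{k+1}$ and $\nabla^2\cL(\hat{\vtheta}_k)$ are themselves correlated with the past noise through $\hat{\vv}_{k+1}$ and $\hat{\vtheta}_k$; however, $\hat{\mS}_{k+1}-S(\tilde{\vv}_{k+1})=\O(\eta^2)$ (as in \Cref{descent-direction}) and $\E_k[\vz_k]=0$, so conditioning on $\mathcal{F}_k$ kills the leading noise term up to $\O(\eta^2)$ corrections, leaving only the gradient part. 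Collecting: the main term is $\O(\eta)\cdot\O(\sqrt{\eta\log(1/\eta)}) = \widetilde\O(\eta^{1.5})$, the remainder is $\O(\eta^2)$, and the $\ve_k$ term is $\O(\eta^{100})$; absorbing logarithmic factors into the constant (or stating the bound as $\widetilde\O(\eta^{1.5})$) yields the claim with some constant $C_{10}$.

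The main obstacle I anticipate is handling the coupling between $\hat{\mS}_{k+1}$, $\hat{\vm}_{k+1}$, and the past gradient noise cleanly enough to extract the extra $\sqrt{\eta}$ factor: one must carefully decompose $\hat{\vm}_{k+1}=(1-\beta_1)\sum_{i\le k}\beta_1^{k-i}(\nabla\cL(\hat{\vtheta}_i)+\vz_i)$ and argue that, in expectation, the noise terms contribute only through their correlation with the (smoothly varying, $\O(\eta^2)$-slow) preconditioner and Hessian, while the gradient terms are uniformly $\widetilde\O(\sqrt\eta)$ on the good event — and that the bad event $\bar{\cE}_k$, of probability $\O(\eta^{100})$, contributes negligibly even though $\lnormtwo{\hat{\vtheta}_k}$ could a priori be large (here the redirection to $\vphi_{\mathrm{null}}$ and the boundedness built into the well-behaved sequence keep everything $\O(1)$, so the bad-event contribution is $\O(\eta^{100})$). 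A clean way to organize this is to first prove the auxiliary bound $\lnormtwo{\E[\hat{\vm}_{k+1}] - \E[\nabla\cL(\hat{\vtheta}_k)]}=\widetilde\O(\eta)$ — using that $\E[\nabla\cL(\hat{\vtheta}_i)]$ varies slowly across the $\O(\log(1/\eta))$ relevant lags by an inductive application of this very lemma for smaller $k$ — and then combine it with $\lnormtwo{\E[\nabla\cL(\hat{\vtheta}_k)]}=\widetilde\O(\sqrt\eta)$ to close the argument.
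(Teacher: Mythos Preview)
Your overall strategy---Taylor expand, use the $\O(\sqrt{\eta})$ smallness of $\nabla\cL$ near the manifold, and argue that the noise in $\hat{\vm}_{k+1}$ contributes only lower-order terms in expectation---matches the paper. But the decoupling step has a real gap. Conditioning on $\mathcal{F}_k$ only kills the \emph{fresh} noise $\vz_k$; the momentum $\hat{\vm}_{k+1}$ carries all the past $\vz_i$ for $i<k$, and these are genuinely correlated with $\nabla^2\cL(\hat{\vtheta}_k)$ (since $\hat{\vtheta}_k$ depends on them). Your remark that the Hessian is ``$\O(\eta^2)$-slow'' is also off---it is the preconditioner $\hat{\vv}$ that moves at $\O(\eta^2)$ per step; the Hessian inherits the $\O(\eta)$-per-step movement of $\hat{\vtheta}$. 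Finally, your ``clean way'' via $\lnormtwo{\E[\hat{\vm}_{k+1}]-\E[\nabla\cL(\hat{\vtheta}_k)]}=\widetilde\O(\eta)$ controls only the \emph{unconditional} mean of $\hat{\vm}_{k+1}$, which is useless for bounding $\E[\nabla^2\cL(\hat{\vtheta}_k)\hat{\mS}_{k+1}\hat{\vm}_{k+1}]$ because the random matrix $\nabla^2\cL(\hat{\vtheta}_k)$ cannot be pulled outside $\E[\cdot]$.

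The paper closes this gap with a single lookback trick: set $\bar{k}=k-\log_{\beta_1}\eta$ and re-center the Hessian from $\hat{\vtheta}_k$ to $\hat{\vtheta}_{\bar{k}}$ (the error is $\O(\eta\log\tfrac1\eta)$ on $\nabla^2\cL$, hence $\O(\eta^2\log\tfrac1\eta)$ on the whole term). Now $\nabla^2\cL(\hat{\vtheta}_{\bar{k}})$ is $\mathcal{F}_{\bar{k}}$-measurable and can be pulled outside the \emph{conditional} expectation $\E_{\bar{k}}$, so it suffices to bound $\lnormtwo{\E_{\bar{k}}[S(\hat{\vv}_{k+1})\hat{\vm}_{k+1}]}$. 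The paper does this by (i) replacing $S(\hat{\vv}_{k+1})$ with $S(\E_{\bar{k}}[\hat{\vv}_{k+1}])$ at cost $\O(\eta^2\log\tfrac1\eta)$, and (ii) observing that $\E_{\bar{k}}[\hat{\vm}_{k+1}]$ splits into a pre-$\bar{k}$ tail of size $\beta_1^{k-\bar{k}}=\O(\eta)$ and a post-$\bar{k}$ part in which each $\vz_i$ has $\E_{\bar{k}}$-mean zero, leaving only $\sum_{i\ge\bar{k}}\beta_1^{k-i}\E_{\bar{k}}[\nabla\cL(\hat{\vtheta}_i)]=\O(\sqrt\eta)$. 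Multiplying by $\eta$ gives the claimed $\O(\eta^{1.5})$. The missing ingredient in your proposal is precisely this choice of a conditional expectation at a sufficiently lagged filtration time, so that both the Hessian factorizes out \emph{and} all the surviving noise terms in the momentum have zero conditional mean.
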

\begin{proof}
    We have \begin{align*}
        \nabla\cL(\hat{\vtheta}_{k+1}) - \nabla\cL(\hat{\vtheta}_k) &= \nabla^2 \cL (\hat{\vtheta}_k) (\hat{\vtheta}_{k+1} - \hat{\vtheta}_k) + \O\left(\lnormtwo{\hat{\vtheta}_{k+1} - \hat{\vtheta}_k}^2\right) \\
        &= \nabla^2 \cL (\hat{\vtheta}_k) (\hat{\vtheta}_{k+1} - \hat{\vtheta}_k) + \O(\eta^2) + \O(\lnormtwo{\ve_{k}}),
    \end{align*}
    since $\lnormtwo{\hat{\vtheta}_{k+1} - \hat{\vtheta}_k} = \lnormtwo{\eta S\left(\hat{\vv}_k\right) \hat{\vm}_{k}-\ve_{k}} = \O(\eta)+ \O(\lnormtwo{\ve_{k}})$. Let $\Bar{k} = k - \log_{\beta_1}(\eta)$ be a threshold that is logarithmically close to $k$, then we have \begin{align*}
        \nabla^2 \cL (\hat{\vtheta}_k) (\hat{\vtheta}_{k+1} - \hat{\vtheta}_k) &= \left(\nabla^2 \cL (\hat{\vtheta}_{\Bar{k}}) + \O\left(\lnormtwo{\hat{\vtheta}_k - \hat{\vtheta}_{\Bar{k}}}\right)\right)\left(\hat{\vtheta}_{k+1} - \hat{\vtheta}_k\right) \\
        &= \nabla^2 \cL (\hat{\vtheta}_{\Bar{k}}) \left(\hat{\vtheta}_{k+1} - \hat{\vtheta}_k\right) + \O\left(\eta \cdot \log_{\beta_1}(\eta) \cdot \eta\right) + \O(\lnormtwo{\ve_{k}})\\
        &= \eta \nabla^2 \cL (\hat{\vtheta}_{\Bar{k}}) S\left(\hat{\vv}_{k+1}\right) \hat{\vm}_{k+1} + \O\left(\eta^2 \log \frac{1}{\eta}\right) + \O(\lnormtwo{\ve_{k}}).
    \end{align*}
    Recentering the Hessian term to $\hat{\vtheta}_{\Bar{k}}$ allows us to take conditional expectation $\E_{\Bar{k}}$ on $S\left(\hat{\vv}_{k+1}\right) \hat{\vm}_{k+1}$: \begin{align*}
        \E\left[\nabla^2 \cL (\hat{\vtheta}_{\Bar{k}}) S\left(\hat{\vv}_{k+1}\right) \hat{\vm}_{k+1}\right] &= \E\left[\nabla^2 \cL (\hat{\vtheta}_{\Bar{k}}) \E_{\Bar{k}}\left[S\left(\hat{\vv}_{k+1}\right) \hat{\vm}_{k+1}\right]\right].
    \end{align*}
    After that, notice that \begin{align*}
        \lnormtwo{\E_{\Bar{k}}\left[S\left(\hat{\vv}_{k+1}\right) \hat{\vm}_{k+1}\right]} &= \lnormtwo{\E_{\Bar{k}}\left[S\left(\E_{\Bar{k}}\left[\hat{\vv}_{k+1}\right]\right) \hat{\vm}_{k+1}\right]} + \O(\lnormtwo{\hat{\vv}_{k+1} - \E_{\Bar{k}}\left[\hat{\vv}_{k+1}\right]}) \\
        &= \lnormtwo{S\left(\E_{\Bar{k}}\left[\hat{\vv}_{k+1}\right]\right) \E_{\Bar{k}}[\hat{\vm}_{k+1}]} + \O(\lnormtwo{\hat{\vv}_{k+1} - \E_{\Bar{k}}\left[\hat{\vv}_{k+1}\right]}) \\
        &= \O(\underbrace{\lnormtwo{\E_{\Bar{k}}[\hat{\vm}_{k+1}}}_{=:D_1}) + \O(\underbrace{\lnormtwo{\hat{\vv}_{k+1} - \E_{\Bar{k}}\left[\hat{\vv}_{k+1}\right]}}_{=:D_2})
    \end{align*}
    since $S$ and $\hat{\vm}$ are both bounded by constant scale. We figure out the orders of these two terms respectively: \begin{align*}
        D_1 &= \lnormtwo{\E_{\Bar{k}}\left[\beta_1^{k-\Bar{k}+1}\hat{\vm}_{\Bar{k}} + (1-\beta_1)\sum_{i=\Bar{k}}^{k}\beta_1^{k-i}\hat{\vg}_i\right]} \\
        &= \O\left(\beta_1^{\log_{\beta_1}(\eta)}\right) + \lnormtwo{\E_{\Bar{k}}\left[(1-\beta_1)\sum_{i=\Bar{k}}^{k}\beta_1^{k-i}\hat{\vg}_i\right]} \\
        &= \O(\eta) + \lnormtwo{\E_{\Bar{k}}\left[(1-\beta_1)\sum_{i=\Bar{k}}^{k}\beta_1^{k-i}\nabla \cL(\hat{\vtheta}_i)\right]} \\
        &= \O(\eta) + \O(\eta^{0.5}) = \O(\eta^{0.5})
    \end{align*}
    since $\nabla \cL$ is uniformly bounded by $\O(\eta^{0.5})$ after convergence (see \cref{convergence}); And \begin{align*}
        D_2 &= (1-\beta_2)\sum_{i=\Bar{k}}^{k}\beta_2^{k-i}\left(V(\hat{\vg}_i\hat{\vg}_i^\top) - \E_{\Bar{k}}\left[V(\hat{\vg}_i\hat{\vg}_i^\top)\right]\right) \\
        &= \O\left(b_2 \cdot (k - \Bar{k})\right) \\
        &= \O\left(\eta^2 \log \frac{1}{\eta}\right),
    \end{align*}
    since $V$ is bounded by a constant scale. Now combining the above together, we have \begin{align*}
        \lnormtwo{\E[\nabla\cL(\hat{\vtheta}_{k+1}) - \nabla\cL(\hat{\vtheta}_k)]} &= \eta\E\left[\nabla^2 \cL (\hat{\vtheta}_{\Bar{k}}) \E_{\Bar{k}}\left[S\left(\hat{\vv}_{k+1}\right) \hat{\vm}_{k+1}\right]\right] + \O\left(\eta^2 \log \frac{1}{\eta}\right) \\
        & \quad + \O(\E[ \lnormtwo{e_{k}}]) \\
        &= \eta \cdot \O(D_1 + D_2) + \O\left(\eta^2 \log \frac{1}{\eta}\right) + \O(\eta^{100}) \\
        &= \O(\eta^{1.5}),
    \end{align*}
    which concludes the proof.
\end{proof}
With \Cref{E-G-closeness}, we are ready to deduce the closeness between $\E[\hat{\vm}_k]$ and $\E[\hat{\vg}_k]$.
\begin{lemma}\label{E-mk-gk-closeness}
    For any $k \geq 0$, let $\Bar{k} = k - 2\log_{\beta_1}(\eta)$, we have 
    \begin{align*}
        \lnormtwo{\E_{\Bar{k}}[\hat{\vm}_{k+1} - \hat{\vg}_{k+1}]} \leq C_{11} \eta^{1.5}\log\frac{1}{\eta}, \quad \text{a.s.}
    \end{align*} 
    Note that this also implies that $\lnormtwo{\E[\hat{\vm}_{k+1} - \hat{\vg}_{k+1}]} \leq C_{11} \eta^{1.5} \log \frac{1}{\eta}$.
\end{lemma}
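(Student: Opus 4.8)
\textbf{Proof proposal for Lemma~\ref{E-mk-gk-closeness}.}
The plan is to expand the momentum $\hat{\vm}_{k+1}$ into its exponential moving-average form and compare it term by term with $\hat{\vg}_{k+1}$ using the Lipschitz-in-expectation bound on consecutive gradients from \Cref{E-G-closeness}. Concretely, writing $b_1 := 1-\beta_1$, for any $\Bar{k} = k - 2\log_{\beta_1}(\eta)$ we unroll
\begin{align*}
\hat{\vm}_{k+1} = \beta_1^{k-\Bar{k}+1}\hat{\vm}_{\Bar{k}} + b_1\sum_{i=\Bar{k}}^{k}\beta_1^{k-i}\hat{\vg}_{i+1},
\end{align*}
so that $\hat{\vm}_{k+1} - \hat{\vg}_{k+1} = \beta_1^{k-\Bar{k}+1}\hat{\vm}_{\Bar{k}} + b_1\sum_{i=\Bar{k}}^{k}\beta_1^{k-i}\bigl(\hat{\vg}_{i+1} - \hat{\vg}_{k+1}\bigr)$, where I use $\sum_{i=\Bar{k}}^{k} b_1\beta_1^{k-i} = 1 - \beta_1^{k-\Bar{k}+1}$ to absorb the discrepancy into the leading $\beta_1^{k-\Bar k+1}\hat{\vm}_{\Bar k}$ term. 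The first step is to bound the tail term: since $\hat{\vm}$ is bounded by a constant (it is an EMA of bounded noisy gradients, cf.\ \Cref{amp:gradient-bound}), and $k - \Bar{k} = -2\log_{\beta_1}(\eta)$ so $\beta_1^{k-\Bar{k}+1} = \O(\eta^2)$, this term is $\O(\eta^2)$ deterministically, hence $\O(\eta^{1.5}\log\tfrac1\eta)$ after taking $\E_{\Bar{k}}$.

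The second and main step is to control $\E_{\Bar{k}}\bigl[\hat{\vg}_{i+1} - \hat{\vg}_{k+1}\bigr]$ for $\Bar{k}\le i\le k$. I would first pass from the stochastic gradients to the population gradients: $\E_{\Bar{k}}[\hat{\vg}_{i+1}] = \E_{\Bar{k}}[\nabla\cL(\hat{\vtheta}_{i+1})]$ since $\hat{\vtheta}_{i+1}$ is $\mathcal{F}_{i+1}$-measurable and the gradient noise $\vz_{i+1}$ has zero mean conditional on $\mathcal{F}_{i+1}$ (using the tower property, conditioning on $\mathcal{F}_{i+1}\supseteq\mathcal{F}_{\Bar{k}}$); the redirection error $\ve_i$ contributes only $\O(\eta^{100})$ in expectation and is absorbed. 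Then by the triangle inequality and a telescoping application of \Cref{E-G-closeness},
\begin{align*}
\bigl\|\E_{\Bar{k}}[\nabla\cL(\hat{\vtheta}_{i+1}) - \nabla\cL(\hat{\vtheta}_{k+1})]\bigr\|_2
\le \sum_{j=i+1}^{k}\bigl\|\E[\nabla\cL(\hat{\vtheta}_{j+1}) - \nabla\cL(\hat{\vtheta}_{j})]\bigr\|_2
\le (k-i)\,C_{10}\eta^{1.5}.
\end{align*}
Here I am implicitly using that \Cref{E-G-closeness} holds with the conditional expectation $\E_{\Bar k}$ as well (or equivalently, re-deriving it conditioned on $\mathcal{F}_{\Bar k}$ — the proof of \Cref{E-G-closeness} only used boundedness and the Hessian Taylor expansion, so it localizes without change). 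Plugging back,
\begin{align*}
\Bigl\|\E_{\Bar{k}}[\hat{\vm}_{k+1} - \hat{\vg}_{k+1}]\Bigr\|_2
\le \O(\eta^2) + b_1 C_{10}\eta^{1.5}\sum_{i=\Bar{k}}^{k}\beta_1^{k-i}(k-i)
\le \O(\eta^2) + b_1 C_{10}\eta^{1.5}\cdot\frac{\beta_1}{(1-\beta_1)^2},
\end{align*}
using $\sum_{r\ge 0} r\beta_1^{r} = \beta_1/(1-\beta_1)^2$, which is a constant (finite by \Cref{amp:beta1-bound}). This already gives an $\O(\eta^{1.5})$ bound; the extra $\log\frac1\eta$ factor in the statement is slack that comfortably covers the truncation of the geometric sum at $i=\Bar{k}$ and any lower-order terms, so the claim follows. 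Taking $\E$ of the conditional bound and using the triangle inequality (or Jensen) yields the unconditional version.

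\textbf{Anticipated obstacle.} The main delicacy is the passage from $\hat{\vg}_{i+1}$ to $\nabla\cL(\hat{\vtheta}_{i+1})$ under the conditional expectation $\E_{\Bar{k}}$: one must be careful that the index $i+1 > \Bar{k}$ so that the relevant filtration $\mathcal{F}_{i+1}$ genuinely refines $\mathcal{F}_{\Bar{k}}$, allowing the tower property $\E_{\Bar{k}}[\hat{\vg}_{i+1}] = \E_{\Bar{k}}[\E_{i+1}[\hat{\vg}_{i+1}]] = \E_{\Bar{k}}[\nabla\cL(\hat{\vtheta}_{i+1})]$; the boundary term $i=\Bar{k}$ already sits inside the $\beta_1^{k-\Bar k+1}\hat{\vm}_{\Bar k} = \O(\eta^2)$ bucket and causes no trouble. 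A secondary bookkeeping point is verifying that \Cref{E-G-closeness} indeed holds in conditional form — this requires re-reading its proof to confirm it never used unconditional expectations in an essential way (it uses only almost-sure boundedness of $\hat{\vv}$, $\hat{\vm}$, $S$, $V$, the $\O(\eta^{1/2})$ post-convergence gradient bound from \Cref{convergence}, and a Hessian Taylor expansion), all of which survive conditioning on $\mathcal{F}_{\Bar k}$.
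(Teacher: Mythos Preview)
Your proposal is correct and follows essentially the same approach as the paper: unroll the EMA, bound the tail $\beta_1^{k-\bar{k}+1}\hat{\vm}_{\bar{k}} = \O(\eta^2)$, and control the remaining sum by telescoping with \Cref{E-G-closeness}. Your summation $\sum_{r\ge 0} r\beta_1^{r} = \beta_1/(1-\beta_1)^2$ is in fact sharper than the paper's crude $(1-\beta_1)\cdot(k-\bar{k})\cdot C_{10}\,\eta^{1.5}$ estimate (which is where the $\log\frac{1}{\eta}$ in the statement actually comes from), so your remark that the logarithmic factor is slack is correct.
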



\begin{proof}
    Expanding $\E_{\Bar{k}}[\hat{\vm}_{k+1}]$, we have 
    \begin{align*}
        \E_{\Bar{k}}[\hat{\vm}_{k+1}] &= \E_{\Bar{k}}\left[(1-\beta_1)\sum_{i=1}^{k}\beta_1^{k-i}\hat{\vg}_i\right] \\
        &= (1-\beta_1)\sum_{i=1}^{\Bar{k}-1}\beta_1^{k-i}\hat{\vg}_i + (1-\beta_1)\sum_{i=\Bar{k}}^{k}\beta_1^{k-i}\E_{\Bar{k}}[\hat{\vg}_{i}]\\
        &= \underbrace{(1-\beta_1)\sum_{i=1}^{\Bar{k}-1}\beta_1^{k-i}\hat{\vg}_i}_{:=E_1} + \underbrace{(1-\beta_1)\sum_{i=\Bar{k}}^{k}\beta_1^{k-i}\nabla \cL(\hat{\vtheta}_{i})}_{:=E_2}\\
    \end{align*}
    Note that $E_1$ is neglegible:
    \begin{align*}
        \lnormtwo{E_1} &= \lnormtwo{(1-\beta_1)\sum_{i=1}^{\Bar{k}-1}\beta_1^{k-i}\hat{\vg}_i} \\
        &=  (1-\beta_1)\sum_{i=1}^{\Bar{k}-1}\beta_1^{k-i} \cdot \O(1) \\
        &\leq  (1-\beta_1)\sum_{i=2\log_{\beta_1}(\eta)}^{\infty}\beta_1^{i} \cdot \O(1)\\
        &= \O\left( \beta_1 ^ {2\log_{\beta_1}(\eta)} \right) = \O\left( \eta^2 \right),
    \end{align*}
    and that $E_2$ is close to $\nabla\cL(\hat{\vtheta}_{k})$:
    \begin{align*}
        \lnormtwo{E_2 - \E[\nabla\cL(\hat{\vtheta}_{k})]} &= \lnormtwo{(1-\beta_1)\sum_{i=\Bar{k}}^{k}\beta_1^{k-i}\E[\nabla \cL(\hat{\vtheta}_{i})] - \E[\nabla \cL(\hat{\vtheta}_{k})]} \\
        &= \lnormtwo{(1-\beta_1)\sum_{i=\Bar{k}}^{k}\beta_1^{k-i}\E\left[\nabla \cL(\hat{\vtheta}_{i}) - \nabla \cL(\hat{\vtheta}_{k})\right]} + \O(\eta^2) \\
        &\leq  (1-\beta_1) \cdot \left(k - \Bar{k}\right) \cdot C_{10} \eta^{1.5} + \O(\eta^2). \quad \text{(by Lemma \ref{E-G-closeness})} \\
    \end{align*}
    Combining the results of $E_1$ and $E_2$ gives
    \begin{align*}
        \lnormtwo{\E_{\Bar{k}}[\hat{\vm}_k - \hat{\vg}_k]} &\leq \lnormtwo{E_1} + \lnormtwo{E_2 - \E[\nabla\cL(\hat{\vtheta}_{k})]} \\
        &\leq (1-\beta_1) \cdot 2\log_{\beta_1}(\eta) \cdot C_{10} \eta^{1.5} + \O(\eta^2) \\
        & \leq C_{11} \eta^{1.5}\log\frac{1}{\eta}
    \end{align*}
    for some constant $C_{11}$, which completes the proof.
\end{proof}

\subsubsection{Moment Calculation Within a Giant Step}
In this part, we aim to give the change of first and second moments of $\phi$ and $\hat{\vv}$, which is the basis of deriving the SDE for AGMs.

Now there are only a few preparations left before we get into the direct part of the moment calculation. For all $0 \leq s < R_{\mathrm{grp}}$, $0 \leq t \leq H$. Note that $\lnormtwo{\hat{\vv}_{k+1} - \hat{\vv}_k} = (1-\beta_2) \lnormtwo{V\left(\hat{\vg}_k\hat{\vg}_k^\top\right) - \hat{\vv}_k} = \O(1-\beta_2) = \O(\eta^2)$, so combining with the Lipschitzness of $S$ gives $$\lnormtwo{\hat{\mS}_{k_2} - \hat{\mS}_{k_1}} = \O\left((k_2 - k_1) \eta^2\right)$$ for any $k_2 > k_1$ and $k_2 - k_1 = o\left(\eta^{-2}\right)$. Next, we begin our moment calculation analysis, starting from the update in a single step.
\begin{lemma}\label{lma:update-rule}
    For all $0 \leq k \leq R_{\mathrm{grp}}H$, it holds that
    \begin{align*}
        \E\left[\hat{\vtheta}_{k+1}\right] &= \E\left[\hat{\vtheta}_k - \eta\hat{\mS}_0 \hat{\vg}_k\right] + \O\left(\eta^{2.5-\beta}\right).
    \end{align*}
\end{lemma}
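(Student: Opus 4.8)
The plan is to start from the update rule of the well‑behaved sequence, $\hat{\vtheta}_{k+1}=\hat{\vtheta}_k-\eta\,S(\hat{\vv}_{k+1})\hat{\vm}_{k+1}+\ve_k$, where every moment of the redirection term $\ve_k$ is $\O(\eta^{100})$. Taking expectations (which, as in the rest of this subsection, are conditional on the state at the start of the giant step, so $\hat{\mS}_0=S(\hat{\vv}_0)$ is a fixed matrix with $\|\hat{\mS}_0\|\le R_2$), it suffices to prove
\[
\E\!\left[S(\hat{\vv}_{k+1})\hat{\vm}_{k+1}\right]=\E\!\left[\hat{\mS}_0\hat{\vg}_k\right]+\widetilde\O\!\left(\eta^{1.5-\beta}\right),
\]
after which multiplying by $\eta$ yields the claim (the logarithmic factors being absorbed because $\beta$ is a fixed positive constant). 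I would split the error into two reductions: (i) replacing the adaptive preconditioner $S(\hat{\vv}_{k+1})$ by the frozen value $\hat{\mS}_0$, and (ii) replacing the momentum $\hat{\vm}_{k+1}$ by the current noisy gradient $\hat{\vg}_k$. The point to watch is that a crude estimate for (i), using only the Lipschitz continuity of $S$ and $\|\hat{\vv}_{k+1}-\hat{\vv}_0\|=\O(k\eta^2)=\O(\eta^{1-\beta})$ over a giant step ($k\le R_{\mathrm{grp}}H=\O(\eta^{-1-\beta})$), gives merely $\|(S(\hat{\vv}_{k+1})-\hat{\mS}_0)\hat{\vm}_{k+1}\|=\O(\eta^{1-\beta})$, which is too lossy by a factor $\sqrt\eta$.

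For reduction (i) I would recover that factor by combining the slow drift of $\hat{\vv}$ with the smallness of the momentum near the manifold. Set $\bar k:=k-2\log_{\beta_1}\eta$. If $k\le 2\log_{\beta_1}\eta$ the term is already $\O(k\eta^2)=\widetilde\O(\eta^2)$ and nothing is needed; otherwise, because $1-\beta_2=\Theta(\eta^2)$ we have $\hat{\vv}_{k+1}=\hat{\vv}_{\bar k}+\O(\eta^2\log\tfrac1\eta)$, so by Lipschitzness of $S$,
\[
S(\hat{\vv}_{k+1})-\hat{\mS}_0=\bigl(S(\hat{\vv}_{\bar k})-\hat{\mS}_0\bigr)+\O\!\left(\eta^2\log\tfrac1\eta\right),
\]
and the bracketed matrix is $\cF_{\bar k}$‑measurable with norm $\O(\bar k\,\eta^2)=\O(\eta^{1-\beta})$. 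Applying the tower property to $\E[(S(\hat{\vv}_{\bar k})-\hat{\mS}_0)\hat{\vm}_{k+1}]=\E[(S(\hat{\vv}_{\bar k})-\hat{\mS}_0)\,\E_{\bar k}[\hat{\vm}_{k+1}]]$ and expanding $\hat{\vm}_{k+1}$ as a geometric average of past noisy gradients, the block before $\bar k$ contributes $\beta_1^{\,k+1-\bar k}\hat{\vm}_{\bar k}=\O(\eta^2)$, while the recent block reduces conditionally to a weighted average of $\nabla\cL(\hat{\vtheta}_i)$, which by \Cref{concentration-after-convergence} and the $\rho$‑smoothness of $\cL$ (so $\nabla\cL$ vanishes on $\Gamma$) is $\O(\sqrt{\eta\log\tfrac1\eta})$ in norm. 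Hence $\|\E_{\bar k}[\hat{\vm}_{k+1}]\|=\widetilde\O(\sqrt\eta)$ a.s., and therefore $\|\E[(S(\hat{\vv}_{\bar k})-\hat{\mS}_0)\,\E_{\bar k}[\hat{\vm}_{k+1}]]\|\le \O(\eta^{1-\beta})\cdot\widetilde\O(\sqrt\eta)=\widetilde\O(\eta^{1.5-\beta})$; the leftover $\O(\eta^2\log\tfrac1\eta)\cdot\|\hat{\vm}_{k+1}\|=\widetilde\O(\eta^2)$ is negligible.

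Reduction (ii) is then routine given the earlier lemmas: by \Cref{E-mk-gk-closeness}, $\|\E[\hat{\mS}_0(\hat{\vm}_{k+1}-\hat{\vg}_{k+1})]\|\le R_2\cdot C_{11}\eta^{1.5}\log\tfrac1\eta$, and since $\E[\hat{\mS}_0\hat{\vg}_{k+1}]=\hat{\mS}_0\,\E[\nabla\cL(\hat{\vtheta}_{k+1})]$ and likewise for $\hat{\vg}_k$, \Cref{E-G-closeness} gives $\|\E[\hat{\mS}_0(\hat{\vg}_{k+1}-\hat{\vg}_k)]\|=\|\hat{\mS}_0\,\E[\nabla\cL(\hat{\vtheta}_{k+1})-\nabla\cL(\hat{\vtheta}_k)]\|\le R_2 C_{10}\eta^{1.5}$. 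Both are well inside the $\widetilde\O(\eta^{1.5-\beta})$ budget. Collecting (i), (ii), and the $\O(\eta^{100})$ from $\ve_k$, then multiplying by $\eta$, finishes the proof. \textbf{Main obstacle.} The only genuinely delicate point is (i): one cannot afford the pointwise bound, and the $\sqrt\eta$ gain comes precisely from freezing the preconditioner to a $\cF_{\bar k}$‑measurable surrogate (legitimate because $\hat{\vv}$ drifts at rate $\Theta(\eta^2)$, hence $\widetilde\O(\eta^2)$ over a logarithmic window) and then noticing that the tower rule exposes only $\E_{\bar k}[\hat{\vm}_{k+1}]$, which is $\widetilde\O(\sqrt\eta)$ because gradients are small near the minimizer manifold.
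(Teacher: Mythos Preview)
Your proposal is correct and uses exactly the same ingredients as the paper (slow drift of $\hat\vv$ at rate $\O(\eta^2)$ per step, smallness of $\nabla\cL$ near $\Gamma$, and the tower property to expose a conditional expectation that is $\widetilde\O(\sqrt\eta)$). The only difference is the order of the two reductions. The paper first splits off the momentum and only then freezes the preconditioner, so its preconditioner error term is $\Delta\hat\vtheta_1=(\hat\mS_k-\hat\mS_0)\hat\vg_k$; since $\hat\vg_k$ carries a \emph{single} fresh noise sample $\vz_k$, a \emph{one}-step back to the $\cF_k$-measurable $\hat\mS_{k-1}$ already averages the noise out, giving $\E[(\hat\mS_{k-1}-\hat\mS_0)\nabla\cL(\hat\vtheta_k)]=\O(k\eta^2)\cdot\O(\sqrt\eta)$. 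The logarithmic recentering is instead paid in the second term $\Delta\hat\vtheta_2=\hat\mS_k(\hat\vm_{k+1}-\hat\vg_k)$, where $\hat\mS_k$ must be moved to $\hat\mS_{\bar k-1}$ before invoking \Cref{E-mk-gk-closeness}. Your route swaps the order: you pay the log window up front in (i) because $\hat\vm_{k+1}$ mixes many past noises, but then (ii) is trivial since $\hat\mS_0$ is fixed and commutes with the expectation. The total work is the same; the paper's ordering is marginally simpler in that the one-step-back trick for $\Delta\hat\vtheta_1$ avoids having to expand the momentum a second time.
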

\begin{proof}
We write the update rule of AGM under a single step as
\begin{align*}
    \hat{\vtheta}_{k+1} &= \hat{\vtheta}_{k} - \eta \hat{\mS}_k \hat{\vm}_{k+1} - \ve_k \\
    &= \hat{\vtheta}_{k} - \eta \left[ \hat{\mS}_k \hat{\vg}_k + \hat{\mS}_k \left(\hat{\vm}_{k+1} - \hat{\vg}_k\right) \right] - \ve_k \\
    &= \hat{\vtheta}_{k} - \eta \left[ \hat{\mS}_0 \hat{\vg}_k + \underbrace{\left( \hat{\mS}_k - \hat{\mS}_0\right) \hat{\vg}_k}_{\Delta \hat{\vtheta}_1} +  \underbrace{\hat{\mS}_k \left(\hat{\vm}_{k+1} - \hat{\vg}_k\right)}_{\Delta \hat{\vtheta}_2} \right] - \ve_k,
\end{align*}
where we recall that $\ve_{k} :=- \1_{\bar{\cE}_k}(\hat{\vtheta}_{k} - \eta S(\hat{\vv}_{k+1}) \hat{\vm}_{k+1} ) + \1_{\bar{\cE}_k}\vphi_{\mathrm{null}}$.
We can prove that $\Delta \hat{\vtheta}_1$ and $\Delta \hat{\vtheta}_2$ are small enough to be negligible in expectation for our following calculation.

Specifically, if $k = 0$ then $\Delta \hat{\vtheta}_1 = \boldsymbol{0}$; and if $k > 0$, we can decompose $\E\left[\Delta \hat{\vtheta}_1 \right]$ as: \begin{align*}
    \E\left[\Delta \hat{\vtheta}_1 \right] &= \E\left[\left(\hat{\mS}_{k-1} - \hat{\mS}_0\right) \hat{\vg}_k + \left(\hat{\mS}_k - \hat{\mS}_{k-1}\right) \hat{\vg}_k\right] \\
    &= \E\left[\left(\hat{\mS}_{k-1} - \hat{\mS}_0\right) \nabla \cL\left(\hat{\vtheta}_k\right) \right] + \E\left[\left(\hat{\mS}_{k} - \hat{\mS}_{k-1}\right) \hat{\vg}_k \right] \\
    &= \O((k-1)\eta^2 \cdot \eta^{0.5}) + \O(\eta^2) \\
    &= \O(H \cdot R_{\mathrm{grp}} \cdot \eta^{2.5} + \eta^2) \\
    &= \O(\eta^{1.5 - \beta}).
\end{align*}
Here, the second equality holds since the gradient noise term as step $k$ $\vz_k$ is conditioned on time $k$, when $\hat{\mS}_{k-1}$ has already been determined, thus we can take the conditional expectation. 

For $\Delta \hat{\vtheta}_2$, let $\Bar{k} = k - 2\log_{\beta_1}(\eta)$, we have \begin{align*}
    \E\left[\Delta \hat{\vtheta}_2\right] &= \E\left[\hat{\mS}_{\Bar{k}-1}\left(\hat{\vm}_{k+1}-\hat{\vg}_k\right) + \O\left(\eta^2 \log \frac{1}{\eta}\right)\right] \\
    &= \E\left[\hat{\mS}_{\Bar{k}-1}\E_{\Bar{k}}\left[\left(\hat{\vm}_{k+1}-\hat{\vg}_k\right)\right] \right] + \O\left(\eta^2 \log \frac{1}{\eta}\right) \\
    &= \O\left(\eta^{1.5} \log \frac{1}{\eta}\right) + \O\left(\eta^2 \log \frac{1}{\eta}\right) \\
    &= \O\left(\eta^{1.5} \log \frac{1}{\eta}\right),
\end{align*}
where the second-to-last equality follows from Lemma \ref{E-mk-gk-closeness}. Finally, we have \begin{align*}
    \E\left[\hat{\vtheta}_{k+1}\right] &= \E\left[\hat{\vtheta}_k - \eta\hat{\mS}_0 \hat{\vg}_k\right] + \O\left(\eta^{2.5-\beta}\right) + \O\left(\eta^{2.5} \log \frac{1}{\eta}\right) +\O(\eta^{100}) \\
    &= \E\left[\hat{\vtheta}_k - \eta\hat{\mS}_0 \hat{\vg}_k\right] + \O\left(\eta^{2.5-\beta}\right) ,
\end{align*}
which concludes the proof.
\end{proof}
After getting the update rule of $\hat{\vtheta}_k$, we then derive the moment change during the single round with $\mH$ steps. To this end, we recall our modification of manifold projection from a ``Gradient Flow'' manner to a ``Preconditioned Flow'' manner in \Cref{def:preconditioned-gf}. 
\begin{definition}[Preconditioned Flow Projection]
    Fix a point \( \theta_{\text{null}} \notin \Gamma \). Given a \textit{Positive Semi-Definite} matrix $\mM$. For \( x \in \mathbb{R}^d \), consider the preconditioned flow $\frac{\dd x(t)}{\dd t} = -\mM\nabla \mathcal{L}(x(t))$ with $x(0) = x$. We denote the preconditioned flow projection of \( x \) as \( \Phi_{\mM}(x) \), i.e. $\Phi_{\mM}(x) := \lim_{t \to +\infty} x(t)$ if the limit exists and belongs to $\Gamma$, and $\Phi_{\mM}(x) = \theta_{\text{null}}$ otherwise.
\end{definition}
We decompose the preconditioner matrix in the very begining of the giant step as $\hat{\mS}_0=\hat{\mS}(\hat{\vv}_0)=\mP\mP$, where $\mP=\hat{\mS}^{1/2}_0$. We then provide the first moment calculation of $\hat{\vphi}$ in the following lemma. Before that, we first introduce the operator $\cV_\mH$.
\begin{definition}
    Given a Positive Semi-Definite matrix $\mH \in \R^{d\times d}$, whose $j$-th eigenvalue and the corresponding orthonormal eigenvector are denoted by $\lambda_j$ and $\vv_j$. We then define the operator $\cV_\mH(\cdot):\R^{d \times d} \to \R^{d \times d}$ as
    \begin{align*}
        \cV_{\mH}(\cdot) = \sum_{i,j:\lambda_i\neq0\vee\lambda_j\neq0} 
        \frac{1}{\lambda_i+\lambda_j}\inp{\cdot}{\vv_i\vv_j{^\top}} \vv_i\vv_j^\top.
    \end{align*}
\end{definition}
Intuitively, the above operator projects the one matrix into the basis of $\mH$ and sums up the corresponding components with weights $\frac{1}{\lambda_i+\lambda_j}$. Then we present our moment calculation lemma.
\begin{lemma}\label{lma:1st-moment}
The expectation of the change of the manifold projection every round is
\begin{align*}
    \E\left[ \hat{\vphi}^{(s+1)} - \hat{\vphi}^{(s)} \right] = -\frac{H \eta^2}{2} \hat{\mS}_0 \partial \Phi_{\hat\mS_0}(\hat\vphi^{(0)}) \hat{\mS}_0 \partial^2 \nabla \cL(\hat\vphi_{(0)}) \left[\mP \cV_{\nabla^2 \cL'(\hat\vphi'_{(0)})}( \mSigma_{0,\mP})\mP\right] + \tilde{\O}(\eta^{1.5-\beta})
\end{align*}
for $R_0 < s < R_{\mathrm{grp}}$, and 
\begin{align*}
    \E\left[ \hat{\vphi}^{(s+1)} - \hat{\vphi}^{(s)} \right] = \tilde{\O}(\eta) 
\end{align*}
for $s \leq R_0$, where \( R_0 := \max\left\{\left\lceil \frac{10}{\lambda_{\max} \alpha} \log \frac{1}{\eta} \right\rceil, \left\lceil 2\log_{1/\beta} \frac{1}{\eta} \right\rceil \right\} \) and $\mSigma_{0,\mP}:= \mP \mSigma_0 \mP$.
\end{lemma}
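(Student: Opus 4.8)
The plan is to expand $\hat\vphi^{(s+1)}-\hat\vphi^{(s)}$ via a second-order Taylor expansion of $\Phi_{\hat\mS_0}$ around $\hat\vphi^{(s)}$, then take expectations and use the near-manifold concentration bounds (\Cref{concentration-after-convergence}) to control the remainders. First I would write $\hat\vphi^{(s+1)}=\Phi_{\hat\mS^{(s+1)}}(\hat\vtheta^{(s+1)}_0)$ and note that $\hat\mS^{(s+1)}-\hat\mS^{(s)}=\O(H R_\grp \eta^2)=\O(\eta^{-\beta}\cdot\eta^2\cdot\eta^{-1})$-wait, carefully: within one round $H=1/\eta$ steps, $\|\hat\mS_H^{(s)}-\hat\mS_0^{(s)}\|=\O(H\eta^2)=\O(\eta)$, so I can replace the preconditioner in $\Phi$ by the frozen value $\hat\mS_0:=\hat\mS^{(s)}$ at a cost of $\O(\eta)\cdot\O(\|\hat\vx\|)=\tilde\O(\eta^{1.5})$ after taking expectations, which is absorbed into the stated error. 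So it suffices to track $\E[\Phi_{\hat\mS_0}(\hat\vtheta^{(s+1)}_0)-\Phi_{\hat\mS_0}(\hat\vtheta^{(s)}_0)]$, i.e. the drift of the frozen-preconditioner projection over one round. Then, using \Cref{lma:update-rule} iteratively over the $H$ steps of the round, $\hat\vtheta_H^{(s)}-\hat\vphi^{(s)}=\hat\vx_H^{(s)}$, and a Taylor expansion of $\Phi_{\hat\mS_0}$ at $\hat\vphi^{(s)}\in\Gamma$ gives
\begin{align*}
\Phi_{\hat\mS_0}(\hat\vtheta_H^{(s)})-\hat\vphi^{(s)}
= \partial\Phi_{\hat\mS_0}(\hat\vphi^{(s)})\hat\vx_H^{(s)} + \tfrac12\,\partial^2\Phi_{\hat\mS_0}(\hat\vphi^{(s)})[\hat\vx_H^{(s)}\hat\vx_H^{(s)\top}] + \O(\|\hat\vx_H^{(s)}\|^3).
\end{align*}
The cubic term is $\O(\eta^{1.5}\log^{1.5}\tfrac1\eta)$ in expectation by \Cref{concentration-after-convergence}, hence absorbed.

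Next I would take expectations of the two leading terms. For the linear term: by the reparameterization $\vx'=\mP^{-1}\vx$, $\cL'(\vx)=\cL(\mP\vx)$ used in \Cref{lmm:phi-nabla-L}, the $\hat\mS_0$-preconditioned flow becomes an ordinary gradient flow, so I can import the SGD giant-step moment computation from \citet{li2021happens,gu2023andwhendoeslocal} verbatim in the primed coordinates. There, the expected displacement $\E[\hat\vx_H']$ after one round is, up to lower order, governed by the normal-space diffusion of the noise covariance: mixing of the Ornstein–Uhlenbeck-like dynamics in the normal directions produces the quasi-stationary covariance $\cV_{\nabla^2\cL'(\hat\vphi'_{(0)})}(\mSigma'_{0})$ where $\mSigma'_0=\mP\mSigma_0\mP=\mSigma_{0,\mP}$; crucially, after time $R_0$ rounds the chain has mixed, which is why the bound for $s\le R_0$ is only $\tilde\O(\eta)$ (no cancellation yet) while for $s>R_0$ the leading term appears. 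The linear-term contribution $\partial\Phi_{\hat\mS_0}(\hat\vphi)\E[\hat\vx_H]$ then, transformed back and combined with the quadratic term $\tfrac12\partial^2\Phi_{\hat\mS_0}(\hat\vphi)[\E[\hat\vx_H\hat\vx_H^\top]]$, collapses — using \Cref{lmm:phi-nabla-L} and \Cref{lmm:phi-H-0} — to $-\tfrac{H\eta^2}{2}\,\hat\mS_0\,\partial\Phi_{\hat\mS_0}(\hat\vphi^{(0)})\,\hat\mS_0\,\partial^2\nabla\cL(\hat\vphi_{(0)})[\mP\,\cV_{\nabla^2\cL'(\hat\vphi'_{(0)})}(\mSigma_{0,\mP})\,\mP]$, exactly as stated. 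The $\partial\Phi\,\mS\,\nabla^2\cL=0$ and the second-derivative identity in \Cref{lmm:phi-nabla-L} are what make the naive $\O(H\eta)$ and $\O(\sqrt{H\eta})$-scale pieces vanish and leave a clean $\O(H\eta^2)$ drift.

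The main obstacle will be the mixing/decoupling argument showing that after $R_0$ rounds the normal-space fluctuation $\hat\vx$ has relaxed to its quasi-stationary second moment $\E[\hat\vx\hat\vx^\top]\approx H\eta^2\,\cV_{\nabla^2\cL'}(\mSigma_{0,\mP})$ up to error $\tilde\O(\eta^{1.5-\beta})$, uniformly over the round, and that the momentum buffer $\hat\vm$ does not perturb this: here I would invoke \Cref{E-mk-gk-closeness} to replace $\hat\vm_k$ by $\hat\vg_k$ in expectation at cost $\tilde\O(\eta^{1.5})$ per step, then the replaced dynamics is a genuine preconditioned-SGD giant step to which the \citet{gu2023andwhendoeslocal} machinery applies after reparameterization. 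Tracking how the $\beta$-dependence in $R_\grp=\eta^{-\beta}$, $H=\eta^{-1}$ propagates through the accumulated per-step errors (each $\tilde\O(\eta^{2.5-\beta})$ from \Cref{lma:update-rule}, summed over $HR_\grp=\eta^{-1-\beta}$ steps, giving $\tilde\O(\eta^{1.5-2\beta})$ — so one needs $\beta$ small, say $\beta<0.25$) and confirming the final error is indeed $\tilde\O(\eta^{1.5-\beta})$ is the bookkeeping-heavy part; but all the structural cancellations are already packaged in Lemmas \ref{lmm:partial-phi-S-nabla-L}--\ref{lmm:phi-nabla-L} and \Cref{E-mk-gk-closeness}, so no fundamentally new estimate is required beyond the SGD case.
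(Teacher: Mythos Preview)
Your proposal is correct and hinges on the same key idea as the paper: reparameterize via $\vx'=\mP^{-1}\vx$, $\cL'(\vx)=\cL(\mP\vx)$ so that the frozen-preconditioner AGM becomes ordinary SGD on $\cL'$, then invoke the SGD per-round moment result of \citet{gu2023andwhendoeslocal}. The paper's execution is more streamlined than yours, however. Rather than Taylor-expanding $\Phi_{\hat\mS_0}$ at $\hat\vphi^{(s)}$ and separately estimating $\E[\hat\vx_H^{(s)}]$ and $\E[\hat\vx_H^{(s)}\hat\vx_H^{(s)\top}]$ (which would force you to re-derive the OU mixing you flag as the main obstacle), the paper observes that \Cref{lma:update-rule} has already reduced the AGM step to $\E[\hat\vtheta_{k+1}]=\E[\hat\vtheta_k-\eta\hat\mS_0\hat\vg_k]+\O(\eta^{2.5-\beta})$, so setting $\hat\vtheta':=\mP^{-1}\hat\vtheta$ yields exactly the SGD recursion $\hat\vtheta'_{t+1}=\hat\vtheta'_t-\eta\nabla\cL'(\hat\vtheta'_t)+\O(\eta^{2.5-\beta})$. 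Lemma~I.36 of \citet{gu2023andwhendoeslocal} (with one worker) then gives $\E[\hat\vphi'^{(s+1)}-\hat\vphi'^{(s)}]$ directly---the Taylor expansion of $\Phi'$, the mixing to quasi-stationary covariance, and the emergence of $\cV_{\nabla^2\cL'}$ are all already inside that lemma. Transforming back via $\mP\hat\vphi'=\hat\vphi$ and the change-of-variable identities $\partial\Phi'(\vx')=\mP\,\partial\Phi_{\hat\mS_0}(\mP\vx')\,\mP$, $\partial^2(\nabla\cL')(\vx')[\mM]=\mP\,\partial^2(\nabla\cL)(\mP\vx')[\mP\mM\mP]$ produces the stated formula in one line. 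Your explicit Taylor-expansion route would work but re-derives machinery that the cited lemma already encapsulates.

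One small bookkeeping slip: the present lemma is a \emph{per-round} statement, so the accumulated update-rule error is $H\times\O(\eta^{2.5-\beta})=\tilde\O(\eta^{1.5-\beta})$, matching the claim. The $\tilde\O(\eta^{1.5-2\beta})$ you computed (summing over $HR_\grp$ steps) is the giant-step accumulation, which only enters later in Theorem~\ref{thm;1st-2nd-moment-phi}; the constraint on $\beta$ is also deferred to that stage.
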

\begin{proof}
    First, we consider the scenario when $R_0< s < R_\grp$. Let $L'(\vx) := L(\mP \vx)$, then 
    \begin{align*}
        \nabla L'(\vx) &= \mP \nabla L(\mP \vx)\\
        \nabla^2 L'(\vx) &= \mP \nabla^2 L(\mP \vx) \mP\\
        \mSigma'(\vx) &= \mP \mSigma(\mP\vx) \mP\\
        \partial^2 (\nabla L')(\vx) [\mM] &= \mP \partial^2 (\nabla L)(\mP \vx)[\mP \mM \mP].
    \end{align*}
    For a one-step GD update, we consider an auxiliary process $\{\hat{\vtheta}'_t\}$
    \begin{align*}
        \hat{\vtheta}'_{t+1} &= \hat{\vtheta}'_t - \eta \nabla \cL'(\hat{\vtheta}'_t) + \O\left(\eta^{2.5-\beta}\right)\\
        &=\hat{\vtheta}'_t - \eta \mP\nabla \cL(\mP \hat{\vtheta}'_t) + \O\left(\eta^{2.5-\beta}\right).
    \end{align*}
    Similarly, we define \( \hat\mA_t^{'(s)} := \mathbb{E}[\hat\vx_t^{'(s)} \hat\vx_t^{'(s)\top}] \), \( \hat\vq_t^{'(s)} := \mathbb{E}[\hat\vx_t^{'(s)}] \), and \( \hat\mB_t^{'(s)} := \mathbb{E}[\hat\vx_t^{'(s)} \Delta \hat\phi^{'(s)\top}] \), and \( \Phi(\vx) \) is the gradient flow projection of point \( \vx \). We further define $\hat\vphi'(s):= \Phi(\hat\vtheta^{'(s)})$. 

    Now we are interested in the update of $\mP \hat{\vtheta}'$, which is 
    \begin{align}
        \mP \hat{\vtheta}'_{t+1} = \mP \hat{\vtheta}'_t - \eta \hat{\mS}_0 \nabla \cL(\mP \hat{\vtheta}'_t) + \O\left(\eta^{2.5-\beta}\right).\label{equ:fix-pre-conditioner}
    \end{align}
    One can obviously see the update rule of $\mP\hat{\vtheta}'$ resembles the update rule of $\hat{\vtheta}$ in \Cref{lma:update-rule}. Now we set $\hat{\vtheta}' = \mP^{-1}\hat{\vtheta}$, then \Cref{equ:fix-pre-conditioner} is satisfied, and combining \Cref{equ:fix-pre-conditioner} and \Cref{lma:update-rule} gives
    \begin{align*}
        \vq_{t+1}^{'(s)} = \vq_{t+1}^{'(s)} - \eta \nabla \cL'(\hat{\vtheta}_{t}^{'(s)}) + \O\left(\eta^{2.5-\beta}\right).
    \end{align*}
    Notice that the above equation resembles the single update for SGD, which allows us to apply Lemma I.36 from \citet{gu2023andwhendoeslocal} for the update of $\hat{\vtheta}'$, with loss function $\cL'(\hat{\vtheta})$, number of workers $k=1$ and manifold projection $\Phi'(\hat{\vtheta})$, which gives
    \begin{align*}
    \mP \E\left[ \hat\vphi^{'(s+1)} - \hat\vphi^{'(s)} \right] &= \E\left[ \hat\vphi^{(s+1)} - \hat\vphi^{(s)} \right]\\
    &=-\frac{H \eta^2}{2} \mP \mP\partial \Phi_{\hat\mS_0}(\hat\vphi^{(0)})\mP\mP \partial^2 \nabla \cL(\hat\vphi_{(0)}) [\mP \cV_{\nabla^2 \cL'(\hat\vphi'_{(0)})}(\mP \mSigma_0\mP)\mP] \\
    &+ \tilde{O}(\eta^{1.5-\beta}),
\end{align*}
where the first equation uses the fact that $\mP\hat\vphi(\hat{\vtheta}')=\Phi_{\mS}(\hat{\vtheta})$, and it can be verified with the definitions of $\hat\vphi'$, $\Phi_\mS$, and $\hat{\vtheta}'$.  

The proof when $s \le R_0$ is a direct conclusion of Lemma I.36 in \citet{gu2023andwhendoeslocal} since the $R_0 \propto \log \frac{1}{\eta}$ in our case.
\end{proof}
Notice the above equation for the moment of $\hat\phi$ contains $\phi'$. The next corollary eliminates $\phi'$ from the formula.
\begin{corollary}\label{cly:1st-moment}
The expectation of the change of manifold projection every round is:
\[
\E\left[ \phi^{(s+1)} - \phi^{(s)} \right] = \left\{
\begin{array}{ll}
\frac{H \eta^2}{2} \hat{\mS}_0 \partial^2 \Phi_{\hat{\mS}_0}(\phi^{(0)})[\hat{\mS}_0 \mSigma_0 \hat{\mS}_0] + \tilde{\O}(\eta^{1.5-\beta}), & R_0 < s < R_{\mathrm{grp}} \\
\tilde{\O}(\eta), & s \leq R_0
\end{array}
\right.
\]
\end{corollary}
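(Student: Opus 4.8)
The plan is to derive \Cref{cly:1st-moment} from \Cref{lma:1st-moment} by a purely algebraic manipulation of the tensor expression, using the adaptive-projection identities from \Aref{sec:lem-ada-man-proj}. Concretely, for the regime $R_0 < s < R_{\mathrm{grp}}$, the statement of \Cref{lma:1st-moment} reads
\begin{align*}
\E\left[\hat\vphi^{(s+1)}-\hat\vphi^{(s)}\right]
= -\frac{H\eta^2}{2}\,\hat\mS_0\,\partial\Phi_{\hat\mS_0}(\hat\vphi^{(0)})\,\hat\mS_0\,\partial^2\nabla\cL(\hat\vphi^{(0)})\!\left[\mP\,\cV_{\nabla^2\cL'(\hat\vphi'^{(0)})}(\mSigma_{0,\mP})\,\mP\right]+\tilde\O(\eta^{1.5-\beta}),
\end{align*}
and I want to show the bracketed third-order-tensor contraction equals $-\partial^2\Phi_{\hat\mS_0}(\hat\vphi^{(0)})[\hat\mS_0\mSigma_0\hat\mS_0]$ up to the same error, which immediately yields the displayed corollary. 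First I would recall that by definition $\hat\mS_0=\mP\mP$, so $\mP\,\cV(\cdot)\,\mP$ and $\nabla^2\cL'=\mP\nabla^2\cL\,\mP$ are the ``primed-space'' images of the corresponding unprimed objects; the key fact is that $\cV_{\nabla^2\cL'}(\mSigma_{0,\mP})$ is exactly the primed analogue of $\widehat\mSigma_\Diamond$-type object, i.e. the pseudo-inverse-weighted projection of $\mSigma'$ onto the normal directions of $\nabla^2\cL'$. Reparameterizing to the primed space (as already done in \Cref{lma:phi-nabla-L} via $\vx'=\mP^{-1}\vx$, $\cL'(\vx)=\cL(\mP\vx)$) turns $\partial^2\Phi_{\hat\mS_0}$ into the ordinary $\partial^2\Phi'$ of a plain gradient flow, so the identity I need is precisely the one used in the SGD slow-SDE derivation of \citet{li2021happens}: the Hessian-weighted second directional derivative of the gradient-flow projection applied to the normal-space part of the noise covariance equals $\partial^2\Phi'[\mSigma']$ up to lower order. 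I would cite \Cref{lma:phi-nabla-L} (and Lemma C.4 of \citet{li2021happens} that it is built on) to convert between $\partial^2(\nabla\cL')[\nabla^2\cL'^\dagger(\cdot)]$ and $\partial^2\Phi'[\cdot]$.

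The concrete steps, in order: (1) substitute $\hat\mS_0=\mP\mP$ and push everything through the change of variables $\vx\mapsto\mP^{-1}\vx$, using the transformation rules listed in the proof of \Cref{lma:phi-nabla-L} ($\partial\Phi'=\mP\,\partial\Phi_{\hat\mS_0}\,\mP$, $\partial^2\Phi'[\mM]=\mP\,\partial^2\Phi_{\hat\mS_0}[\mP\mM\mP]$, $\nabla^2\cL'=\mP\nabla^2\cL\,\mP$, $\partial^2(\nabla\cL')[\mM]=\mP\,\partial^2(\nabla\cL)[\mP\mM\mP]$, $\mSigma'=\mP\mSigma\mP$); (2) in the primed space recognize $\cV_{\nabla^2\cL'(\hat\vphi'^{(0)})}(\mSigma_0')$ as the object that \citet{li2021happens} denote $\widehat{\mSigma}_{\Diamond}'$ (the normal-space, pseudo-inverse-weighted piece), and apply their Lemma C.4 / our \Cref{lma:phi-nabla-L} to rewrite $\partial\Phi'\,\partial^2(\nabla\cL')[\nabla^2\cL'^\dagger\,\widehat\mSigma_\Diamond']$ in terms of $\partial^2\Phi'[\mSigma_\Diamond']$; (3) use that $\partial^2\Phi'[\cdot]$ annihilates the tangential part $\mSigma_\parallel'$ of $\mSigma'$ at a manifold point (\Cref{lmm:tangent-space-product}, \Cref{lmm:phi-H-0}, and the analogue of Lemma C.1 in \citet{li2021happens}), so $\partial^2\Phi'[\mSigma_\Diamond']=\partial^2\Phi'[\mSigma']$ up to the stated error — this is where the definition $\mSigma_\Diamond = \mS\mSigma\mS-\mSigma_\parallel$ gets matched to $\hat\mS_0\mSigma_0\hat\mS_0$; (4) transform back to the original space, collecting the $\mP$'s into $\hat\mS_0$'s, to obtain $\frac{H\eta^2}{2}\hat\mS_0\,\partial^2\Phi_{\hat\mS_0}(\hat\vphi^{(0)})[\hat\mS_0\mSigma_0\hat\mS_0]+\tilde\O(\eta^{1.5-\beta})$; (5) note the $s\le R_0$ case is verbatim from \Cref{lma:1st-moment} since $R_0=\Theta(\log\frac1\eta)$ makes the cumulative drift $\tilde\O(R_0\eta^2)=\tilde\O(\eta)$ — actually it is already stated there, so nothing is needed.

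The main obstacle I anticipate is step (3): carefully justifying that replacing $\mSigma'$ by its normal-space piece $\mSigma_\Diamond'$ inside $\partial^2\Phi'[\cdot]$ is legitimate, and that the sign and the pseudo-inverse weighting produced by $\cV_{\nabla^2\cL'}$ exactly reproduce the tensor $\partial^2\Phi'[\mSigma']$ rather than some rescaled version. This requires invoking the spectral identity for $\partial^2\Phi$ at a minimizer (the one underlying $\widehat{\mSigma}_\Diamond$ in the SGD slow SDE) and checking that the eigenbasis bookkeeping survives the $\mP$-conjugation — the eigenbasis of $\nabla^2\cL'$ is \emph{not} the $\mP$-image of the eigenbasis of $\nabla^2\cL$, so one must work entirely in the primed space and only translate the final contraction back. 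Everything else is routine tensor algebra plus the already-proved boundedness/Lipschitz estimates that control the $\tilde\O(\eta^{1.5-\beta})$ remainder; the identification of the relevant lemma from \citet{li2021happens} and our \Cref{sec:lem-ada-man-proj} is the crux.
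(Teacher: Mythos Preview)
Your route is valid in principle but takes a substantial detour compared with the paper. The paper does \emph{not} start from \Cref{lma:1st-moment} at all: both \Cref{lma:1st-moment} and \Cref{cly:1st-moment} are obtained independently from the same primed-space result (Lemma~I.36 in \citet{gu2023andwhendoeslocal}), just translated back through different transformation rules. For the corollary, the paper simply records the identities
\[
\partial \Phi'(\vx')=\mP^{-1}\,\partial\Phi_{\hat\mS_0}(\mP\vx')\,\mP,
\qquad
\partial^2\Phi'(\vx')[\mM]=\mP^{-1}\,\partial^2\Phi_{\hat\mS_0}(\mP\vx')[\mP\mM\mP],
\]
and applies them to the primed first-moment $\tfrac{H\eta^2}{2}\partial^2\Phi'(\hat\vphi'^{(0)})[\mSigma'_0]$ furnished by Lemma~I.36, immediately yielding the displayed expression (the $s\le R_0$ case is carried over verbatim). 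That is the entire proof.

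By contrast, you propose to go from the $\partial^2(\nabla\cL')[\cV_{\nabla^2\cL'}(\cdot)]$ form (the output of \Cref{lma:1st-moment}) to the $\partial^2\Phi'[\cdot]$ form by hand, which amounts to re-proving the spectral identity that Lemma~I.36 already encapsulates. More concretely, your step~(3) contains a genuine gap: $\partial^2\Phi'[\,\cdot\,]$ does \emph{not} annihilate the tangential piece $\mSigma_\parallel'$ at a manifold point. From \Cref{lmm:phi-nabla-L}, for tangent $\vu,\vv$ one has $\nabla^2\cL'^\dagger\vu=0$ but the second term survives, giving $\partial^2\Phi'[\vu\vv^\top]=-\nabla^2\cL'^\dagger\,\partial^2(\nabla\cL')[\vu\vv^\top]\neq 0$ in general. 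The actual equivalence between the two forms requires combining \emph{both} terms of \Cref{lmm:phi-nabla-L} with the Lyapunov-operator structure of $\cV_{\nabla^2\cL'}$ (this is where the $\tfrac{1}{\lambda_i+\lambda_j}$ weights cancel the Hessian eigenvalues), and the Itô correction from the diffusion part; these pieces together reproduce $\partial^2\Phi'[\mSigma']$, not a simple ``drop the tangential part'' argument. You correctly flagged step~(3) as the crux, but the mechanism you sketch is not the right one --- and once you fix it you will have redone the relevant portion of Lemma~I.36. The paper avoids all of this by citing Lemma~I.36 directly in the $\partial^2\Phi'$ form.
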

\begin{proof}
    Notice that for the preconditioned projection, we also have the corresponding transformation
    \begin{align*}
        \partial \Phi'(\vx') &= \mP \partial \Phi_{\hat{\mS}}(\mP \vx') \mP\\
         \partial^2 \Phi'(\vx') [\mM] &= \mP \partial^2 \Phi(\vx')[\mP \mM \mP].
    \end{align*}
    The above two equations and Lemma I.36 in \citet{gu2023andwhendoeslocal} complete the proof.
\end{proof}
\begin{lemma}\label{lma:2nd-moment}
The second moment of the change of manifold projection every round is
\[
\E\left[ (\hat\vphi^{(s+1)} - \hat\vphi^{(s)})(\hat\vphi^{(s+1)} - \hat\vphi^{(s)})^{\top} \right] = \left\{
\begin{array}{ll}
H \eta^2 \hat{\mS}_0 \mP_{\parallel,\hat\mS} \hat{\mS}_0 \mSigma_{0}\hat{\mS}_0\mP_{\parallel,\hat\mS}\hat{\mS}_0 + \tilde{O}(\eta^{1.5-\beta}), & R_0 < s < R_{\mathrm{grp}} \\
\tilde{O}(\eta), & s \leq R_0
\end{array}
\right.
\]
where \( R_0 := \max\left\{\left\lceil \frac{10}{\lambda_{\max} \alpha} \log \frac{1}{\eta} \right\rceil, \left\lceil 2\log_{1/\beta} \frac{1}{\eta} \right\rceil \right\} \) and $\mP_{\parallel,\hat\mS}:=\partial \Phi_{\hat\mS}(\hat\vphi^{(0)})$.
\end{lemma}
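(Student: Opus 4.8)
The plan is to recycle the preconditioner reparameterization already used in the proof of \Cref{lma:1st-moment}, reduce the computation to the plain-SGD second-moment expansion of \citet{gu2023andwhendoeslocal}, and then conjugate the result back to the original coordinates.

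First I would pass to the auxiliary iterates $\hat\vtheta^{\prime}_k := \mP^{-1}\hat\vtheta_k$, where $\mP := \hat\mS_0^{1/2}$ (the root of the preconditioner at the start of the giant step) and $\cL'(\vx) := \cL(\mP\vx)$. As recorded in the reparameterization computations of the proof of \Cref{lma:1st-moment}, in these coordinates the $\hat\mS_0$-preconditioned flow projection becomes the ordinary gradient-flow projection $\Phi'$ of $\cL'$, with $\hat\vphi^{(s)} = \mP\hat\vphi^{'(s)}$ where $\hat\vphi^{'(s)}$ denotes the corresponding $\Phi'$-projection of $\mP^{-1}\hat\vtheta_0^{(s)}$; moreover the uniform near-manifold confinement of \Cref{concentration-after-convergence} and the $\cC^4$-regularity of $\Phi'$ furnished by \Cref{lem:working-zone} carry over. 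Exactly as in \Cref{lma:update-rule}, combining \Cref{E-mk-gk-closeness} (to trade the momentum $\hat\vm$ for the gradient $\hat\vg$), the within-giant-step preconditioner-drift estimate $\normtwo{\hat\mS_{k_2}-\hat\mS_{k_1}} = \O((k_2-k_1)\eta^2)$ (to freeze $\hat\mS_k$ at $\hat\mS_0$ over a single round), and the fact that every moment of the redirection term $\ve_k$ is $\O(\eta^{100})$, the primed recursion takes the SGD form $\hat\vtheta^{\prime}_{k+1} = \hat\vtheta^{\prime}_k - \eta\nabla\cL'(\hat\vtheta^{\prime}_k) - \eta\,\mP\vz_k + \vr_k$ with per-step bias $\E_k[\vr_k] = \tilde{\O}(\eta^{2.5-\beta})$ and noise of (conditional) covariance $\mSigma'_0 := \mP\mSigma_0\mP$ near $\hat\vphi^{'(0)}$.

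Next I would apply Lemma I.36 of \citet{gu2023andwhendoeslocal} to this primed SGD recursion (single worker, $H = 1/\eta$ local steps per round, projection $\Phi'$, general state-independent noise covariance $\mSigma'_0$). For a mixed round $R_0 < s < R_\grp$ it yields $\E[(\hat\vphi^{'(s+1)}-\hat\vphi^{'(s)})(\hat\vphi^{'(s+1)}-\hat\vphi^{'(s)})^\top] = H\eta^2\,\mP'_\parallel\mSigma'_0\mP'_\parallel + \tilde{\O}(\eta^{1.5-\beta})$ with $\mP'_\parallel := \partial\Phi'(\hat\vphi^{'(0)})$, and for $s \le R_0$, since $R_0 = \O(\log\tfrac1\eta)$ so the normal-direction fluctuations have not yet mixed, only the crude bound $\tilde{\O}(\eta)$. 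I would then conjugate back: from $\hat\vphi^{(s)} = \mP\hat\vphi^{'(s)}$ the target second moment equals $\mP$ times the above times $\mP$; substituting $\mSigma'_0 = \mP\mSigma_0\mP$, $\mP^2 = \hat\mS_0$, and the conjugation identity $\mP\,\mP'_\parallel\,\mP^{-1} = \partial\Phi_{\hat\mS_0}(\hat\vphi^{(0)}) = \mP_{\parallel,\hat\mS}$ (so that $\mP\mP'_\parallel\mP = \mP_{\parallel,\hat\mS}\hat\mS_0$, which is symmetric because $\mP$ and $\mP'_\parallel$ are), a direct computation reassembles the leading term into the stated expression built from $\hat\mS_0$, $\mP_{\parallel,\hat\mS}$, and $\mSigma_0$; the $\tilde{\O}(\eta^{1.5-\beta})$ error survives the conjugation since $\mP$ is bounded above and below ($\frac1{R_0}\mI \preceq \hat\mS_0 \preceq R_2\mI$ by \Cref{lem:boundedness}), and the burn-in bound $\tilde{\O}(\eta)$ for $s \le R_0$ passes through unchanged.

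I expect the main obstacle to be the error bookkeeping that licenses applying Lemma I.36 to the primed recursion with the advertised $\tilde{\O}(\eta^{1.5-\beta})$ accuracy. The per-step departure from an idealized SGD step is only $\tilde{\O}(\eta^{2.5-\beta})$ in expectation, but for the second moment one must additionally dominate the cross terms between this $\tilde{\O}(\eta^{2.5-\beta})$ bias and the $\Theta(\sqrt{\eta\log(1/\eta)})$-scale noise increments, the variance contributed by the $\ve_k$ corrections, and the $\O(t\eta^2)$ accumulation of the frozen-preconditioner error across the $H = \eta^{-1}$ steps of a round, and then show that after summation over a round (and ultimately over the $R_\grp = \eta^{-\beta}$ rounds needed downstream) none of these exceeds $\tilde{\O}(\eta^{1.5-\beta})$. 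This is precisely the accounting that Lemma I.36 of \citet{gu2023andwhendoeslocal} is built to package, so the remaining work is verifying its hypotheses in our setting — uniform confinement near $\Gamma' = \mP^{-1}\Gamma$, regularity of $\Phi'$, and the SGD-form reduction of the previous paragraph — rather than re-deriving the moment expansion from scratch; the second-moment case differs from \Cref{lma:1st-moment} only in that these quadratic cross terms, absent from the first-moment analysis, must now be tracked.
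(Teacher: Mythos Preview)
Your proposal takes essentially the same route as the paper: reparameterize by $\mP = \hat\mS_0^{1/2}$ so that the preconditioned flow becomes the ordinary gradient flow of $\cL'$, invoke the ready-made second-moment expansion from \citet{gu2023andwhendoeslocal} for the primed SGD-form iterates, and conjugate back via $\hat\vphi^{(s)} = \mP\hat\vphi^{'(s)}$. The one slip is in the citation: the second-moment result you need is Lemma~I.37 of \citet{gu2023andwhendoeslocal}, not Lemma~I.36 (which is the first-moment lemma already used in \Cref{lma:1st-moment}); the paper's own proof invokes I.37 directly and the error bookkeeping you worry about in the last paragraph is absorbed into that lemma's hypotheses, so no separate cross-term accounting is needed here.
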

\begin{proof}
    According to Lemma I.37 in \citet{gu2023andwhendoeslocal}, we could write the second moment for $\hat{\vtheta}'$ as
\[
\E\left[ (\hat\vphi^{'(s+1)} - \hat\vphi^{'(s)})(\hat\vphi^{'(s+1)} - \hat\vphi^{'(s)})^{\top} \right] = \left\{
\begin{array}{ll}
H \eta^2 \mSigma_{0,\parallel}'+ \tilde{O}(\eta^{1.5-\beta}), & R_0 < s < R_{\mathrm{grp}} \\
\tilde{\O}(\eta), & s \leq R_0.
\end{array}
\right.
\]
Notice that
\begin{align*}
    \Sigma_{0,\parallel}' &:= \partial \Phi(\hat\vphi^{'(0)}) \Sigma_{0}'\partial \Phi(\hat\vphi^{'(0)})\\
    &= \mP\partial \Phi_{\hat\mS}(\hat\vphi^{(0)})\mP  \mP\Sigma_{0} \mP \mP\partial \Phi_{\hat\mS}(\hat\vphi^{(0)}) \mP.
\end{align*}

When $R_0 \le s < R_{\mathrm{grp}}$,
\begin{align*}
    \E\left[ (\hat\vphi^{(s+1)} - \hat\vphi^{(s)})(\hat\vphi^{(s+1)} - \hat\vphi^{(s)})^{\top} \right] &= \E\left[ \mP(\hat\vphi^{'(s+1)} - \hat\vphi^{'(s)})(\hat\vphi^{'(s+1)} - \hat\vphi^{'(s)})^{\top} \mP\right]\\
    &= \hat{\mS}_0 \mP_{\parallel,\hat\mS} \hat{\mS}_0 \mSigma_{0}\hat{\mS}_0\mP_{\parallel,\hat\mS}\hat{\mS}_0.
\end{align*}
The proof when $s \le R_0$ is a direct conclusion of Lemma I.37 in \citet{gu2023andwhendoeslocal} since the $R_0 \propto \log \frac{1}{\eta}$ in our case.
\end{proof}
Then we give the moment change of $\hat\vphi$ within a single giant step.
\begin{theorem}\label{thm;1st-2nd-moment-phi}
    Given $\|\hat{\vtheta}^{(0)} - \hat\vphi^{(0)}\|_2=\O(\sqrt{\eta\log\frac{1}{\eta}})$, for $0<\beta<0.5$, the first and second moments of $\Delta \hat\vphi^{(R_\grp)} :=\hat\vphi^{(R_\grp)} -\hat\vphi^{(0)}$ are as follows:
    \begin{align*}
        \E[\Delta \hat\vphi^{(R_\grp)]}] &= \frac{\eta^{1-\beta}}{2} \hat{\mS}_0 \partial^2 \Phi_{\hat{\mS}_0}(\hat\vphi^{(0)})[\hat{\mS}_0\mSigma_0\hat{\mS}_0] + \tilde\O(\eta^{1.5-2\beta}) + \tilde\O(\eta),\\
        \E[\Delta \hat\vphi^{(R_\grp)\top]}] &= \eta^{1-\beta}\hat{\mS}_0\mSigma_{\parallel}(\hat\vphi^{(0)},\hat{\mS}^{(0)})\hat{\mS}_0 + \tilde\O(\eta^{1.5-1.5\beta}) + \tilde\O(\eta),
    \end{align*}
    where $\mSigma_{\parallel}(\phi^{(0)},\hat{\mS}^{(0)}) := \mP_{\parallel,\hat\mS} \hat{\mS}_0 \mSigma_{0}\hat{\mS}_0\mP_{\parallel,\hat\mS}$.
\end{theorem}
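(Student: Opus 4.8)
The plan is to obtain Theorem~\ref{thm;1st-2nd-moment-phi} by summing the per-round moment estimates of Lemma~\ref{lma:1st-moment}, Corollary~\ref{cly:1st-moment}, and Lemma~\ref{lma:2nd-moment} over the $R_\grp = \eta^{-\beta}$ rounds comprising a single giant step, and then carefully tracking how the error terms accumulate. First I would write the telescoping decomposition $\Delta\hat\vphi^{(R_\grp)} = \sum_{s=0}^{R_\grp-1}\bigl(\hat\vphi^{(s+1)}-\hat\vphi^{(s)}\bigr)$ and, for the first moment, apply linearity of expectation together with the tower property (conditioning on the history up to the start of round $s$, then on the giant-step-start state). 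I split the sum at the burn-in index $R_0 = \Theta(\log\tfrac1\eta)$: the $s\le R_0$ rounds contribute at most $R_0\cdot\widetilde\O(\eta) = \widetilde\O(\eta)$ by the second branch of Corollary~\ref{cly:1st-moment}, while for $R_0 < s < R_\grp$ each round contributes $\tfrac{H\eta^2}{2}\hat\mS_0\,\partial^2\Phi_{\hat\mS_0}(\hat\vphi^{(0)})[\hat\mS_0\mSigma_0\hat\mS_0] + \widetilde\O(\eta^{1.5-\beta})$. Since $H = \eta^{-1}$ so $H\eta^2 = \eta$, and $R_\grp - 1 - R_0 = \eta^{-\beta} - \widetilde\O(1)$, the main terms sum to $\tfrac{\eta^{1-\beta}}{2}\hat\mS_0\,\partial^2\Phi_{\hat\mS_0}(\hat\vphi^{(0)})[\hat\mS_0\mSigma_0\hat\mS_0]$ up to an $\widetilde\O(\eta)$ slack from dropping the burn-in rounds, and the per-round errors sum to $R_\grp\cdot\widetilde\O(\eta^{1.5-\beta}) = \widetilde\O(\eta^{1.5-2\beta})$, yielding precisely the claimed first-moment formula. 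I note that replacing the round-$s$ data $(\hat\vphi^{(s)},\hat\mS^{(s)})$ by the giant-step-start data $(\hat\vphi^{(0)},\hat\mS_0)$ inside the drift is already folded into the per-round error, because by Corollary~\ref{concentration-after-convergence} and the $\delta$-goodness of the trajectory (Definition~\ref{def:delta-good}), $\hat\vphi^{(s)}$ drifts only $\widetilde\O(\eta^{0.5-0.5\beta})$ and $\hat\mS^{(s)}$ only $\O(sH\eta^2) = \O(\eta^{1-\beta})$ within a giant step, and the drift vector field is Lipschitz.

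For the second moment I would expand $\E\bigl[\Delta\hat\vphi^{(R_\grp)}(\Delta\hat\vphi^{(R_\grp)})^\top\bigr]$ into a \textbf{diagonal} part $\sum_s\E\bigl[(\hat\vphi^{(s+1)}-\hat\vphi^{(s)})(\hat\vphi^{(s+1)}-\hat\vphi^{(s)})^\top\bigr]$ and an \textbf{off-diagonal} part $\sum_{s\ne s'}\E\bigl[(\hat\vphi^{(s+1)}-\hat\vphi^{(s)})(\hat\vphi^{(s'+1)}-\hat\vphi^{(s')})^\top\bigr]$. The diagonal part is handled exactly as the first moment, now using Lemma~\ref{lma:2nd-moment}: each round $R_0 < s < R_\grp$ contributes $H\eta^2\,\hat\mS_0\mP_{\parallel,\hat\mS}\hat\mS_0\mSigma_0\hat\mS_0\mP_{\parallel,\hat\mS}\hat\mS_0$ plus its per-round error, so the sum equals $\eta^{1-\beta}\hat\mS_0\mSigma_{\parallel}(\hat\vphi^{(0)},\hat\mS^{(0)})\hat\mS_0$ plus accumulated error, and the burn-in rounds add only $\widetilde\O(\eta)$. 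For the off-diagonal part, for $s<s'$ I condition on $\mathcal{F}_{s'H}$ (under which $\hat\vphi^{(s+1)}-\hat\vphi^{(s)}$ is measurable, since round $s+1\le s'$ has already finished) and replace $\E[\hat\vphi^{(s'+1)}-\hat\vphi^{(s')}\mid\mathcal{F}_{s'H}]$ by the common round-drift vector predicted by Lemma~\ref{lma:1st-moment}, whose magnitude is $\O(H\eta^2) = \O(\eta)$. One cannot simply $\ell_2$-bound the resulting $\O(R_\grp^2)$ cross products (that loses too much); instead the cross-sum factors as a single weighted sum of per-round displacements times the common drift, which telescopes partially and is controlled using the pathwise $\widetilde\O(\sqrt\eta)$ bound on each round's displacement on the $\delta$-good event, so that the whole off-diagonal part stays below the target error for $0<\beta<0.5$. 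Assembling the diagonal contribution and the negligible off-diagonal contribution gives the stated second-moment formula.

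The hard part is exactly this off-diagonal / cross-covariance analysis: one must be precise about which quantities are $\mathcal{F}_{s'H}$-measurable, extract the drift structure of each later round's displacement rather than bounding it crudely, and exploit the pathwise control of the projection displacement on the $\delta$-good event, all while keeping the $\beta$-dependent error exponents below the claimed bound. A secondary but recurring subtlety is the interaction of the $\Theta(\log\tfrac1\eta)$ burn-in rounds, for which only an $\widetilde\O(\eta)$ per-round bound is available, with the $\beta$-dependent bounds for the remaining rounds; this is the reason the final error splits into a $\beta$-free piece $\widetilde\O(\eta)$ and a $\beta$-dependent piece.
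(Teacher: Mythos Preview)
Your proposal is correct and follows the same outline as the paper: telescope $\Delta\hat\vphi^{(R_\grp)}$ into per-round increments, split the sum at the burn-in index $R_0$, and plug in the per-round first/second moment estimates from Corollary~\ref{cly:1st-moment} and Lemma~\ref{lma:2nd-moment}. The one substantive difference is your treatment of the off-diagonal cross-terms in the second moment. The paper dispatches these in one line: it replaces each $\E\bigl[(\hat\vphi^{(s+1)}-\hat\vphi^{(s)})(\hat\vphi^{(s'+1)}-\hat\vphi^{(s')})^\top\bigr]$ by the product $\E[\hat\vphi^{(s+1)}-\hat\vphi^{(s)}]\,\E[\hat\vphi^{(s'+1)}-\hat\vphi^{(s')}]^\top$, observes that each factor is $\tilde\O(\eta)$ by Corollary~\ref{cly:1st-moment}, so each pair is $\tilde\O(\eta^2)$, and sums over the $R_\grp^2=\eta^{-2\beta}$ pairs to get $\tilde\O(\eta^{2-2\beta})$, which is absorbed into the stated error for all $\beta<1$. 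Your conditioning-and-partial-telescoping scheme is more scrupulous about the passage from $\E[D_s D_{s'}^\top]$ to $\E[D_s]\,\E[D_{s'}]^\top$ (the paper is tacitly using that the conditional first moment of the later increment equals the same deterministic drift up to the already-accounted-for error), but the naive per-pair $\tilde\O(\eta^2)$ bound \emph{does} suffice here---your assertion that summing the $R_\grp^2$ cross-products ``loses too much'' is not borne out by the arithmetic.
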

\begin{proof}
    First we prove the first moment change as
    \begin{align*}
        \E[\Delta \hat\vphi^{(R_\grp)}] &= \E[\sum_{s=0}^{R_\grp-1} \hat\vphi^{(s+1)} - \hat\vphi^{(s)}] \\
        &= \sum_{s=0}^{R_0} \E[ \hat\vphi^{(s+1)} - \hat\vphi^{(s)}] + \sum_{s=R_0+1}^{R_\grp-1} \E[ \hat\vphi^{(s+1)} - \hat\vphi^{(s)}] \\
        &= \frac{\eta^{1-\beta}}{2} \hat{\mS}_0 \partial^2 \Phi_{\hat{\mS}_0}(\hat\vphi^{(0)})[\hat{\mS}_0\mSigma_0\hat{\mS}_0] + \tilde\O(\eta^{1.5-2\beta}) + \tilde\O(\eta).
    \end{align*}
    The last equation is a direct conclusion of \Cref{cly:1st-moment}.

    And for the second moment, we have
    \begin{align*}
        &\E\left[\left(\sum_{s=0}^{R_\grp-1} \hat\vphi^{(s+1)} - \hat\vphi^{(s)}\right)\left(\sum_{s=0}^{R_\grp-1} \hat\vphi^{(s+1)} - \hat\vphi^{(s)}\right)^{\top}\right] \\
        &= \sum_{s=0}^{R_\grp-1} \E[(\hat\vphi^{(s+1)} - \hat\vphi^{(s)})(\hat\vphi^{(s+1)} - \hat\vphi^{(s)})^{\top}] \\
        &~~~+ \sum_{s\neq s'} \E[(\hat\vphi^{(s+1)} - \hat\vphi^{(s)})]\E[(\hat\vphi^{(s'+1)} - \hat\vphi^{(s')})^{\top}]\\
        &= \eta^{1-\beta}\hat{\mS}_0\mSigma_{\parallel}(\hat\vphi^{(0)},\hat{\mS}^{(0)})\hat{\mS}_0 + \tilde\O(\eta^{1.5-1.5\beta}) + \tilde\O(\eta),
    \end{align*}
    where the last equation uses $\E[(\hat\vphi^{(s+1)} - \hat\vphi^{(s)})]\E[(\hat\vphi^{(s'+1)} - \hat\vphi^{(s')})^{\top}] = \tilde\O(\eta^2)$.
\end{proof}
Next, we proceed with the updates of $\vv$.
\begin{lemma}\label{lmm:1st-moment-v}
    Given $c :=\frac{1-\beta_2}{\eta^2}$, and we have 
    \begin{align*}
        \E\left[\hat{\vv}_0^{(R_\grp)} -\hat{\vv}_0^{(0)}\right] = c \eta^{1-\beta} \left(V\left(\mSigma_0^{(0)}\right) - \hat{\vv}_0^{(0)}\right) + \O\left(\eta^{1.5-1.5\beta}\right).
    \end{align*}
\end{lemma}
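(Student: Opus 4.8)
The plan is to telescope the $\hat{\vv}$-recursion over one giant step and bound each summand using the near-manifold estimates already in place. From $\hat{\vv}_{k+1}=\beta_2\hat{\vv}_k+(1-\beta_2)V(\hat{\vg}_k\hat{\vg}_k^\top)$ we get $\hat{\vv}_{k+1}-\hat{\vv}_k=(1-\beta_2)\bigl(V(\hat{\vg}_k\hat{\vg}_k^\top)-\hat{\vv}_k\bigr)$, so summing over the $R_\grp H=\eta^{-1-\beta}$ steps of a giant step gives
\[
\hat{\vv}_0^{(R_\grp)}-\hat{\vv}_0^{(0)}=(1-\beta_2)\sum_{k=0}^{R_\grp H-1}\bigl(V(\hat{\vg}_k\hat{\vg}_k^\top)-\hat{\vv}_k\bigr).
\]
Since $1-\beta_2=c\eta^2$ and there are $\eta^{-1-\beta}$ terms, the common prefactor is exactly $c\eta^{1-\beta}$; hence it suffices to prove that $\E\bigl[V(\hat{\vg}_k\hat{\vg}_k^\top)-\hat{\vv}_k\bigr]=\bigl(V(\mSigma_0^{(0)})-\hat{\vv}_0^{(0)}\bigr)+\widetilde{\O}(\eta^{0.5-0.5\beta})$ uniformly in $k\le R_\grp H$, and then multiply through by $c\eta^{1-\beta}$ to obtain the claim with total error $\widetilde{\O}(\eta^{1.5-1.5\beta})$.

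First I would pass to the step-$k$ conditional expectation. Linearity of $V$ gives $\E_k[V(\hat{\vg}_k\hat{\vg}_k^\top)]=V(\E_k[\hat{\vg}_k\hat{\vg}_k^\top])$, and since $\hat{\vg}_k=\nabla\cL(\hat{\vtheta}_k)+\vz_k$ with $\E_k[\vz_k]=\vzero$ and (on the $\delta$-good event, where $\hat{\vtheta}_k=\vtheta_k$) $\E_k[\vz_k\vz_k^\top]=\mSigma(\hat{\vtheta}_k)$, we obtain $\E_k[\hat{\vg}_k\hat{\vg}_k^\top]=\nabla\cL(\hat{\vtheta}_k)\nabla\cL(\hat{\vtheta}_k)^\top+\mSigma(\hat{\vtheta}_k)$. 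On the $\delta$-good event the convergence guarantee yields $\lnormtwo{\nabla\cL(\hat{\vtheta}_k)}=\O(\sqrt{\eta\log(1/\eta)})$, so the rank-one term is $\widetilde{\O}(\eta)$ and is absorbed into the error; off the $\delta$-good event (probability $\le\eta^{100}$) the quantities $\hat{\vtheta}_k,\hat{\vv}_k,\hat{\vphi}^{(s)}$ are all uniformly bounded, contributing only $\O(\eta^{100})$ to any expectation. It then remains to replace $\mSigma(\hat{\vtheta}_k)$ by $\mSigma_0^{(0)}=\mSigma(\hat{\vphi}^{(0)})$ and $\hat{\vv}_k$ by $\hat{\vv}_0^{(0)}$.

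For the first replacement I would invoke local Lipschitzness of $\mSigma$ — a consequence of the $\cC^5$-smoothness of $\mSigma^{1/2}$ in Assumption~\ref{amp:basic} together with the gradient bound, restricted to the compact tube $\Gamma^{\epsilon_1}$ — and the estimate $\lnormtwo{\hat{\vtheta}_k-\hat{\vphi}^{(0)}}\le\lnormtwo{\hat{\vx}_t^{(s)}}+\lnormtwo{\hat{\vphi}^{(s)}-\hat{\vphi}^{(0)}}=\O(\eta^{0.5-0.5\beta}\sqrt{\log(1/\eta)})$ supplied by Corollary~\ref{concentration-after-convergence}; this costs $\widetilde{\O}(\eta^{0.5-0.5\beta})$ per summand. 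For the second replacement, one step moves $\hat{\vv}$ by $\lnormtwo{\hat{\vv}_{k+1}-\hat{\vv}_k}=(1-\beta_2)\lnormtwo{V(\hat{\vg}_k\hat{\vg}_k^\top)-\hat{\vv}_k}=\O(\eta^2)$ (gradients and $\hat{\vv}$ are bounded and $V$ is linear), so $\lnormtwo{\hat{\vv}_k-\hat{\vv}_0^{(0)}}=\O(\eta^{1-\beta})$ over $k\le R_\grp H$. Among the three error sources — $\widetilde{\O}(\eta)$, $\widetilde{\O}(\eta^{0.5-0.5\beta})$, $\O(\eta^{1-\beta})$ — the dominant one is $\widetilde{\O}(\eta^{0.5-0.5\beta})$ because $0<\beta<0.5$, which closes the per-summand bound and hence the lemma after the $c\eta^{1-\beta}$ rescaling.

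The step I expect to require the most care is the error bookkeeping rather than any single estimate: one must verify that $\eta^{0.5-0.5\beta}$ (entering through $\lnormtwo{\hat{\vphi}^{(s)}-\hat{\vphi}^{(0)}}$ in the $\mSigma$-replacement) is indeed the dominant per-step remainder, so that after rescaling the summed error matches the claimed $\widetilde{\O}(\eta^{1.5-1.5\beta})$, the logarithmic factors being the only gap from the displayed $\O(\cdot)$. A secondary, purely formal point is the well-behaved-sequence bookkeeping: the noise $\vz_k$ feeding $\hat{\vv}_{k+1}$ has covariance $\mSigma(\vtheta_k)$, which equals $\mSigma(\hat{\vtheta}_k)$ precisely on the $\delta$-good event, so the discrepancy is again $\O(\eta^{100})$ and harmless. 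No machinery beyond the near-manifold concentration bounds, linearity of $V$, and the $\O(\eta^2)$ per-step drift of $\hat{\vv}$ is needed.
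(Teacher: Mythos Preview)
Your proposal is correct and takes essentially the same approach as the paper: both telescope the $\hat{\vv}$-recursion across the giant step, replace $\E[\hat{\vg}_k\hat{\vg}_k^\top]$ by $\mSigma_0^{(0)}+\widetilde{\O}(\eta^{0.5-0.5\beta})$ via Corollary~\ref{concentration-after-convergence} and local Lipschitzness of $\mSigma$, and track the slow $\O(\eta^{1-\beta})$ drift of $\hat{\vv}_k$. The only organizational difference is that the paper groups into rounds first (using the closed form $\hat{\vv}_H^{(s)}=\beta_2^H\hat{\vv}_0^{(s)}+\cdots$) and then iterates across rounds, whereas you telescope single steps directly over all $R_\grp H$ iterations; your version is slightly more streamlined and also more explicit about the $\nabla\cL(\hat{\vtheta}_k)\nabla\cL(\hat{\vtheta}_k)^\top=\widetilde{\O}(\eta)$ term that the paper silently absorbs.
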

\begin{proof}
    By the update rule of $\vv$, we have
    \begin{align*}
        \hat{\vv}_0^{(s+1)} - \hat{\vv}_0^{(s)} &= \hat{\vv}_H^{(s)} - \hat{\vv}_0^{(s)} \\
        &= \beta_2^H \hat{\vv}_0^{(s)} + (1-\beta_2) \sum_{i=1}^{H} \beta_2^{H-i} \mV\left(\hat{\vg}_i^{(s)} {\hat{\vg}_i^{(s)^\top}}\right) - \hat{\vv}_0^{(s)} \\
        &= \left(\beta_2^H - 1\right) \hat{\vv}_0^{(0)} + (1 - \beta_2)\sum_{i=1}^{H}\beta_2^{H-i} V\left(\hat{\vg}_i^{(s)} {\hat{\vg}_i^{(s)\top}}\right).
    \end{align*}
    Note that \begin{align*}
        \E\left[\hat{\vg}_i^{(s)} {\hat{\vg}_i^{(s)^\top}}\right] &= \E\left[\mSigma(\hat{\vtheta}_i^{(s)})\right] \\
        &= \E\left[\mSigma(\hat\vphi_0^{(0)} + \vx_i^{(s)})\right] \\
        &= \E\left[\mSigma(\vphi_0^{(0)}) + \O\left(\eta^{0.5-0.5\beta}\right)\right] \\
        &= \mSigma_0^{(0)} + \O\left(\eta^{0.5-0.5\beta}\right).
    \end{align*}
    Combining with the linearity of $\mV$, we conclude that \begin{align*}
        \E\left[\hat{\vv}_0^{(s+1)} - \hat{\vv}_0^{(s)}\right] &= \left(\beta_2^H - 1\right) \hat{\vv}_0^{(0)} + \left(1 - \beta_2^H \right) \mV\left(\mSigma_0^{(0)}\right) + \O\left(\eta^{1.5-0.5\beta}\right) \\
        \E\left[\hat{\vv}_0^{(s+1)}\right] &= \beta_2^H \hat{\vv}_0^{(s)} + \left(1 - \beta_2^H \right) \mV\left(\mSigma_0^{(0)}\right) + \O\left(\eta^{1.5-0.5\beta}\right).
    \end{align*}
    To transfer from $\hat{\vv}_0^{(0)}$ to arbitrary $\hat{\vv}_0^{(s)}$, we simply expand to get the result: \begin{align*}
        \E\left[\hat{\vv}_0^{(s)}\right] &= \beta_2^{sH} \hat{\vv}_0^{(0)} + \left[\left(1 - \beta_2^H\right) V\left(\mSigma_0^{(0)}\right) + \O\left(\eta^{1.5-0.5\beta}\right)\right]\left(1 + \beta_2^H + \beta_2^{2H} + \dots + \beta_2^{(s-1)H}\right) \\
         &= \beta_2^{sH} \hat{\vv}_0^{(0)} + \left[\left(1 - \beta_2^H\right) V\left(\mSigma_0^{(0)}\right) \right]\left(\frac{1-\beta_2^{sH}}{1-\beta_2^{H}}\right) + \O\left(\eta^{1.5-0.5\beta}\right) \cdot \O\left(\eta^{-\beta}\right) \\
         &= \beta_2^{sH} \hat{\vv}_0^{(0)} + \left(1 - \beta_2^{sH}\right) V\left(\mSigma_0^{(0)}\right) + \O\left(\eta^{1.5-1.5\beta}\right).
    \end{align*}
    Thus we have
    \begin{align*}
        \E\left[\hat{\vv}_0^{(R_\grp)} -\hat{\vv}_0^{(0)}\right] = c \eta^{1-\beta} \left(V\left(\mSigma_0^{(0)}\right) - \hat{\vv}_0^{(0)}\right) + \O\left(\eta^{1.5-1.5\beta}\right).
    \end{align*}
    where the last equation uses the fact that $1-\beta_2^{R_\grp H} = 1 - (1 -c\eta^{1-\beta}) +O(\eta^{2-2\beta})=c\eta + O(\eta^2)$. 
\end{proof}
Also, for the second moment change of $\hat{\vv}$, we get the following lemma
\begin{lemma}\label{lmm:2nd-moment-v}
    The second moment change of $\hat{\vv}$ over a giant step is
    \begin{align*}
        \E\left[\left(\hat{\vv}_0^{(R_\grp)} - \hat{\vv}_0^{(0)}\right)\left(\hat{\vv}_0^{(R_\grp)} - \hat{\vv}_0^{(0)}\right)^{\top}\right] = \O(\eta^{2-\beta}).
    \end{align*}
\end{lemma}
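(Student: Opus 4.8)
The plan is to write the one–giant–step increment as a telescoping sum over rounds, $\Delta\hat\vv := \hat\vv_0^{(R_\grp)} - \hat\vv_0^{(0)} = \sum_{s=0}^{R_\grp-1}\Delta_s$ with $\Delta_s := \hat\vv_0^{(s+1)} - \hat\vv_0^{(s)} = \hat\vv_H^{(s)} - \hat\vv_0^{(s)}$, and to expand
\begin{align*}
\E\!\left[\Delta\hat\vv\,\Delta\hat\vv^{\top}\right] \;=\; \sum_{s}\E\!\left[\Delta_s\Delta_s^{\top}\right] \;+\; \sum_{s\ne s'}\E\!\left[\Delta_s\Delta_{s'}^{\top}\right].
\end{align*}
The diagonal part will be controlled by a deterministic per–round size bound, while the off–diagonal part will be reduced, via the tower rule, to an inner product between an earlier round's increment and the conditional drift of a later round, the latter already computed in the proof of \Cref{lmm:1st-moment-v}.

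For the per–round size bound, summing $\vv_{k+1}=\beta_2\vv_k+(1-\beta_2)V(\hat\vg_k\hat\vg_k^{\top})$ over one round gives $\Delta_s = (\beta_2^{H}-1)\hat\vv_0^{(s)} + (1-\beta_2^{H})\,\bar V_s$, where $\bar V_s$ is a convex combination of the matrices $V(\hat\vg_i^{(s)}\hat\vg_i^{(s)\top})$. Since $\|\hat\vv\|_2$ and $\|V(\hat\vg\hat\vg^{\top})\|_2$ are uniformly bounded (\Cref{lem:boundedness}, \Cref{amp:gradient-bound}) and $1-\beta_2^{H} = c\eta + \O(\eta^{2})$, this yields $\|\Delta_s\|_2 = \O(\eta)$ almost surely, hence $\E[\Delta_s\Delta_s^{\top}] = \O(\eta^{2})$ in operator norm; summing the $R_\grp = \eta^{-\beta}$ rounds already gives $\sum_s\E[\Delta_s\Delta_s^{\top}] = \O(\eta^{2-\beta})$, which is the claimed order. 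For the off–diagonal terms with $s<s'$, $\Delta_s$ is $\mathcal{F}_{(s+1)H}$–measurable, so $\E[\Delta_s\Delta_{s'}^{\top}] = \E[\Delta_s\,\bar\Delta_{s'}^{\top}]$ with $\bar\Delta_{s'} := \E[\Delta_{s'}\mid\mathcal{F}_{s'H}]$; repeating the round–level computation of \Cref{lmm:1st-moment-v}, and using \Cref{concentration-after-convergence} to freeze $\mSigma$ along the trajectory at $\mSigma(\hat\vphi^{(0)})$ up to $\O(\eta^{0.5-0.5\beta})$ and \Cref{E-mk-gk-closeness} to pass from $\hat\vm$ to $\hat\vg$, one gets $\bar\Delta_{s'} = (1-\beta_2^{H})\big(V(\mSigma(\hat\vphi^{(0)})) - \hat\vv_0^{(s')}\big) + \O(\eta^{1.5-0.5\beta})$, itself $\O(\eta)$ deterministically. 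Substituting $\hat\vv_0^{(s')} = \hat\vv_0^{(0)} + \sum_{r<s'}\Delta_r$ then turns $\sum_{s<s'}\E[\Delta_s\bar\Delta_{s'}^{\top}]$ into a linear recursion in the matrices $\E[\Delta_s\Delta_r^{\top}]$ whose inhomogeneous term is $\mathcal{F}_0$–measurable and whose coupling at each newly peeled round carries an extra factor $1-\beta_2^{H}=\O(\eta)$.

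The step I expect to be the main obstacle is exactly this off–diagonal bookkeeping. A crude estimate gives only $\|\E[\Delta_s\Delta_{s'}^{\top}]\| = \O(\eta^{2})$ per ordered pair, which over the $\O(R_\grp^{2})=\O(\eta^{-2\beta})$ pairs accumulates to $\O(\eta^{2-2\beta})$; to reach the sharper $\O(\eta^{2-\beta})$ one must show that, once summed, the cross terms are effectively smaller — either $o(\eta^{2})$ per pair or through cancellation across rounds. The available levers are the centering $\Delta_{s'}=\bar\Delta_{s'}+(\Delta_{s'}-\bar\Delta_{s'})$, which removes the martingale component of a later round against an earlier one; the $1-\beta_2^{H}=\O(\eta)$ contraction in the recursion induced by $\hat\vv_0^{(s')}=\hat\vv_0^{(0)}+\sum_{r<s'}\Delta_r$; and the near–constancy of $V(\mSigma(\hat\vphi^{(0)}))-\hat\vv_0^{(s)}$ over a giant step, which should allow comparing the correlated drifts of different rounds and extracting the missing $\eta^{\beta}$. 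One must also check that the $\O(\eta^{1.5-0.5\beta})$ remainders (trajectory fluctuation of $\mSigma$, momentum), which enter only against the $\O(\eta)$–bounded $\Delta_s$ and hence are $\O(\eta^{2.5-0.5\beta})$ per pair, stay summable below $\O(\eta^{2-\beta})$. Once the off–diagonal sum is brought to $\O(\eta^{2-\beta})$, adding it to the diagonal bound finishes the proof, and this estimate then feeds the weak–approximation argument (as in the proof of \Cref{thm;1st-2nd-moment-phi}) to certify that the $\hat\vv$–component of \eqref{equ:AGMs-SDE} carries no diffusion term.
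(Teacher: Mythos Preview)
Your telescoping into rounds, the diagonal/off–diagonal split, and the per–round deterministic bound $\|\Delta_s\|_2=\O(\eta)$ are exactly what the paper does. The paper, however, is far more cavalier about the cross terms: it simply replaces $\sum_{s\ne s'}\E[\Delta_s\Delta_{s'}^{\top}]$ by $\sum_{s\ne s'}\E[\Delta_s]\,\E[\Delta_{s'}]^{\top}$ without justification and then asserts this product is $\O(\eta^{3-3\beta})$ (with a visible typo), with no derivation. So you have correctly located the soft spot.

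The deeper issue is that the target $\O(\eta^{2-\beta})$ for the \emph{raw} second moment cannot hold in general. By \Cref{lmm:1st-moment-v}, $\E[\Delta\hat\vv]=c\eta^{1-\beta}\bigl(V(\mSigma_0^{(0)})-\hat\vv_0^{(0)}\bigr)+\O(\eta^{1.5-1.5\beta})$, which is $\Theta(\eta^{1-\beta})$ whenever $\hat\vv_0^{(0)}\ne V(\mSigma_0^{(0)})$. Since $\E[\Delta\hat\vv\,\Delta\hat\vv^{\top}]\succeq \E[\Delta\hat\vv]\,\E[\Delta\hat\vv]^{\top}$, the second moment is bounded below by $\Theta(\eta^{2-2\beta})$, which strictly dominates $\eta^{2-\beta}$ for any $\beta>0$. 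Hence the machinery you sketch for ``extracting the missing $\eta^{\beta}$'' from the cross terms cannot succeed; the crude bound $\O(\eta^{2-2\beta})$ you already have from $\|\Delta_s\|_2=\O(\eta)$ is in fact sharp. (The paper's $\O(\eta^{3-3\beta})$ for a single factored cross term would only be correct at the ODE fixed point $\hat\vv_0^{(0)}=V(\mSigma_0^{(0)})$, which is not assumed.)

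Fortunately this does not damage anything downstream: the weak–approximation step (\Cref{lmm:AGMs-Delta-SDE-bound}) only needs the $\hat\vv$–block second moment to be $o(\eta_e^{a_1}+\eta_e^{a_2})=o(\eta^{1.5-2\beta}+\eta)$, and $\eta^{2-2\beta}$ is below both thresholds for every $\beta<0.5$; the mixed $\hat\vphi$–$\hat\vv$ moments are handled by Cauchy–Schwarz. So keep your decomposition, stop at $\O(\eta^{2-2\beta})$, note that this suffices, and drop the recursion and cancellation arguments.
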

\begin{proof}
    \begin{align*}
        \E\big[\left(\hat{\vv}_0^{(s+1)} - \hat{\vv}_0^{(s)}\right)\left(\hat{\vv}_0^{(s+1)} - \hat{\vv}_0^{(s)}\right)^{\top}\big] &= \E\Big[\left((\beta_2^H-1) + (1-\beta_2)\sum_{i=1}^H\beta_2^{H-i}V\left(\hat{\vg}_i^{(s)} {\hat{\vg}_i^{(s)^\top}}\right)\right)\\
        &~~~~~~~~~~~~~\left((\beta_2^H-1) + (1-\beta_2)\sum_{i=1}^H\beta_2^{H-i}V\left(\hat{\vg}_i^{(s)} {\hat{\vg}_i^{(s)^\top}}\right)\right)^{\top}\Big]\\
        &= \O\left((1-\beta_2^H)^{2}\right)=\O\left(\eta^2\right).
    \end{align*}
    \begin{align*}
         \E\left[\left(\hat{\vv}_0^{(R_\grp)} - \hat{\vv}_0^{(0)}\right)\left(\hat{\vv}_0^{(R_\grp)} - \hat{\vv}_0^{(0)}\right)^{\top}\right] &=  \E\left[\left(\sum_{s=0}^{R_\grp-1}\left(\hat{\vv}_0^{(s+1)} - \hat{\vv}_0^{(s)}\right)\right)\left(\sum_{s=0}^{R_\grp-1}\left(\hat{\vv}_0^{(s+1)} - \hat{\vv}_0^{(s)}\right)^{\top}\right)\right]\\
         &=\sum_{s=0}^{R_\grp-1} \E\left[\left(\hat{\vv}_0^{(s+1)} - \hat{\vv}_0^{(s)}\right)\left(\hat{\vv}_0^{(s+1)} - \hat{\vv}_0^{(s)}\right)^{\top}\right] \\
         &~~~+ \sum_{s\neq s'}\E\left[\left(\hat{\vv}_0^{(s+1)} - \hat{\vv}_0^{(s)}\right)\right]\E\left[\left(\hat{\vv}_0^{(s'+1)} - \hat{\vv}_0^{(s')}\right)^{\top}\right]\\
         &=\O(\eta^{2-\beta}).
    \end{align*}
    The last equation uses 
    $$\E\big[\left(\hat{\vv}_0^{(s+1)} - \hat{\vv}_0^{(s)}\right)\left(\hat{\vv}_0^{(s+1)} - \hat{\vv}_0^{(s)}\right)^{\top}\big] = \O(\eta^2),$$
    and 
    $$\E\left[\left(\hat{\vv}_0^{(s+1)} - \hat{\vv}_0^{(s)}\right)\right]\E\left[\left(\hat{\vv}_0^{(s'+1)} - \hat{\vv}_0^{(s')}\right)^{\top}\right] = \O(3 - 3\beta).$$
    The above equation completes the proof.
\end{proof}

\subsection{Weak Approximation}
\label{sec:weak-approximation}
After we get the first and second moment changes within a giant step, we now utilize the moment calculation to prove the SDE approximation part of \Cref{thm:formal-main-result}. First, we recall our slow SDE for AGMs
\begin{align*}
    \left\{\begin{array}{l}
         \dd \vzeta(t) = P_{\vzeta,\mS(t)}\left(\mSigma_{\parallel}^{1/2}(\vzeta(t); \mS(t))\dd \mW_t- \frac{1}{2}\mS(t)\nabla^3\cL(\vzeta)\left[\mSigma_{\diamond}(\vzeta(t); \mS(t))\right]\dd t\right), \\
         \dd \vv(t) = c \left(V(\mSigma(\vzeta)) - \vv\right) \dd t.
    \end{array}\right. 
\end{align*}
We then open the projection mapping $P_{\vzeta,\mS(t)}$ as
\begin{align}
    \left\{\begin{array}{l}
         \dd \vzeta = \mS(\vv)\partial\Phi_{\mS(\vv)}(\vzeta)\mS(\vv) \mSigma^{1/2}(\vzeta)\dd \mW_t + \frac{1}{2} \mS(\vv) \partial^2 \Phi_{\mS(\vv)}(\vzeta)\left[\mS(\vv)\mSigma(\vzeta)\mS(\vv)\right]\dd t, \\
         \dd \vv(t) = c \left(V(\mSigma(\vzeta)) - \vv\right) \dd t.
    \end{array}\right. \label{equ:AGMs-SDE-no-projection}
\end{align}
Now it suffices to prove the SDE in \Cref{equ:AGMs-SDE-no-projection} tracks the trajectory in AGMs within $\O(\frac{1}{\eta^2})$ steps in a weak approximation sense.

First, we have to show that the solution of \Cref{equ:AGMs-SDE-no-projection} in close in the minimizer manifold
\begin{lemma}\label{lmm:SDE-in-manifold}
    Let $\mX(t) :=(\vzeta(t)^{\top},\vv(t)^{\top})^{\top}$ be the solution of \Cref{equ:AGMs-SDE-no-projection} with $\vzeta(0)\in\Gamma$, and $\vv(0)\in\R^d$, then we have that $\vzeta(t)\in\Gamma$ for all $t\ge0$.
\end{lemma}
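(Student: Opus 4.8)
The plan is to show that the $\vzeta$-block of the coefficient fields in \eqref{equ:AGMs-SDE-no-projection} is, at every $\vzeta\in\Gamma$ and every $\vv\in\R^D$, tangent to $\Gamma$; then the slice $\Gamma\times\R^D$ is an invariant submanifold of the full system, and since \eqref{equ:AGMs-SDE-no-projection} has a unique strong solution and $\vzeta(0)\in\Gamma$, the solution stays on $\Gamma$. First I would record well-posedness on a neighborhood of $\Gamma\times\Rgez^D$: by Assumption~\ref{amp:basic} ($\cL$ and $\mSigma^{1/2}$ are $\cC^5$), Assumption~\ref{amp:S-smooth} ($S$ is $\cC^4$), and Lemma~\ref{lem:working-zone}(3) (which gives that $(\vtheta,\vv)\mapsto\Phi_{S(\vv)}(\vtheta)$, hence $\partial\Phi_{\mS}$ and $\partial^2\Phi_{\mS}$, is $\cC^4$ on the working zone $\cX^{\epsilon_1}$), all drift and diffusion coefficients of \eqref{equ:AGMs-SDE-no-projection} are locally Lipschitz there, so the equation admits a unique maximal strong solution. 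Since $\vv(t)$ obeys the affine ODE $\dd\vv=c(V(\mSigma(\vzeta))-\vv)\dd t$, it stays in the convex hull of $\{\vv(0)\}\cup\{V(\mSigma(\vzeta)):\vzeta\in\Gamma\}$ (bounded, by compactness of $\Gamma$), and once $\vzeta(t)$ is shown to remain on the compact set $\Gamma$ the solution is global; hence it suffices to control $\vzeta(t)$ up to any finite horizon.

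The algebraic heart is the tangency claim. For $\vzeta\in\Gamma$ and any positive definite $\mS$, Lemma~\ref{lmm:tangent-space-product} shows $\partial\Phi_{\mS}(\vzeta)$ fixes $T_{\vzeta}\Gamma$ and annihilates $\mS\,T_{\vzeta}^{\perp}\Gamma$; since $\mS\succ0$ one has $T_{\vzeta}\Gamma\oplus\mS T_{\vzeta}^{\perp}\Gamma=\R^d$, so $\partial\Phi_{\mS}(\vzeta)$ is the oblique projection onto $T_{\vzeta}\Gamma$ and $\operatorname{range}\partial\Phi_{\mS}(\vzeta)=T_{\vzeta}\Gamma$. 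Consequently the diffusion coefficient of $\dd\vzeta$ — which, after using $\partial\Phi_{\mS}^{2}=\partial\Phi_{\mS}$ together with the definitions of $\mSigma_{\parallel}$ and $\mSigma_{\diamond}$, takes the form $\partial\Phi_{\mS}(\vzeta)(\cdots)$ — has all its columns in $T_{\vzeta}\Gamma$. For the drift I would reparametrize by $\vzeta'=\mS^{-1/2}\vzeta$: under this (fixed-$\vv$) change of variables $\Phi_{\mS}$ becomes the ordinary gradient-flow projection $\Phi'$ of $\cL'(\vzeta'):=\cL(\mS^{1/2}\vzeta')$ onto $\Gamma'=\mS^{-1/2}\Gamma$, with $\partial\Phi_\mS=\mS^{1/2}\partial\Phi'\mS^{-1/2}$ and the analogous rule for $\partial^{2}\Phi_{\mS}$; the $\vzeta$-block of \eqref{equ:AGMs-SDE-no-projection} then pulls back to the slow SDE for SGD \eqref{eq:slow-sde-sgd} on $\Gamma'$ with covariance $\mS^{1/2}\mSigma(\mS^{1/2}\,\cdot\,)\mS^{1/2}$, whose coefficients are tangent to $\Gamma'$ by the corresponding result in \citet{li2021happens,gu2023and}. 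Equivalently, tangency of $\partial^{2}\Phi_{\mS}(\vzeta)[\cdot]$ at $\vzeta\in\Gamma$ can be read off directly from Lemma~\ref{lmm:phi-H-0} and Lemma~\ref{lmm:phi-nabla-L}. Because tangency is a pointwise property of the coefficient at a point of $\Gamma$, it is unaffected by the fact that $\mS=S(\vv(t))$ evolves in time.

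With tangency in hand I would conclude as follows. Using the tubular-neighborhood retraction of Lemma~\ref{lem:tube}, push \eqref{equ:AGMs-SDE-no-projection} forward to an SDE intrinsic to $\Gamma\times\R^D$ — this is well posed precisely because the $\vzeta$-coefficients are tangent to $\Gamma$ — solve it there (global, as $\Gamma$ is compact and the $\vv$-equation is dissipative), and embed the solution back into $\R^{d}\times\R^{D}$; by construction it solves \eqref{equ:AGMs-SDE-no-projection} with initial condition $\mX(0)=(\vzeta(0)^{\top},\vv(0)^{\top})^{\top}$, $\vzeta(0)\in\Gamma$, and stays in $\Gamma\times\R^D$. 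By the uniqueness from the first paragraph it is the solution $\mX(t)$, whence $\vzeta(t)\in\Gamma$ for all $t\ge0$. If one prefers to stay in ambient coordinates, an equivalent route is to apply It\^o's formula to $g(\vzeta(t)):=\tfrac12\dist(\vzeta(t),\Gamma)^{2}$, which is $\cC^{\infty}$ near $\Gamma$ with $\nabla g=0$ and $\nabla^{2}g|_{T\Gamma\times T\Gamma}=0$ on $\Gamma$ (Corollary~\ref{cor:dist-normal}): both the drift contribution $\langle\nabla g,\cdot\rangle$ and the quadratic-variation contribution $\langle\nabla^{2}g,\,(\text{diffusion})(\text{diffusion})^{\top}\rangle$ vanish on $\{g=0\}$ (the latter using tangency of the diffusion), and a localization/uniqueness argument for the nonnegative semimartingale $g(\vzeta(t))$ started at $0$ forces $g(\vzeta(t))\equiv0$.

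I expect the main obstacle to be verifying the tangency of the drift term of $\dd\vzeta$ — equivalently, carefully matching the expanded form \eqref{equ:AGMs-SDE-no-projection} against the unexpanded \eqref{equ:AGMs-SDE} and the definition of $P_{\vzeta,\mS}$, and tracking all the preconditioners through the second-derivative identity for $\Phi_{\mS}$. The reparametrization $\vzeta\mapsto\mS^{-1/2}\vzeta$ is the tool that makes this tractable by reducing the $\vzeta$-dynamics to the already-analyzed SGD slow SDE; the remaining ingredients (local Lipschitzness, strong uniqueness, and the invariant-submanifold/tubular-neighborhood argument) are standard given the smoothness supplied by Assumptions~\ref{amp:basic} and~\ref{amp:S-smooth} and Lemma~\ref{lem:working-zone}.
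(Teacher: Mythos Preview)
Your core claim---that tangency of the It\^o drift and diffusion to $\Gamma$ forces $\Gamma$ to be invariant---is false, and this is the gap. The standard counterexample is It\^o Brownian motion along the unit tangent of a circle, which spirals outward even though both coefficients are tangent. The correct viability criterion, which the paper invokes via \citet{filipovic2000invariant,du2006invariant}, is the Nagumo/Stratonovich condition: the diffusion columns $A_j$ and the \emph{Stratonovich} drift $\mu=\vb-\tfrac12\sum_j D[A_j]A_j$ must lie in $T_\vzeta\Gamma$. Moreover, in \eqref{equ:AGMs-SDE-no-projection} the It\^o drift is genuinely \emph{not} tangent: applying Lemma~\ref{lmm:phi-nabla-L} to $\partial^2\Phi_{\mS}(\vzeta)[\mS\mSigma\mS]$, the second term lands in $\mS^{-1}\,\mathrm{range}(\nabla^2\cL(\vzeta)^\dagger)=\mS^{-1}T_\vzeta^\perp\Gamma$, which is not $T_\vzeta\Gamma$ in general. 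So the sentence ``tangency of $\partial^2\Phi_{\mS}(\vzeta)[\cdot]$ can be read off directly from Lemma~\ref{lmm:phi-nabla-L}'' is incorrect, and the tubular-neighborhood and $\dist^2$ arguments you build on It\^o tangency do not close.

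The paper instead computes $(\mI-\partial\Phi_{\mS})\vb$ and $(\mI-\partial\Phi_{\mS})\sum_j D[A_j]A_j$ separately via Lemma~\ref{lmm:phi-nabla-L} and observes they coincide, so the normal part of $\mu$ vanishes: the It\^o--Stratonovich correction exactly cancels the normal component of the It\^o drift. Your reparametrization $\vzeta\mapsto\mS^{-1/2}\vzeta$ can be repaired into a valid alternative, but not as a tangency statement. Because the change of variables is linear there is no extra It\^o correction, so the pulled-back It\^o SDE coincides (pointwise in $\vv$) with the SGD slow SDE on $\Gamma'=\mS^{-1/2}\Gamma$; you may then cite the SGD manifold-invariance result of \citet{li2021happens,gu2023and} directly---but note that that result is itself proved via the Nagumo condition, not via It\^o-drift tangency.
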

\begin{proof}
    According to \citet{filipovic2000invariant,du2006invariant}, for a closed manifold $\cM$ to be viable for the SDE $\dd \mX(t) = \mA(\mX(t))\dd \mW_t + \vb(\mX(t))\dd t$, where $\mA(\cdot):\R^{d+D} \to \R^{(d+D) \times (d+D)}$ and $\vb(\cdot):\R^{d+D} \to\R^{d+D}$ are locally Lipchitz, it suffices to show that the following Nagumo type consistency condition holds:
    \begin{align*}
        \mu(\vx):= \vb(\vx) - \frac{1}{2}\sum_{j}D[A_j(\vx)]A_j(\vx) \in T_{\vx}(\cM),\quad A_j(\vx)\in T_{\vx}(\cM),
    \end{align*}
    where $D[\cdot]$ is the Jacobian operator and $A_j(\vx)$ denotes the $j$-th column of $A(\vx)$.
    
    Following the argument in \citet{gu2023andwhendoeslocal}, here we also only need to show that $\mP_{\perp,\mS(\vv)}(\vx)\mu(\vx)=0$, where $\mP_{\perp,\mS(\vv)}(\vx):= \mI_d - \partial \Phi_{\mS(\vv)}(\vx)$. 
    \begin{align*}
        \mP_{\perp,\mS}(\vx)\sum_{j}D[A_j(\vx)]A_j(\vx) &= \mP_{\perp,\mS}(\vx) \sum_j D\left[\partial \Phi_{\mS}(\vx) \mS \mSigma^{1/2}_j\right]\partial \Phi_{\mS}(\vx) \mS \mSigma^{1/2}_j\\
        &= \mP_{\perp,\mS}(\vx)  \mS\sum_j \partial^2 \Phi_{\mS}(\vx)[\mS \mSigma^{1/2}_j, \mS\partial \Phi_{\mS}(\vx) \mS \mSigma^{1/2}_j]\\
        &= - \mP_{\perp,\mS}(\vx) \mS\mS^{-1} \nabla^2 \cL(\vx)^{\dagger} \partial^2 (\nabla \cL)(\vx)\left[\mS \mSigma_{\parallel}(\vx,\mS)\right].
    \end{align*}
    Notice that, since it is clear from the context, here we write $\mS = \mS$ for short. The last equation uses \Cref{lmm:phi-nabla-L}. Agian, applying \Cref{lmm:phi-nabla-L} gives
    \begin{align*}
        \mP_{\perp,\mS}(\vx) \vb(\vx) = - \frac{1}{2} \mP_{\perp,\mS}(\vx) \mS \mS^{-1} \nabla^2 \cL(\vx)^{\dagger} \partial^2 (\nabla \cL)(\vx)\left[\mS \mSigma_{\parallel}(\vx,\mS)\right].
    \end{align*}
    The above equation completes the proof.
\end{proof}
To establish \Cref{thm:formal-main-result}, we give an equivalent theorem, which captures the closeness of $\mX(t)$ and $\bar\mX_t$ in a long horizon. Also, for the proof of \Cref{thm:SDE-approximation}, it suffices to prove the following lemma, whose proof will be shown in \Aref{sec:proof-formal-SDE-approximation}.
\begin{theorem}\label{thm:formal-SDE-approximation}
    If $\|\vtheta^{(0)}-\phi^{(0)}\|_2 = \O(\sqrt{\eta\log \frac{1}{\eta}})$ and $\vzeta(0) = \phi^{(0)}$, $\vv(0)=\vv^{(0)}$, then for a giant step $R_{\grp} = \lfloor \frac{1}{\eta^{0.25}}\rfloor$, for every test function $g\in\cC^{3}$,
    \begin{align*}
            \max_{0\le n \le \lfloor\frac{T}{\eta^{0.75}}\rfloor}
            \Bigl|\mathbb{E}\bigl[g\left(\bar\mX^{(n R_{\grp})}\right)]
            -\mathbb{E}\bigl[g\left(\mX(n\eta^{0.75})\right)]\Bigr|
            =C_g\eta^{0.25}(\log\frac{1}{\eta})^b,
    \end{align*}
    where $C_g$ is a constant independent of $\eta$ but depends on $g(\cdot)$ and $b > 0$ is a universal constant independent of $g(\cdot)$ and $\eta$.
\end{theorem}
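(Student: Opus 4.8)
The plan is to run the standard weak-approximation telescoping of \citet{li2021happens,gu2023andwhendoeslocal}, but on the \emph{joint} state $\mX=(\vzeta,\vv)$ and with one ``giant step'' (here $R_\grp=\lfloor\eta^{-0.25}\rfloor$ rounds, i.e.\ $\beta=0.25$) as the unit of discrete time. Write $\Delta t:=\eta^{1-\beta}=\eta^{0.75}$, $t_n:=n\Delta t$, and let $\{P_t\}_{t\ge 0}$ be the Markov semigroup of \eqref{equ:AGMs-SDE-no-projection}, so that $u(t,\vx):=(P_{T-t}g)(\vx)=\E[g(\mX(T))\mid\mX(t)=\vx]$ solves the associated backward Kolmogorov equation with terminal data $g$. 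By \Cref{lmm:SDE-in-manifold} the SDE stays in a compact subset of $\Gamma\times\Rgez^{D}$, and by the near-manifold analysis of \Cref{sec:iter-near-manifold} (with the time shift and the well-behaved surrogate $\hat{\cdot}$, whose redirection term $\ve_k$ has all moments $\O(\eta^{100})$) the discrete iterates stay in $\cX^{\epsilon_1}=\Gamma^{\epsilon_1}\times\R_{[0,R_1]}^{D}$ with probability $1-\eta^{100}$. Telescoping,
\[
\E\!\big[g(\bar\mX^{(nR_\grp)})\big]-\E\!\big[g(\mX(t_n))\big]
=\sum_{k=0}^{n-1}\E\!\Big[\,u\big(t_{k+1},\bar\mX^{((k+1)R_\grp)}\big)-\big(P_{\Delta t}\,u(t_{k+1},\cdot)\big)\big(\bar\mX^{(kR_\grp)}\big)\Big],
\]
so everything reduces to a \emph{one-giant-step local weak error}: for any $C^{3}$ test function $\psi$ (to be instantiated as $u(t_{k+1},\cdot)$), bound
\[
\mathcal{R}(\vx):=\Big|\E\!\big[\psi(\bar\mX^{((k+1)R_\grp)})\mid \bar\mX^{(kR_\grp)}=\vx\big]-\big(P_{\Delta t}\psi\big)(\vx)\Big|
\]
uniformly over $\vx$ in the working neighborhood.

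\textbf{One-giant-step error via moment matching.} I would Taylor-expand $\psi$ to third order around $\vx$ on both sides. On the discrete side, the increments $\Delta\hat\vphi^{(R_\grp)}$ and $\Delta\hat\vv^{(R_\grp)}$ have: first moments from \Cref{thm;1st-2nd-moment-phi} and \Cref{lmm:1st-moment-v}; second moments from \Cref{thm;1st-2nd-moment-phi} and \Cref{lmm:2nd-moment-v}; the cross moment $\E[\Delta\hat\vphi\,\Delta\hat\vv^{\top}]=\widetilde{\O}(\eta^{1.5-1.5\beta})$ (from $\|\Delta\hat\vphi\|=\widetilde{\O}(\eta^{0.5-0.5\beta})$, $\|\Delta\hat\vv\|=\O(\eta^{1-\beta})$ and conditional independence across rounds up to higher order); and all third absolute moments $\widetilde{\O}(\eta^{1.5-1.5\beta})$ by \Cref{concentration-after-convergence}. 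On the SDE side, an Itô--Taylor expansion of $P_{\Delta t}\psi$ over $[0,\Delta t]$ produces exactly the matching leading first- and second-moment contributions (the drift $\tfrac12\mS\partial^2\Phi_{\mS}[\mS\mSigma\mS]$, $c(V(\mSigma)-\vv)$, and the diffusion $\mS\partial\Phi_{\mS}\mS\mSigma\mS\partial\Phi_{\mS}\mS$ read off from \eqref{equ:AGMs-SDE-no-projection}), with $\O(\Delta t^{2})=\O(\eta^{1.5-1.5\beta})$ remainder. Subtracting, the matched terms cancel and
\[
\mathcal{R}(\vx)=\widetilde{\O}(\eta^{1.5-2\beta})+\widetilde{\O}(\eta)+\O(\eta^{1.5-1.5\beta})=\widetilde{\O}(\eta),
\]
using $\beta=0.25$; here the $\widetilde{\O}(\eta)$ absorbs both the $\O(\log\tfrac1\eta)$ ``warm-up'' rounds $s\le R_0$ inside each giant step (the $\widetilde{\O}(\eta)$ summands in \Cref{thm;1st-2nd-moment-phi}) and the initial offset $\|\vtheta^{(0)}-\vphi^{(0)}\|=\widetilde{\O}(\sqrt\eta)$, which enters only the first giant step.

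\textbf{Accumulation and regularity of $u$.} Summing the local errors over $n\le\lfloor T/\eta^{0.75}\rfloor=\O(\eta^{-0.75})$ giant steps and weighting by the uniform $C^{3}$-bound on $u$ gives
\[
\max_{0\le n\le \lfloor T/\eta^{0.75}\rfloor}\big|\E[g(\bar\mX^{(nR_\grp)})]-\E[g(\mX(t_n))]\big|
\le \O(\eta^{-0.75})\cdot\widetilde{\O}(\eta)=\widetilde{\O}(\eta^{0.25}),
\]
which is the claimed rate $C_g\eta^{0.25}(\log\tfrac1\eta)^{b}$. For this step I need the \emph{regularity lemma}: $u$ is $C^{1}$ in $t$ and $C^{3}$ in $\vx$ on $[0,T]\times(\Gamma^{\epsilon_1}\times\R_{[0,R_1]}^{D})$, with $\sup_{t,\vx}\|\partial_{\vx}^{j}u\|\le C_g$ for $j\le 3$, $C_g$ depending only on $\|g\|_{C^{3}}$ and the (bounded) geometry. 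This follows from \Cref{amp:basic} ($\cL,\mSigma^{1/2}\in C^{5}$) and \Cref{amp:S-smooth} ($S\in C^{4}$), which by \Cref{lem:working-zone} make $(\vtheta,\vv)\mapsto\Phi_{S(\vv)}(\vtheta)$ — hence all coefficients of \eqref{equ:AGMs-SDE-no-projection}, including $\partial\Phi_{\mS}$, $\partial^2\Phi_{\mS}$ and the operator $\cV_{\nabla^{2}\cL}$ — $C^{3}$ with bounded derivatives on the compact working set; standard SDE/parabolic regularity (differentiating the flow) then yields the bounds on $u$.

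\textbf{Main obstacle.} The hard part is this regularity lemma in the presence of genuinely \emph{degenerate}, \emph{manifold-valued} dynamics: the diffusion is rank-$(d-m)$ and supported on $T_{\vzeta}\Gamma$, and the drift involves $\nabla^{2}\cL(\vzeta)^{\dagger}$ through $\cV_{\nabla^{2}\cL(\vzeta)}$ and the tensor $\partial^{2}\Phi_{\mS}$, none of which extend smoothly off $\Gamma$. The clean route is to transport the problem into a tubular chart of $\Gamma$ (via \Cref{lem:tube}), where the $\vzeta$-part becomes a non-degenerate SDE on an open subset of $\R^{d-m}$ coupled to the smooth ODE for $\vv$, derive interior $C^{3}$ bounds there, and pull back; the smoothness margins ($C^{5}$ on $\cL$, $C^{4}$ on $S$) are exactly what is consumed by differentiating $\Phi_{S(\vv)}$ twice and the generator once more. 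A secondary nuisance is bookkeeping the $\beta$-dependent exponents so that, after the reparametrization $\mP=\hat\mS_{0}^{1/2}$ used in \Cref{sec:momen-calculation} to reduce the giant-step moments to the plain-SGD form of \citet{gu2023andwhendoeslocal}, every remainder is $o(\Delta t)$ with room to spare; $\beta=0.25$ is precisely the choice that makes the two competing remainders $\eta^{1.5-2\beta}$ and $\eta$ both equal to $\widetilde{\O}(\eta^{0.25})\cdot\Delta t$.
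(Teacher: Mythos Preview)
Your proposal is correct and follows essentially the same approach as the paper: both telescope over giant steps, Taylor-expand the backward-Kolmogorov function $u(t_{k+1},\cdot)$ to third order, match the discrete giant-step moments (\Cref{thm;1st-2nd-moment-phi}, \Cref{lmm:1st-moment-v}, \Cref{lmm:2nd-moment-v}) against the SDE's one-step moments (\Cref{lmm:SDE-Delta-bound}), condition on the $\eta^{100}$-good event, and sum the resulting $\tilde{\O}(\eta)$ per-step errors over $\O(\eta^{-0.75})$ giant steps to get $\tilde{\O}(\eta^{0.25})$. The only differences are cosmetic: the paper obtains regularity of $u$ by invoking compactness of $\Gamma$ together with \Cref{lmm:uniform-u} (adapted from \citet{gu2023andwhendoeslocal}) rather than your tubular-chart construction, and bounds the cubic remainder via Cauchy--Schwarz with sixth moments (\Cref{lmm:AGMs-Delta-SDE-bound}, \Cref{lmm:condition-u-bound}) rather than your direct almost-sure bound from \Cref{concentration-after-convergence}; note also the minor slip $\Delta t^{2}=\eta^{2-2\beta}$, not $\eta^{1.5-1.5\beta}$, which is harmless since both are $o(\eta)$ at $\beta=0.25$.
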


\subsubsection{Preliminary and Additional Notations}
We first introduce some notations and preliminary background. We consider the following stochastic gradient algorithms (SGAs)
\begin{align*}
    \vx_{n+1} = \vx_{n} + \eta_e \vh(\vx_n,\vxi_n),
\end{align*}
where $\vx_n\in \R^{d+D}$ is the parameter vector, $\eta_e$ is the effective learning rate, $\vh(\cdot,\cdot):\R^{d+D}\times \R^{d+D} \to \R^{d+D}$ depend on the current parameter vector $\vx_n$ and the noise vector $\vxi_n$ sampled from some distribution $\Xi(\vx_n)$.

We also consider the Stochastic Differential Equation (SDE) of the following form:
\begin{align*}
    \dd \mX_t = \vb(\mX_t,t)\dd t + \vsigma(\mX_t,t)\dd \mW_t,
\end{align*}
where $\vb:\R^{d+D} \times \R^{+} \to \R^{d+D}$ is the drift vector function and $\vsigma:\R^{d+D} \times \R^{+} \to \R^{(d+D) \times (d+D)}$ is the diffusion matrix function.

According to the moment calculations in \Cref{cly:1st-moment},\Cref{lma:2nd-moment}, \Cref{lmm:1st-moment-v}, and \Cref{lmm:2nd-moment-v}, we set $\eta_e = \eta^{1-\beta}$, and
\begin{align*}
    &\vb(\mX_t,t) = \left(\left(\frac{1}{2} \partial^2 \Phi_{\mS(\vv)}(\vzeta)\left[\mSigma(\vzeta,\mS(\vv))\right]\right)^{\top},c \left(V(\mSigma(\vzeta)) - \vv\right)^{\top}\right)^{\top},\\
    &\vsigma(\mX_t,t)=\begin{pmatrix}
        \partial\Phi_{\mS(\vv)}(\vzeta) \mSigma^{1/2}(\vzeta,\mS(\vv)),& \boldsymbol{0}\\
         \boldsymbol{0}, & \boldsymbol{0}
    \end{pmatrix}.
\end{align*}
Next, we are going to define the one giant step change of the parameter, both for SGAs and SDE.
\begin{gather*}
\hat{\bar\mX}^{(lR_\grp)}:=\left(\Phi_{\hat{\mS}^{(lR_\grp)}}\left(\hat{\vtheta}\right)^\top,\hat{\vv^{lR_\grp}}^\top\right)^\top\in\R^{d+D},\quad \Delta^{(n)}:=\hat{\bar\mX}^{((n+1)R_\grp)} - \hat{\bar\mX}^{(nR_\grp)},\\
    \tilde\Delta^{(n)}:=\mX_{(n+1)\eta_e}-\hat{\bar\mX}^{(nR_\grp)},\quad \vb^{(n)}:=\vb(\hat{\bar\mX}^{(nR_\grp)}), \quad \vsigma^{(n)}:= \vsigma(\hat{\bar\mX}^{(nR_\grp)}).
\end{gather*}
We now give a lemma to give the approximation of the first, second, and higher-order moment change of the SDE.
\begin{lemma}\label{lmm:SDE-Delta-bound}
    There exists a positive constant $c_0$ independent of $\eta_e$ and $g$ such that for all $\vzeta\in\Gamma$, it holds for all $1\le i \le d$ that
    \begin{align*}
        \left|\E[\tilde \Delta_i(\vzeta,n)]-\eta_eb_i(\vzeta)\right| \le c_0 \eta_e^2,\\
        \left|\E[\tilde \Delta_i(\vzeta,n)\tilde \Delta_j(\vzeta,n)]-\eta_e \sum_{l=1}^d\sigma_{i,l}(\vzeta)\sigma_{l,j}(\vzeta)\right| \le c_0 \eta_e^2,\\
        \E\left[\left|\prod_{s=1}^6\tilde\Delta_{i_{s}}(\vzeta,n)\right|\right] \le c_0 \eta_e^3.
    \end{align*}
\end{lemma}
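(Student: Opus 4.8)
The plan is to prove \Cref{lmm:SDE-Delta-bound} as a standard \emph{one-step (local) weak-consistency estimate} for the SDE \eqref{equ:AGMs-SDE-no-projection} alone; no discrete AGM quantities enter, so throughout I would read $\tilde\Delta_i(\vzeta,n)$ as the $i$-th coordinate of the increment $\mX_{(n+1)\eta_e}-\mX_{n\eta_e}$ of the solution of \eqref{equ:AGMs-SDE-no-projection} conditioned on $\mX_{n\eta_e}=(\vzeta^\top,\vv^\top)^\top$ with $\vzeta\in\Gamma$, and I would make every constant uniform over $(\vzeta,\vv)\in\Gamma\times\R_{[0,R_1]}^{D}$. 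The first thing to establish is that the drift $\vb$ and diffusion $\vsigma$ defined just above the lemma are bounded and have bounded first and second derivatives on the compact working zone $\cX^{\epsilon_1}=\Gamma^{\epsilon_1}\times\R_{[0,R_1]}^{D}$: this follows from \Cref{lem:working-zone} (the map $(\vtheta,\vv)\mapsto\Phi_{S(\vv)}(\vtheta)$ is $\cC^4$ there) together with \Cref{amp:basic} ($\cL,\mSigma^{1/2}\in\cC^5$) and \Cref{amp:S-smooth} ($S\in\cC^4$). Since $\vb$ contains $\partial^2\Phi$ and $\nabla^3\cL$ and $\vsigma$ contains $\partial\Phi$ and $\mSigma^{1/2}$, this budget is exactly what is needed so that $\mathcal{A}b_i$ and $\mathcal{A}^2(x_ix_j)$ (with $\mathcal{A}$ the generator of \eqref{equ:AGMs-SDE-no-projection}) are well-defined bounded functions. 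Finally, by \Cref{lmm:SDE-in-manifold} the $\vzeta$-component stays in $\Gamma$, and since the $\vv$-equation is the mean-reverting ODE $\dd\vv=c(V(\mSigma(\vzeta))-\vv)\dd t$ with $V(\mSigma(\vzeta))\in\R_{[0,R_1]}^{D}$ (linearity of $V$, boundedness of the noise from \Cref{amp:gradient-bound}, convexity of $\R_{[0,R_1]}^D$; cf.\ \Cref{lem:boundedness}), the whole trajectory $\mX(\cdot)$ remains in $\cX^{\epsilon_1}$, so every sup-norm used below is finite and independent of $\eta_e$ and of any test function $g$.

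For the first two displays the plan is iterated Dynkin's formula. Applying Itô to $f(\vx)=x_i$ gives $\E[\tilde\Delta_i(\vzeta,n)]=\int_0^{\eta_e}\E[b_i(\mX_{n\eta_e+r})]\,\dd r$; applying Itô again to the $\cC^2$ function $b_i$ gives $\E[b_i(\mX_{n\eta_e+r})]=b_i(\vzeta,\vv)+\int_0^r\E[\mathcal{A}b_i(\mX_{n\eta_e+r'})]\,\dd r'$, and bounding the double time integral by $\tfrac12\eta_e^2\|\mathcal{A}b_i\|_\infty$ yields the first claim with $b_i(\vzeta)$ the $i$-th component of $\vb$. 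For the second display I would run the same two-step expansion on $f(\vx)=x_ix_j$, getting $\E[x_ix_j(\mX_{(n+1)\eta_e})]=\vzeta_i\vzeta_j+\eta_e\,\mathcal{A}(x_ix_j)(\vzeta,\vv)+O(\eta_e^2)$ with $\mathcal{A}(x_ix_j)=x_ib_j+x_jb_i+(\vsigma\vsigma^\top)_{ij}$; subtracting $\vzeta_i\E[\tilde\Delta_j(\vzeta,n)]+\vzeta_j\E[\tilde\Delta_i(\vzeta,n)]+\vzeta_i\vzeta_j$ (each expanded by the first claim) cancels the $x_ib_j$ and $x_jb_i$ terms and leaves $\E[\tilde\Delta_i\tilde\Delta_j(\vzeta,n)]=\eta_e(\vsigma\vsigma^\top)_{ij}(\vzeta,\vv)+O(\eta_e^2)=\eta_e\sum_{l}\sigma_{i,l}(\vzeta)\sigma_{l,j}(\vzeta)+O(\eta_e^2)$, using that the diffusion block of $\vsigma$ is supported on the $\vzeta$-coordinates; $c_0$ is then a fixed multiple of $\|\vb\|_\infty,\|\vsigma\|_\infty,\|\mathcal{A}b_i\|_\infty,\|\mathcal{A}^2(x_ix_j)\|_\infty$.

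For the sixth-moment bound I would use only boundedness of the coefficients. Writing $\tilde\Delta(\vzeta,n)=\int_{n\eta_e}^{(n+1)\eta_e}\vb(\mX_r)\,\dd r+\int_{n\eta_e}^{(n+1)\eta_e}\vsigma(\mX_r)\,\dd\mW_r$, the drift part has norm at most $\eta_e\|\vb\|_\infty$, and by the Burkholder--Davis--Gundy inequality $\E\big[\|\int\vsigma(\mX_r)\,\dd\mW_r\|^6\big]\le C_6\,\E\big[\big(\int\|\vsigma(\mX_r)\|^2\,\dd r\big)^3\big]\le C_6\|\vsigma\|_\infty^6\,\eta_e^3$. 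Hence $\E[\|\tilde\Delta(\vzeta,n)\|^6]\le C(\eta_e^6+\eta_e^3)\le c_0\eta_e^3$ for $\eta_e\le1$, and since $\big|\prod_{s=1}^6\tilde\Delta_{i_s}(\vzeta,n)\big|\le\prod_{s=1}^6\|\tilde\Delta(\vzeta,n)\|=\|\tilde\Delta(\vzeta,n)\|^6$, the third claim follows.

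I expect the calculations themselves to be routine; the two load-bearing points — and hence the only places I would slow down — are (i) confirming that the SDE solution never leaves the working zone $\cX^{\epsilon_1}$, without which the coefficients are neither defined nor bounded, which rests on \Cref{lmm:SDE-in-manifold} plus the elementary $\vv$-boundedness argument above; and (ii) verifying that the smoothness supplied by \Cref{amp:basic}, \Cref{amp:low-rank-manifold} and \Cref{amp:S-smooth} is exactly enough to make $\mathcal{A}b_i$ and $\mathcal{A}^2(x_ix_j)$ bounded, since it is precisely this extra derivative that upgrades the naive $O(\eta_e^{3/2})$ from a single Dynkin step to the required $O(\eta_e^{2})$.
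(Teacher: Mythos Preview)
Your proposal is correct and follows essentially the same approach as the paper: both verify that the SDE stays in a compact region (via \Cref{lmm:SDE-in-manifold} for $\vzeta\in\Gamma$, plus boundedness of $\vv$) and that the coefficients $\vb,\vsigma$ are sufficiently smooth there, then invoke standard local weak-consistency estimates for SDEs. The only difference is presentational: the paper cites these estimates from Lemma~B.3 of \citet{malladi2022sdes} and Lemma~26 of \citet{li2019stochastic}, whereas you write them out directly via iterated Dynkin's formula and BDG, which is exactly what those cited lemmas contain.
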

\begin{proof}
    (i) By \Cref{lmm:SDE-in-manifold}, the first half solution $\vzeta(t)$ in $\mX(t)$ of \Cref{equ:AGMs-SDE-no-projection} stays in the manifold almost surely when $\vzeta(0)\in\Gamma$. (ii) We assume that $\cL \in \cC^5$, so $\vb,\vsigma\in\cC^{4}$. (iii) We know that $\Gamma$ is compact by \Cref{amp:low-rank-manifold}. Then we can directly apply Lemma B.3 in \citet{malladi2022sdes} and Lemma 26 in \citet{li2019stochastic}.
\end{proof}
\begin{lemma}[Adaption of Lemma I.41 in~\citet{gu2023andwhendoeslocal}]\label{lmm:uniform-u}
    Given drift term and diffusion term $\vb,\vsigma\in G^{\alpha}$ and Lipschitz. Let $s\in [0,T]$ and $g\in G^{\alpha}$. Then for $t \in [s, T]$, we can define:
    \begin{align*}
        u(\vx,s,t):=\E_{\mX_t \sim \cP_{X}(\vx,s,t)}[g(\mX_t)].
    \end{align*}
    where $\cP_{X}(\vx,s,t)$ denotes the distribution of $\mX_t$ with the initial condition $\mX(s) = \vx$. Then $u(\cdot,s,t)\in G^{\alpha}$ uniformly in $s,t$.
\end{lemma}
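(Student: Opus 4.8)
Recall that $G^{\alpha}$ is the class of $\cC^{\alpha}$ functions all of whose partial derivatives up to order $\alpha$ have at most polynomial growth. The plan is the standard argument for regularity of a transition semigroup, following the proof of the cited lemma in \citet{gu2023andwhendoeslocal}: write $u(\vx,s,t)=\E\bigl[g(\mX^{s,\vx}_t)\bigr]$, where $\{\mX^{s,\vx}_r\}_{r\in[s,t]}$ solves $\dd\mX_r=\vb(\mX_r)\dd r+\vsigma(\mX_r)\dd\mW_r$ with $\mX_s=\vx$, then differentiate under the expectation and estimate (a) polynomial moments of the flow and (b) polynomial moments of its derivative (variation) processes, both \emph{uniformly} over $(s,t)\in[0,T]^2$.

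First I would establish the moment bounds. Since $\vb,\vsigma$ are globally Lipschitz they have at most linear growth, so a routine Burkholder--Davis--Gundy plus Gr\"onwall estimate gives, for each $p\ge1$, a constant $C_{p,T}$ depending only on $p$, $T$, and the Lipschitz constants (hence not on $s,t,\vx$) with $\E\bigl[\sup_{r\in[s,t]}\lnormtwo{\mX^{s,\vx}_r}^{2p}\bigr]\le C_{p,T}\bigl(1+\lnormtwo{\vx}^{2p}\bigr)$. Next, because $\vb,\vsigma\in G^{\alpha}$, standard results on differentiability of SDE solutions in the initial condition show $\vx\mapsto\mX^{s,\vx}_t$ is $\cC^{\alpha}$ a.s.; the first variation $Y^{(1)}_r:=\partial_\vx\mX^{s,\vx}_r$ solves a linear SDE whose coefficients $\partial\vb(\mX_r),\partial\vsigma(\mX_r)$ are bounded (first derivatives of Lipschitz maps), so Gr\"onwall yields $\E\bigl[\sup_r\lnormtwo{Y^{(1)}_r}^{2p}\bigr]\le C_{p,T}$ uniformly in $s,t,\vx$. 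For $2\le k\le\alpha$ the $k$-th variation $Y^{(k)}_r:=\partial^{k}_\vx\mX^{s,\vx}_r$ solves a linear SDE with the same bounded homogeneous coefficient and an inhomogeneous forcing that is a polynomial in $Y^{(1)},\dots,Y^{(k-1)}$ with coefficients $\partial^{j}\vb(\mX_r),\partial^{j}\vsigma(\mX_r)$, $2\le j\le k$; these have polynomial growth in $\mX_r$, so Gr\"onwall together with the flow moment bound gives $\E\bigl[\sup_r\lnormtwo{Y^{(k)}_r}^{2p}\bigr]\le C\bigl(1+\lnormtwo{\vx}^{2\kappa_k}\bigr)$ uniformly in $s,t$, by induction on $k$.

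With these estimates, dominated convergence justifies differentiating under the expectation, and the Fa\`a di Bruno formula writes any partial derivative of $u$ of order $m\le\alpha$ as an expectation of a finite sum of terms of the form $\partial^{j}g(\mX^{s,\vx}_t)$ contracted against products of entries of $Y^{(1)}_t,\dots,Y^{(m)}_t$, with $j\le m$. Since $g\in G^{\alpha}$ we have $|\partial^{j}g(\mX^{s,\vx}_t)|\le C\bigl(1+\lnormtwo{\mX^{s,\vx}_t}^{2\kappa_g}\bigr)$, and H\"older's inequality combined with the uniform moment bounds above yields $|\partial^{m}_\vx u(\vx,s,t)|\le C\bigl(1+\lnormtwo{\vx}^{2\kappa}\bigr)$ with $C,\kappa$ independent of $(s,t)\in[0,T]^2$; continuity of these derivatives in $\vx$ follows from continuity of the flow and dominated convergence. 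Hence $u(\cdot,s,t)\in G^{\alpha}$ uniformly in $s,t$.

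I expect the only real work to be the inductive control of the higher variation processes $Y^{(k)}$: one must track how the polynomial degree $\kappa_k$ accumulates with $k$ and check that the Gr\"onwall constants stay uniform over $(s,t)\in[0,T]^2$, which is clean here because $\vb,\vsigma$ are time-homogeneous and $T$ is fixed. Two features specific to our setting are harmless: $\vsigma$ is degenerate (zero on the $\vv$-block, a tangential projection on the $\vzeta$-block), but degeneracy never obstructs an \emph{upper} bound on $u$; and although the coefficients $\partial^2\Phi_{\mS(\vv)}(\vzeta)$, etc.\ are a priori only $\cC^4$ on the working zone $\cX^{\epsilon_1}$, the hypothesis $\vb,\vsigma\in G^{\alpha}$ (with $\alpha\le4$) is assumed outright, so no separate extension argument is needed. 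Finally, by \Cref{lmm:SDE-in-manifold} the $\vzeta$-component of $\mX$ stays on the compact manifold $\Gamma$ whenever $\vzeta(0)\in\Gamma$, so restricted to initial data $\vx\in\Gamma\times\Rgez^{D}$ the polynomial-growth bounds degenerate to plain uniform constants.
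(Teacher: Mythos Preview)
Your proposal is correct and follows the standard regularity-of-transition-semigroup argument (moment bounds on the flow, inductive control of the variation processes $Y^{(k)}$ via Gr\"onwall, then Fa\`a di Bruno under the expectation). Note, however, that the paper does not actually prove this lemma: it is stated as a direct adaptation of Lemma~I.41 in \citet{gu2023andwhendoeslocal} and no proof is given in the paper itself, so there is nothing to compare against beyond observing that your sketch is precisely the argument one finds in that reference (and, ultimately, in \citet{li2019stochastic}).
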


\subsubsection{Proof of the Approximation for Slow SDE of AGMs}
For the giant step constant $\beta\in (0,0.5)$, we define several quantities $a_1 = \frac{1.5 -2 \beta}{1-\beta}\in(1,1.5)$, $a_2 = \frac{1}{1-\beta}\in(1,2)$, $a_3 = \frac{1.5 -1.5 \beta}{1-\beta}=1.5$, and $a_4= \frac{2-2\beta}{1-\beta} =2$. In this part, we will show that only $a_1$ and $a_2$ would impact the error bound in our approximation theorem. 

The following lemma captures the difference between the SDEs' and the AGMs' first and second moment changes, as a key step to control the approximation error, utilizing the moment calculation results from the last section.

\begin{lemma}\label{lmm:AGMs-Delta-SDE-bound}
    If $\|\vtheta^{(0)}-\phi^{(0)}\|_2 = \O(\sqrt{\eta\log \frac{1}{\eta}})$, then it holds for all $0 \le n \le \lfloor T/\eta_e\rfloor$ and $1 \le i \le d$ that
    \begin{align*}
        \left|\E[\Delta_i^{(n)}-\tilde\Delta_i^{(n)}\mid \cE_0^{(nR_\grp)}]\right| \le c_1\left(\eta_e^{a_1}(\log\frac{1}{\eta_e})^b + \eta_e^{a_2}(\log\frac{1}{\eta_e})^b\right),\\
        \left|\E[\Delta_i^{(n)}\Delta_j^{(n)}-\tilde\Delta_i^{(n)}\tilde\Delta_j^{(n)} \mid \cE_0^{(nR_\grp)}]\right| \le c_1\left(\eta_e^{a_1}(\log\frac{1}{\eta_e})^b + \eta_e^{a_2}(\log\frac{1}{\eta_e})^b\right),
    \end{align*}
    \begin{align*}
        \E\left[\left|\prod_{s=1}^6\Delta_{i_{s}}^{(n)}\mid \cE^{(nR_\grp)}\right|\right] \le c_1^2 \eta_e^{2a_1}(\log \frac{1}{\eta_e})^{2b},\\
        \E\left[\left|\prod_{s=1}^6\tilde\Delta_{i_{s}}^{(n)}\mid \cE^{(nR_\grp)}\right|\right] \le c_1^2 \eta_e^{2a_1}(\log \frac{1}{\eta_e})^{2b},
    \end{align*}
    where $c_1$ and $b$ are constants independent of $\eta_e$ and $g$.
\end{lemma}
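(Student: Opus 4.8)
The plan is to treat this as a one-giant-step \emph{moment-matching} estimate: expand $\E[\Delta^{(n)}\mid\cE]$ and $\E[\tilde\Delta^{(n)}]$ to order $\O(\eta_e)$, observe that the leading terms coincide by the very construction of the slow SDE, and then control the residuals. On the good event $\cE$ the redirected sequence $\hat{\;}$ agrees with the genuine AGM iterates, and by \Cref{concentration-after-convergence} we have $\|\hat\vtheta^{(nR_\grp)}-\hat\vphi^{(nR_\grp)}\|_2=\O(\sqrt{\eta\log(1/\eta)})$ with $\hat\vphi^{(nR_\grp)}\in\Gamma$; this is exactly the hypothesis needed to apply \Cref{thm;1st-2nd-moment-phi} (for the $\vzeta$-block) and \Cref{lmm:1st-moment-v}, \Cref{lmm:2nd-moment-v} (for the $\vv$-block) at the start of every giant step $n\le\lfloor T/\eta_e\rfloor$. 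These results express $\E[\Delta^{(n)}]$ and $\E[\Delta^{(n)}\Delta^{(n)\top}]$ as $\eta_e$ times leading coefficients — which, by the definitions of $\vb$ and $\vsigma$ in \Cref{sec:weak-approximation}, equal $\vb(\hat{\bar\mX}^{(nR_\grp)})$ and $\vsigma\vsigma^\top(\hat{\bar\mX}^{(nR_\grp)})$ — plus remainders of order $\tilde\O(\eta_e^{a_1})+\tilde\O(\eta_e^{a_2})$ (first moment, $\vzeta$-block), $\tilde\O(\eta_e^{a_3})+\tilde\O(\eta_e^{a_2})$ (second moment, $\vzeta$-block), $\O(\eta_e^{a_3})$ (first moment, $\vv$-block), $\O(\eta_e^{a_4})$ (second moment, $\vv$-block), and higher order on the $\vzeta\vv$ cross-block.

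For $\tilde\Delta^{(n)}$ I would invoke \Cref{lmm:SDE-Delta-bound}, which gives $\E[\tilde\Delta^{(n)}_i]=\eta_e b_i+\O(\eta_e^2)$ and $\E[\tilde\Delta^{(n)}_i\tilde\Delta^{(n)}_j]=\eta_e\sum_l\sigma_{il}\sigma_{lj}+\O(\eta_e^2)$ (applicable because $\vzeta(0)\in\Gamma$ and the SDE stays on $\Gamma$ by \Cref{lmm:SDE-in-manifold}). Subtracting the two expansions, the $\O(\eta_e)$ pieces cancel identically, so the first and second moment differences are at most $\tilde\O(\eta_e^{a_1})+\tilde\O(\eta_e^{a_2})+\tilde\O(\eta_e^{a_3})+\O(\eta_e^{a_4})+\O(\eta_e^2)$. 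Since $a_1\in(1,1.5)$ gives $a_1<a_3=1.5<a_4=2$ and $a_1<2$, while $1.5-2\beta<1.5-1.5\beta$ gives $\eta_e^{a_3}=\eta^{1.5-1.5\beta}\le\eta^{1.5-2\beta}=\eta_e^{a_1}$ for small $\eta$, every remainder except those carrying exponents $a_1$ and $a_2$ is absorbed after at most enlarging the polylog power $b$; this is precisely the claim that ``only $a_1$ and $a_2$ impact the error bound'' and yields the stated estimate. Passing from the unconditional moments of \Cref{sec:momen-calculation} to the $\cE$-conditional ones costs only $\O(\eta^{100})$, since $\mathbb{P}[\bar\cE]\le\eta^{100}$ and the increments are uniformly bounded on and off $\cE$.

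The sixth-moment bounds require no delicate computation. On $\cE$, \Cref{concentration-after-convergence} gives $\|\Delta^{(n)}_\vzeta\|_2=\tilde\O(\eta^{0.5-0.5\beta})=\tilde\O(\eta_e^{1/2})$ and $\|\Delta^{(n)}_\vv\|_2=\O(\eta^{1-\beta})=\O(\eta_e)$ (deterministically, since one step moves $\vv$ by $\O(\eta^2)$), so $\E[\,|\prod_{s=1}^6\Delta^{(n)}_{i_s}|\mid\cE\,]\le\tilde\O(\eta_e^{3})\le c_1^2\eta_e^{2a_1}(\log\tfrac{1}{\eta_e})^{2b}$ because $2a_1<3$; the same bound for $\tilde\Delta^{(n)}$ is immediate from the third line of \Cref{lmm:SDE-Delta-bound} together with $2a_1<3$.

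The main obstacle is bookkeeping, not any single hard estimate. One must: (i) verify that the regularity and near-manifold hypotheses feeding \Cref{thm;1st-2nd-moment-phi} and \Cref{lmm:SDE-Delta-bound} hold at the start of all $\lfloor T/\eta_e\rfloor$ giant steps, which the good event $\cE$ and the results of \Cref{sec:iter-near-manifold} supply with probability $1-\eta^{100}$; (ii) track the four exponents $a_1,\dots,a_4$ together with the $\O(\eta_e^2)$ one-step SDE discretization error — in particular, $\mS(\vv(t))$ itself drifts along the SDE but only by $\O(\eta_e)$ over one effective time step, so it enters the $\O(\eta_e^2)$ remainder of \Cref{lmm:SDE-Delta-bound} rather than the leading term; and (iii) reconcile conditional with unconditional expectations via the $\eta^{100}$-good event. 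None of these needs machinery beyond what has already been built; the substantive work was the per-round moment calculations of \Cref{sec:momen-calculation}.
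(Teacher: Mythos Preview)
Your proposal is correct and follows essentially the same approach as the paper: compute the AGM giant-step moments via \Cref{thm;1st-2nd-moment-phi}, \Cref{lmm:1st-moment-v}, \Cref{lmm:2nd-moment-v}, compute the SDE moments via \Cref{lmm:SDE-Delta-bound}, subtract so that the $\eta_e$-order terms cancel, and absorb the residuals using $a_1<a_3$, $a_2<a_4$; the sixth-moment bound comes from the $\tilde\O(\eta^{0.5-0.5\beta})$ concentration of \Cref{concentration-after-convergence} giving $\O(\eta^{3-3\beta})=\O(\eta_e^3)\le\eta_e^{2a_1}$. Your write-up is in fact more explicit than the paper's on the conditioning step (passing between $\cE$-conditional and unconditional expectations at cost $\O(\eta^{100})$) and on why the $\vv$-block and cross-block remainders are subdominant, but the structure and the lemmas invoked are the same.
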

\begin{proof}
    According to \Aref{sec:iter-near-manifold}, we have that
    \begin{align*}
         \E\left[\left|\prod_{s=1}^6\Delta_{i_{s}}^{(n)}\mid \cE^{(nR_\grp)}\right|\right] = \O(\eta^{3-3\beta}).
    \end{align*}
    We can further use \Cref{cly:1st-moment}, \Cref{lma:2nd-moment}, \Cref{lmm:1st-moment-v}, and \Cref{lmm:2nd-moment-v}, which gives
    \begin{align}
        \left|\E[\Delta_i^{(n)}-\eta_eb_i^{(n)}]\right| \le c_2\left(\eta_e^{a_1}(\log\frac{1}{\eta_e})^b + \eta_e^{a_2}(\log\frac{1}{\eta_e})^b\right),\\
        \left|\E[\Delta_i^{(n)}\Delta_j^{(n)}-\eta_e \sum_{l=1}^d\sigma_{i,l}^{(n)}\sigma_{l,j}^{(n)}]\right| \le c_2\left(\eta_e^{a_1}(\log\frac{1}{\eta_e})^b + \eta_e^{a_2}(\log\frac{1}{\eta_e})^b\right)\\
\E\left[\left|\prod_{s=1}^6\Delta_{i_{s}}^{(n)}\right|\right] \le c_2^2\eta_e^{2a_1}(\log\frac{1}{\eta_e})^{2b}.
    \end{align}
    Notice that the above equations uses $a_1 < a_3$ and $a_2 < a_4$ for all $\beta \in (0,0.5)$. These three equations and \Cref{lmm:SDE-Delta-bound} give the Lemma.
\end{proof}
\begin{lemma}\label{lmm:condition-u-bound}
    For a test function $g\in\cC^3$, and we define $u_{l,n}(\vx):=u(\vx,l\eta_e,n\eta_e)=\E_{\mX_t \sim \cP(\vx,l\eta_e,n\eta_e)}[g(\mX_t)]$. If $\|\vtheta^{(0)}-\phi^{(0)}\|_2=\O(\sqrt{\eta\log\frac{1}{\eta}})$, then for all $0 \le l \le n-1$, and $1 \le n \le \lfloor T/\eta_e\rfloor$, it holds that
    \begin{align*}
        \left|\E[u_{l+1,n}(\bar\mX^{(lR_\grp)}+ \Delta^{(l)})- u_{l+1,n}(\bar\mX^{(lR_\grp)}+ \tilde\Delta^{(l)})\mid \bar\mX^{(lR_\grp)}]\right| \le C_{g,3}(\eta_e^{a_1}+ \eta_e^{a_2})\log (\frac{1}{\eta_e})^b, 
    \end{align*}
    where $C_{g,3}$ is some positive constant independent of $\eta_e$ but can depend on $g$.
\end{lemma}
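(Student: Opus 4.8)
The plan is to bound the one-step discrepancy in the conditional expectation of $u_{l+1,n}$ by Taylor expanding $u_{l+1,n}$ around $\bar\mX^{(lR_\grp)}$ to second order with an integral remainder, and then matching moments term by term. Concretely, write $u := u_{l+1,n}$ and $\vx := \bar\mX^{(lR_\grp)}$, and for either increment $\mathrm{\Xi}\in\{\Delta^{(l)},\tilde\Delta^{(l)}\}$ use
\[
u(\vx+\mathrm{\Xi}) = u(\vx) + \sum_i \partial_i u(\vx)\,\mathrm{\Xi}_i + \tfrac12\sum_{i,j}\partial^2_{ij}u(\vx)\,\mathrm{\Xi}_i\mathrm{\Xi}_j + \cR(\vx,\mathrm{\Xi}),
\]
where $\cR$ is the third-order Taylor remainder, controlled by $\|\partial^3 u\|_\infty$ times $\sum_{i_1,i_2,i_3}|\mathrm{\Xi}_{i_1}\mathrm{\Xi}_{i_2}\mathrm{\Xi}_{i_3}|$. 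Taking the difference of the two expansions, the $u(\vx)$ terms cancel, and what remains is a linear combination of $\E[\Delta^{(l)}_i-\tilde\Delta^{(l)}_i\mid\vx]$, $\E[\Delta^{(l)}_i\Delta^{(l)}_j-\tilde\Delta^{(l)}_i\tilde\Delta^{(l)}_j\mid\vx]$, and the two remainder expectations.

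First I would invoke \Cref{lmm:uniform-u} to get that $u_{l+1,n}\in G^{\alpha}$ \emph{uniformly} in $l,n$, so that $\partial u$, $\partial^2 u$, $\partial^3 u$ all have norms bounded by a constant $C_{g}$ that depends on $g$ (through the $G^\alpha$ norm of $g$) but not on $\eta_e$, $l$, or $n$. Next I would apply \Cref{lmm:AGMs-Delta-SDE-bound}: its first two bounds directly control the first- and second-moment mismatches by $c_1(\eta_e^{a_1}+\eta_e^{a_2})(\log\frac1{\eta_e})^b$, and the last two (sixth-moment) bounds control the two remainder terms. For the remainder, I would bound the third-order term by Cauchy–Schwarz or simply by splitting $|\mathrm{\Xi}_{i_1}\mathrm{\Xi}_{i_2}\mathrm{\Xi}_{i_3}|\le \frac13(|\mathrm{\Xi}_{i_1}|^3+\cdots)$ and then applying the sixth-moment estimate together with Cauchy–Schwarz to pass from a sixth moment to a (bounded) cube; more cleanly, one notes $\E[|\prod_{s=1}^3\mathrm{\Xi}_{i_s}|]\le \E[|\prod_{s=1}^6\mathrm{\Xi}_{i_s}|]^{1/2}\cdot(\text{a.s. bound})^{0}$ after pairing — since $\Delta^{(l)}$ and $\tilde\Delta^{(l)}$ are both $\O(\eta_e^{1/2}\,\polylog)$ almost surely on the good event, the third moment is in fact $\O(\eta_e^{1.5}\polylog)$, which is dominated by $\eta_e^{a_1}$ as $a_1<1.5$. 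I would sum over the finitely many index tuples $i,j,i_1,i_2,i_3\in[d]$, absorbing the factor $d^3$ into the constant, to obtain the claimed bound with $C_{g,3}$ proportional to $C_g\,c_1\,d^3$ and the same exponent $b$.

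One technicality to handle carefully is the conditioning on the good event $\cE_0^{(lR_\grp)}$: \Cref{lmm:AGMs-Delta-SDE-bound} states its moment bounds conditioned on $\cE_0^{(nR_\grp)}$ (resp. $\cE^{(nR_\grp)}$), whereas the present statement conditions only on $\bar\mX^{(lR_\grp)}$. Since the complement of the good event has probability $\O(\eta^{100})$ and the increments are bounded by an absolute constant there (the redirection term $\ve_k$ is bounded, and $\|\tilde\Delta^{(l)}\|$ is controlled by the SDE's compactly-supported, Lipschitz coefficients over a bounded time window), the contribution of $\bar\cE$ to the unconditional one-step discrepancy is $\O(\eta^{100})\cdot\O(1)$, negligibly smaller than $\eta_e^{a_1}(\log\frac1{\eta_e})^b$; I would add this as a short paragraph.

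The main obstacle I anticipate is not any single estimate but making the \emph{uniformity in $l$ and $n$} airtight: the constant $C_{g,3}$ must not blow up as $n\to\lfloor T/\eta_e\rfloor$, which requires that the $G^\alpha$-norm bound on $u_{l+1,n}$ from \Cref{lmm:uniform-u} genuinely holds uniformly over the whole horizon $[0,T]$ and that the moment estimates in \Cref{lmm:AGMs-Delta-SDE-bound} hold uniformly over $0\le n\le\lfloor T/\eta_e\rfloor$ (which they are stated to do). Granting those two inputs, the remaining work is the bookkeeping of Taylor remainders and the good-event correction, all of which is routine; the per-step bound then feeds into the standard one-step-error-times-number-of-steps telescoping (Lemma I.41–style) to yield \Cref{thm:formal-SDE-approximation}, but that last telescoping is the job of a subsequent lemma, not this one.
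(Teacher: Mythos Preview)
Your proposal is correct and follows essentially the same approach as the paper: split off the bad event (paying $\O(\eta^{100})$ since $u$ and the increments are bounded on the compact state space), then on the good event Taylor expand $u_{l+1,n}$ to second order, invoke \Cref{lmm:uniform-u} for uniform derivative bounds and \Cref{lmm:AGMs-Delta-SDE-bound} for the moment mismatches, and bound the third-order remainders via Cauchy--Schwarz against the sixth-moment estimate. The paper's remainder bound lands directly at $\eta_e^{a_1}(\log\tfrac{1}{\eta_e})^b$ from $\sqrt{\E[(\Delta_i\Delta_j\Delta_k)^2]}\le c_1\eta_e^{a_1}(\log\tfrac{1}{\eta_e})^b$, so your alternative route through the a.s.\ bound giving $\eta_e^{1.5}\le\eta_e^{a_1}$ is a slight detour but equally valid.
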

\begin{proof}
    Given $g\in\cC^3$, by \Cref{lmm:uniform-u}, we have $u_{l,n}(\vx)\in \cC^3$ for all $l$ and $n$.  Which is to say that there exists a function $Q(\cdot)\in G$, such that the partial derivative of $u_{l,n}(\mX)$ with respect to $l,n,\vx$ up to the third order is bounded by $Q(\vx)$.
    By the law of total expectation and triangle inequality,
    \begin{align*}
        &\left|\E[u_{l+1,n}(\hat{\bar\mX}^{(lR_\grp)}+ \Delta^{(l)})- u_{l+1,n}(\hat{\bar\mX}^{(lR_\grp)}+ \tilde\Delta^{(l)})\mid \hat{\bar\mX}^{(lR_\grp)}]\right| \\
        \le & \underbrace{\left|\E[u_{l+1,n}(\hat{\bar\mX}^{(lR_\grp)}+ \Delta^{(l)})- u_{l+1,n}(\hat{\bar\mX}^{(lR_\grp)}+ \tilde\Delta^{(l)})\mid \hat{\bar\mX}^{(lR_\grp)},\cE_0^{(lR_\grp)}]\right|}_{I_1}\\
        &+\eta^{100}\underbrace{\E[\left|u_{l+1,n}(\hat{\bar\mX}^{(lR_\grp)}+ \Delta^{(l)})\right|\mid \hat{\bar\mX}^{(lR_\grp)},\cE_0^{(lR_\grp)}]}_{I_2}\\
        &+\eta^{100}\underbrace{\E[\left|u_{l+1,n}(\hat{\bar\mX}^{(lR_\grp)}+ \tilde\Delta^{(l)})\right|\mid \hat{\bar\mX}^{(lR_\grp)},\cE_0^{(lR_\grp)}]}_{I_3}.
    \end{align*}
    For $I_2$ and $I_3$, due to the compactness of $\Gamma$ and $\vv \preceq R_1$ from \Cref{amp:gradient-bound}, $Q(\vx)$ can be bounded for some constant $C_{g,4}$ independent of $\eta_e$ but could depend on test function $g$. Hence, we have that $I_2 + I_3 \le C_{g,4} \eta^{100}$.

    Using the triangle inequality, we first decompose $I_1$ into several terms as
    \begin{align*}
        I_1 \le &\underbrace{\sum_{i=1}^d \left|\E\left[\frac{\partial u_{l,n}}{\partial X_i}(\hat{\bar\mX}^{(R_\grp)})\left(\Delta_i^{(l)}-\tilde\Delta_i^{(l)}\right)\mid \hat{\bar\mX}^{(lR_\grp)},\cE_0^{(lR_\grp)}\right]\right|}_{I_{1,1}}\\
        &+\underbrace{\frac{1}{2}\sum_{1 \le i,j\le d} \left|\E\left[\frac{\partial^2 u_{l,n}}{\partial X_i\partial X_j}(\hat{\bar\mX}^{(R_\grp)})\left(\Delta_j^{(l)}\Delta_i^{(l)}-\tilde\Delta_i^{(l)}\tilde\Delta_j^{(l)}\right)\mid \hat{\bar\mX}^{(lR_\grp)},\cE_0^{(lR_\grp)}\right]\right|}_{I_{1,2}}\\
        &+|\cR| + |\tilde\cR|,
    \end{align*}
    where the third order remainders $\cR$ and $\tilde\cR$ are 
    \begin{align*}
        \cR = \frac{1}{6}\sum_{1 \le i,j,k\le d} \left|\E\left[\frac{\partial^3 u_{l,n}}{\partial X_i\partial X_j\partial X_k}(\hat{\bar\mX}^{(R_\grp)} + \alpha\Delta^{(l)})\left(\Delta_j^{(l)}\Delta_i^{(l)}\Delta_k^{(l)}\right)\mid \hat{\bar\mX}^{(lR_\grp)},\cE_0^{(lR_\grp)}\right]\right|\\
        \tilde\cR = \frac{1}{6}\sum_{1 \le i,j,k\le d} \left|\E\left[\frac{\partial^3 u_{l,n}}{\partial X_i\partial X_j\partial X_k}(\hat{\bar\mX}^{(R_\grp)}+ \tilde\alpha\tilde\Delta^{(l)})\left(\tilde\Delta_j^{(l)}\tilde\Delta_i^{(l)}\tilde\Delta_k^{(l)}\right)\mid \hat{\bar\mX}^{(lR_\grp)},\cE_0^{(lR_\grp)}\right]\right|,
    \end{align*}
    where $\alpha,\tilde\alpha\in(0,1)$. Again, notice that the $\Gamma$ is compact and $\ vv\preceq R_1$, thus we can bound the derivatives of $u_{l,n}(\vx)$ for any $\mX$ as
    \begin{align}
        \left|\frac{\partial u_{l+1,n}}{\partial \mX_i}(\mX)\right| \le C_{g,4},\;\left|\frac{\partial^2 u_{l+1,n}}{\partial \mX_i\partial \mX_j}(\mX)\right| \le C_{g,4},\;\left|\frac{\partial^3 u_{l+1,n}}{\partial \mX_i\partial \mX_j\partial \mX_k}(\mX)\right| \le C_{g,4}. \label{equ:derivative-bound}
    \end{align}
    For the term $I_{1,1}$ and $I_{1,2}$, by applying \Cref{lmm:AGMs-Delta-SDE-bound}, we have that
    \begin{align*}
        I_{1,1} \le d c_1 C_{g,4}(\eta_e^{a_1} + \eta_e^{a_2})(\log\frac{1}{\eta_e})^b,\;I_{1,2} \le \frac{d^2}{2} c_1 C_{g,4}(\eta_e^{a_1} + \eta_e^{a_2})(\log\frac{1}{\eta_e})^b.
    \end{align*}
    Next, we bound the remainders $\cR$ and $\tilde\cR$. By Cauchy-Schwarz inequality, 
    \begin{align*}
        |\cR| &\le \frac{1}{6} \sum_{1\le i,j,k \le d} \sqrt{\E\left[\left(\frac{\partial^3 u_{l,n}}{\partial X_i\partial X_j\partial X_k}(\hat{\bar\mX}^{(R_\grp)} + \alpha\Delta^{(l)})\right)^2\mid \hat{\bar\mX}^{(lR_\grp)},\cE_0^{(lR_\grp)}\right]}\times\\  &~~~~~~~~~~~~~~~~~~~~~~\sqrt{\E\left[\left(\Delta_j^{(l)}\Delta_i^{(l)}\Delta_k^{(l)}\right)^2\mid \hat{\bar\mX}^{(lR_\grp)},\cE_0^{(lR_\grp)}\right]}\\
        &\le\frac{d^3}{6}C_{g,4}c_1\eta_e^{a_1}\log(\frac{1}{\eta_e})^b,
    \end{align*}
    where the last inequality uses \Cref{lmm:AGMs-Delta-SDE-bound} and \Cref{equ:derivative-bound}.

    Similarly, we can prove that there exists a positive constant $C_{g,5}$ such that 
    \begin{align*}
        |\tilde\cR| \le \frac{d^3}{6}C_{g,5}c_1\eta_e^{a_1}\log(\frac{1}{\eta_e})^b.
    \end{align*}
    Combining the bounds for $I_1$, $I_2$, and $I_3$ gives the lemma.
\end{proof}
\subsection{Proof of \Cref{thm:formal-SDE-approximation}}\label{sec:proof-formal-SDE-approximation}
Finally, we are ready to prove \Cref{thm:formal-SDE-approximation}.
\begin{proof}[Proof of \Cref{thm:formal-SDE-approximation}]
    For $0 \le l \le n=\lfloor\frac{T}{\eta^{0.75}}\rfloor$, we denote the random variable by $\hat{\vx}_{l,n}$ such that follows a distribution $\cP_{\mX}(\hat{\bar\mX}^{(lR_\grp)},l\eta_e,n\eta_e)$. When we set $l=n$, $\cP(\hat{\vx}_{n,n}=\hat{\bar\mX}^{(nR_\grp)})$ and setting $l=0$ gives $\hat{\vx}_{0,n} \sim \mX(n\eta_e)$. Recall the previous definition that $u(\vx,s,t)=\E_{\mX_t \sim \cP_{\mX}(\vx,s,t)}[g(\mX_t)]$, and we define that $\cT_{l+1,n}:=u_{l+1,n}(\hat{\bar\mX}^{(lR_\grp)} + \Delta^{(l)},(l+1)\eta_e,n\eta_e)-u_{l+1,n}(\hat{\bar\mX}^{(lR_\grp)} + \tilde\Delta^{(l)},(l+1)\eta_e,n\eta_e)$. Using the definition of $\vx_{l,n}$, we can rewrite the distance between AGMs and SDE measured by a test function $g$ as
    \begin{align*}
        &\left|\E\left[g(\bar\mX^{(nR_\grp)}) -g(\mX(n\eta_e))\right]\right|\\
        \le& \left|\E\left[g(\vx_{n,n}) -g(\vx_{0,n})\mid \cE_0^{(nR_\grp)}\right]\right| + \O(\eta^{100}).
    \end{align*}
    The above equation uses the law of total expectation and the definition of $\delta$-good event $\cE_0^{(nR_\grp)}$ in \Cref{def:delta-good}. Then the Triangle inequality gives 
    \begin{align*}
        \left|\E\left[g(\vx_{n,n}) -g(\vx_{0,n})\mid \cE_0^{(nR_\grp)}\right]\right| &\le  \sum_{l=0}^{n-1} \left|\E\left[g(\hat{\vx}_{l+1,n}) -g(\hat{\vx}_{l,n})\mid \cE_0^{(nR_\grp)}\right]\right| + \O(\eta^{100})\\
        &= \sum_{l=0}^{n-1} \left|\E\left[\cT_{l+1,n}\mid \cE_0^{(nR_\grp)}\right]\right|+ \O(\eta^{100})\\
        &= \sum_{l=0}^{n-1} \left|\E\left[\E\left[\cT_{l+1,n}\mid \hat{\bar\mX}^{(lR_\grp)}, \cE_0^{(nR_\grp)}\right] \mid  \cE_0^{(nR_\grp)}\right]\right|+ \O(\eta^{100})\\
        &\le \sum_{l=0}^{n-1} \E\left[\left|\E\left[\cT_{l+1,n}\mid \hat{\bar\mX}^{(lR_\grp)}, \cE_0^{(nR_\grp)}\right]\right| \mid  \cE_0^{(nR_\grp)}\right]+ \O(\eta^{100})\\
        &\le n C_{g,3} (\eta_e^{a_1} + \eta_e^{a_2})\log(\frac{1}{\eta_e})^b\\
        &\le T C_{g,3} (\eta_e^{a_1-1} + \eta_e^{a_2-1})\log(\frac{1}{\eta_e})^b.
    \end{align*}
    where the second last inequality uses \Cref{lmm:condition-u-bound}. Recall that $a_1 = \frac{1.5 -2 \beta}{1-\beta}$, $a_2 = \frac{1}{1-\beta}$, $\beta\in(0,0.5)$. Let $\beta = 0.25$, and we complete the proof.
\end{proof}

\section{Proof of Theorems in \Cref{sec:diag}}\label{proof:diag}


\subsection{Proof of Adam and AdamE's Implicit Biases with Label Noise}
In this part, we give the proof of \Cref{thm:agm-slow-ode}, \Cref{thm:adam-implicit-bias-label-noise} and \Cref{thm:adame-implicit-bias-label-noise}.
\begin{proof}[Proof of \Cref{thm:agm-slow-ode}] Recall the SDE formula in \Cref{equ:AGMs-SDE-no-projection} and \Cref{lma:1st-moment}:
    \begin{align*}
        \left\{\begin{array}{l}
             \dd \vzeta(t) = \partial\Phi_{\mS(\vv)}(\vzeta)\mS(\vv) \mSigma^{1/2}(\vzeta)\dd \mW_t - \frac{1}{2} \mS_t  \partial\Phi_{\mS(\vv)}(\vzeta) \mS_t \partial^2 (\nabla \cL)(\vzeta) [\mP \cV_{\nabla^2 \cL'(\phi'_{(0)})}(\mP \mSigma_0\mP)\mP]\dd t, \\
             \dd \vv(t) = c \left(V(\mSigma(\vzeta)) - \vv\right) \dd t.
        \end{array}\right. 
    \end{align*}

Plugging in the following:
\begin{itemize}
    \item The definition that $\mP:=\mS_0^{1/2}$,
    \item \Cref{lmm:phi-H-0}: For any $\vzeta\in\Gamma$ and \textit{p.d} matrix $\mS$, $\partial \Phi_{\mS}(\vzeta)\mS \nabla^2 \cL(\vzeta)=\boldsymbol{0}$,
    \item The label noise condition: $\mSigma(\vzeta) := \alpha \nabla^2 \cL(\vzeta)$ for any $\vzeta\in\Gamma$ and some constant $\alpha >0$.
\end{itemize}
yields the final result:
    \begin{align}\label{AGM-slow-ode-appendix}
        \left\{\begin{array}{l}
             \dd \vzeta(t) =  - \frac{\alpha}{2} \mS_t  \partial\Phi_{\mS_t}(\vzeta) \mS_t \partial^2 (\nabla \cL)(\vzeta) [\mS_t]\dd t, \\
             \dd \vv(t) = c \left(V(\mSigma(\vzeta)) - \vv\right) \dd t.
        \end{array}\right.
    \end{align}
    The above equation completes the proof.
\end{proof}

For \Cref{thm:adam-implicit-bias-label-noise} and \Cref{thm:adame-implicit-bias-label-noise}, 
we first present the following formal statements and give the corresponding proofs for each of them.

\begin{lemma}[Adam's Implicit Bias under Label Noise and $\epsilon = 0$]
    With the label noise condition, every fixed point of \Cref{equ:AGMs-ODE} for Adam with $\epsilon = 0$ satisfies $\nabla_{\Gamma} \tr\left(\mathrm{Diag}(\mH)^{1/2}\right) = 0$, where $\nabla_{\Gamma} f$ stands for the
gradient of a function $f$ projected to the tangent space of $\Gamma$. 
    \label{thm:formal-adam-implicit-bias-label-noise-eps-0}
\end{lemma}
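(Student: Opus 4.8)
The plan is to start from the fixed-point equations of the slow ODE for AGMs (\Cref{equ:AGMs-ODE}, established in \Cref{thm:agm-slow-ode}), specialize them to Adam with $\epsilon=0$ via the label-noise relation $\mSigma(\vzeta)=\alpha\nabla^2\cL(\vzeta)$, and then strip off the preconditioner-dependent prefactors using the tangent/normal-space identities of \cref{lmm:tangent-space-product}. At a fixed point $\vzeta^\ast$, the $\vv$-equation gives $\vv=V(\mSigma(\vzeta^\ast))$, which for Adam ($V=\diag$) equals $\diag(\mSigma(\vzeta^\ast))=\alpha\,\diag(\mH)$ with $\mH:=\nabla^2\cL(\vzeta^\ast)$; hence the preconditioner there is the diagonal positive definite matrix $\mS:=S(\vv)=\Diag\!\big(1/\sqrt{\alpha\,\diag(\mH)}\big)$, and the $\vzeta$-equation reduces to $\mS\,\partial\Phi_{\mS}(\vzeta^\ast)\,\mS\,\nabla^3\cL(\vzeta^\ast)[\mS]=\vzero$. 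As with the $\epsilon=0$ entry of \cref{tab:agms}, this tacitly assumes $\diag(\mH)$ is entrywise positive on the relevant part of $\Gamma$ so that $\mS$ is well-defined and $\cC^4$; the $\epsilon>0$ statement in \Cref{lemma:adam-implicit-bias-label-noise-eps-not-0} does not need this.

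The second step is a gradient identity. Using the definition $\partial^2(\nabla\cL)(\vzeta)[\mM]=\sum_i\inp{\frac{\partial^2(\nabla\cL)_i}{\partial\vtheta^2}}{\mM}\ve_i$ with a diagonal $\mM=\Diag(\vs)$, together with the symmetry of the third derivatives of $\cL$ (valid since $\cL\in\cC^5$ by \cref{amp:basic}), one obtains $\nabla^3\cL(\vzeta)[\Diag(\vs)]=\sum_j s_j\,\nabla H_{jj}$. Substituting $s_j=(\alpha H_{jj})^{-1/2}$ and applying the chain rule, $\nabla^3\cL(\vzeta^\ast)[\mS]=\tfrac{1}{\sqrt{\alpha}}\sum_j H_{jj}^{-1/2}\nabla H_{jj}=\tfrac{2}{\sqrt{\alpha}}\sum_j\nabla\sqrt{H_{jj}}=\tfrac{2}{\sqrt{\alpha}}\,\nabla\tr\!\big(\Diag(\mH)^{1/2}\big)$. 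Writing $\vg:=\nabla\tr(\Diag(\mH)^{1/2})$, the fixed-point condition becomes simply $\mS\,\partial\Phi_{\mS}(\vzeta^\ast)\,\mS\,\vg=\vzero$.

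The last step is the linear algebra, and it is where the care is needed. Since $\mS$ is positive definite hence invertible, the condition is equivalent to $\partial\Phi_{\mS}(\vzeta^\ast)(\mS\vg)=\vzero$. By \cref{lmm:tangent-space-product} at $\vzeta^\ast\in\Gamma$, the linear map $\partial\Phi_{\mS}(\vzeta^\ast)$ restricts to the identity on $T_{\vzeta^\ast}(\Gamma)$ and annihilates $\mS\,T^\perp_{\vzeta^\ast}(\Gamma)$; combined with $\dim T^\perp_{\vzeta^\ast}(\Gamma)=m$ (from \cref{amp:low-rank-manifold}) and invertibility of $\mS$, a dimension count shows $T_{\vzeta^\ast}(\Gamma)\cap\mS\,T^\perp_{\vzeta^\ast}(\Gamma)=\{\vzero\}$, so $\R^d=T_{\vzeta^\ast}(\Gamma)\oplus\mS\,T^\perp_{\vzeta^\ast}(\Gamma)$ and $\ker\partial\Phi_{\mS}(\vzeta^\ast)=\mS\,T^\perp_{\vzeta^\ast}(\Gamma)$ exactly (not merely $\supseteq$). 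Therefore $\partial\Phi_{\mS}(\vzeta^\ast)(\mS\vg)=\vzero$ forces $\mS\vg\in\mS\,T^\perp_{\vzeta^\ast}(\Gamma)$, i.e.\ $\vg\in T^\perp_{\vzeta^\ast}(\Gamma)$, whence the orthogonal projection of $\vg$ onto $T_{\vzeta^\ast}(\Gamma)$ vanishes --- which is precisely $\nabla_{\Gamma}\tr(\Diag(\mH)^{1/2})=\vzero$. The main obstacle is exactly this exact-kernel claim (using the rank condition on $\nabla^2\cL$ in \cref{amp:low-rank-manifold}); the rest is bookkeeping. Finally, the proof of \Cref{thm:adame-implicit-bias-label-noise} is identical once the exponent $1/2$ is replaced by $1-\lambda$ throughout the gradient identity.
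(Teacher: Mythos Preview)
Your proposal is correct and follows essentially the same route as the paper: specialize the fixed-point conditions of \Cref{equ:AGMs-ODE} to Adam under label noise, compute $\nabla^3\cL(\vzeta^\ast)[\mS^\ast]=\tfrac{2}{\sqrt{\alpha}}\nabla\tr\big(\Diag(\mH)^{1/2}\big)$, and then peel off the prefactors using \cref{lmm:tangent-space-product}. Your final linear-algebra step—identifying $\ker\partial\Phi_{\mS}(\vzeta^\ast)=\mS\,T^\perp_{\vzeta^\ast}(\Gamma)$ exactly via the direct-sum decomposition $\R^d=T_{\vzeta^\ast}(\Gamma)\oplus\mS\,T^\perp_{\vzeta^\ast}(\Gamma)$—is the same content as the paper's block-diagonal argument but arguably cleaner and coordinate-free, and you are right to flag the tacit positivity of $\diag(\mH)$ needed for $\mS$ to be well-defined when $\epsilon=0$.
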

\begin{proof}[Proof of \Cref{thm:formal-adam-implicit-bias-label-noise-eps-0}]
    Consider the a fixed point  $(\vzeta^*,\vv^*)$ of the ODE \eqref{AGM-slow-ode-appendix}. It must satisfy
    \begin{align}
        S(\vv^*) \partial\Phi_{S(\vv^*)} (\vzeta^*) S(\vv^*) \nabla^3 \cL (\vzeta^*)[S(\vv^*)]=\vzero, \label{equ:constraint-for-Phi}
    \end{align}
    and 
    \begin{align}
        \vv^* = V(\mSigma(\vzeta^*)). \label{equ:constraint-for-v}
    \end{align}
    First, we simplify the notation by denoting the following:
    $$\mS^* := S(\vv^*), \quad \mP^*_{\parallel}:=\partial \Phi_{S(\vv^*)}\mS^*,  \quad \mH^* = \nabla^2 \cL(\vzeta^*).$$
    Then \Cref{equ:constraint-for-Phi} becomes
    \begin{align*}
        \mS^* \mP^*_{\parallel} \nabla^3 \cL (\vzeta^*)[\mS^*] = \vzero. 
    \end{align*}
    Since $\vv^* = V(\mSigma(\vzeta^*))$ and $\mSigma(\vzeta^*) = \alpha \mH^*$, we have that $\vv^* = \diag(\mSigma(\vzeta^*)) = \alpha\diag(\mH^*)$.
    Then $\mS^* = \Diag(\frac{1}{(\alpha\diag(\mH^*))^{1/2}})$, and we can rewrite $\nabla^3 \cL (\vzeta^*)[\mS^*]$ as
    \begin{align*}
        \nabla^3 \cL (\vzeta^*)[\mS^*] = \sum_{j=1}^{d} \frac{1}{(\alpha H^*_{jj})^{1/2}} \nabla(H^*_{jj}) = \frac{2}{\sqrt{\alpha}} \sum_{j=1}^{d} \nabla ((H^*_{jj})^{1/2}) = \frac{2}{\sqrt{\alpha}} \nabla \tr\left(\Diag(\mH^*)^{1/2}\right).
    \end{align*}
Therefore, \Cref{equ:constraint-for-Phi} is equivalent to
\begin{align*}
    \mS^* \mP^*_{\parallel}\nabla \tr\left(\Diag(\mH^*)^{1/2}\right) = \vzero.
\end{align*}

W.L.O.G, we can decompose $\mP_{\parallel}^*$ and $\mH^*$ into block matrices as
\begin{align*}
    \mP_{\parallel}^* = \begin{pmatrix}
        \boldsymbol{0},& \boldsymbol{0}\\
         \boldsymbol{0}, & \mP^*_{d-m}
    \end{pmatrix},
    \mH^* = \begin{pmatrix}
        \boldsymbol{0},& \boldsymbol{0}\\
         \boldsymbol{0}, & \mH^*_{d-m}
    \end{pmatrix},
\end{align*}
where $\mP^*_{d-m},\mH^*_{d-m}\in\R^{(d-m)\times (d-m)}$ are full-rank matrices. Under this decomposition, the first $m$ diagonal elements in  $(\Diag(\mH))^{1/2}$ is $0$, and the first $m$ diagonal elements in  $\mS^*$ is $0$. Specifically,
\begin{align*}
    \mS^* = \begin{pmatrix}
        \vzero & \vzero\\
        \vzero & \mS^*_{d-m}
    \end{pmatrix},
    \quad
    \Diag(\mH^*) = \begin{pmatrix}
        \vzero & \vzero\\
         \vzero & \Diag(\mH^*_{d-m})
    \end{pmatrix}.
\end{align*}
Then the constraint in \Cref{equ:constraint-for-Phi} can be reduced into
\begin{align} \label{equ:constraint-for-Phi-eps-0-simplified}
    -\nabla_{\Gamma}\tr\left(\Diag(\mH^*)^{1/2}\right) = \vzero,
\end{align}
proving the theorem.
\end{proof}

In the practical use, we usually set $\epsilon$ to an extremely small constant; for example, PyTorch's official documentation sets the default value of $\epsilon$ to $10^{-8}$. Such a small constant will hardly alter the form of the implicit bias, but to make our analysis more rigorous, we also derived the case for Adam with $\epsilon > 0$.

\begin{lemma}[Adam's Implicit Bias under Label Noise and $\epsilon > 0$]\label{lemma:adam-implicit-bias-label-noise-eps-not-0}
    With the label noise condition, every fixed point of \Cref{equ:AGMs-ODE} for Adam with $\epsilon > 0$ satisfies that
    $$\nabla_{\Gamma} \tr\left( \Diag(\mH)^{1/2} - \frac{\epsilon}{\sqrt{\alpha}} \ln\left(\sqrt{\alpha}\Diag(\mH)^{1/2} + \epsilon\right) \right) = \vzero.$$\label{thm:formal-adam-implicit-bias-label-noise-eps-not-0}
\end{lemma}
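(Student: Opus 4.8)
The plan is to mirror the argument for the $\epsilon=0$ case (\Cref{thm:formal-adam-implicit-bias-label-noise-eps-0}), with the antiderivative of $x\mapsto(\alpha x)^{-1/2}$ replaced by that of $x\mapsto(\sqrt{\alpha}\sqrt{x}+\epsilon)^{-1}$. First I would fix a stationary point $(\vzeta^*,\vv^*)$ of the slow ODE \eqref{equ:AGMs-ODE} for Adam, so that (by \Cref{thm:agm-slow-ode}) both $\vv^* = V(\mSigma(\vzeta^*))$ and $\mS^*\,\partial\Phi_{\mS^*}(\vzeta^*)\,\mS^*\,\nabla^3\cL(\vzeta^*)[\mS^*]=\vzero$ hold, where $\mS^*:=S(\vv^*)$. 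Under label noise $\mSigma(\vzeta^*)=\alpha\mH^*$ with $\mH^*:=\nabla^2\cL(\vzeta^*)$, and for Adam $V(\mM)=\diag(\mM)$ and $S(\vv)=\Diag(1/(\sqrt{\vv}+\epsilon))$, so $\vv^*=\alpha\,\diag(\mH^*)$ and $\mS^*=\Diag\!\big(1/(\sqrt{\alpha}\sqrt{H^*_{jj}}+\epsilon)\big)$ --- a \emph{fixed}, symmetric, positive-definite diagonal matrix.

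The crux is to rewrite $\nabla^3\cL(\vzeta^*)[\mS^*]$ as a true gradient. Since $\mS^*$ is constant and diagonal, $\nabla^3\cL(\vzeta^*)[\mS^*]=\sum_j S^*_{jj}\,\nabla_{\vzeta}H_{jj}(\vzeta)\big|_{\vzeta^*}$. I would then take the scalar antiderivative $\psi$ with $\psi'(x)=(\sqrt{\alpha}\sqrt{x}+\epsilon)^{-1}$; the substitutions $u=\sqrt{x}$ followed by $w=\sqrt{\alpha}u+\epsilon$ give, up to an additive constant, $\psi(x)=\frac{2}{\sqrt{\alpha}}\big(\sqrt{x}-\frac{\epsilon}{\sqrt{\alpha}}\ln(\sqrt{\alpha}\sqrt{x}+\epsilon)\big)$. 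Because $S^*_{jj}=\psi'(H^*_{jj})$, the chain rule yields
\begin{align*}
\nabla^3\cL(\vzeta^*)[\mS^*]
&= \nabla_{\vzeta}\Big({\textstyle\sum_j}\,\psi\big(H_{jj}(\vzeta)\big)\Big)\Big|_{\vzeta^*} \\
&= \frac{2}{\sqrt{\alpha}}\,\nabla_{\vzeta}\,\tr\!\Big(\Diag(\mH)^{1/2}-\frac{\epsilon}{\sqrt{\alpha}}\ln\!\big(\sqrt{\alpha}\,\Diag(\mH)^{1/2}+\epsilon\mI\big)\Big)\Big|_{\vzeta^*}.
\end{align*}
I would note that $\psi\in\cC^1([0,\infty))$ with $\psi'(0)=1/\epsilon$ (the $\sqrt{x}$ and $\ln$ singularities cancel), so this combined trace functional is genuinely differentiable even where $\mH^*$ is rank-deficient; checking this cancellation is the one point requiring care rather than being purely mechanical.

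Finally, writing $g(\vzeta):=\tr\!\big(\Diag(\mH)^{1/2}-\frac{\epsilon}{\sqrt{\alpha}}\ln(\sqrt{\alpha}\Diag(\mH)^{1/2}+\epsilon\mI)\big)$ and substituting back, the stationarity condition becomes $\partial\Phi_{\mS^*}(\vzeta^*)\,\mS^*\,\nabla g(\vzeta^*)=\vzero$ (the invertible $\mS^*$ on the far left and the nonzero scalar $2/\sqrt{\alpha}$ both drop out). By \Cref{lmm:tangent-space-product}, $\partial\Phi_{\mS^*}(\vzeta^*)$ fixes $T_{\vzeta^*}(\Gamma)$ pointwise and kills $\mS^*\,T_{\vzeta^*}^{\perp}(\Gamma)$; a dimension count (the tangent space is $(d-m)$-dimensional and $\mS^*$ invertible) then forces $\ker\partial\Phi_{\mS^*}(\vzeta^*)=\mS^*\,T_{\vzeta^*}^{\perp}(\Gamma)$. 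Hence $\mS^*\nabla g(\vzeta^*)\in\mS^*\,T_{\vzeta^*}^{\perp}(\Gamma)$, i.e. $\nabla g(\vzeta^*)\in T_{\vzeta^*}^{\perp}(\Gamma)$, which is precisely $\nabla_{\Gamma}g(\vzeta^*)=\vzero$. This last bookkeeping step is identical to the one in the proof of \Cref{thm:formal-adam-implicit-bias-label-noise-eps-0}, so the only genuinely new ingredient --- and the main place to be careful --- is the antiderivative computation and its smoothness at $0$.
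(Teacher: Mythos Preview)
Your proposal is correct and follows essentially the same route as the paper's own proof: both identify the antiderivative $\psi$ of $x\mapsto(\sqrt{\alpha x}+\epsilon)^{-1}$ (you and the paper differ only by a harmless constant factor in normalizing $\psi$), then use invertibility of $\mS^*$ and \Cref{lmm:tangent-space-product} to pass from $\partial\Phi_{\mS^*}(\vzeta^*)\mS^*\nabla g(\vzeta^*)=\vzero$ to $\nabla_\Gamma g(\vzeta^*)=\vzero$. Your explicit remark that $\psi\in\cC^1([0,\infty))$ with $\psi'(0)=1/\epsilon$ and your clean dimension-count for $\ker\partial\Phi_{\mS^*}$ are nice touches the paper leaves implicit.
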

\begin{proof}
    Let $(\vzeta^*,\vv^*)$ be a fixed point of the ODE \eqref{AGM-slow-ode-appendix} (equivalently, of \Cref{equ:AGMs-ODE}). Then, as in the proof of \Cref{thm:formal-adam-implicit-bias-label-noise-eps-0}, it satisfies the stationarity constraints
    \begin{align}
        &S(\vv^*) \partial\Phi_{S(\vv^*)} (\vzeta^*)  S(\vv^*)  \nabla^3 \cL (\vzeta^*)[S(\vv^*)]=\vzero, \label{equ:constraint-for-Phi-epspos}\\
        &\vv^* = V\big(\mSigma(\vzeta^*)\big). \label{equ:constraint-for-v-epspos}
    \end{align}
    Introduce the shorthand
    \[
        \mS^* := S(\vv^*),\qquad
        \mP^*_{\parallel}:=\partial \Phi_{S(\vv^*)}\mS^*,\qquad
        \mH^* := \nabla^2 \cL(\vzeta^*).
    \]
    Under the label noise condition we have $\mSigma(\vzeta^*)=\alpha\mH^*$, hence by \eqref{equ:constraint-for-v-epspos}
    \[
        \vv^* = V\big(\mSigma(\vzeta^*)\big) = \diag\big(\mSigma(\vzeta^*)\big)
        = \alpha\diag(\mH^*).
    \]
    For Adam with $\epsilon>0$, the diagonal preconditioner is
    \[
        \mS^* = \Diag\left(\frac{1}{\sqrt{\vv^*}+\epsilon}\right)
        = \Diag\left(\frac{1}{\sqrt{\alpha\diag(\mH^*)}+\epsilon}\right).
    \]
    We now compute $\nabla^3 \cL(\vzeta^*)[\mS^*]$. Since $\mS^*$ is diagonal, we only need to sum up the diagonal terms:
    \begin{align}
        \nabla^3 \cL(\vzeta^*)[\mS^*]
        = \sum_{j=1}^{d} \frac{1}{\sqrt{\alpha H^*_{jj}}+\epsilon}\nabla\big(H^*_{jj}\big).
        \label{equ:epspos-first-line}
    \end{align}
    Define the scalar function
    \[
        \psi(x)
        := \sqrt{\alpha x}-\epsilon\ln\big(\sqrt{\alpha x}+\epsilon\big), \qquad x\ge 0.
    \]
    A direct differentiation gives
    \[
        \psi'(x)
        = \frac{\alpha}{2\big(\sqrt{\alpha x}+\epsilon\big)}.
    \]
    Therefore,
    \[
        \frac{1}{\sqrt{\alpha x}+\epsilon}
        = \frac{2}{\alpha}\psi'.
    \]
    Plugging this identity into \eqref{equ:epspos-first-line} and using the chain rule yields
    \begin{align}
        \nabla^3 \cL(\vzeta^*)[\mS^*]
        = \frac{2}{\alpha}\sum_{j=1}^d \psi'(H^*_{jj})\nabla\big(H^*_{jj}\big)
        = \frac{2}{\alpha}\nabla\left[\sum_{j=1}^d \psi(H^*_{jj})\right]
        = \frac{2}{\alpha}\nabla \tr\Big(\psi\big(\Diag(\mH^*)\big)\Big).
        \label{equ:antiderivative-epspos}
    \end{align}
    Noting that $\psi$ acts elementwise on the diagonal, we can also write
    \[
        \tr\Big(\psi\big(\Diag(\mH^*)\big)\Big)
        = \tr\left( \sqrt{\alpha}\Diag(\mH^*)^{1/2}
        - \epsilon\ln\big(\sqrt{\alpha}\Diag(\mH^*)^{1/2}+\epsilon\big) \right).
    \]
    Substitute \eqref{equ:antiderivative-epspos} into \eqref{equ:constraint-for-Phi-epspos}:
    \[
        \mS^* \mP^*_{\parallel}\frac{2}{\alpha}
        \nabla \tr\left( \sqrt{\alpha}\Diag(\mH^*)^{1/2}
        - \epsilon\ln\big(\sqrt{\alpha}\Diag(\mH^*)^{1/2}+\epsilon\big) \right)
        =\vzero.
    \]
    Since $\epsilon>0$, the diagonal matrix $\mS^*$ has strictly positive diagonal entries and is therefore invertible, and we can cancel it out. With arguments similar to those in \Cref{thm:formal-adam-implicit-bias-label-noise-eps-0}, only the bottom-right part of $\mH^*$ (that corresponds to the dimensions within $\Gamma$) are nonzero, and we obtain the following result:
    \[
        \nabla_{\Gamma} \tr\left( \sqrt{\alpha}\Diag(\mH^*)^{1/2}
        - \epsilon\ln\big(\sqrt{\alpha}\Diag(\mH^*)^{1/2}+\epsilon\big) \right)
        = \vzero,
    \]
    a direct calculation gives 
\begin{align*}
    \nabla_{\Gamma} \tr\left( \Diag(\mH^*)^{1/2}
        - \frac{\epsilon}{\sqrt{\alpha}}\ln\left( \frac{\sqrt{\alpha}}{\epsilon}\Diag(\mH^*)^{1/2}+\mI\right) \right)
        =\vzero,
\end{align*}
    which proves the claim.
\end{proof}

\begin{lemma}[AdamE's Implicit Bias under Label Noise and $\epsilon = 0$]
    With the label noise condition, every fixed point of \Cref{equ:AGMs-ODE} for AdamE-$\lambda$ with $\lambda \in [0, 1)$ and $\epsilon = 0$ satisfies $\nabla_{\Gamma} \tr\left(\mathrm{Diag}(\mH)^{1-\lambda}\right) = 0$.
    \label{thm:formal-adamE-implicit-bias-label-noise-eps-0}
\end{lemma}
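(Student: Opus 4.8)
The plan is to mirror the proof of \Cref{thm:formal-adam-implicit-bias-label-noise-eps-0} almost verbatim, replacing the square-root preconditioner by the $\lambda$-power one and recomputing a single scalar antiderivative. Let $(\vzeta^*,\vv^*)$ be a fixed point of \Cref{equ:AGMs-ODE} (equivalently of \Cref{AGM-slow-ode-appendix}) for AdamE-$\lambda$ with $\epsilon=0$. As in the Adam case, stationarity forces $\vv^* = V(\mSigma(\vzeta^*))$ and $\mS^*\,\partial\Phi_{\mS^*}(\vzeta^*)\,\mS^*\,\nabla^3\cL(\vzeta^*)[\mS^*]=\vzero$, where $\mS^*:=S(\vv^*)$ and $\mH^*:=\nabla^2\cL(\vzeta^*)$. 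Under the label noise condition $\mSigma(\vzeta^*)=\alpha\mH^*$, so $\vv^* = \diag(\mSigma(\vzeta^*)) = \alpha\,\diag(\mH^*)$, and hence for AdamE-$\lambda$ with $\epsilon=0$ the preconditioner is the diagonal matrix with entries $S^*_{jj} = (\alpha H^*_{jj})^{-\lambda}$.

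The only genuinely new ingredient is the antiderivative. Since $\mS^*$ is diagonal, $\nabla^3\cL(\vzeta^*)[\mS^*] = \sum_{j=1}^d (\alpha H^*_{jj})^{-\lambda}\,\nabla H^*_{jj} = \alpha^{-\lambda}\sum_{j=1}^d (H^*_{jj})^{-\lambda}\,\nabla H^*_{jj}$. For $\lambda\in[0,1)$ we have $1-\lambda>0$ and $\frac{\dd}{\dd x}\,x^{1-\lambda} = (1-\lambda)\,x^{-\lambda}$ on $[0,\infty)$, so by the chain rule $\sum_j (H^*_{jj})^{-\lambda}\nabla H^*_{jj} = \frac{1}{1-\lambda}\nabla\sum_j (H^*_{jj})^{1-\lambda} = \frac{1}{1-\lambda}\nabla\tr(\Diag(\mH^*)^{1-\lambda})$. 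Thus $\nabla^3\cL(\vzeta^*)[\mS^*] = \frac{\alpha^{-\lambda}}{1-\lambda}\nabla\tr(\Diag(\mH^*)^{1-\lambda})$, and dropping the nonzero constant $\frac{\alpha^{-\lambda}}{1-\lambda}$ reduces the fixed-point equation to $\mS^*\,\partial\Phi_{\mS^*}(\vzeta^*)\,\mS^*\,\nabla\tr(\Diag(\mH^*)^{1-\lambda})=\vzero$.

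From here I would close exactly as in \Cref{thm:formal-adam-implicit-bias-label-noise-eps-0}: by \Cref{lmm:tangent-space-product}, $\partial\Phi_{\mS^*}(\vzeta^*)\mS^*$ acts as the identity on the tangent space $T_{\vzeta^*}(\Gamma)$ and annihilates $\mS^*$ applied to the normal space, so passing to the block decomposition of $\mP^*_\parallel := \partial\Phi_{\mS^*}(\vzeta^*)\mS^*$ and $\mH^*$ (in which only the tangential block is nontrivial) lets the remaining invertible factors cancel, leaving $\nabla_{\Gamma}\tr(\Diag(\mH^*)^{1-\lambda})=\vzero$, which is the claim. The endpoint $\lambda=0$ is consistent: it recovers $\nabla_{\Gamma}\tr(\mH^*)=\vzero$, the implicit bias of SGD.

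I do not expect a substantive obstacle: this is essentially a transcription of the Adam proof, and the two points deserving a line of care are (i) that $\lambda<1$ is exactly what makes $x\mapsto\frac{1}{1-\lambda}x^{1-\lambda}$ a valid antiderivative of the integrand $x^{-\lambda}$ on $[0,\infty)$, and (ii) the harmless degeneracy at $\epsilon=0$, where some $v^*_j$ (and hence $(v^*_j)^{-\lambda}$) may blow up along manifold directions; this is handled precisely as in the $\epsilon=0$ Adam proof, since only the tangential component survives in the final $\nabla_{\Gamma}$ expression. Should one wish to bypass the degeneracy, the same computation with $\epsilon>0$ yields a fixed-point condition on $\nabla_{\Gamma}\tr$ of the elementwise antiderivative of $x\mapsto\alpha^{-\lambda}(x^{\lambda}+\epsilon)^{-1}$, in exact parallel with \Cref{lemma:adam-implicit-bias-label-noise-eps-not-0}.
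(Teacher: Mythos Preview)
The proposal is correct and follows essentially the same approach as the paper's proof: replace $\mS^*=\Diag((\alpha\diag(\mH^*))^{-1/2})$ by $\mS^*=\Diag((\alpha\diag(\mH^*))^{-\lambda})$, compute $\nabla^3\cL(\vzeta^*)[\mS^*]=\frac{1}{(1-\lambda)\alpha^{\lambda}}\nabla\tr(\Diag(\mH^*)^{1-\lambda})$ via the antiderivative $x\mapsto\frac{1}{1-\lambda}x^{1-\lambda}$, and then reduce to $\nabla_\Gamma\tr(\Diag(\mH^*)^{1-\lambda})=\vzero$ exactly as in \Cref{thm:formal-adam-implicit-bias-label-noise-eps-0}. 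Your additional remarks on the $\lambda=0$ sanity check and the $\epsilon>0$ variant are consistent with the paper but not needed for the statement.
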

\begin{proof}[Proof of \Cref{thm:formal-adamE-implicit-bias-label-noise-eps-0}]
    The proof is similar to the proof of \Cref{thm:formal-adam-implicit-bias-label-noise-eps-0}, but when calculating $\mS^*$, we have $\mS^* = \Diag(\frac{1}{(\alpha\diag(\mH^*))^{\lambda}})$ instead of $\mS^* = \Diag(\frac{1}{(\alpha\diag(\mH^*))^{1/2}})$, which gives
    \begin{align*}
        \nabla^3 \cL (\vzeta^*)[\mS^*] = \sum_{j=1}^{d} \frac{1}{(\alpha H^*_{jj})^{\lambda}} \nabla(H^*_{jj}) &= \frac{1}{(1-\lambda)\alpha^{\lambda}} \sum_{j=1}^{d} \nabla ((H^*_{jj})^{1-\lambda}) \\
        &= \frac{1}{(1-\lambda)\alpha^{\lambda}} \nabla \tr\left(\Diag(\mH^*)^{1-\lambda}\right).
    \end{align*}
    This leads to the constraint 
    $-\nabla_{\Gamma}\tr\left(\Diag(\mH^*)^{1-\lambda}\right) = \vzero$ in the same way as \Cref{equ:constraint-for-Phi-eps-0-simplified}, which completes the proof.
\end{proof}

\subsection{Proof of \Cref{lmm:diagonal-net}}
\begin{proof}
We only prove the second argument in \Cref{lmm:diagonal-net} with any $e_0\in(0,1]$, since taking $e_0= 0.5$ yields the first argument. First, we recall that the minimizer manifold $\Gamma$ is defined as 
$$\Gamma := \left\{\vtheta | \langle \vz_i, \vu^{\odot 2} - \vv^{\odot 2}\rangle = y_i, \forall i \in [n]\right\}.$$ 
So if any $\vtheta = \tbinom{\vu}{\vv}$ belongs to $\Gamma$, and another $\tilde{\vtheta} = \tbinom{\tilde{\vu}}{\tilde{\vv}}$ satisfies that $\tilde{u}_i^{\odot 2} - \tilde{v}_i^{\odot 2} = u_i^{\odot 2} - v_i^{\odot 2}$ for any $i \in [d]$, then $\tilde{\vtheta}$ also belongs to $\Gamma$.

Next, we derive the explicit expression of the Hessian matrix when $\vtheta \in \Gamma$: \begin{align*}
        \nabla^2 \cL (\vtheta) &= \frac{2}{n} \sum_{i=1}^{n}2 \tbinom{\vz_i \odot \vu}{-\vz_i \odot \vv} \tbinom{\vz_i \odot \vu}{-\vz_i \odot \vv}^\top + \left( \langle \vz_i, \vu^{\odot 2} - \vv^{\odot 2}\rangle - y_i \right) \left(\begin{array}{cc}
        \Diag(\vz) & 0 \\
       0 & -\Diag(\vz)
    \end{array}\right) \\
    &= \frac{4}{n} \sum_{i=1}^{n} \tbinom{\vz_i \odot \vu}{-\vz_i \odot \vv}\tbinom{\vz_i \odot \vu}{-\vz_i \odot \vv}^\top.
\end{align*}
Hence, we have that 
$$\tr(\Diag(\mH)^{e_0}) \propto \sum_{i=1}^{d} (|u_i|^{2e_0}+|v_i|^{2e_0}),$$
and $\left\| \vu^{\odot 2} - \vv^{\odot 2} \right\|_{e_0}^{e_0} = \sum_{i=1}^{d} |u_i^2-v_i^2|^{e_0}$.
Let $e_0 \in (0, 1]$, and we recall that our goal is to prove that given the following condition
\begin{align}
    \vtheta \in \arg\min_{\vtheta' \in \Gamma} \tr(\Diag(\mH)^{e_0}) = \arg\min_{\vtheta' \in \Gamma}\sum_{i=1}^{d}(|u_i|^{2e_0}+|v_i|^{2e_0}),\label{equ:condition-min-tr}
\end{align}
it holds that $\vtheta \in \arg \min_{\vtheta'\in \Gamma}\left\|\widehat\vw\right\|_{e_0}$. 

First, we prove that $u_i = 0 \vee v_i = 0$ holds for any $i \in [d]$. Assume for the contrary that there exists some $i$ such that $u_i \neq 0$ and $v_i \neq 0$, then we construct another reference point $\tilde{\vtheta} = \tbinom{\tilde{\vu}}{\tilde{\vv}}$ by letting $\tilde{\vu}$ and $\vu$ agree on all indices other than $i$, and that 
\begin{align}
    \tilde{u}_i = \sqrt{u_i^2 - v_i^2}, \tilde{v}_i = 0, &\quad  \text{ if } |u_i| \geq |v_i|, \label{equ:u-v-construct-1}\\
    \tilde{u}_i = 0, \tilde{v}_i = \sqrt{v_i^2 - u_i^2}, &\quad  \text{ otherwise.}\label{equ:u-v-construct-2}
\end{align}
With this construction, $\tilde{\vu}^{\odot 2} - \tilde{\vv}^{\odot 2} = \vu^{\odot 2} - \vv^{\odot 2}$, so $\tilde{\vtheta} \in \Gamma$. One can observe that 
    $$|\tilde{u}_i|^{2e_0} + |\tilde{v}_i|^{2e_0} < |u_i|^{2e_0} +|v_i|^{2e_0},$$
    which contradicts the condition in \Cref{equ:condition-min-tr}. 

Now we are ready to prove $\vtheta \in \arg \min_{\vtheta'\in \Gamma}\left\|\widehat\vw\right\|_{e_0}$. Also, we prove this by contradiction. Now assume $\vtheta \notin \arg\min_{\vtheta' \in \Gamma} \left\| \vu^{\odot 2} - \vv^{\odot 2} \right\|_{e_0}$. There must exist some $\tilde{\vtheta} \in \Gamma$ such that 
$$\left\| \tilde{\vu}^{\odot 2} - \tilde{\vv}^{\odot 2} \right\|_{e_0} < \left\| \vu^{\odot 2} - \vv^{\odot 2} \right\|_{e_0}.$$
W.L.O.G., one can assume that for any $i \in [d]$, either $\tilde{u}_i = 0$ or $\tilde{v}_i = 0$, else we can construct another minimizer that preserves $\left\| \tilde{\vu}^{\odot 2} - \tilde{\vv}^{\odot 2} \right\|_{e_0}$ as \Cref{equ:u-v-construct-1} and \Cref{equ:u-v-construct-2}. However, given the condition $u_i =0 \vee v_i =0$, we have that
\begin{align*}
    \sum_{i=1}^{d} |u_i^2-v_i^2|^{e_0} = \sum_{i=1}^{d} |u_i|^{2e_0} + |v_i|^{2e_0},
\end{align*}
and $\sum_{i=1}^{d} |\tilde{u}_i^2-\tilde{v}_i^2|^{e_0} = \sum_{i=1}^{d} |\tilde{u}_i|^{2e_0} + |\tilde{v}_i|^{2e_0}$, which indicates that $\sum_{i=1}^{d} |\tilde{u}_i|^{2e_0} + |\tilde{v}_i|^{2e_0} < \sum_{i=1}^{d} |{u}_i|^{2e_0} + |{v}_i|^{2e_0}$, a contradiction.
\end{proof}

\section{Regularizers under label noise for AGMs in \Cref{tab:agms}}
\label{app:regularizer}
In this section, we provide additional discussions on the regularizer for the AGM optimizers in \Cref{tab:agms} besides Adam, AdamE (refer to \Aref{proof:diag}), and Shampoo (refer to \Aref{app:shampoo}).

\paragraph{SGD.} Under label noise, the implicit bias of SGD has been extensively studied by previous works; As discussed in \cref{sec:related}, approaches such as fixed point analysis \citep{blanc2020implicit}, slow SDE \citep{li2021happens} and implicit gradient regularization \citep{barrett2020implicit} all agree on the result that SGD implicitly regularizes $\mathrm{tr}(\mH)$ on the minimizer manifold. Our work provides a new insight on the implicit bias of SGD by comparing with that of Adam. Specifically, SGD treats each direction equally which results in a rotation invariant $\mathrm{tr}(\mH)$ as the implicit regularizer, while Adam has the second-order momentum as a denominator, so Adam regularizes the entries with small gradients relatively faster, as is indicated in its implicit bias $\mathrm{tr}((\Diag \mH)^{1/2})$.

\paragraph{RMSProp.} The RMSProp optimizer \citet{hinton2012neural} can be seen as a special case of Adam, where $\beta_1 = 0$. One can observe that $\beta_1$ does not appear in the slow SDE system, which implies that as long as $1 - \beta_1$ is of constant order, the choice of $\beta_1$ has nothing to do with the dynamics of Adam on the minimizer manifold. The intuition is that, after the iteration approaches the manifold, the gradient $\nabla \cL(\vtheta_k)$ moves very slowly as $k$ proceeds. Since the momentum only captures $\O(\log 1/\eta)$ past steps, the different between momentum and the gradient at that step becomes negligible.
Therefore, RMSProp possesses an implicit bias identical to Adam: $\mathrm{tr}((\Diag \mH)^{1/2})$.

\paragraph{Adam-mini and Adalayer.} Adam-mini \citep{zhang2025adamminiusefewerlearning} and Adalayer \citep{zhao2025deconstructingmakesgoodoptimizer} belong to the same kind of variant of Adam that partitions the parameters. In Adam-mini the partitions are blocks, and in Adalayer the partitions are layers. In the sequel, we provide a brief derivation of the implicit bias of ``Partitioned Adam'', which is applicable to any kind of optimizer whose functions $V$ and $S$ can be expressed in the form of
\begin{align*}
    V(\mM)_i &:=\frac{1}{|B_{\pi(i)}|}\sum_{j\in B_{\pi(i)}} M_{jj} \\ S(\vv) &:= \Diag\bigl(1/(\sqrt{\vv}+\epsilon)\bigr)
\end{align*}
where $\cB = \left\{B_1, B_2, \cdots, B_N\right\}$ is a partition of $[d]$, and for each $i \in [d]$, $\pi(i)$ denotes the index of the set containing $i$, i.e. $i \in B_{\pi(i)}$.
We derive the case for $\epsilon = 0$.

Recall from the proof of \Cref{thm:formal-adam-implicit-bias-label-noise-eps-0} that the gradient of the implicit regularizer being minimized on manifold can be expressed as
\begin{align}
    \partial^2 \left(\nabla\cL\right) \left[\mS\right] &= \nabla\left[\langle \mS, \mH\rangle\right] - \nabla \left(\mS\right)\left[\mH\right].\label{equ:partitioned-adam-decomposition}
\end{align}
In our case $\mS$ is diagonal, so we can calculate the contribution of each set in the partition, and add them up. Next we focus on a single set, and re-index it as $\{1, 2, \cdots, G\}$ without loss of generality. In this set $\tr \mH / G$ is used as a shared second-order momentum, so we have \begin{align*}
    \mS = \left[ \frac{\tr \mH}{G} \mI_{G}\right]^{1/2}.
\end{align*}
Combining with $\mP = \mS^{1/2}$ gives us
\begin{align*}
    \langle \mS, \mH\rangle = \tr\left(\mP\mH\mP\right) = \sqrt{G \cdot \tr \mH}.
\end{align*}
For the second term, we again denote $h_j:= \mH_{jj}$, and we further denote $t:= \tr \mH / G = \frac{1}{G}\sum_{j = 1}^{G} h_j$. \begin{align*}
    \nabla \left(\mS\right)\left[\mH\right] 
    &= \sum_j\nabla\left(t^{-1/2}\right) \cdot h_j \\
    &= \sum_j \nabla\left(t\right)\cdot h_j \cdot -\frac{1}{2} t^{-3/2} \\
    &= G \cdot \nabla\left(t\right) \cdot -\frac{1}{2} t^{-1/2}\\
    &= -G \nabla\left(t^{1/2}\right) = -\nabla \sqrt{G \cdot \tr \mH}.
\end{align*}
Plugging into \eqref{equ:partitioned-adam-decomposition} gives the implicit bias contributed by this set as $\sqrt{G \cdot \tr \mH}$. Finally, summing up all the sets, we conclude the overall implicit bias as \begin{align*}
    \sum_{i \in [N]} \sqrt{|B_i| \cdot \tr \mH_{B_i}}.
\end{align*} 
Here, $\mH_{B_i}$ means the submatrix of $\mH$ if we restrict the rows and columns to $B_i$.


\section{Shampoo Optimizer as an AGM}\label{app:shampoo}

\subsection{A brief introduction to Shampoo}

Shampoo, unlike most conventional stochastic first-order optimization methods, utilizes the fact that many model parameters in practice are tensor-like, and can lead to faster convergence in optimization. In this paper, we consider the case where the parameter is a matrix (2-dimensional tensor) with shape $d_1 \times d_2$. In this case, the Shampoo algorithm can be expressed as follows: 
\begin{algorithm}[H]
\caption{Shampoo with matrix-like parameters}
\begin{algorithmic}[1]
\Require horizon $K$, learning rate $\eta>0$, stabilizing constant $\epsilon > 0$.
\State Initialize $\mTheta_0 \in \R^{d_1\times d_2}$; $\mS_{1,0} \in \R^{d_1 \times d_1}$; $\mS_{2,0} \in \R^{d_2 \times d_2}$
\For{$k=1$ to $K$}
  \State Receive loss function $\ell_k:\mathbb{R}^{d_1\times d_2}\to\mathbb{R}$
  \State $\mG_k \gets \nabla \ell_k(\mTheta_k)\in\mathbb{R}^{d_1 \times d_2}$ 
  \State Update $\mS_{1,k+1}$ with $\mS_{1,k}$ and $\mG_k \mG_k^\top$
  \State Update $\mS_{2,k+1}$ with $\mS_{2,k}$ and $\mG_k^\top \mG_k$
  \State $\mTheta_{k+1} \gets \mTheta_k - \eta \left(\mS_{1,k+1}+\epsilon \mI_{d_1}\right)^{-\lambda}\mG_k\left(\mS_{2,k+1}+\epsilon\mI_{d_2}\right)^{-\lambda}$
\EndFor
\end{algorithmic}
\end{algorithm}
Here, $\lambda$ is a constant that equals $1/4$ when Shampoo was originally proposed by \citet{gupta2018shampoo}, while later works
\citep{anil2020scalable,shi2023distributed,morwani2024new} suggests switching to $\lambda = 1/2$. We incorporate the constant $\epsilon$ here, being consistent with \citet{gupta2018shampoo} and the practical need to stabilize training. There are multiple choices for the update rule in Lines 5 and 6. Originally, \citet{gupta2018shampoo} simply sum up the past terms to maintain $\mS_1$ and $\mS_2$. Later in practice~\citep{morwani2024new,lin2025understanding}, this was replaced by an Exponential Moving Average (EMA):
\begin{align}
    \mS_{1,k+1}=\beta_2\,\mS_{1,k}+(1-\beta_2)\,\mG_k\mG_k^\top,\qquad
\mS_{2,k+1}=\beta_2\,\mS_{2,k}+(1-\beta_2)\,\mG_{k}^\top\mG_k.\label{equ:shampoo-update}
\end{align}
We adopt the update rule \eqref{equ:shampoo-update} for Lines 5 and 6 in Algorithm 1, which aligns with practical implementations and naturally fits within our AGM framework; We use $\lambda = 1/2$ to stick with the latest result, but the following analysis actually holds for any constant $\lambda \geq 0$.

\subsection{Shampoo under the AGM Framework}

In Shampoo the parameter takes a matrix form, while our AGM framework requires a vector, so we need to define some reshaping rules to view Shampoo seamlessly as an AGM.  

First, we define the vectorization of matrices $\mathrm{vec}(\cdot): \R^{p \times q} \rightarrow \R^{pq}$ as
\begin{align*}
    \mX \mapsto \vx, \quad \text{where}\quad  x_{iq+j} = X_{i,j}, \ \forall\;0 \leq i < p, 0 \leq j < q. 
\end{align*}
We introduce the \emph{Kronecker product}, a matrix operation that generalizes the outer product to matrices.
Given any $\mA\in\R^{p\times q}$ and $\mB\in\R^{r\times s}$, their Kronecker product
$\mA\otimes\mB \in \R^{pr\times qs}$ is defined as the block matrix obtained by multiplying each entry of $\mA$ by the entire matrix $\mB$. 
It satisfies several useful properties:
\begin{align}
    \mathrm{vec}(\mA\mX\mB^\top) &= (\mB \otimes \mA)\,\mathrm{vec}(\mX), \quad \text{for any }\mX \in \R^{q \times s}, \label{equ:kron-property-comb}\\
    (\mA^t \otimes \mB^t) &= (\mA \otimes \mB)^t, \quad \text{for any $\mA \in \mathbb{S}^{p}_{++},\mB \in \mathbb{S}^{r}_{++},t \in R$,} \label{equ:kron-property-power}
\end{align}
where $\mathbb{S}^{p}_{++}$, $\mathbb{S}^{r}_{++}$ denote the set of positive definite matrices in $\R^{p \times p}$ and $\R^{r \times r}$ respectively. For any step $k \in [0, K-1]$, we define $\vtheta_k:=\mathrm{vec}(\mTheta_k)$ and $\vg_k := \mathrm{vec}(\mG_k)$. Let $d = d_1 d_2$, then $\vtheta_k, \vg_k \in \R^d$. The Shampoo update (Line 7, Algorithm 1) can now be rewritten as
\begin{align}\label{equ:shampoo-update-rewritten}
    \vtheta_{k+1} = \vtheta_k - \eta \left(\left(\mS_{2,k+1}+\epsilon \mI_{d_2}\right)^\top \otimes \left(\mS_{1,k+1}+\epsilon \mI_{d_1}\right)\right)^{-1/2}\vg_k.
\end{align}
Next, for any $\mM \in \R^{d\times d}$, 
we introduce a new indexing scheme that represents $\mM$ as a 4-dimensional tensor by decomposing each row and column index into two sub-indices. Specifically, for all $0 \le i,j < d_1$ and $ 0 \le l,r < d_2$, we use 
\begin{align*}M_{i,l,r,j}\end{align*}
as an alternative representation of
\begin{align*}M_{i d_2 + l,\; j d_2 + r}.\end{align*}
Then we define two functions $V_L: \R^{d\times d} \rightarrow \R^{d_1 \times d_1}, V_R: \R^{d \times d} \rightarrow \R^{d_2 \times d_2}$ as: 
\begin{align*}
    \left[V_L(\mM)\right]_{i,j} &= \sum_s M_{i,s,s,j}, \\
    \left[V_R(\mM)\right]_{i,j} &= \sum_s M_{s,i,j,s}.
\end{align*}
Note that for any step $k \in [1, K]$,\begin{align}
    \left[V_L(\vg_k \vg_k^\top)\right]_{i,j} = \sum_s \left[\vg_k \vg_k^\top\right]_{i,s,s,j} = \sum_s [\mG_k]_{i, s} [\mG_k^\top]_{s, j} = \left[\mG_k\mG_k^\top\right]_{i,j}, \label{equ:vl}\\
    \left[V_R(\vg_k \vg_k^\top)\right]_{i,j} = \sum_s \left[\vg_k \vg_k^\top\right]_{s,i,j,s} = \sum_s [\mG_k^\top]_{i, s} [\mG_k]_{s, j} = \left[\mG_k^\top\mG_k\right]_{i,j}.\label{equ:vr}
\end{align}

Now we are ready to rewrite the Shampoo optimizer in the AGM form.
\begin{definition}[Shampoo, written in the AGM form]\label{def:shampoo-agm}
    \begin{align}
        \mV_{k+1} &:= \beta_2 \mV_{k} + (1-\beta_2) V\!\left(\vg_{k}\vg_{k}^\top\right) \label{equ:shampoo-agm-v}\\
    \vtheta_{k+1} &:= \vtheta_{k} - \eta S(\mV_{k+1}) \vg_{k}. \label{equ:shampoo-agm-theta}
\end{align}
\end{definition}
Here, each $\mV_k \in \R^{d_1 \times d_1} \times \R^{d_2 \times d_2}$ is a tuple of two matrices with shape $d_1 \times d_1$ and $d_2 \times d_2$ respectively; The functions $V$ and $S$ are defined as \begin{align*}
    V: \R^{d \times d} \rightarrow \R^{d_1 \times d_1} \times \R^{d_2 \times d_2}, & \quad \mM \mapsto \left(V_L(\mM), V_R(\mM)\right), \\
    S: \R^{d_1 \times d_1} \times \R^{d_2 \times d_2} \rightarrow \R^{d \times d}, & \quad \left(V_1, V_2\right) \mapsto \left(\left
    (V_2+\epsilon \mI_{d_2}\right)^\top \otimes \left(V_1+\epsilon \mI_{d_1}\right)\right)^{-1/2}.
\end{align*}
With \eqref{equ:vl} and \eqref{equ:vr}, it is straightforward to verify that \eqref{equ:shampoo-agm-v} recovers \eqref{equ:shampoo-update}
and that $\mV_k = (\mS_{1,k}, \mS_{2,k})$; Hence \eqref{equ:shampoo-agm-theta} recovers \eqref{equ:shampoo-update-rewritten} as well.

\subsection{Vectorized AGM form}
It is worth noting that in \Cref{def:shampoo-agm}, we slightly generalized the definition of $S$ and $V$ so that $\mV_k$ is not a plain vector now, but a tuple of two matrices. This is only for a simplification of the expression of functions $S$ and $V$ which makes them easy to understand. In this subsection, we establish a form of Shampoo that is slightly more complicated, but rigorously fit in the shape required by AGM, and we will use this form in the next subsection to analyze why Shampoo's bias on the manifold cannot be written using an explicit regularizer.


For two specific shapes of matrices: $d_1 \times d_1$ and $d_2 \times d_2$, we define functions that inverse the vectorization effect:
\begin{align*}
    \mathrm{mat}_L: \R^{d_1^2} \rightarrow \R^{d_1 \times d_1}, &\quad  \left[\mathrm{mat}_L(\vv)\right]_{i,j} = v_{id_1 + j}, \forall 0 \leq i, j < d_1, \\
    \mathrm{mat}_R: \R^{d_2^2} \rightarrow \R^{d_2 \times d_2}, &\quad \left[\mathrm{mat}_R(\vv)\right]_{i,j} = v_{id_2 + j}, \forall 0 \leq i, j < d_2. 
\end{align*}

For any vector $\vv \in \R^{d_1^2 + d_2^2}$, we write $(\vv_L, \vv_R)$ and $\vv$ interchangeably so that $\vv_L$ represent the first $d_1^2$ entries and $\vv_R$ represent the rest. Now we present a vectorized version of \Cref{def:shampoo-agm}.

\begin{definition}[Shampoo, written in the vectorized AGM form]\label{def:shampoo-vectorized-agm}
    \begin{align*}
        \vv_{k+1} &:= \beta_2 \vv_{k} + (1-\beta_2) V\!\left(\vg_{k}\vg_{k}^\top\right) \\
    \vtheta_{k+1} &:= \vtheta_{k} - \eta S(\vv_{k+1}) \vg_{k}. 
\end{align*}
\end{definition}
Here, each $\vv_k \in \R^{D}$ where $D = d_1^2 + d_2^2$, and the functions $V$ and $S$ are defined as \begin{align*}
    V: \R^{d \times d} \rightarrow \R^{D}, & \quad \mM \mapsto \left(\mathrm{vec}\left(V_L(\mM)\right), \mathrm{vec}\left(V_R(\mM)\right)\right), \\
    S: \R^{D} \rightarrow \R^{d \times d}, & \quad \vv \mapsto \left(\left(\mathrm{mat}_R(\vv_R)+\epsilon \mI_{d_2}\right)^\top \otimes \left(\mathrm{mat}_L(\vv_L)+\epsilon \mI_{d_1}\right)\right)^{-1/2}.
\end{align*}

\subsection{Discussion on Shampoo's Implicit Bias under Label Noise}
Recall that for all AGMs, under label noise, \Cref{equ:constraint-for-Phi} and \Cref{equ:constraint-for-v} hold, and we adopt the notations $\mS^*,\mP_{\parallel}^*,\mH^* $ in \Cref{thm:formal-adam-implicit-bias-label-noise-eps-0} to write them as
    \begin{gather*}
        \mS^*\partial\mP_{\parallel}^* \nabla^3 \cL(\vzeta^*)[\mS^*]=0, \quad
        \vv^* = V(\mSigma(\vzeta^*)). 
    \end{gather*}
Here we can expand $\mS^*$ as \begin{align*}
    \mS^* &= S(\vv^*) \\ &= \left(\left(\mathrm{mat}_R(\vv_R^*)+\epsilon \mI_{d_2}\right)^\top \otimes \left(\mathrm{mat}_L(\vv_L^*)+\epsilon \mI_{d_1}\right)\right)^{-1/2} \\
    &= \left(\left(V_R(\mSigma(\vzeta^*))+\epsilon \mI_{d_2}\right)^\top \otimes \left(V_L(\mSigma(\vzeta^*))+\epsilon \mI_{d_1}\right)\right)^{-1/2},
\end{align*}
and we further denote \begin{align*}
    \mSigma_L^* := V_L(\mSigma(\vzeta^*))+\epsilon \mI_{d_1}, \quad 
    \mSigma_R^* := V_R(\mSigma(\vzeta^*))+\epsilon \mI_{d_2}.
\end{align*}
We define a function $A: \R^d \rightarrow \R^d$ as
\begin{align*}
    A(\vzeta) := \nabla^3 \cL(\vzeta)\left[S(V(\mSigma(\vzeta))\right].
\end{align*}
Note that \begin{align}\label{equ:S-shampoo}
    A(\vzeta^*) = \nabla^3 \cL(\vzeta^*)  \left[\mS^*\right].
\end{align}
We provide theoretical evidence that Shampoo may not admit any explicit regularizer under label noise by showing that \textit{the function $A$ is not a conservative vector field}, thus no potential function $\psi$ exists such that $\nabla \psi \equiv A$.


To see it clearer, if such a function $\psi$ exists, then with the same techniques as \Cref{thm:formal-adam-implicit-bias-label-noise-eps-0} we can simplify \eqref{equ:constraint-for-Phi} into
\begin{align*}
    \nabla_\Gamma \psi(\vzeta^*) = 0,
\end{align*}
so the regularizer is exactly $\psi(\vzeta^*)$, and vice versa.

Unfortunately, even in a simple case, where $\mH^*$ is assumed to be diagonal (for example, the scenario of diagonal net), the potential function $\psi$ does not exist. In this case, we can assume that
\begin{align*}
    \mH^* = \Diag(\lambda_1,\lambda_2,\dots,\lambda_d),
\end{align*}
where $\lambda_i \ge 0$ for any $1\le i\le d$. Consequently, $$\mSigma(\vzeta^*) = \alpha \mH^* = \alpha\Diag(\lambda_1,\lambda_2,\dots,\lambda_d).$$ Now we have 
\begin{align*}
    \mSigma_L^* &= V_L(\mSigma(\vzeta^*))+\epsilon \mI_{d_1} \\
    &= \alpha\Diag\left(\sum_{i=0}^{d_2-1}\lambda_{1+id_1},\sum_{i=0}^{d_2-1}\lambda_{2+id_1},\dots,\sum_{i=0}^{d_2-1}\lambda_{d_1+id_1}\right) +\epsilon \mI_{d_1}\\
    &= \Diag(r_1,r_2,\dots,r_{d_1})\in\R^{d_1 \times d_1},\quad \text{ where } r_j := \alpha\sum_{i=0}^{d_2-1}\lambda_{j+id_1} + \epsilon;\\
\mSigma_R^* &= V_R(\mSigma(\vzeta^*))+\epsilon \mI_{d_2} \\
    &= \alpha\Diag\left(\sum_{i=0}^{d_1-1}\lambda_{1+id_2},\sum_{i=0}^{d_1-1}\lambda_{2+id_2},\dots,\sum_{i=0}^{d_1-1}\lambda_{d_2+id_2}\right) +\epsilon \mI_{d_2}\\
    &= \Diag(l_1,l_2,\dots,l_{d_2})\in\R^{d_2 \times d_2},\quad \text{ where } l_j := \alpha\sum_{i=0}^{d_1-1}\lambda_{j+id_2} + \epsilon.
\end{align*}

Therefore, the preconditioner matrix $\mS^*$ in \Cref{equ:S-shampoo} can be written as
\begin{align*}
    \mS^* &= (\mSigma_R^* \otimes \mSigma_L^*)^{-1/2} \\
    &= \Diag\left(l_1^{-1/2}\cdot (\mSigma_R^*)^{-1/2},l_2^{-1/2}\cdot (\mSigma_R^*)^{-1/2},\dots,l_{d_2}^{-1/2}\cdot (\mSigma_R^*)^{-1/2}\right).
\end{align*}
One can straightforwardly verify that the curl of $A$ at $\vzeta^*$: 
$$\nabla \times A(\vzeta^*) = \nabla \times \nabla^3 \cL(\vzeta^*)  \left[\mS^*\right]$$
is nonzero. By the Stokes-Cartan theorem  (Theorem 16.11 in \citet{lee2012introduction}), there does not exist a potential function $\psi$ such that $\nabla \psi \equiv A$. Therefore, we argue that in general, the regularization effect of Shampoo under label noise cannot be reduced to an explicit regularizer, for which the diagonal case is a counterexample.



\end{document}